\pdfoutput=1
\documentclass{article}
\usepackage[preprint]{neurips_2024}
\usepackage[T1]{fontenc}
\usepackage[utf8]{inputenc}
\usepackage{amsmath,amssymb,amsthm,mathtools}
\usepackage[authoryear,round]{natbib}
\usepackage{booktabs,array,microtype,xcolor}
\usepackage{graphicx}
\usepackage[most]{tcolorbox}
\usepackage{enumitem}
\usepackage[colorlinks=true,linkcolor=blue!55!black,citecolor=blue!55!black,urlcolor=blue!55!black]{hyperref}

\newcommand{\R}{\mathbb R}
\newcommand{\E}{\mathbb E}
\newcommand{\Kset}{\mathcal K}
\newcommand{\Yset}{\mathcal Y}
\newcommand{\Nset}{\mathcal N}
\newcommand{\Aset}{\mathcal A}
\newcommand{\Tset}{\mathcal T}
\newcommand{\Pcal}{\mathcal P}
\newcommand{\Xcal}{\mathcal X}
\newcommand{\Prob}{\mathbb P}
\newcommand{\ind}{\mathbf 1}
\newcommand{\indic}{\mathbb{I}}
\newcommand{\norm}[1]{\lVert#1\rVert}
\newcommand{\PR}{\mathrm{PR}}
\newcommand{\CKA}{\mathrm{CKA}}
\newcommand{\tr}{\operatorname{tr}}
\DeclareMathOperator{\Comp}{Comp}
\DeclareMathOperator{\NN}{NN}

\theoremstyle{definition}
\newtheorem{definition}{Definition}
\newtheorem{remark}{Remark}
\theoremstyle{plain}
\newtheorem{theorem}{Theorem}
\newtheorem{proposition}{Proposition}
\newtheorem{lemma}{Lemma}
\newtheorem{corollary}{Corollary}
\newtheorem{conjecture}{Conjecture}
\newtheorem{hypothesis}{Hypothesis}

\tcbset{housebox/.style={enhanced,colback=blue!5,colframe=blue!35!black,boxrule=0.6pt,arc=1.5pt,left=7pt,right=7pt,top=5pt,bottom=5pt}}
\newtcolorbox[auto counter]{insight}[2][]{housebox,before skip=8pt plus 2pt,after skip=8pt plus 2pt,
  before upper={\textbf{Insight~\thetcbcounter: #2.}\ },#1}

\title{From Plato to Aristotle: Toward a General Theory of Representation Hypotheses}
\author{%
\textbf{Dai Shi}\,\textsuperscript{1}\thanks{Corresponding author: \texttt{ds2213@cam.ac.uk}} \quad
\textbf{Andi Han}\,\textsuperscript{2} \quad
\textbf{Xiaoyu Li}\,\textsuperscript{3} \\
\textbf{Luke Thompson}\,\textsuperscript{2} \quad
\textbf{Jos\'e Miguel Hern\'andez-Lobato}\,\textsuperscript{1} \\
{\footnotesize \textsuperscript{1}University of Cambridge \quad
\textsuperscript{2}University of Sydney \quad \textsuperscript{3}University of New South Wales}
}

\theoremstyle{plain}
\makeatletter
\newcommand{\formalofname}{}
\newtheorem*{formalof@inner}{\formalofname}
\newenvironment{formalof}[1]{\renewcommand{\formalofname}{#1}\begin{formalof@inner}}{\end{formalof@inner}}
\makeatother

\definecolor{thmtint}{HTML}{F3F7FC}
\definecolor{thmblue}{HTML}{1F4FA8}
\newcommand{\thmboxset}[1]{\tcolorboxenvironment{#1}{enhanced jigsaw, breakable,
  boxrule=0pt, frame hidden, colback=thmtint, arc=1.2mm,
  borderline west={1.5pt}{0pt}{thmblue!70},
  left=7pt, right=7pt, top=4pt, bottom=4pt,
  before skip=11pt, after skip=11pt}}
\AtBeginDocument{\thmboxset{theorem}\thmboxset{proposition}\thmboxset{lemma}%
  \thmboxset{corollary}\thmboxset{conjecture}\thmboxset{formalof}}

\begin{document}
\maketitle

\begin{abstract}
The Platonic Representation Hypothesis (PRH) holds that increasingly
capable models converge toward a common representation. However, recent
studies have shown different views on aligning representations via their
neighborhoods, spanning trees, and distances. We show that these views
can differ because the alignment criteria behind them sit at different
points of one lattice, and we formalize this explanation as the Lattice
Representation Hypothesis (LRH). Under the LRH, the convergence claimed
by the PRH is agreement at the top of the lattice, whereas its evidence
comes from neighbor overlap, a criterion near the bottom. The
Aristotelian Representation Hypothesis (ARH) restricts the claimed
convergence to local neighborhoods because calibration keeps the growth
of neighbor overlap with model size but largely removes that of centered
kernel alignment. We show that this removal can come from calibrating a
maximum over layers, so global convergence remains open. Back into
Plato's Cave (BPC) infers coarse shared structure from the collapse of
neighbor overlap on larger evaluation sets, whereas we trace this
collapse at least partly to a fixed number of neighbors covering shorter
distances. We measure the spanning-tree overlap used by Structure
over Geometry (SOG) and show that it mostly repeats
one-nearest-neighbor overlap. The LRH also extends to independently
trained models, whose training procedure and objective decide the
criteria met reliably across runs. We prove this claim for the training
procedure in a linear autoencoder and for the objective in a finite
contrastive Gibbs model. Experiments on dozens of pretrained vision and
language models and on thousands of runs of both theoretical models
support these conclusions. We hope this work paves the way toward a
general theory of representation hypotheses, with a framework for
analyzing these hypotheses and comparing their evidence. Our project page can be
found at \url{https://eethanshi.github.io/LRH}.
\end{abstract}

\section{Introduction}
\label{sec:intro}

Representation alignment across modalities has become central to
training large multimodal models \citep{liu2023visual} and matching
entities between multimodal knowledge graphs \citep{liu2021visual},
especially as models become larger and more capable
\citep{kaplan2020scaling} and their representations appear to grow more
alike. The Platonic Representation Hypothesis (PRH)
\citep{huh2024platonic} states that networks trained with different
objectives, data, and modalities converge toward a shared statistical
model of reality. PRH supports this claim by investigating whether
vision and language models share the $k$ nearest neighbors of each
image--caption pair and how this overlap grows with model capability,
using the mutual $k$-nearest-neighbor overlap (mKNN).
The Aristotelian Representation Hypothesis (ARH)
\citep{groger2026revisiting} questions and further studies this evidence
by calibrating scores against randomly paired inputs. Global
measures such as centered kernel alignment (CKA) largely lose the
size trend kept by mKNN, and ARH narrows the claimed convergence to
shared local neighborhoods. The Back into Plato's Cave study (BPC)
\citep{koepke2026back} tests this neighbor evidence on larger evaluation
sets and concludes that the shared structure is coarse, since mKNN at a
fixed $k$ collapses unless $k$ grows in proportion. The Structure over
Geometry study (SOG) \citep{you2026structure} challenges ARH's local view
by separating relational structure from metric geometry, and finds
this structure converging locally in neighbor sets and globally in
minimum spanning trees (MSTs).

The four studies thus offer competing hypotheses about the structure
shared by the same pretrained models. However, these hypotheses are drawn
from measures of different strength, applied under different settings
such as the numbers of evaluation samples and neighbors, the compared
layers, and calibration. If the models represent the same reality, it is natural to
ask \emph{whether the conflict comes from different ways of viewing their
representations}, which would require a general theory relating these
views.

To answer this question, we define an alignment criterion as a test that
compares how two models represent a shared set of evaluation samples,
through one relation among these samples (e.g., the neighbor sets in mKNN
or the distances in calibrated CKA), and accepts the pair of models if
the two relations agree within a tolerance under a fixed scale and
protocol. We call one
criterion stronger than another if every pair accepted by the first is
accepted by the second, so one pair of models can satisfy a criterion and
fail a stronger one. The criteria then form a lattice, on which
Figure~\ref{fig:lattice} places the four studies. Furthermore, as
studies also claim that networks trained with different objectives learn
similar representations \citep{huh2024platonic,bansal2021revisiting}, it
then remains to explore which criteria can be satisfied by a pair of
independently trained models. These two questions motivate our
\textit{lattice representation hypothesis}.

\begin{figure}[t]
\centering
\includegraphics[width=5.5in]{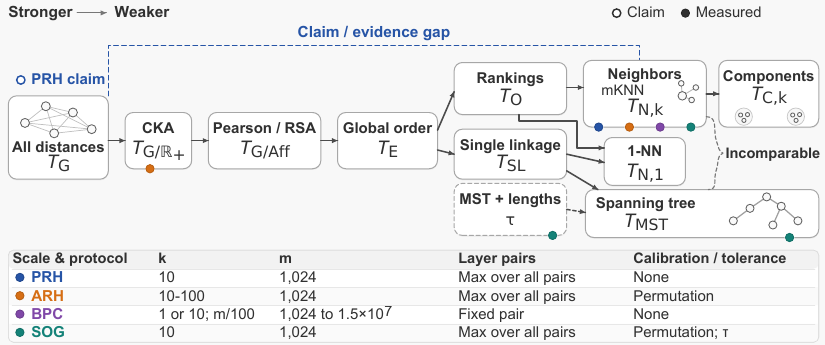}
\caption{The lattice of alignment criteria for the four studies, from strongest (left) to weakest (right).}
\label{fig:lattice}
\vspace{-8pt}
\end{figure}

\begin{tcolorbox}[housebox]
\textbf{Lattice Representation Hypothesis (LRH).} \emph{In the lattice of alignment criteria, the set of criteria accepting two models describes what is shared by their representations. Studies have conflicting conclusions because they apply
criteria at different points of this lattice. For independently trained models, the training
procedure and the objective decide the criteria met reliably across runs,
each in a different part of the lattice.}
\end{tcolorbox}

Under this lattice view, we obtain the following results.
\begin{enumerate}[leftmargin=1.6em,itemsep=0pt,topsep=2pt,parsep=0pt]
\item PRH's claim amounts to agreement under the criterion at the top of
the lattice, where its idealized learners agree on all distances
(Section~\ref{sec:geometry}). Its
evidence comes from mKNN, a criterion near the bottom that compares
neighbor sets (Proposition~\ref{prop:lattice}).

\item The loss of the CKA trend in ARH can come from calibrating a maximum
over layers, whose chance level grows with effective dimension
(Proposition~\ref{prop:baselines}, Corollary~\ref{cor:aggregation}). On
LLaMA-1, calibration turned the trend $0.68$ into $-0.85$, a drop
predicted from single-model features. With the preprocessing of PRH, the
trend of calibrated CKA in language-model size keeps its sign at mid depth
and at the last layer, so global convergence remains open
(Section~\ref{sec:exp-aggregation}).

\item A larger evaluation set at a fixed $k$ draws the neighbor boundary
closer to each input, so the collapse of mKNN reported by BPC is at least
partly a scale effect (Theorem~\ref{thm:crossing}). As in BPC,
mKNN with sentence encoders (preregistered) and with last-layer vision
features (exploratory) stays close to stable at a fixed $k/m$ for $k=10$
and $k=100$ (Section~\ref{sec:exp-scale} and Appendix~\ref{app:limitations}).

\item Every MST contains the nearest-neighbor edge of each input
(Proposition~\ref{prop:mst}), and on the studied vision--language pairs
these edges make up about $88\%$ of the shared tree edges
(Section~\ref{sec:exp-criteria}). On these pairs, the MST overlap used by
SOG thus mostly repeats one-nearest-neighbor overlap, while the number
of other shared edges exceeds its chance level without an established size trend.

\item The training procedure of a linear autoencoder decides whether
distances agree across runs by selecting the invertible map undone by its
linear head (Theorem~\ref{thm:gauge}). In a finite
contrastive Gibbs model, two learners trained with different contrastive
objectives agree on the order of distances for large samples and differ
in geometry, since
they learn one relation through different increasing functions
(Theorem~\ref{thm:link}). Preregistered tests confirm the main
predictions of both theorems in these models, whereas results on trained
networks outside the premises of the theorems contradict several of these
predictions (Sections~\ref{sec:exp-gauge} and~\ref{sec:exp-link}).
\end{enumerate}

\section{Alignment Criteria and Their Lattice}
\label{sec:criteria}

To state the lattice representation hypothesis formally as
Hypothesis~\ref{hyp:lrh}, we define alignment criteria, their order by
strength, and the probability that two independently trained models meet
a criterion.

\paragraph{Setting and Notation.}
We compare two models with representation maps $h_A$ and $h_B$ on $m$
evaluation samples $S=((x_i^A,x_i^B))_{i=1}^m$, each pairing an input
$x_i^A$ of model A with the corresponding input $x_i^B$ of model B, such
as an image and its caption. On $S$, model $a\in\{A,B\}$ has the Gram
matrix $(K_a)_{ij}=\langle h_a(x_i^a),h_a(x_j^a)\rangle$ of its
preprocessed features, the centered Gram matrix $\bar K_a=HK_aH$ with
$H=I-\ind\ind^\top/m$, and the squared distances $D_a=D(K_a)$ with
$D(K)_{ij}=K_{ii}+K_{jj}-2K_{ij}$. The preprocessing map of our
protocols (Section~\ref{sec:protocol}) normalizes features to unit
length, so Gram matrices have unit diagonal unless stated otherwise.
Let $\Nset_k^a(i)$ be the set of the $k$ inputs $l\ne i$ nearest to
input $i$ under $D_a$, with $1\le k\le m-2$ and a fixed tie-breaking
rule. We write $\Tset_a$ for the edge set of the minimum spanning tree
(MST) of the complete graph on the $m$ inputs weighted by the distances
$d_a=D_a^{\circ1/2}$, i.e., the entrywise square roots of $D_a$. Mutual
$k$-nearest-neighbor overlap (mKNN), MST overlap, and linear centered
kernel alignment (CKA) are
$s_N=(mk)^{-1}\sum_i|\Nset_k^A(i)\cap\Nset_k^B(i)|$,
$s_{\Tset}=|\Tset_A\cap\Tset_B|/(m-1)$, and
$\CKA(K_A,K_B)=\langle\bar K_A,\bar K_B\rangle_F/(\norm{\bar K_A}_F\norm{\bar K_B}_F)$
\citep{huh2024platonic,you2026structure,kornblith2019similarity}, with
the conventions of Appendix~\ref{app:measurement-definitions}.

\subsection{Observables, Protocols, and Criteria}
\label{sec:observables}
\label{sec:protocol}

Each of these scores depends on the two Gram matrices through one kind
of relation among the inputs, such as the neighbor sets for mKNN or the
spanning-tree edges for MST overlap, and an observable maps a Gram
matrix to such a relation.

\begin{definition}[Observable and Information Refinement]
\label{def:refinement}
An \emph{observable} is a measurable map $T:\Kset\to\Yset_T$ on a common
domain $\Kset\subseteq\mathbb S_+^m$ of admissible Gram matrices. For
observables on the same domain, $T_2$ refines $T_1$, written
$T_1\preceq T_2$, if $T_1=q\circ T_2$ for
some measurable map $q:\Yset_{T_2}\to\Yset_{T_1}$, i.e., the value of
$T_2$ determines the value of $T_1$.
\end{definition}

Besides the squared distances $T_G(K)=D(K)$ and the neighbor sets
$T_{N,k}$, Appendix~\ref{app:measurement-definitions} defines four
observables built from graphs on the inputs, i.e., the connected
components $T_{C,k}$ of the $k$-nearest-neighbor graph, the
single-linkage dendrogram $T_{SL}$, the MST edge set $T_{\mathrm{MST}}$,
and the nearest-neighbor map $T_{N,1}$. The remaining observables of this
appendix are the rankings $T_O$ of the other inputs by their distance to each input, the global
order $T_E$ of all distances, and the classes $T_{G/\mathrm{Aff}}$ and
$T_{G/\R_+}$ of $D(K)$ under positive affine maps and rescaling. The Pearson correlation of two distance matrices is a function of their
classes under positive affine maps, and linear CKA is a function of the
classes of the two distance matrices under rescaling.

\begin{proposition}[Two Branches below the Global Order]
\label{prop:lattice}
On Gram matrices whose distances $D(K)_{ij}$, $i<j$, are pairwise
distinct, the following hold:
\textup{(1)}~$T_{C,k}\preceq T_{N,k}\preceq T_O\preceq T_E\preceq
T_{G/\mathrm{Aff}}\preceq T_{G/\R_+}\preceq T_G$;
\textup{(2)}~$T_{\mathrm{MST}}\preceq T_{SL}\preceq T_E$ and
$T_{N,1}\preceq T_{SL}$;
\textup{(3)}~for every $m\ge4$ and $1\le k\le m-2$,
$T_{\mathrm{MST}}\not\preceq T_O$, hence
$T_{\mathrm{MST}}\not\preceq T_{N,k}$, and
$T_{N,k}\not\preceq T_{\mathrm{MST}}$.
\end{proposition}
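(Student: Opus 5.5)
Parts (1) and (2) amount to exhibiting, for each consecutive pair $T_1\preceq T_2$, an explicit map $q$ with $T_1=q\circ T_2$. Part (3) needs counterexamples: two admissible Gram matrices on which the finer-looking observable agrees but the coarser one differs. Throughout I will use that distinct distances make every ranking, neighbor set, MST and dendrogram unique, so the tie-breaking rule never enters.

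For (1), I read the chain from right to left. A matrix $D(K)$ determines its class under rescaling, which in turn determines its class under positive affine maps, since every rescaling is a positive affine map. Positive affine maps $t\mapsto at+b$ with $a>0$ preserve the order of off-diagonal entries, so the affine class determines the global order $T_E$. The global order of all distances restricts to the order of $\{D_{il}\}_{l\ne i}$ for each $i$, which is $T_O$. The first $k$ entries of each ranking give $\Nset_k(i)$, so $T_{N,k}$ is determined. The neighbor sets in turn determine the $k$-NN graph and hence its components $T_{C,k}$. Each step is measurable because the target spaces are finite, or quotients with the obvious measurable structure.

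For (2), I invoke the standard fact that single-linkage merges occur in increasing order of edge weight and depend only on that order, via Kruskal. So $T_E$ determines $T_{SL}$, the dendrogram together with its merge order. Kruskal's algorithm run on the global order also yields the MST. The key step is $T_{\mathrm{MST}}\preceq T_{SL}$: I will check that each merge in the dendrogram joins two clusters through exactly one MST edge. That edge is recovered as the unique pair $(i,j)$ across the two merged clusters, but a bare dendrogram records only which clusters merge, not which pair attains the minimum. If the paper's $T_{SL}$ is the dendrogram as nested partitions, I will instead use the fact that the MST under distinct weights is the set of edges $\{i,j\}$ whose weight is minimal across the cut at their merge. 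Recovering this from the partition sequence alone is the main obstacle. I expect to rely on Appendix~\ref{app:measurement-definitions} defining $T_{SL}$ so that it records the merging edges, namely the sequence of Kruskal edges, and then $q$ simply forgets the order. For $T_{N,1}\preceq T_{SL}$, the nearest neighbor of $i$ is the partner of $i$ in the first merge that involves $i$'s singleton, so it is read off from the merge edges.

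For (3), fix $m\ge4$ and $k$. It suffices to build two point configurations, realizable as unit-norm Gram matrices, with identical rankings $T_O$ but different MSTs. Then $T_{\mathrm{MST}}\not\preceq T_O$ follows, and, since $T_{N,k}\preceq T_O$, also $T_{\mathrm{MST}}\not\preceq T_{N,k}$. The idea is that $T_O$ only compares distances sharing an endpoint, whereas the MST compares disjoint pairs. Take points $1,2,3,4$ with $d_{12}$ and $d_{34}$ small, and let the cheapest connection between the two clusters be either $\{1,3\}$ or $\{2,4\}$ depending on whether $d_{13}<d_{24}$. These two distances share no endpoint, so they can be swapped without changing any ranking. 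The remaining points are placed far away with a fixed attachment to keep the construction consistent. I will then embed the configuration on a sphere of small angular extent, where the Euclidean metric is nearly flat, and perturb to make all distances distinct.

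For $T_{N,k}\not\preceq T_{\mathrm{MST}}$, I take a path-like configuration with a fixed MST, the path $1-2-\dots-m$, and vary non-tree distances to change some $\Nset_k(i)$. For $k=1$ that is impossible, because $T_{N,1}\preceq T_{SL}$ already holds. For $k=1$ I would try the pure MST edge set instead: the nearest neighbor of an interior path vertex can be either path neighbor while the tree stays the same. For $k\ge2$, changing the third-nearest neighbor among non-tree points does the job. I will verify each configuration numerically as well as by hand.
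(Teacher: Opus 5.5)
Your proposal is correct and follows the paper's proof essentially step by step. It uses the same chain of forgetful maps for part~(1). For part~(2) it uses Kruskal's acceptance order, with $T_{SL}$ taken, as the paper defines it, to be the MST edges together with their order of addition, so that $\NN(i)$ is the other endpoint of the first accepted edge at $i$. For part~(3) it uses the same near-rectangle whose two long, disjoint sides swap their order (the paper moves $\mathbf p_1$ to $(2\pm0.1,0)$), padded by far-away points at geometrically growing heights, plus a collinear configuration whose path MST ignores the spacing; there, moving the end point $\mathbf p_1$ from $-1/2$ to $-2m$ changes the $k$-neighbor set of $\mathbf p_2$ for every $k\le m-2$ at once, which also settles your vaguer $k\ge2$ step.

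The differences from the paper are minor. The paper realizes planar configurations exactly through the Gaussian Gram matrix $K_{ij}=e^{-\norm{\mathbf p_i-\mathbf p_j}^2/2}$, whose $D(K)$ is a strictly increasing function of the squared Euclidean distances, rather than by your small-cap approximation. Your fallback for a dendrogram recorded only as nested partitions is not needed, and it would in fact fail, since the partitions alone do not determine which crossing edge lies in the MST.
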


The third part places neighbor sets and spanning trees on separate branches below
$T_E$ (Appendix~\ref{app:proofs-prelim} proves all three parts).
Since learned representations carry nonzero error, a criterion on an
observable $T$ also fixes a tolerance $\varepsilon\ge0$ on a discrepancy
$\Delta\ge0$ between values of $T$, such as $\Delta_N=1-s_N$ for
neighbor sets, whereas the exact criterion $(T,=)$ takes $\varepsilon=0$
and $\Delta=\indic\{T(K)\ne T(K')\}$. The neighbor count $k$ and the
number $m$ of inputs set the scale of a criterion, i.e., the range of
distances over which the two models are compared. BPC changes this scale
at fixed $k$ through the size $m$ of its evaluation sets. The length
tolerance $\tau$ of SOG sets how strongly a mismatch between the
normalized lengths of a shared edge in the two models lowers the weight
of this edge (Equation~\eqref{eq:sog-weight}).

To compare the two models across layers, we write $K_a^\ell$ for the
Gram matrix of model $a$ at layer $\ell$ and $s_{\ell\ell'}$ for the
score computed on $K_A^\ell$ and $K_B^{\ell'}$.
An evaluation protocol $\Pcal=(\mathrm{pre},\mathrm{agg},\mathrm{cal})$
consists of a preprocessing map $\mathrm{pre}$ of the features of each
layer, an aggregation $\mathrm{agg}$ of the layer-pair scores
$(s_{\ell\ell'})_{\ell,\ell'}$ into one score $s$, and a calibration
$\mathrm{cal}$ of $s$. The calibration uses the scores computed by the
same protocol after a uniformly random permutation $\pi$ of the inputs
of one model. The
\emph{random-pairing baseline} of a score is its expectation under $\pi$.
A protocol acts equivalently on a score $s$ with largest possible value
$s_{\max}$ and on its discrepancy $s_{\max}-s$, so the maximum of the
layer-pair scores corresponds to the minimum of the layer-pair
discrepancies. PRH, ARH, and SOG aggregate by the
maximum $\max_{\ell,\ell'}s_{\ell\ell'}$ over layer pairs
(Table~\ref{tab:claims-practice}), whereas a fixed-layer protocol uses
$s_{\ell_0\ell_0'}$. ARH calibrates an observed score $s_{\rm obs}$ to
$s_{\rm cal}=[s_{\rm obs}-c_\alpha]_+/(s_{\max}-c_\alpha)$ with
$[u]_+=\max\{u,0\}$, where the calibration threshold $c_\alpha$ is a $(1-\alpha)$ quantile of the aggregated scores on
the observed pairing and on the permuted pairings of the inputs
\citep{groger2026revisiting}.

\begin{definition}[Alignment Criterion]
\label{def:criterion}
An alignment criterion $Q=(T,\Delta,\varepsilon;\Pcal)$ consists of an
observable $T$, a discrepancy $\Delta$, a tolerance $\varepsilon\ge0$, and
an evaluation protocol $\Pcal$ with fixed scale parameters and
permutation draws. It accepts a pair of models if $\Pcal$ computes a
discrepancy of at most $\varepsilon$ from the layer-pair discrepancies
$\Delta(T(K_A^\ell),T(K_B^{\ell'}))$. Its acceptance set $\Aset_Q$ is
the set of tuples of layer Gram matrices satisfying this condition, and
we treat criteria with equal acceptance sets as the same criterion.
\end{definition}

We write $(K_A,K_B)\in\Aset_Q$ when the tuple of layer Gram matrices of
the two models lies in $\Aset_Q$. On a common domain, we call $Q_2$ stronger than $Q_1$, written
$Q_1\preceq Q_2$, if $\Aset_{Q_2}\subseteq\Aset_{Q_1}$, i.e., every pair
accepted by $Q_2$ is accepted by $Q_1$. If intersections and unions of acceptance sets also count as criteria,
the criteria on a common domain form a lattice under strength
(Appendix~\ref{app:criteria-supp}).\footnote{Figure~\ref{fig:lattice}
shows the named criteria, a subposet of this lattice; their joins and
meets, such as the criterion requiring agreement on both $T_{N,k}$ and
$T_{\mathrm{MST}}$, complete it. Its arrows show refinement of
observables, and in this order the meet of two observables is their
finest common coarsening. The exact criterion on this meet can accept more
pairs than the meet of the exact criteria on the two observables (the
union of their acceptance sets). In contrast, the join of these two exact
criteria equals the exact criterion on the join of the two observables.} Any two criteria then have a weakest criterion stronger
than both (their join) and a strongest criterion weaker than both (their
meet). Since strength reduces to refinement
for exact criteria with a common protocol on the observables used here,
Proposition~\ref{prop:lattice} describes the exact part of this lattice.
One pair of models can satisfy a criterion on one branch and fail a
criterion on the other. To explain such opposite results on one pair, we
identify the quantities that determine the outcome of each criterion
(Section~\ref{sec:reads}). Such a quantity is a function of the two
representations and fixes the discrepancy of the criterion or the
expectation of this discrepancy over the evaluation samples, exactly or
up to a stated error.

\subsection{Reliable Agreement and the Hypothesis}
\label{sec:learning-law}

The criteria above compare two fixed representations, whereas repeated
training can change the representation map of a model. For each
model, we choose where the representation map takes its features, such
as a specified layer, and leave aggregation over layers to the protocol
of Section~\ref{sec:protocol}. The learning condition
$r_a=(P_a,N_a,\mathcal H_a,\mathcal L_a,\mathsf{Train}_a)$ of model $a$
consists of its data distribution, training sample size, model class,
objective, and training procedure. Repeated draws of the training sample
and of the training randomness induce a law $\mu_{r_a}$ of Gram matrices
on evaluation samples drawn independently of training. With independent draws for the two models, $Q$ is met
with probability
$p_Q(r_A,r_B)=\Prob_{\mu_{r_A}\otimes\mu_{r_B}}((K_A,K_B)\in\Aset_Q)$.
An intervention changes some entries of $(r_A,r_B)$, and
$\mathcal I(r_A,r_B)$ denotes the pairs of conditions reached by one
family of interventions, such as all increases of the training sample
size.

\begin{definition}[Reliable Agreement and Obstruction]
\label{def:reliable}
Fix a level $1-\delta$ with $0<\delta<1$ and a separation
$\eta\in(0,1-\delta)$, i.e., the gap between the thresholds for reliable
agreement and for obstruction. For models trained under $r_A$ and $r_B$, agreement under
$Q$ is \emph{reliable at level $1-\delta$} if
$p_Q(r_A,r_B)\ge1-\delta$. A criterion $Q$ is \emph{obstructed} over
$\mathcal I(r_A,r_B)$ if
$\sup_{(r'_A,r'_B)\in\mathcal I(r_A,r_B)}p_Q(r'_A,r'_B)\le1-\delta-\eta$,
and, for a family of pairs of learning conditions indexed by the
training sample size $N$, \emph{asymptotically obstructed} if $\limsup_{N\to\infty}p_Q\le1-\delta-\eta$.
\end{definition}

\begin{hypothesis}[Lattice Representation Hypothesis]
\label{hyp:lrh}
\textup{(1)}~For two fixed representations with Gram matrices $K_A$ and $K_B$,
the set $\{Q:(K_A,K_B)\in\Aset_Q\}$ of criteria accepting the pair
describes what is shared by the two representations. Studies reach
conflicting conclusions about the same models because they apply
criteria at different points of the lattice. \textup{(2)}~For learning
conditions $(r_A,r_B)$ and a level $1-\delta$, let
$\mathcal R_\delta(r_A,r_B)=\{Q:p_Q(r_A,r_B)\ge1-\delta\}$ be the set of
criteria met reliably across runs. The training procedure, the
objective, the sample size, and the model class each decide this set in
a different part of the lattice.
\end{hypothesis}

\section{What Determines the Outcome of a Criterion}
\label{sec:reads}

To support part~1 of Hypothesis~\ref{hyp:lrh}, we explain why the four
studies reach conflicting conclusions about the same models. Two
representations can differ in many ways, and each criterion detects only
some of these differences. For example, a
nonlinear change of all distances that keeps their order leaves every neighbor set
unchanged and can still lower CKA. Starting from mKNN, we change its
scale, replace it with MST overlap and then with CKA, and finally change
the protocol. The four studies differ in these choices and thereby apply
criteria at different points of the lattice.

\subsection{Neighbor Overlap Is Determined by Boundary Crossings at the Scale \texorpdfstring{$k/m$}{k/m}}
\label{sec:neighbors}

For a neighbor criterion such as mKNN, the neighbor count $k$ and the
number $m$ of inputs set the scale, i.e., the distance from each anchor
to its $k$th nearest input. We ask on which differences between the two
representations the outcome of this criterion depends at each scale. Since the neighbor set $\Nset_k^a(i)$ depends on the distances
from its anchor $i$ alone, we compare the two models through the joint
law of their distances from an anchor to a random input. We assume that
the evaluation inputs are independent draws from a law $P_S$ of
evaluation samples, with an image and its caption drawn together as one
sample in the cross-modal case. We call the joint law of the squared
distances $(D_A(x,x'),D_B(x,x'))$ for $x'\sim P_S$ the cross-model
distance coupling at the anchor $x$.

The coupling records how far apart model B places two inputs found at a
given distance in model A. The two neighbor sets of an anchor differ
when model B carries inputs across the neighbor boundary,
which separates the $k$ nearest inputs of the anchor in model A from the
other inputs. A worst-case margin certifies neighbor agreement at an
anchor $i$ whose gap between the $k$th and $(k+1)$th smallest distances
in model A exceeds $2\max_{j\ne i}|D_A(i,j)-D_B(i,j)|$
\citep[cf.][and Lemma~\ref{lem:margins}]{sumiya2026finite}. Since this
gap shrinks with $m$, the margin certifies a smaller share of anchors on
larger evaluation sets, and to describe the typical anchor we count the
inputs crossing the neighbor boundary. At an anchor $i$ with
$b_l=D_B(i,l)$, a threshold $\vartheta$ is crossed by the
$X(\vartheta)=|\{j\in\Nset_k^A(i):b_j>\vartheta\}|$ neighbors of model A
pushed out by model B and the
$Y(\vartheta)=|\{l\notin\Nset_k^A(i)\cup\{i\}:b_l\le\vartheta\}|$
non-neighbors pulled below the threshold.

\begin{theorem}[Crossing Law]
\label{thm:crossing}
For every anchor $i$ and threshold $\vartheta$, the number
$|\Nset_k^A(i)\setminus\Nset_k^B(i)|$ of replaced neighbors lies between
$X(\vartheta)$ and $Y(\vartheta)$. Let $\mathcal G$ be a $\sigma$-field
with respect to which row $i$ of $D_A$ is measurable. If the entries
$b_l$ with $l\ne i$ are independent with continuous laws given
$\mathcal G$, then there is a $\mathcal G$-measurable threshold
$\vartheta^*$ with
$\E[X(\vartheta^*)\mid\mathcal G]=\E[Y(\vartheta^*)\mid\mathcal G]$. For
this $\vartheta^*$, the conditional expectation of the number of replaced
neighbors given $\mathcal G$ lies almost surely within
$(\kappa_i/2)^{1/2}$ of $\kappa_i=\E[X(\vartheta^*)\mid\mathcal G]$, which
is almost surely the same at every such $\vartheta^*$ and is unchanged when one
deterministic strictly increasing function is applied to every entry of
row $i$ of $D_A$ or of $D_B$.
\end{theorem}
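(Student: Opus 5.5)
We argue in three steps: a deterministic sandwich, the existence of a balancing threshold, and a concentration bound; invariance under increasing maps comes at the end.

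\emph{Deterministic sandwich.} Fix anchor $i$ and a threshold $\vartheta$. Write $N_A=\Nset_k^A(i)$ and $N_B=\Nset_k^B(i)$, and let $R=|N_A\setminus N_B|$. Since $|N_A|=|N_B|=k$, we also have $R=|N_B\setminus N_A|$. The set $N_B$ consists of the $k$ smallest values $b_l$ under the tie-breaking rule. Put $L(\vartheta)=|\{l\ne i:b_l\le\vartheta\}|$.

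Suppose first $L(\vartheta)\le k$. Every $l$ with $b_l\le\vartheta$ then lies in $N_B$, so the $Y(\vartheta)$ pulled-in non-neighbours all lie in $N_B\setminus N_A$, giving $Y\le R$. Moreover $L=(k-X)+Y\le k$ gives $Y\le X$. Also, every element of $N_A\setminus N_B$ has $b>\vartheta$, so $R\le X$.

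If instead $L(\vartheta)\ge k$, the symmetric argument gives $X\le R\le Y$. In both cases $R$ lies between $X(\vartheta)$ and $Y(\vartheta)$.

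\emph{Balancing threshold.} Conditionally on $\mathcal G$, the set $N_A$ is fixed. Hence
\[
\E[X(\vartheta)\mid\mathcal G]=\sum_{j\in N_A}(1-F_j(\vartheta)),\qquad
\E[Y(\vartheta)\mid\mathcal G]=\sum_{l\notin N_A\cup\{i\}}F_j(\vartheta)\big|_{j=l},
\]
where $F_l$ is the continuous conditional distribution function of $b_l$. The first function is continuous and nonincreasing from $k$ to $0$. The second is continuous and nondecreasing from $0$ to $m-1-k$. Their difference is continuous, nonincreasing, positive at $-\infty$ and negative at $+\infty$. By the intermediate value theorem, the set of zeros is a nonempty closed interval.

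A measurable selection such as $\vartheta^*=\inf\{\vartheta\in\mathbb Q\text{-limits}:\E X\le\E Y\}$ is $\mathcal G$-measurable. On the zero interval the difference is constant, and both functions are monotone in opposite directions. Their sum equals the difference plus $2\E Y$, and $\E X$ is nonincreasing while $\E Y$ is nondecreasing. With $\E X=\E Y$ on the interval, $\E X$ is both nonincreasing and nondecreasing there, hence constant. So $\kappa_i$ does not depend on the choice of $\vartheta^*$.

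\emph{Concentration.} We now bound $|\E[R\mid\mathcal G]-\kappa_i|$, which we expect to be the main obstacle. By the sandwich,
\[
|R-\tfrac{X+Y}{2}|\le\tfrac{|X-Y|}{2}.
\]
Since $\E[X+Y\mid\mathcal G]/2=\kappa_i$ at $\vartheta^*$, Jensen's inequality gives
\[
|\E[R\mid\mathcal G]-\kappa_i|\le\tfrac12\bigl(\mathrm{Var}(X-Y\mid\mathcal G)\bigr)^{1/2}.
\]
Conditionally, $X$ and $Y$ are independent sums of independent Bernoulli variables, because they involve disjoint index sets. Therefore
\[
\mathrm{Var}(X-Y)=\mathrm{Var}X+\mathrm{Var}Y\le\E X+\E Y=2\kappa_i.
\]
This yields the bound $\tfrac12(2\kappa_i)^{1/2}=(\kappa_i/2)^{1/2}$, which is exactly the claimed one.

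\emph{Invariance.} Let $g$ be deterministic and strictly increasing.
\begin{itemize}
\item If $g$ is applied to row $i$ of $D_A$, the order is preserved, so $N_A$ is unchanged, and $X,Y,\kappa_i$ are untouched.
\item If $g$ is applied to $b_l$, then $\{g(b_l)>g(\vartheta)\}=\{b_l>\vartheta\}$. Balancing at $\vartheta$ thus corresponds to balancing at $g(\vartheta)$ for the transformed entries.
\end{itemize}
One subtlety arises when $g$ has jumps and the balancing point falls in a gap of its range. We handle it by noting that the transformed expectations are constant across the gap, so the balancing value still equals $\kappa_i$. Continuity of the laws of $g(b_l)$ is preserved because $g$ is injective.
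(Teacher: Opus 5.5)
Your proposal is correct and follows essentially the same route as the paper's proof. It uses the sandwich from a case split on the number of entries $b_l\le\vartheta$, and a balancing threshold from the continuity and opposite monotonicity of $\E[X(\vartheta)\mid\mathcal G]$ and $\E[Y(\vartheta)\mid\mathcal G]$, taken as an infimum over rationals. It then uses the bound $|R-(X+Y)/2|\le|X-Y|/2$ together with $\mathrm{Var}\le$ mean for the conditionally independent Bernoulli sums, and invariance through the correspondence of the events $\{b_l>\vartheta\}$. You assert rather than prove the $\mathcal G$-measurability of $\vartheta^*$; it follows as in the paper, since $\{\vartheta^*<t\}$ is the countable union over rationals $q<t$ of the $\mathcal G$-measurable events $\{\E[X(q)\mid\mathcal G]\le\E[Y(q)\mid\mathcal G]\}$. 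Your explicit handling of jumps in the increasing map is a detail the paper leaves implicit.
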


By Theorem~\ref{thm:crossing}, the expected fraction of replaced
neighbors is $\kappa_i/k$ up to an error of order $\sqrt{\kappa_i}/k$.
The loss of neighbor overlap thus counts the neighbors pushed across the
neighbor boundary by model B, a count unchanged by deterministic strictly increasing
maps of either row. Fixed models on independent inputs satisfy the assumption of
the theorem whenever the coupling has continuous conditional laws
(Lemma~\ref{lem:rows} in Appendix~\ref{app:proofs-reads}). Under this
condition, the crossing law holds on pretrained models without a noise
model, and neighbor overlap depends on the coupling through the expected crossings at each
anchor.

With a growing ratio $m/k$, the neighbor boundary moves inward and $\kappa_i$
depends on the coupling at smaller distances. At a fixed $k$, a neighbor criterion on a larger
evaluation set therefore compares the two models at a finer scale. The outcomes
of two neighbor criteria differing in $m$ alone depend on the coupling at different
distances, and one pair of models can satisfy one of these criteria and fail the other. Conjecture~\ref{conj:scale} in Appendix~\ref{app:scale} sorts
the limits of $s_N$ at fixed $k$ into three classes, a classification
whose proof awaits a local Poisson limit. The limit is one when the two
embeddings agree up to a local rescaling of distances, lies in $(0,1)$
for other smooth embeddings, and is zero when model B separates inputs
placed arbitrarily close by model A. In these three classes, the heuristic
first-order law behind the conjecture predicts that $s_N$ at fixed $k$
rises, stays flat, or falls with $m$. At a fixed $k/m$, the same law
predicts a stable $s_N$ for large $m$.

The size trend of mKNN reported by PRH and kept by ARH is a trend in the
crossings at the neighbor boundary, measured in each study at one scale set
by its $k$ and $m$. BPC enlarges the evaluation set at a fixed $k$ and
finds that mKNN between vision and language models collapses unless $k$
grows in proportion to $m$. This fall of mKNN with the evaluation size comes
at least partly from the change to a finer scale, and the collapse is
consistent with the third class of Conjecture~\ref{conj:scale}. On one
million WIT pairs, the neighbor criteria compared by BPC are two points
of one lattice, criteria on the observables $T_{N,10}$ and
$T_{N,m/100}$, the second keeping the scale $k/m$ of PRH. PRH and BPC thus reach different conclusions from
neighbor criteria at different scales. At every scale, neighbor overlap depends on the crossings at each
anchor separately, through the ranking of the other inputs by their
distance from this anchor.

\subsection{Spanning Trees Contain the Nearest-neighbor Forest}
\label{sec:mst}

Since neighbor overlap depends on the crossings at each anchor separately,
we keep the inputs and move to the MST. This tree is built by comparing
distances from different anchors, and the observable $T_{\mathrm{MST}}$
lies on the other branch below the order $T_E$.
By Proposition~\ref{prop:lattice}, the rankings from
every anchor leave this tree undetermined, and we ask whether part of
the MST overlap nevertheless comes from one-nearest-neighbor overlap.
Since Kruskal's algorithm builds the MST from the shortest edges first,
part of each tree is decided at the nearest-neighbor scale, and
Proposition~\ref{prop:mst} names this part.

\begin{proposition}[Spanning Trees Contain the Nearest-neighbor Forest]
\label{prop:mst}
If each model $a$ has pairwise distinct distances $D_a(i,j)$ with $i<j$,
the MST of each model contains its nearest-neighbor forest $F_a$, which
joins each input to its nearest neighbor. The other $n^{\rm mut}_a-1$
edges of the MST are bridges joining two trees of $F_a$, where
$n^{\rm mut}_a$ counts the pairs of mutual nearest neighbors, and
$s_{\Tset}\ge\max\{m\,s_{N,1}-n^{\rm mut}_A,\;m\,s_{N,1}/2\}/(m-1)$ for
the one-nearest-neighbor overlap $s_{N,1}$, i.e., $s_N$ at $k=1$.
\end{proposition}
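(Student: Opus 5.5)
The plan is to use the cut property of minimum spanning trees, and then reduce the overlap bound to counting distinct nearest-neighbor edges.

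\emph{Uniqueness and the forest.} Pairwise distinct distances $D_a(i,j)$ give pairwise distinct edge weights $d_a(i,j)$. Hence the MST of model $a$ is unique, and the nearest neighbor $\nu_a(i)$ of each input is well defined. For an anchor $i$, take the cut $\{i\}$ versus the other inputs. The unique lightest edge across this cut is $\{i,\nu_a(i)\}$, so by the cut property it belongs to every MST, hence to $\Tset_a$. Therefore $F_a=\{\{i,\nu_a(i)\}:i\}\subseteq\Tset_a$.

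\emph{Structure of $F_a$.} $F_a$ is the undirected graph of the functional graph $i\mapsto\nu_a(i)$. Any directed cycle $i_1\to i_2\to\cdots\to i_r\to i_1$ has strictly decreasing edge weights $d_a(i_1,i_2)>d_a(i_2,i_3)>\cdots$ unless $r=2$. The strict decrease holds because $\nu_a(i_2)\neq i_1$ forces $d_a(i_2,i_3)<d_a(i_2,i_1)$, and distinctness makes this strict. A cycle must return to its starting weight, so the only cycles are 2-cycles, i.e.\ mutual nearest-neighbor pairs.

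Each of the $m$ directed arcs gives one undirected edge, and the two arcs of a mutual pair give the same edge. So $|F_a|=m-n^{\rm mut}_a$. $F_a$ is acyclic as an undirected graph, since it is a subgraph of the tree $\Tset_a$. It therefore has $m-|F_a|=n^{\rm mut}_a$ components. The remaining $(m-1)-(m-n^{\rm mut}_a)=n^{\rm mut}_a-1$ edges of $\Tset_a$ are exactly the edges that merge these components into one tree. Hence each of them is a bridge joining two trees of $F_a$.

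\emph{Overlap bound.} We have $m\,s_{N,1}=|\{i:\nu_A(i)=\nu_B(i)\}|=:|I|$. For each $i\in I$, the edge $e(i)=\{i,\nu_A(i)\}$ lies in $F_A\cap F_B\subseteq\Tset_A\cap\Tset_B$. This gives $|\Tset_A\cap\Tset_B|\ge|e(I)|$, and it remains to lower-bound the number of distinct edges $|e(I)|$.

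Two distinct anchors $i\ne j$ have $e(i)=e(j)$ only if $\nu_A(i)=j$ and $\nu_A(j)=i$, that is, only if $\{i,j\}$ is a mutual pair of model A. So each edge has at most two preimages, which gives $|e(I)|\ge|I|/2$. Moreover, each collision consumes a distinct mutual pair of A, so $|e(I)|\ge|I|-n^{\rm mut}_A$. Dividing by $m-1$ yields the stated bound.

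The only delicate step is the acyclicity and component count of $F_a$, which rests on strict inequalities from distinctness. The rest follows directly from the cut property.
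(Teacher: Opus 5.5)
Your proof is correct and follows the paper's argument essentially step for step. Both use the cut property at $\{i\}$ to get $F_a\subseteq\Tset_a$, count $|F_a|=m-n^{\rm mut}_a$ from the arcs of $i\mapsto\nu_a(i)$ coinciding exactly on mutual pairs, and use acyclicity of $\Tset_a$ to obtain the $n^{\rm mut}_a-1$ bridges. The overlap bound then comes from the same set $I$ with the two collision bounds $|I|/2$ and $|I|-n^{\rm mut}_A$. The directed-cycle argument you flag as delicate is not actually needed once you take acyclicity from $F_a\subseteq\Tset_a$, since the edge count already follows because two distinct arcs give the same undirected edge only for a mutual pair.
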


Proposition~\ref{prop:mst} names a part of each tree fixed by the
nearest-neighbor relations alone, the nearest-neighbor forest, and this
forest makes up at least half of each tree because the mutual pairs are
disjoint. The two branches share the nearest-neighbor map
$T_{N,1}$, which determines this forest and lies below both $T_O$ and
$T_{SL}$. Since each tree contains its forest, the shared MST edges
include the shared nearest-neighbor edges, and the bound on $s_{\Tset}$
keeps MST overlap at least half of the one-nearest-neighbor overlap. MST overlap depends on
structure beyond neighbor sets through the bridges, which carry the
information in the MST beyond the nearest-neighbor scale. The tree also leaves the
neighbor sets undetermined (Proposition~\ref{prop:lattice}), and one
pair of models can satisfy a criterion on spanning trees and fail a
criterion on neighbor sets.

SOG infers converging global structure from the size trend of MST
overlap. By Proposition~\ref{prop:mst}, the part of this trend carried
by the shared nearest-neighbor edges depends on the models through their
nearest-neighbor maps alone, as does the trend of one-nearest-neighbor
overlap. The trend of MST overlap supports this conclusion of SOG to the
extent that it is carried by the bridges. Both branches lie below the order $T_E$ of all
distances, and two representations whose distances differ by a strictly
increasing distortion therefore agree on neighbor sets and spanning
trees.

\subsection{Unlike Neighbor Overlap, CKA Also Depends on a Monotone Link}
\label{sec:geometry}

A strictly increasing distortion of all distances can still change the
value of CKA (Proposition~\ref{prop:bernstein} in
Appendix~\ref{app:proofs-reads}), whose observable $T_{G/\R_+}$ lies
above $T_E$ in the chain of Proposition~\ref{prop:lattice}. To find which quantities determine
the outcome of a criterion above $T_E$ in addition to the order of all
distances, we split the difference into a
monotone distortion of the distances and the remaining disagreement. We
write $\CKA(D,D')$ for the CKA of $HDH$ and $HD'H$,
which equals the CKA of the Gram matrices.
For hollow symmetric $m\times m$ matrices $D_A$ and $D_B$, a monotone
link from $D_A$ to $D_B$ is a nondecreasing function
$\phi:[0,\infty)\to\R$ applied entrywise off the diagonal, the residual
around $\phi$ is the hollow symmetric matrix $R=D_B-\phi(D_A)$, and an
isotonic link minimizes $\norm{R}_F$ over all monotone links.

\begin{proposition}[CKA under a Monotone Link and a Residual]
\label{prop:cka}
For each $m$, let $D_A$ be a fixed nonzero hollow symmetric matrix with
nonnegative entries, $\phi$ increasing, and $D_B=\phi(D_A)+R$ for a
random hollow symmetric residual $R$ whose entries above the diagonal are
independent with mean zero, variance $\sigma^2$, and fourth moments
bounded uniformly in $m$. As $m\to\infty$ with $\sigma^2$ and
$\varsigma^2=\norm{H\phi(D_A)H}_F^2/(m(m-1))$ bounded away from zero and
infinity,
$\CKA(D_A,D_B)-\CKA(D_A,\phi(D_A))/\sqrt{1+\sigma^2/\varsigma^2}\to0$ in
probability.
\end{proposition}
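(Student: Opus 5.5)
We expand CKA as a ratio of Frobenius inner products and show that the residual contributes a vanishing cross term and a deterministic-size norm term. Write $\bar A=HD_AH$, $\bar\Phi=H\phi(D_A)H$ and $\bar R=HRH$, so that $HD_BH=\bar\Phi+\bar R$. Then
\begin{equation*}
\CKA(D_A,D_B)=\frac{\langle\bar A,\bar\Phi\rangle_F+\langle\bar A,\bar R\rangle_F}{\norm{\bar A}_F\,\bigl(\norm{\bar\Phi}_F^2+2\langle\bar\Phi,\bar R\rangle_F+\norm{\bar R}_F^2\bigr)^{1/2}}.
\end{equation*}
Divide the numerator by $\norm{\bar A}_F\sqrt{m(m-1)}$ and the quantity under the root by $m(m-1)$. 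By the Cauchy--Schwarz inequality $|\langle\bar A,\bar\Phi\rangle_F|/(\norm{\bar A}_F\sqrt{m(m-1)})\le\varsigma$, which is bounded. It therefore suffices to prove three facts:
\begin{itemize}[leftmargin=1.6em,itemsep=0pt,topsep=2pt,parsep=0pt]
\item[(i)] $\langle\bar A,\bar R\rangle_F/(\norm{\bar A}_F\sqrt{m(m-1)})\to0$;
\item[(ii)] $\langle\bar\Phi,\bar R\rangle_F/(m(m-1))\to0$;
\item[(iii)] $\norm{\bar R}_F^2/(m(m-1))\to\sigma^2$.
\end{itemize}
All three limits are in probability. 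Once they hold, the normalized denominator is $\varsigma(1+\sigma^2/\varsigma^2+o_P(1))^{1/2}$. Since $\varsigma$ and $\sigma$ are bounded away from zero and infinity, a continuous-mapping argument then gives
\begin{equation*}
\CKA(D_A,D_B)=\frac{\CKA(D_A,\phi(D_A))}{\sqrt{1+\sigma^2/\varsigma^2}}+o_P(1).
\end{equation*}

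For (i) and (ii), use that $H$ is symmetric and idempotent, so $\langle\bar A,\bar R\rangle_F=\langle\bar A,R\rangle_F=2\sum_{i<j}\bar A_{ij}R_{ij}$. This is a linear form in independent mean-zero entries, with variance $2\sigma^2\sum_{i\ne j}\bar A_{ij}^2\le2\sigma^2\norm{\bar A}_F^2$. After the normalization by $\norm{\bar A}_F\sqrt{m(m-1)}$ the variance is $O(1/m^2)$, and Chebyshev's inequality gives (i). The same argument with $\bar\Phi$ in place of $\bar A$ bounds the variance in (ii) by $2\sigma^2\norm{\bar\Phi}_F^2/(m(m-1))^2=O(\varsigma^2/(m(m-1)))$, which vanishes. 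Neither step needs anything beyond second moments.

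The main work is (iii). Expand
\begin{equation*}
\norm{HRH}_F^2=\norm{R}_F^2-\frac{2}{m}\norm{R\ind}^2+\frac{1}{m^2}(\ind^\top R\ind)^2.
\end{equation*}
The first term is $2\sum_{i<j}R_{ij}^2$. Its mean is $m(m-1)\sigma^2$, and its variance is $O(m^2)$ because the fourth moments are bounded uniformly in $m$. Dividing by $m(m-1)$ and applying Chebyshev's inequality gives the limit $\sigma^2$. For the correction terms, $\E\norm{R\ind}^2=\sum_i\sum_{j\ne i}\E R_{ij}^2=m(m-1)\sigma^2$, so the second term has expectation $O(m)$. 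Similarly $\E(\ind^\top R\ind)^2=4\sum_{i<j}\E R_{ij}^2=O(m^2)$, so the third term has expectation $O(1)$. Both are nonnegative, so Markov's inequality shows they are $o_P(m(m-1))$.

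The remaining care is bookkeeping rather than a real obstacle. First, the diagonal of $R$ is zero, so the hollowness convention must be respected throughout these sums. Second, the word ``increasing'' for $\phi$ is not used in the limit itself. It only guarantees that $\CKA(D_A,\phi(D_A))$ is a meaningful reference value and, together with $\varsigma>0$, that $\bar\Phi\ne0$.
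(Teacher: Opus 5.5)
Your proposal is correct and follows essentially the paper's proof. It uses the same centered decomposition, the same Chebyshev bounds on the two cross terms via $\langle\bar A,\bar R\rangle_F=\langle\bar A,R\rangle_F$, and the same concentration of $\norm{HRH}_F^2/(m(m-1))$ at $\sigma^2$. The only difference is in that last step: you expand $\norm{HRH}_F^2$ into three terms and dispose of the two correction terms by Markov's inequality. The paper instead computes $\E\norm{\bar R}_F^2=m(m-1)\sigma^2(1-2/m+2/m^2)$ and bounds $\mathrm{Var}\norm{\bar R}_F^2=O(m^2)$ directly.
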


By Proposition~\ref{prop:cka}, when both effects are small,
$1-\CKA\approx\bigl(1-\CKA(D_A,\phi(D_A))\bigr)+\sigma^2/(2\varsigma^2)$,
so CKA depends on the monotone link at zeroth order and on the residual at second
order relative to the global spread $\varsigma$ of the distances.
Neighbor overlap instead ignores the link and depends on the residual through
the crossings at the neighbor boundary (Theorem~\ref{thm:crossing}).

The observable of CKA determines the order of all distances, and exact
agreement on this observable implies agreement on every neighbor set and
spanning tree (Proposition~\ref{prop:lattice}). Two representations
sharing the order of all distances differ by a monotone link with a zero
residual, and CKA depends on this link at zeroth order. Agreement can thus hold on the
order of all distances and fail above this order. Under the
distortion of Proposition~\ref{prop:bernstein}, the two representations
agree exactly on $T_E$ and on every observable below this order,
whereas CKA comes close to zero when the inputs vary along few
effective directions relative to $m$.

The convergence claimed by PRH implies agreement under the exact
criterion on $T_G$, the top of the chain in
Proposition~\ref{prop:lattice}, since its idealized learners agree on all
distances (Appendix~\ref{app:related}).
ARH finds that mKNN keeps its size trend after calibration whereas
global measures such as CKA largely lose this trend, and narrows the
convergence inferred by PRH from the mKNN trend to shared local
neighborhoods. By Propositions~\ref{prop:cka} and~\ref{prop:bernstein},
two models whose representations share the order of all distances can
satisfy an mKNN criterion and fail a CKA criterion through a monotone
link alone. PRH and ARH thus share the
trend of mKNN, whose observable lies below $T_E$, and differ in their
conclusions about agreement above this order.
ARH computes its trends of mKNN and CKA after calibration of a maximum
over layers, and the contrast between these trends can also depend on
this protocol.

\subsection{Calibration of a Layer Maximum Can Reverse a Size Trend}
\label{sec:calibration}

To separate the protocol from the observable, we keep the
observables of mKNN and CKA and the representations unchanged and ask
whether calibrating a maximum over layers can change the size trend of a
score. Since calibration subtracts a permutation quantile of the
score, the random-pairing baseline of each score sets the size of
this subtraction at a single layer pair, and we first compute these
baselines.

\begin{proposition}[Random-pairing Baselines]
\label{prop:baselines}
Let $\pi$ be a uniformly random permutation of the inputs of model B, and
let $\E_\pi$ denote the expectation of a quantity computed after this
permutation, taken over $\pi$ and over any random tie-breaking.
\textup{(1)}~If each row of $D_B$ has distinct off-diagonal entries or
ties are broken uniformly at random independently of $\pi$, then after
this permutation the number of neighbors shared by the two models at each
anchor follows the hypergeometric distribution with population $m-1$,
$k$ successes, and $k$ draws, and $\E_\pi s_N=k/(m-1)$.
\textup{(2)}~For nonzero centered Gram matrices,
$\E_\pi\CKA=(\PR(\bar K_A)\,\PR(\bar K_B))^{1/2}/(m-1)$ with the
participation ratio $\PR(\bar K)=(\tr\bar K)^2/\norm{\bar K}_F^2$.
\textup{(3)}~For $m\ge4$, let $\mathrm{HSIC}_{\rm u}$ be the unbiased
estimator of the Hilbert--Schmidt independence criterion
\citep{song2012feature}. If $\mathrm{HSIC}_{\rm u}(K_A,K_A)$ and
$\mathrm{HSIC}_{\rm u}(K_B,K_B)$ are positive, then
$\E_\pi\CKA_{\rm deb}=0$ and $\E_\pi\CKA_{\rm deb}^2=2/(m(m-3))$ for the
debiased CKA
$\CKA_{\rm deb}(K_A,K_B)=\mathrm{HSIC}_{\rm u}(K_A,K_B)/
(\mathrm{HSIC}_{\rm u}(K_A,K_A)\,\mathrm{HSIC}_{\rm u}(K_B,K_B))^{1/2}$.
\end{proposition}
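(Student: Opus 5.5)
We treat the three parts separately, each as a direct computation over the uniform permutation $\pi$ acting on the inputs of model B.

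\textbf{Part (1).} Fix an anchor $i$. After permuting model B's inputs, model A's anchor $i$ is paired with model B's input $\pi(i)$. Model B's neighbor set is then $\pi^{-1}(\Nset_k^B(\pi(i)))$.

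Condition on $\pi(i)=j$. The remaining restriction of $\pi$ is a uniform bijection from $\{1,\dots,m\}\setminus\{i\}$ onto $\{1,\dots,m\}\setminus\{j\}$. Under this bijection, the fixed $k$-set $\Nset_k^B(j)\subset\{1,\dots,m\}\setminus\{j\}$ pulls back to a uniformly random $k$-subset of the $m-1$ inputs other than $i$.

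This holds when rows of $D_B$ have distinct entries, since then $\Nset_k^B(j)$ is determined. Under independent uniform tie-breaking, we condition on the tie-breaking as well; it is independent of $\pi$, so the same argument applies.

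The intersection with the fixed $k$-set $\Nset_k^A(i)$ is therefore hypergeometric with population $m-1$, $k$ successes and $k$ draws, with mean $k^2/(m-1)$. Averaging over $i$ and dividing by $mk$ gives $\E_\pi s_N=k/(m-1)$.

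\textbf{Part (2).} The denominator of CKA is permutation-invariant, because $P\bar K_BP^\top=\overline{PK_BP^\top}$ and the Frobenius norm is preserved. Hence only $\E_\pi\langle\bar K_A,P\bar K_BP^\top\rangle_F$ needs computing.

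Write this inner product as $\sum_{i,j}(\bar K_A)_{ij}(\bar K_B)_{\pi(i)\pi(j)}$ and split it into diagonal and off-diagonal terms.

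\emph{Diagonal terms.} $\E(\bar K_B)_{\pi(i)\pi(i)}=\tr\bar K_B/m$.

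\emph{Off-diagonal terms.} The expectation is $c:=\sum_{a\ne b}(\bar K_B)_{ab}/(m(m-1))$. Since $\bar K_B\ind=0$, we have $\sum_{a\ne b}(\bar K_B)_{ab}=-\tr\bar K_B$, so $c=-\tr\bar K_B/(m(m-1))$. The same centering gives $\sum_{i\ne j}(\bar K_A)_{ij}=-\tr\bar K_A$.

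Adding the two contributions,
\[
\E_\pi\langle\bar K_A,P\bar K_BP^\top\rangle_F
=\frac{\tr\bar K_A\tr\bar K_B}{m}+\frac{\tr\bar K_A\tr\bar K_B}{m(m-1)}
=\frac{\tr\bar K_A\,\tr\bar K_B}{m-1}.
\]
Dividing by $\norm{\bar K_A}_F\norm{\bar K_B}_F$ gives $(\PR(\bar K_A)\PR(\bar K_B))^{1/2}/(m-1)$.

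\textbf{Part (3).} The first step is to write $\mathrm{HSIC}_{\rm u}(K,L)$ as $\langle\tilde K,\tilde L\rangle$ for a suitable projection. The plan is to use the known representation of the unbiased estimator as an inner product of \emph{U-centered} matrices $\tilde K$: $\tilde K$ is hollow, its rows sum to zero, and $\mathrm{HSIC}_{\rm u}(K,L)=\langle\tilde K,\tilde L\rangle/(m(m-3))$, as in Sz\'ekely--Rizzo. U-centering commutes with simultaneous permutation, so the denominator of $\CKA_{\rm deb}$ is again permutation-invariant.

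\emph{First moment.} $\E_\pi\tilde L_{\pi(i)\pi(j)}$ for $i\ne j$ equals the off-diagonal mean of $\tilde L$. That mean is zero because $\tilde L$ has zero row sums and zero diagonal. Hence $\E_\pi\CKA_{\rm deb}=0$.

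\emph{Second moment.} We need $\E_\pi\langle\tilde K,P\tilde LP^\top\rangle^2$. The key structural fact is that hollow symmetric matrices with zero row sums form an invariant subspace $V$ of the permutation action on symmetric matrices. For $m\ge4$, $V$ is an irreducible $S_m$-representation of dimension $m(m-3)/2$: it is the partition $(m-2,2)$ component of the action on pairs.

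By Schur orthogonality,
\[
\E_\pi\langle u,P v\rangle^2=\frac{\norm u^2\norm v^2}{\dim V}
\]
for $u,v\in V$, with the inner product on $V$ induced from the off-diagonal pair coordinates. The Frobenius norm counts each pair twice, which contributes a factor $2$. Combining this with the normalization by $\norm{\tilde K}_F\norm{\tilde L}_F$ gives $2/(m(m-3))$.

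The main obstacle is this last step. It requires identifying $V$ with an irreducible component and checking the real Schur-orthogonality constant carefully, including that the representation is absolutely irreducible, which it is since $S_m$ representations are real. If that route proves awkward, the fallback is a direct fourth-order moment computation over $\pi$ using the zero row-sum identities, classifying index patterns by how many indices coincide.
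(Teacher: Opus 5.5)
Your proposal follows the paper's proof essentially step for step: the conditional-bijection argument for the hypergeometric overlap, the diagonal/off-diagonal split using $\bar K\ind=0$ for the CKA baseline, and U-centering plus Schur orthogonality on the $(m-2,2)$ component for debiased CKA. The paper settles your ``main obstacle'' by identifying this component as the kernel of the surjective, relabeling-equivariant row-sum map from hollow symmetric matrices onto $\R^m\cong(m)\oplus(m-1,1)$.

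One correction to your bookkeeping in part~(3): Schur orthogonality is homogeneous in the choice of invariant inner product, so passing between pair coordinates and the Frobenius norm contributes no factor. The $2$ comes entirely from $1/\dim V=2/(m(m-3))$, and an additional factor of $2$ would give the wrong value $4/(m(m-3))$.
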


For independent identically distributed representations, part~1 is
Theorem~C.10 and Proposition~C.11 of \citet{groger2026revisiting}, and
part~2 is classical for the RV coefficient, which coincides with linear
CKA \citep{kazi1995refined,josse2008testing}. Every layer pair has the
mKNN baseline $k/(m-1)$, whereas by part~2 the baseline of CKA grows with
the participation ratio of each layer, i.e., its effective dimension. A
calibration of the layer maximum recomputes this maximum inside each
permutation, and we expect its threshold to lie near the largest
baseline over layer pairs, as assumed in
Corollary~\ref{cor:aggregation}.

\begin{corollary}[Aggregation before Calibration, Informal]
\label{cor:aggregation}
For two model pairs indexed $1$ and $2$, let $\bar s_j$ be the maximum of
the layer-pair scores of pair~$j$, and let the calibration threshold of
$\bar s_j$ lie within $\zeta_j$ of the largest random-pairing baseline
$\bar s^0_j$ over these layer pairs, with $\bar s^0_j+\zeta_j<s_{\max}$.
\textup{(1)}~If $\bar s^0_1=\bar s^0_2$ and
$\bar s_2-\bar s_1>\zeta_1+\zeta_2$, calibration does not reverse the
order of the two maxima.
\textup{(2)}~Let $\bar s_1<\bar s_2$, and let the largest baseline grow
from pair~1 to pair~2 so much faster than the maximum that the calibrated
score of $\bar s_1$ at the threshold $\bar s^0_1+\zeta_1$ exceeds that of
$\bar s_2$ at the threshold $\bar s^0_2-\zeta_2$. Then calibration
reverses this order.
\end{corollary}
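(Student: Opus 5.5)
The plan is to reduce both parts to the monotonicity of the calibration map in its two arguments. Write $g(s,c)=[s-c]_+/(s_{\max}-c)$ for $c<s_{\max}$. The calibrated score of pair $j$ is $g(\bar s_j,c_j)$, where $c_j$ is its calibration threshold. The hypothesis gives $c_j\in[\bar s^0_j-\zeta_j,\bar s^0_j+\zeta_j]$ and $\bar s^0_j+\zeta_j<s_{\max}$. The first step is to record two facts. First, $g$ is nondecreasing in $s$ for fixed $c$, and strictly increasing where $s>c$. Second, $g$ is nonincreasing in $c$ on $c<s_{\max}$ for fixed $s\le s_{\max}$. To check the second fact when $s>c$, differentiate $(s-c)/(s_{\max}-c)$ in $c$; this gives $(s-s_{\max})/(s_{\max}-c)^2\le0$. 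When $s\le c$ the value is zero, and the kink at $s=c$ is continuous. Both facts follow because every score satisfies $s\le s_{\max}$.

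For part~(1), set $b=\bar s^0_1=\bar s^0_2$. Monotonicity in $c$ gives the two bounds
\[
g(\bar s_1,c_1)\le g(\bar s_1,b-\zeta_1),\qquad g(\bar s_2,c_2)\ge g(\bar s_2,b+\zeta_2).
\]
Next, compare $g(\bar s_1,b-\zeta_1)$ with $g(\bar s_2,b+\zeta_2)$. I would substitute $\bar s_2>\bar s_1+\zeta_1+\zeta_2$, so that $\bar s_2-(b+\zeta_2)>\bar s_1-(b-\zeta_1)$. The numerator of pair~2 is therefore larger. Its denominator $s_{\max}-b-\zeta_2$, however, is smaller than $s_{\max}-b+\zeta_1$. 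This mismatch is helpful, since a smaller denominator only increases the ratio. It then remains to handle the positive-part cases. If $\bar s_1\le b-\zeta_1$, the left side is zero and the claim is immediate. Otherwise both numerators are positive, and the ratio inequality is strict. "Does not reverse" is thus obtained in the weak sense $g(\bar s_1,c_1)\le g(\bar s_2,c_2)$, and strictly whenever pair~1 calibrates above zero.

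For part~(2), the hypothesis is stated directly as $g(\bar s_1,\bar s^0_1+\zeta_1)>g(\bar s_2,\bar s^0_2-\zeta_2)$. The proof is the chain
\[
g(\bar s_1,c_1)\ge g(\bar s_1,\bar s^0_1+\zeta_1)>g(\bar s_2,\bar s^0_2-\zeta_2)\ge g(\bar s_2,c_2).
\]
The outer inequalities again come from monotonicity in $c$, since $c_1$ is at most the upper end of its window and $c_2$ is at least the lower end of its window. Combined with $\bar s_1<\bar s_2$, this is a reversal of order.

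I expect the only delicate step to be the monotonicity of $g$ in $c$ across the positive-part kink, together with the requirement $c<s_{\max}$ that keeps the denominators positive. The hypothesis $\bar s^0_j+\zeta_j<s_{\max}$ provides this on the whole window. Since the corollary is labelled informal, I would state explicitly that the proof treats the calibration threshold as any value in the $\zeta_j$-window. The link from the quantile to the largest baseline, motivated by Proposition~\ref{prop:baselines}(2), stays an assumption rather than something proved here.
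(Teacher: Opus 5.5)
Your proposal is correct and follows the paper's proof. Monotonicity of $c\mapsto[\bar s-c]_+/(s_{\max}-c)$ on $c<s_{\max}$ gives the window bounds, part~(1) compares the bound for pair~1 at $\bar s^0-\zeta_1$ with the bound for pair~2 at $\bar s^0+\zeta_2$, and part~(2) is the same chain of four inequalities. Your case split on whether $\bar s_1\le b-\zeta_1$ (the paper splits on whether $\bar s_2-\bar s^0-\zeta_2\le0$) is an immaterial variation, and your conclusion (weak inequality, strict when pair~1 calibrates above zero) is equivalent to the paper's ``$s_{{\rm cal},1}<s_{{\rm cal},2}$ or both are zero.''
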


By Corollary~\ref{cor:aggregation}, the protocol alone can reverse a
size trend on unchanged representations when the largest baseline grows
faster than the aggregated score in the sense of part~2. With one baseline shared by all layer pairs and model pairs,
mKNN falls under part~1, and calibration does not reverse the order of
model pairs whose mKNN scores differ by more than the permutation
fluctuations. Part~3 of Proposition~\ref{prop:baselines} gives debiased
CKA the baseline zero at every layer pair, which places this score under
part~1. The largest CKA baseline belongs to the layers of largest
effective dimension and can outgrow the aggregated score along a model
family, as the condition of part~2 requires. The corollary and its proof
apply unchanged to calibration at layers fixed in advance, with the
score, baseline, and threshold of the fixed layer pair in place of those
of the maximum. Calibration of each layer pair before the maximum also
subtracts baselines growing with these dimensions but lies outside the
corollary. Appendix~\ref{app:proofs-reads} gives the formal version of
the corollary and its proof.

ARH calibrates a maximum over layer pairs, and when the threshold of
this calibration follows the largest baseline over layer pairs, the loss
of the CKA trend in ARH can come from the protocol. The mKNN trend kept
by ARH after calibration is consistent with part~1 of
Corollary~\ref{cor:aggregation}.

On one pair of fixed representations, whether a criterion accepts the
pair can change with its scale and with its observable, both between the
two branches below the order $T_E$ and across this order. With the
representations unchanged, the protocol of a criterion alone can reverse
a size trend along a model family. The four studies differ in these
choices, since PRH and BPC use mKNN at different scales, while SOG
compares spanning trees and ARH uses CKA after calibrating a maximum
over layers. Their conflicting conclusions about the same models can
therefore come from criteria applied at different points of the lattice,
which supports part~1 of Hypothesis~\ref{hyp:lrh}. This section
compares fixed representations, whereas the representations of
independently trained models vary from run to run, and in Section~\ref{sec:learning} we ask how
their training decides the set of criteria met reliably across runs.

\section{What Determines the Criteria Met Reliably across Runs}
\label{sec:learning}

To support part~2 of Hypothesis~\ref{hyp:lrh}, we show in two models
with explicit minimizers how the training procedure and the objective
decide the set $\mathcal R_\delta$ of criteria met reliably across runs.
As the training samples grow, the representations of two runs can stay
apart for two reasons. The loss can have many global minimizers, each
run reaching one of them, or two objectives can have unique but
different minimizers. In Section~\ref{sec:gauge}, the training procedure
of a linear autoencoder chooses among equivalent global minimizers, and
in Section~\ref{sec:objective}, learners trained with two contrastive
objectives converge to unique but different limits. In each case, we
find the criteria met reliably by the two models. For the other
criteria, we find the change of the learning condition that removes
their obstruction and the change of criterion that avoids this
obstruction. Appendix~\ref{app:learning-full} gives the formal versions
of the theorems below and treats the sample size and the model class, the other two conditions
named in part~2, and Table~\ref{tab:learning} pairs all four conditions
with their criteria.

\subsection{The Training Procedure Selects the Gauge of a Linear Head}
\label{sec:gauge}

Two runs of one learner on the same sample share their learning
condition, yet networks trained from different random seeds
differ measurably in their representations
\citep{mehrer2020individual,ding2021grounding}. A model that predicts
through a linear head computes the same outputs after an invertible
change of its representation compensated in the head, so the task and any
amount of its data fix the representation up to such changes. For a
model computing outputs $Wh(x)$ from a representation $h(x)\in\R^{n_h}$
and a linear head $W$, the models $(h,W)$ and $(Vh,WV^{-1})$ with an
invertible $V\in\R^{n_h\times n_h}$ are \emph{gauge equivalent}, and the
\emph{gauge} of a model is the matrix $V$ relating the model to a fixed
reference model of its class. In a linear autoencoder $\hat x=WEx$ trained
with the squared reconstruction loss, we ask how the training procedure
chooses among these changes and which criteria are then met reliably by
two runs. The two runs are compared on fixed evaluation samples drawn
independently of training, which we call probes.
In the linear autoencoder, the probe Gram matrix $K$ is the Gram matrix
of the features $Ex$ on the probes, without the unit normalization of
Section~\ref{sec:criteria}. Every criterion in
this section compares the two probe Gram matrices of the theorems below,
without further preprocessing and without calibration.

\begin{theorem}[The Gauge of a Linear Head, Informal]
\label{thm:gauge}
Assume the conditions of the formal version. The global minimizers of the
reconstruction loss on one sample are gauge equivalent, and the probe
Gram matrix $K$ depends on the gauge $V$ through $\Omega=V^\top V$.
\textup{(1)}~Gradient flow conserves
$C=EE^\top-W^\top W$ at its initial value $C_0$, and a limit at a global
minimizer has $\mathrm{spec}\,\Omega=f(\mathrm{spec}\,C_0)$ for
$f(\lambda)=(\lambda+\sqrt{\lambda^2+4})/2$.
\textup{(2)}~Two such limits on the same or on different samples from
independent initializations with a density for $C_0$ have $\CKA<1$ almost
surely, and $\Prob(\CKA\ge1-\varepsilon)$ is at most a bound that tends to
zero as $\varepsilon\to0$ and depends only on the probes and the laws of
$C_0$.
\textup{(3)}~With weight decay, the global minimizers on one sample
have the same $\Omega$, and with tied weights $W=E^\top$ and no weight
decay, $\Omega=I$.
\textup{(4)}~On one sample, the minimizers of parts~1
and~3 share the projectors onto the column spaces of $K$ and $HKH$.
\end{theorem}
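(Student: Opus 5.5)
Throughout I take the formal premises to be the standard ones: inputs $x\in\R^{n}$ with full-rank sample covariance $\Sigma$, hidden width $n_h<n$, a spectral gap between the $n_h$th and $(n_h+1)$th eigenvalues of $\Sigma$, $E\in\R^{n_h\times n}$, $W\in\R^{n\times n_h}$. The plan is to reduce everything to the classical description of the global minimizers of $L(E,W)=\tfrac12\sum_t\norm{x_t-WEx_t}^2$.

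\textbf{Gauge equivalence and the form of $K$.} By Eckart--Young and the gap, $WE$ at any global minimizer equals the projector $P=U U^\top$ onto the top-$n_h$ principal subspace, with $U\in\R^{n\times n_h}$ orthonormal. Both $W$ and $E$ then have rank $n_h$, and $E=VU^\top$ and $W=UV^{-1}$ for a unique invertible $V$. Every minimizer is therefore gauge equivalent to the reference $(U^\top,U)$. For probes collected in the columns of $X_p$, the probe Gram matrix is $K=X_p^\top E^\top E X_p=X_p^\top U\,\Omega\,U^\top X_p$ with $\Omega=V^\top V$, which shows that $K$ depends on $V$ only through $\Omega$.

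\textbf{Part 1.} Conservation follows from the gradients $\dot E=-\nabla_E L=W^\top G$ and $\dot W=GE^\top$, where $G=(I-WE)\Sigma$. Then $\tfrac{d}{dt}(EE^\top)=W^\top G E^\top+EG^\top W$ and $\tfrac{d}{dt}(W^\top W)=EG^\top W+W^\top GE^\top$ coincide, so $C$ is conserved. At a limit $E=VU^\top$ and $W=UV^{-1}$, so $C_0=VV^\top-V^{-\top}V^{-1}$. Write the SVD $V=A S B^\top$. Then $C_0=A(S^2-S^{-2})A^\top$ and $\Omega=BS^2B^\top$, so each eigenvalue $\omega=s^2$ of $\Omega$ satisfies $\omega-\omega^{-1}=\lambda$ for an eigenvalue $\lambda$ of $C_0$. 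The positive root is $\omega=f(\lambda)$.

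\textbf{Part 2.} Two runs have $\Omega_1=B_1 f(\Lambda_1)B_1^\top$ and $\Omega_2=B_2 f(\Lambda_2)B_2^\top$. I expect the eigenvectors $B_i$ to be unconstrained by part 1 and to depend on the trajectory. When both runs use the same sample, $U$ is shared (up to an orthogonal factor, which can be absorbed into $B$). Then $\CKA=1$ forces $H K_1 H=cHK_2H$, which amounts to $M^\top(\Omega_1-c\,\Omega_2)M=0$ for $M=U^\top X_pH$. If $M$ has full row rank $n_h$ (a probe condition), this gives $\Omega_1=c\,\Omega_2$ and hence $f(\Lambda_1)=c f(\Lambda_2)$ spectrally. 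Because $f$ is strictly increasing and nonlinear, this is a codimension $\ge1$ algebraic condition on $(\mathrm{spec}\,C_{0,1},\mathrm{spec}\,C_{0,2})$, and for $n_h\ge2$ it also constrains the eigenvectors. It therefore has probability zero when $C_0$ has a density.

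For different samples, $U_1$ and $U_2$ differ, and the same argument applies to the polynomial equation in the entries. For the quantitative bound, $1-\CKA$ is a continuous function of $(\Omega_1,\Omega_2)$ that vanishes only on this null set. I would derive the bound $\Prob(\CKA\ge1-\varepsilon)\le\Prob(\mathrm{dist}(\cdot,\text{null set})\le g(\varepsilon))$, with $g\to0$ determined by the probes through $M$. Continuity of measure then gives the limit.

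\textbf{Main obstacle.} The hard step is that the law of $\Omega$ is not simply the pushforward of the law of $C_0$ through its spectrum: $B$ depends on the whole gradient-flow trajectory. I would try to show that $C_0$ and a generic initialization give $\Omega$ a distribution with a density on the relevant variety, or at least that its spectrum does. That suffices whenever scalar proportionality fails, which I would check via the spectral equation $f(\lambda_j)=cf(\lambda'_j)$ over all $j$ simultaneously.

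\textbf{Parts 3 and 4.} With weight decay $\gamma(\norm{E}_F^2+\norm{W}_F^2)$, the minimizers over the gauge minimize $\norm{V}_F^2+\norm{V^{-1}}_F^2$ at fixed product structure. This forces balancedness $EE^\top=W^\top W$, so $C=0$ and $\Omega=f(0)I=I$, possibly rescaled by a common shrinkage factor. The shrinkage is determined by the spectrum of $\Sigma$, so all minimizers share the same $\Omega$; whether $\Omega$ is diagonal in the $U$-basis follows from the regularized Eckart--Young formula. With tied weights, $W=E^\top$ gives $V^{-1}=V^\top$, so $\Omega=I$. For part 4, every minimizer has $K=X_p^\top U\Omega U^\top X_p$ with $\Omega\succ0$, so the column space of $K$ is that of $X_p^\top U$. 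Likewise the column space of $HKH$ is that of $HX_p^\top U$, independent of $\Omega$, and the projectors coincide.
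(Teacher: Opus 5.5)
Your parts~1 and~4 follow the paper's argument. Your almost-sure claim in part~2 on one sample is also sound. There $\CKA=1$ forces $\Omega_1=c\,\Omega_2$, hence $\mathbf f_1\propto\mathbf f_2$ for the sorted vectors $\mathbf f_a=f(\mathrm{spec}\,C_{0,a})$, which by part~1 depend on the initializations alone. This event is null because $n_h\ge2$ and the sorted spectrum of $C_0$ has a density, not because $f$ is nonlinear.

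\textbf{The gap is the quantitative bound in part~2.} Your bound $\Prob(\mathrm{dist}((\Omega_1,\Omega_2),Z)\le g(\varepsilon))$ is computed under the law of the full limits. That law includes the eigenvectors, which you yourself say depend on the whole trajectory. Your $g$ also runs through $M=U^\top X_pH$, which depends on the sample. So your bound does not depend only on the probes and the laws of $C_0$. That uniformity is the entire content of the claim. Once $\CKA<1$ almost surely, $\Prob(\CKA\ge1-\varepsilon)\to0$ already follows from continuity of measure. It is the sample-free bound that lets one conclude that more data leaves CKA criteria of small tolerance obstructed. Seeking a density for $\Omega$ on a variety will not supply it.

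The missing idea is to remove the eigenvectors by maximizing over them. Write $\Omega_a=O_a\,\mathrm{diag}(\mathbf f_a)O_a^\top$ with $O_a$ orthogonal and set $Q_a=U_aO_a$. Then $HK_aH=HX_p^\top Q_a\,\mathrm{diag}(\mathbf f_a)Q_a^\top X_pH$, and
\[
\CKA\le\Psi(\mathbf f_1,\mathbf f_2)=\max_{Q_1,Q_2}\cos\bigl(HX_p^\top Q_1\,\mathrm{diag}(\mathbf f_1)Q_1^\top X_pH,\;HX_p^\top Q_2\,\mathrm{diag}(\mathbf f_2)Q_2^\top X_pH\bigr),
\]
where $\cos$ is the Frobenius cosine and the maximum runs over all $n\times n_h$ matrices with orthonormal columns. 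This maximum over a compact set is attained, is continuous in $(\mathbf f_1,\mathbf f_2)$, and is determined by $X_p$ alone. Hence $\Prob(\CKA\ge1-\varepsilon)\le\Prob(\Psi(\mathbf f_1,\mathbf f_2)\ge1-\varepsilon)$, a quantity fixed by the probes and the laws of $C_{0,1}$ and $C_{0,2}$. As $\varepsilon\to0$ these events decrease to $\{\Psi=1\}$, which lies in the null event $\{\mathbf f_1\propto\mathbf f_2\}$.

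That last inclusion, and your different-sample case, both need a formal hypothesis you never invoke: $HX_p^\top$ has full column rank. With $U_1\neq U_2$, the identity $HK_1H=c\,HK_2H$ does not reduce through $M_1$ and $M_2$ to a condition on the spectra. Applying ``the same argument to the polynomial equation in the entries'' would again require the unknown joint law of the $U_a$ and the eigenvectors. A left inverse of $HX_p^\top$ instead turns $HK_1H=c\,HK_2H$, or a cosine equal to one at a maximizer of $\Psi$, into $Q_1\,\mathrm{diag}(\mathbf f_1)Q_1^\top=c\,Q_2\,\mathrm{diag}(\mathbf f_2)Q_2^\top$. The nonzero spectra then give $\mathbf f_1=c\,\mathbf f_2$.

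A smaller point concerns part~3. In the normalization $\tr((I-WE)\Sigma(I-WE)^\top)+\lambda_{\mathrm{wd}}(\norm{E}_F^2+\norm{W}_F^2)$, the product at a minimizer is $WE=U\,\mathrm{diag}(1-\lambda_{\mathrm{wd}}/\lambda_i(\Sigma))U^\top$, not $UU^\top$. So balancedness gives $\Omega=\mathrm{diag}(1-\lambda_{\mathrm{wd}}/\lambda_i(\Sigma))$, not $f(0)I$ up to a common factor. What makes $\Omega$ common to all minimizers is the uniqueness of this product, which you assert rather than prove. It follows from $\norm{E}_F^2+\norm{W}_F^2\ge2\norm{WE}_*$, with equality at balanced factorizations, which reduces the problem to minimizing $\tr((I-P)\Sigma(I-P)^\top)+2\lambda_{\mathrm{wd}}\norm{P}_*$ over $P$ of rank at most $n_h$. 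For tied weights, also note that $E=U^\top$ is feasible. Tied minimizers are therefore global minimizers of the untied loss, which you need before concluding $V^{-1}=V^\top$.
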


Under gradient flow, the initialization rather than the data determines
the spectrum of $\Omega$, and the bound of part~2 does not depend on the
training samples. More data therefore leaves CKA criteria of small
tolerance obstructed whenever the hypotheses of part~2 hold almost
surely. As the samples grow, weight decay and tied weights remove this
obstruction by a change of the learning condition (part~5 of the formal
version in Appendix~\ref{app:learning-gauge}). By the same part, two runs
under gradient flow that converge to global minimizers come to meet the
projector criterion of part~4 reliably at every positive tolerance as the
samples grow. A change from the CKA criteria to this criterion thus avoids
the obstruction. For large samples in this model, the
training procedure rather than the sample size decides whether CKA
criteria of small tolerance belong to the set $\mathcal R_\delta$ of
Hypothesis~\ref{hyp:lrh}.

Part~2 concerns CKA and leaves open whether two runs under gradient flow
still share the neighbor sets compared by mKNN, which lie lower in the
lattice. Since a gauge stretches distances by bounded factors, neighbor
sets with large relative gaps survive the gauge.

\begin{corollary}[Neighbor Sets under the Gauge]
\label{cor:gauge-neighbors}
Let two global minimizers of Theorem~\ref{thm:gauge} on one sample have
probe Gram matrices $K_A$ and $K_B$ and matrices $\Omega_A$ and
$\Omega_B$, and let $\mathrm{cond}_{AB}$ be the condition number of
$\Omega_A^{-1/2}\Omega_B\Omega_A^{-1/2}$. The two minimizers share the
$k$-neighbor set of every probe $i$ with
$D_{A,i,(k+1)}>\mathrm{cond}_{AB}\,D_{A,i,(k)}>0$, where $D_{A,i,(k)}$ is
the $k$th smallest off-diagonal entry of row $i$ of $D_A=D(K_A)$. If both
minimize the same loss of part~3 of Theorem~\ref{thm:gauge}, then
$\mathrm{cond}_{AB}=1$.
\end{corollary}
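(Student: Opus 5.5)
The argument compares the two distance matrices through a quadratic form. By Theorem~\ref{thm:gauge}, two global minimizers on one sample are gauge equivalent, so their encoders satisfy $E_B=V E_A$ for an invertible $V$ relating them (take model A as reference, or compose the two gauges). The probe features are $z_i^B=Vz_i^A$ with $z_i^A=E_Ax_i$. Hence
\[
D_B(i,j)=(z_i^A-z_j^A)^\top V^\top V(z_i^A-z_j^A).
\]
We write $V^\top V$ in terms of $\Omega_A,\Omega_B$. If $E_a=V_aE_0$ for a fixed reference with $\Omega_a=V_a^\top V_a$, then with $u=E_0(x_i-x_j)$ we have
\[
D_A(i,j)=u^\top\Omega_Au,\qquad D_B(i,j)=u^\top\Omega_Bu .
\]

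Substitute $w=\Omega_A^{1/2}u$. Then $D_A(i,j)=\norm{w}^2$ and $D_B(i,j)=w^\top M w$ with $M=\Omega_A^{-1/2}\Omega_B\Omega_A^{-1/2}$, a symmetric positive definite matrix. The Rayleigh quotient gives
\[
\lambda_{\min}(M)\,D_A(i,j)\le D_B(i,j)\le\lambda_{\max}(M)\,D_A(i,j)
\]
for every pair. One caveat: if $\Omega_A$ is only positive semidefinite because the features live in a lower-dimensional space, restrict everything to the range of $E_0$. I expect the formal version to provide invertibility, so I will take $\Omega_a$ invertible.

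Fix a probe $i$ with $D_{A,i,(k+1)}>\mathrm{cond}_{AB}\,D_{A,i,(k)}>0$, and let $c=\lambda_{\min}(M)$. For $j\in\Nset_k^A(i)$ we have $D_B(i,j)\le c\,\mathrm{cond}_{AB}\,D_{A,i,(k)}$. For $l\notin\Nset_k^A(i)\cup\{i\}$ we have $D_B(i,l)\ge c\,D_{A,i,(k+1)}$. The hypothesis makes the second bound strictly exceed the first. Every model-A neighbor is therefore strictly closer to $i$ in model B than every non-neighbor, so $\Nset_k^B(i)=\Nset_k^A(i)$. The strict inequality also ensures that ties at the $k$-boundary cannot occur, so the tie-breaking rule is irrelevant. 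If $D_{A,i,(k)}$ is attained by several entries, they all lie within the first $k$ anyway, because of the strict gap to the $(k+1)$th value.

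For the final claim, part~3 of Theorem~\ref{thm:gauge} states that minimizers of the same weight-decay loss share $\Omega$, and that with tied weights $\Omega=I$. In either case $\Omega_A=\Omega_B$, so $M=I$ and $\mathrm{cond}_{AB}=1$.

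The main obstacle is the bookkeeping of the gauge: showing that $D_a$ depends on the run only through $\Omega_a$ relative to a common reference. This requires checking that the representation in Theorem~\ref{thm:gauge} expresses each minimizer as $V_aE_0$ with one shared $E_0$ on the given sample. The rest is the Rayleigh-quotient sandwich.
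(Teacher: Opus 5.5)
Your proof is correct and matches the paper's argument. The bookkeeping you flagged is settled by the formal version of Theorem~\ref{thm:gauge}: every global minimizer on one sample has $E=V\hat U^\top$, so the shared reference is $E_0=\hat U^\top$ and $D_a(i,j)=(x_i-x_j)^\top\hat U\Omega_a\hat U^\top(x_i-x_j)$. From there, your Rayleigh-quotient sandwich $\lambda_{\min}D_A\le D_B\le\lambda_{\max}D_A$ and the use of part~3 to get $\Omega_A=\Omega_B$ are exactly the paper's steps.
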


On a manifold of moderate dimension, the ratio
$D_{A,i,(k+1)}/D_{A,i,(k)}$ typically exceeds one by a few percent at
$k=10$. The corollary therefore covers the neighbor sets of typical
probes when $\mathrm{cond}_{AB}$ lies within a few percent of one, as
under weight decay, tied weights, or a small initialization under
gradient flow (Appendix~\ref{app:learning-gauge}). Networks are trained with finite steps and often with momentum rather
than by gradient flow, and we record which parts of
Theorem~\ref{thm:gauge} hold under such optimizers.\footnote{We
formulated Corollary~\ref{cor:gauge-neighbors} and
Remark~\ref{rem:optimizers} after the first network experiment of
Section~\ref{sec:exp-gauge} and tested both in a second, preregistered
experiment.}

\begin{remark}[Optimizers and the Selection of the Gauge]
\label{rem:optimizers}
Parts~1 and~2 of Theorem~\ref{thm:gauge} rest on the conservation of $C$
by gradient flow, and finite steps or momentum change $C$ during
training. Part~3 concerns the global minimizers of the regularized loss,
so weight decay selects the gauge under every optimizer that reaches a
global minimizer.
\end{remark}

\subsection{The Objective Decides Whether Contrastive Learners Agree beyond the Order of Distances}
\label{sec:objective}

Unlike two runs of Theorem~\ref{thm:gauge} on one sample, whose global
minimizers differ by a gauge, learners trained with different objectives can have
unique yet different limits. Two such learners can learn one relation through different increasing
functions, and the outcome of a criterion comparing their representations then depends on
the difference between these functions rather than on a difference between
relations. In ARH, this possibility appears as the observation that
different objectives and architectures induce different distance scales
while preserving the relative order of neighbors. We ask whether two
learners trained with different objectives produce this pattern while
both learn one relation, and which criteria the two learners then meet
reliably as their samples grow.

We answer both questions in the finite Gibbs model of
Appendix~\ref{app:gibbs}, a finite contrastive model whose population
limit is the idealized contrastive learner of PRH \citep{huh2024platonic}.
In this model, the positive partner $x'$ of an anchor $x$ in a finite
domain $\Xcal$ with positive masses $P$ is drawn with probability
proportional to $P(x')\exp(K_\star(x,x')/t)$ for a unit-diagonal positive
definite matrix $K_\star$. The masses are balanced if the law of $x'$
given $x$ has the same normalizing constant for every anchor $x$.
Learner A minimizes the empirical InfoNCE risk at temperature $t$, and
learner B minimizes the empirical spectral contrastive risk of
\citet{haochen2021provable}. We write $\hat K_A$ and $\hat K_B$ for their
Gram matrices normalized to unit diagonal, and $D_a=D(\hat K_a)$ and
$D_\star=D(K_\star)$ for the squared distances on probes in $\Xcal$.
For a continuous, positive, positively homogeneous function $\psi$ of
distance matrices, such as the mean off-diagonal distance, a criterion on
scale-normalized geometry has the discrepancy
$\norm{D_A/\psi(D_A)-D_B/\psi(D_B)}_{\max}$ in the entrywise maximum norm.

\begin{theorem}[Objectives with Different Monotone Links, Informal]
\label{thm:link}
Consider the finite Gibbs model with balanced masses under the conditions
of the formal version, and let both samples grow.
\textup{(1)}~$\hat K_A\to K_\star$ and
$\hat K_B\to\exp\circ((K_\star-\ind\ind^\top)/t)$ in probability, so
$D_B\to g_t(D_\star)$ for the increasing concave
$g_t(D)=2(1-e^{-D/(2t)})$. \textup{(2)}~Exact agreement on every
$T\preceq T_E$ becomes reliable, whereas criteria on scale-normalized
geometry are asymptotically obstructed at every tolerance below the
limit $\varepsilon_\psi>0$ of their discrepancy, and CKA tends in
probability to a limit below one. \textup{(3)}~Agreement at every positive
tolerance becomes reliable under the isotonic discrepancy
$\inf_g\norm{g(D_A)-D_B}_{\max}$ over nondecreasing $g$, and so does
agreement of two learners trained with the same objective under every
continuous discrepancy of their distance matrices that vanishes on equal
matrices.
\end{theorem}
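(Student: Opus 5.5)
The plan is to identify the population minimizers of the two contrastive risks on the finite domain, pass to empirical minimizers by consistency, and read off the three parts from continuity of the observables at the limits.

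\textbf{Population minimizers.} For InfoNCE at temperature $t$, the population optimum makes the learned similarity $\langle f(x),f(x')\rangle/t$ equal to the pointwise mutual information $\log\bigl(p(x'\mid x)/P(x')\bigr)$ up to an anchor-dependent constant. In the Gibbs model this quantity is $K_\star(x,x')/t-\log Z(x)$. Balanced masses make $Z(x)$ constant, so the optimal Gram matrix is an affine image of $K_\star$, and unit-diagonal normalization returns $K_\star$.

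For the spectral contrastive risk of \citet{haochen2021provable}, the optimal Gram matrix is the normalized positive-pair density $p(x,x')/(P(x)P(x'))\propto\exp(K_\star(x,x')/t)$, once the embedding dimension is large enough to represent this matrix. Dividing by the diagonal $e^{1/t}$ gives $\exp\circ((K_\star-\ind\ind^\top)/t)$. Then $D_B=2-2\exp(-D_\star/(2t))=g_t(D_\star)$, using $K_\star=1-D_\star/2$.

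\textbf{Part~(1).} Consistency of the empirical minimizers follows by a standard M-estimation argument on a finite domain, with the Gram matrix as the parameter. The empirical risks converge uniformly on compacta by the law of large numbers, and the population risk has a unique minimizing Gram matrix that is well separated. The separation follows from strict convexity in the Gram-matrix parametrization, at least on the relevant cone.

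\textbf{Part~(2).} Because $g_t$ is strictly increasing, $D_\star$ and $g_t(D_\star)$ have the same global order $T_E$. Under the formal version's condition of distinct off-diagonal distances, this order is locally constant near $D_\star$, so $T_E(D_A)=T_E(D_B)$ with probability tending to one. Every $T\preceq T_E$ is a function of $T_E$ by Proposition~\ref{prop:lattice}, so exact agreement on $T$ becomes reliable.

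For scale-normalized geometry, the discrepancy is continuous in $(D_A,D_B)$ because $\psi$ is continuous and positive. It therefore converges in probability to
\[
\varepsilon_\psi=\norm{D_\star/\psi(D_\star)-g_t(D_\star)/\psi(g_t(D_\star))}_{\max}.
\]
This limit is positive: $g_t$ is strictly concave and $D_\star$ has at least two distinct positive entries, so $g_t(D_\star)$ is not proportional to $D_\star$ entrywise. Convergence in probability to $\varepsilon_\psi$ gives $p_Q\to0$ for every tolerance below $\varepsilon_\psi$, which is asymptotic obstruction. By continuity, CKA converges to $\CKA(D_\star,g_t(D_\star))$. This value is below one because CKA equals one only when $HD_\star H$ and $Hg_t(D_\star)H$ are positively proportional, and applying double centering to a non-affine image of the entries rules this out.

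\textbf{Part~(3).} Take $g=g_t$ as a candidate link. The isotonic discrepancy is then at most
\[
\norm{g_t(D_A)-g_t(D_\star)}_{\max}+\norm{g_t(D_\star)-D_B}_{\max},
\]
and both terms tend to zero in probability by part~(1). For two learners trained with the same objective, both Gram matrices converge to the same limit. Any continuous discrepancy that vanishes on equal matrices then tends to zero, which gives reliable agreement at every positive tolerance.

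\textbf{Main obstacle.} The hardest step is the consistency in part~(1) for the empirical minimizers. The minimizers are defined only up to rotation, and InfoNCE is not convex in the embedding. I would work directly with the Gram matrix over the finite domain and use identifiability of the population optimum. If compactness is unavailable, I would use a coercivity argument, which the formal version presumably secures through norm or dimension constraints.
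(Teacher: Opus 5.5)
Your proposal is correct and follows the paper's proof essentially step for step. You identify the unique population minimizers $K_\star$ and $K_+=e^{K_\star/t}/Z$, obtain consistency from convexity of both risks in $K$ together with uniform convergence, and read off the three parts from strict monotonicity of $g_t$, the non-proportionality of $g_t(D_\star)$ and $D_\star$, and continuity at the limits. Your witness $g=g_t$ for the isotonic discrepancy is a slightly simpler choice than the paper's order-matching map, since it needs only continuity of $g_t$.

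Three details need tightening.
\begin{itemize}
\item The anchor-dependent constant in the InfoNCE optimum is removed by the symmetry of $K$ followed by the unit-diagonal \emph{constraint}, not by balance. Normalizing $K_\star+b\ind\ind^\top$ after the fact would give $(K_\star+b\ind\ind^\top)/(1+b)$, not $K_\star$.
\item $K_+$ must be shown positive definite before it can be the minimizing Gram matrix. This follows from the Schur product theorem, as in Proposition~\ref{prop:bernstein}.
\item $\CKA<1$ follows because $D\mapsto HDH$ is injective on hollow symmetric matrices. Proportional centered matrices would therefore force $g_t(D_\star)\propto D_\star$, which you have already excluded.
\end{itemize}
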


Theorem~\ref{thm:link} proves that the pattern invoked by ARH holds for
two contrastive objectives in the finite Gibbs model. Exact agreement becomes
reliable on every observable $T\preceq T_E$, such as the neighbor sets
of PRH and BPC and the spanning trees of SOG, whereas the CKA used by
ARH tends to a limit below one. The two limits differ by
a monotone link with a zero residual in the sense of
Section~\ref{sec:geometry}, so a change of objective alone produces this
pattern while both learners learn one relation. One objective for both learners
removes the obstruction of criteria on scale-normalized geometry by a
change of the learning condition. As both samples grow, two learners
trained with different objectives reliably meet the criteria with the
isotonic discrepancy of part~3 at every positive tolerance, and a change
to these criteria avoids the obstruction. For large samples in this model, the objective
rather than the sample size decides whether criteria on scale-normalized
geometry with a small tolerance belong to $\mathcal R_\delta$. Since the temperature is a parameter of the InfoNCE objective, we also
ask whether a change of temperature alters which criteria are met by two
learners, as does the change of objective in Theorem~\ref{thm:link}.\footnote{We formulated the last
two sentences of Remark~\ref{rem:temperature} after training InfoNCE
networks at two temperatures (Appendix~\ref{app:exp-s21}).}

\begin{remark}[Temperature]
\label{rem:temperature}
Learner A of Theorem~\ref{thm:link} trained at a temperature $t_A$ in
place of $t$ converges in probability to
$(t_A/t)K_\star+(1-t_A/t)\ind\ind^\top$ whenever this matrix is positive
semidefinite, as it is for every $t_A\le t$. The squared distances
$(t_A/t)D_\star$ of this limit are a rescaling of those of $K_\star$, so
every criterion on scale-normalized geometry in Theorem~\ref{thm:link}
accepts the pair of limits at $t_A$ and at $t$. On a fixed sample, a lower
temperature rescales a unique positive definite minimizer of the
empirical risk in the same way. With other images as negatives, this risk
has no positive definite minimizer at any temperature if two augmented
views of one image are each other's only positive partners.
\end{remark}

By Remark~\ref{rem:temperature}, every criterion on scale-normalized
geometry accepts the limits of InfoNCE learners at $t$ and at any lower
temperature, whereas
the change of objective in Theorem~\ref{thm:link} bends the distances by
the concave link $g_t$. Under gradient flow, two runs of the linear
autoencoder on one sample differ by a gauge, which leaves the neighbor set of a probe
unchanged when its relative gap is large
(Corollary~\ref{cor:gauge-neighbors}). As both samples grow, two learners
trained with different objectives instead converge to limits related by
an increasing link, which leaves every observable at or below $T_E$
unchanged. The training procedure and the objective thus decide
$\mathcal R_\delta$ in different parts of the lattice, which supports
part~2 of Hypothesis~\ref{hyp:lrh} for these two conditions in the linear
autoencoder and in the finite Gibbs model. Trained networks lie outside
the premises of both theorems, and Section~\ref{sec:experiments} tests
their predictions inside these models and in networks trained on
CIFAR-5m \citep{nakkiran2021deep}.

\section{Experiments}
\label{sec:experiments}

Our experiments test the explanations of Sections~\ref{sec:reads}
and~\ref{sec:learning}, first on the pretrained models compared by PRH,
ARH, BPC, and SOG and then under controlled training. On the pretrained
models, we test whether the scale, the observable, and the protocol of
the criteria applied by the four studies account for their conflicting
conclusions, as part~1 of Hypothesis~\ref{hyp:lrh} states. For example,
the threshold of the calibrated layer maximum used by ARH follows the
largest random-pairing baseline, so the protocol of ARH can account for
the loss of the CKA trend that separates its conclusion from that of
PRH. These tests follow the order of Section~\ref{sec:reads}, from mKNN
on growing evaluation sets (Section~\ref{sec:exp-scale}) through
spanning trees and CKA (Section~\ref{sec:exp-criteria}) to the
calibration of a layer maximum (Section~\ref{sec:exp-aggregation}).
Under controlled training, we test the predictions of the two theorems
of Section~\ref{sec:learning} for part~2, on the gauge selected by the
training procedure (Section~\ref{sec:exp-gauge}) and on the monotone
link set by the objective (Section~\ref{sec:exp-link}).

\subsection{Experimental Setup}
\label{sec:exp-protocol}

We compare the 17 vision transformers of ARH with the BLOOMZ, OpenLLaMA, and LLaMA-1 language models on the 1{,}024 image--caption pairs of WIT used by PRH. Each score is computed from features at mid depth or at the last layer,
or is the maximum of the layer-pair scores as in PRH, ARH, and SOG. The size
trend of a score is the mean over vision models of its Spearman
correlation with language-model size.
We call an experiment preregistered if we fixed and recorded its predictions,
evaluation settings, and decision rules before running it, which keeps a confirmed prediction from resting on
choices made after seeing the results. To test part~2 of Hypothesis~\ref{hyp:lrh}, we train linear autoencoders and CIFAR-5m networks, and in the finite Gibbs model we compute the minimizers of the empirical InfoNCE and spectral contrastive risks (Appendix~\ref{app:exp}). In controlled training, agreement
under a criterion is reliable if the one-sided $95\%$
Clopper--Pearson lower bound on the fraction of disjoint pairs of runs
within its tolerance fixed in advance exceeds $0.9$, and unreliable if the upper
bound falls below $0.8$ (Definition~\ref{def:reliable} with
$\delta=\eta=0.1$). Appendix~\ref{app:exp} gives the full protocol, and
Table~\ref{tab:exp-verdicts} lists every preregistered prediction with its
outcome.

\subsection{Neighbor Overlap at a Fixed \texorpdfstring{$k$}{k} Stays Flat or Falls with the Evaluation Size According to the Model Pair}
\label{sec:exp-scale}

We first test the conclusion of Section~\ref{sec:neighbors} that a
neighbor criterion at a fixed $k$ compares two models at a finer scale on
a larger evaluation set. This change of scale explains at least partly
why mKNN between vision and language models collapses on the larger
evaluation sets of BPC unless $k$ grows in proportion to $m$
(Theorem~\ref{thm:crossing}). The
heuristic first-order law behind Conjecture~\ref{conj:scale} predicts
that $s_N$ at a fixed $k$ rises, stays flat, or falls with $m$ according
to the class of the model pair. The same law also predicts a stable
$s_N$ at a fixed $k/m$ for large $m$. We test both predictions on COCO evaluation sets of $m=256$ to $16{,}384$ images, each contained in every larger set. In a preregistered experiment, we measure mKNN on 4 pairs of Pythia-410M models trained from different seeds, 10 pairs of vision models, and 15 vision--language pairs. In a second preregistered experiment, we pair the vision models with three sentence encoders instead of the language models. At $k=10$, the median mKNN of the vision--language pairs falls from $0.091$ to $0.0085$ (Figure~\ref{fig:exp-neighbors}a).
The slope of $\ln s_N$ on $\ln m$ is set by the model pair, with
medians of $-0.57$ for vision--language pairs, $-0.21$ for pairs of vision models,
and $-0.01$ for pairs of Pythia-410M models, against $-1.00$ for chance. The rising class was absent, but our slope rule also classifies a synthetic coupling of this class as flat, which leaves the prediction for this class untested (Appendix~\ref{app:exp-s11}).

For the second prediction, the medians for $k=100$ lie within $4\%$ of those for $k=10$ at a shared $k/m$, which BPC holds fixed in one of its neighbor criteria by setting $k=m/100$, whereas those for $k=1$ differ by up to $19\%$ (Figure~\ref{fig:exp-neighbors}b). With sentence encoders, mKNN also falls with $m$ on all 15 pairs, but two preregistered predictions linking the slope to the level of mKNN remained undecided. These
pairs pass the preregistered test of stability at a shared $k/m$ for
$k=10$ and $k=100$.\footnote{We fixed both values and the tolerance after
the results above and before any sentence-encoder score.} The preregistered out-of-sample test of the crossing law needed an estimator of the coupling, and every candidate estimator failed the preliminary check against two baseline predictors on synthetic couplings. Since this test was not run, we report the results in Figure~\ref{fig:exp-neighbors}a and~b as measurements, not as outcomes of preregistered tests. The exceptions are the outcomes of two preregistered secondary predictions on the ordering of the slopes and the cancellation of the elasticities in $k$ and $m$, i.e., the coefficients of a fit of $\ln s_N$ on $\ln k$ and $\ln m$. For the 29 pairs without sentence encoders, the slopes kept the predicted order, and the medians of the summed elasticities for the three types of pairs lay between $0.03$ and $0.06$, near the cancellation preregistered without a tolerance.

\begin{figure}[t]
\centering
\includegraphics[width=5.5in]{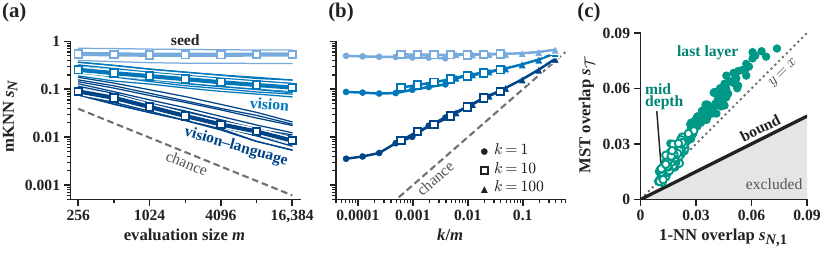}
\caption{\textbf{At a fixed $k$, neighbor overlap stays flat or falls with the evaluation size according to the model pair, and spanning-tree overlap tracks one-nearest-neighbor overlap (Theorem~\ref{thm:crossing}, Conjecture~\ref{conj:scale}, Proposition~\ref{prop:mst}).} (a) mKNN against $m$, for single pairs (thin) and the median of each type of pair (thick). (b) These medians at three values of $k$ against $k/m$, with the vertical axis of (a). (c) MST overlap against one-nearest-neighbor overlap for vision--language model pairs at two depths. Appendices~\ref{app:exp-s11} and~\ref{app:exp-s12} give the details.}
\label{fig:exp-neighbors}
\end{figure}

\begin{insight}[label=ins:scale]{Evaluation size sets the scale of mKNN}
Our results suggest that the evaluation size sets the scale at which mKNN with a fixed $k$ compares two models, and that scores at a shared $k/m$ are comparable across evaluation sets for $k=10$ and $k=100$. With sentence encoders in place of language models, a preregistered test confirms that mKNN stays stable at a shared $k/m$ for both values of $k$. These findings support the conclusion of Section~\ref{sec:neighbors} that the collapse of mKNN reported by BPC comes at least partly from a change of scale.
\end{insight}

\subsection{Shared Spanning-tree Edges Are Mostly Nearest-neighbor Edges, and Pretrained Vision and Language Representations Differ by More than a Monotone Link}
\label{sec:exp-criteria}

Neighbor overlap depends on the distances of two models at a scale set
by $k/m$, whereas MST overlap and CKA also depend on distances beyond the
nearest-neighbor scale. SOG challenges the local view of ARH with the
size trend of MST overlap. Since the MST
contains the nearest-neighbor forest for pairwise distinct distances
(Proposition~\ref{prop:mst}), we ask how many shared MST edges lie in
this forest and whether the overlap of the bridges between its trees has
a size trend. To answer both questions, we measure MST overlap and bridge overlap on WIT for the 136 pairs of a vision model and a language model and classify MST overlap with a preregistered rule. On a larger evaluation set of $m=2047$ inputs, MST overlap tracks one-nearest-neighbor overlap at both depths and exceeds it by a median factor of $1.20$ and $1.32$ (Figure~\ref{fig:exp-neighbors}c). The
nearest-neighbor forest makes up a median $0.875$ and $0.889$ of the
shared MST edges at mid depth and the last layer, near its $84$ to $85\%$
share of each tree (Figure~\ref{fig:exp-mst}b). The preregistered rule
classifies MST overlap as mostly one-nearest-neighbor overlap at
both depths, although its conditions on the forest share and on
$s_{\Tset}-s_{N,1}$ hold nearly automatically. Bridge overlap exceeds its permutation null in
$93\%$ and $100\%$ of pairs, but its BLOOMZ size trend at the last
layer stays unresolved (trend $0.57$, Holm-adjusted $p=0.23$).

By Proposition~\ref{prop:cka}, the CKA used by ARH depends on both a monotone
link between the distances of the two models and the residual around
this link. To see whether the pretrained vision and language representations differ by a monotone link or by a residual, we fit increasingly flexible maps between their squared distances, as a preregistered check rather than a confirmatory test. Beyond an affine fit, a global monotone fit explains a further median $0.0004$ and $0.0012$ of the squared-distance discrepancy, and this share lies below the preregistered level of $0.10$ in all 136 pairs (Figure~\ref{fig:exp-criteria}a).

The conclusion of ARH concerns size
trends, and at a fixed depth a size trend of CKA or mKNN can already
depend on the preprocessing, before any maximum over layers. At mid depth
and after subtraction of the permutation mean, the BLOOMZ trends of CKA
and mKNN are $0.70$ and
$0.84$ with the preprocessing of PRH (Figure~\ref{fig:exp-criteria}b,c).
Under plain $\ell_2$ normalization, our preregistered secondary
preprocessing, they reverse to $-1.00$ and $-0.92$.

\begin{figure}[t]
\centering
\includegraphics[width=5.5in]{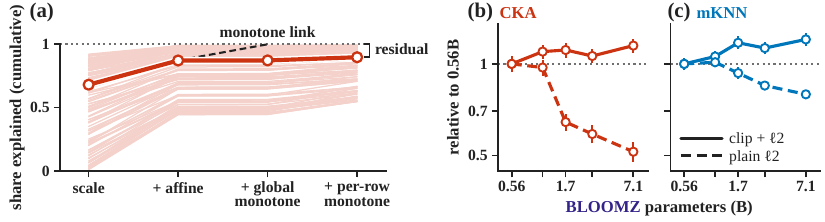}
\caption{\textbf{Beyond an affine map, pretrained vision and language representations differ by a residual rather than a monotone link, and preprocessing can reverse size trends (Proposition~\ref{prop:cka}).} (a)~Share of the squared-distance discrepancy at mid depth explained by increasingly flexible fits, per pair (thin), median (thick), and under a monotone link as in Theorem~\ref{thm:link} (dashed). (b,~c)~CKA and mKNN after subtraction of the permutation mean, as mean $\pm$ s.e.\ over vision models, where clip~$+$~$\ell_2$ is the preprocessing of PRH. Appendix~\ref{app:exp-s12} gives the details.}
\label{fig:exp-criteria}
\end{figure}

\begin{insight}[label=ins:criteria]{Spanning-tree overlap comes mostly from the nearest-neighbor forest, and a monotone link explains little beyond an affine map}
Our results suggest that on these pretrained pairs, spanning-tree overlap comes mostly from the nearest-neighbor forest of Proposition~\ref{prop:mst}, and that the part of the discrepancy between the squared distances of the two models left by an affine map comes mostly from the residual of Proposition~\ref{prop:cka} rather than from a monotone link.
\end{insight}

\subsection{The Threshold of a Calibrated Layer Maximum Can Be Forecast from Single-model Features}
\label{sec:exp-aggregation}

Besides the preprocessing at a fixed depth, a size trend can depend on
the maximum over layer pairs and on its calibration. ARH calibrates each
score after taking its maximum over layer pairs, and
finds that calibration largely removes the size trend of CKA and keeps
that of mKNN. In Section~\ref{sec:calibration} we relate this difference to the
random-pairing baselines of Proposition~\ref{prop:baselines}, which are
equal across layer pairs for mKNN and grow with the effective dimension
of each layer for CKA. Corollary~\ref{cor:aggregation} assumes that the
calibration threshold of the maximum lies near the largest baseline.
Calibration then does not reverse the order of two model pairs with
equal largest baselines and well-separated maxima, and reverses this
order when the largest baseline grows faster than the maximum in the
sense of part~2 of this corollary. We test this assumption and both
parts of the corollary on the layer maximum along four families,
BLOOMZ (0.56B to 7.1B), LLaMA-1 (7B to 30B), OpenLLaMA (3B to 13B), and
Pythia (70M to 6.9B).\footnote{The preregistered BLOOMZ experiment applied the layer-maximum protocol of ARH, and we obtained its result before deriving Corollary~\ref{cor:aggregation}.} For the last three families, we fixed and recorded predictions from single-model features before computing any score involving their new models. On LLaMA-1 and OpenLLaMA, we compared each outcome with its prediction without a confirmatory test, whereas on Pythia we preregistered a confirmatory test of the loss of the CKA trend.

In all four families the threshold of the layer maximum follows the
largest random-pairing baseline of Proposition~\ref{prop:baselines}, in a
post hoc analysis on BLOOMZ and as forecast from single-model features on
LLaMA-1, OpenLLaMA, and Pythia. Part~2 of Corollary~\ref{cor:aggregation} gives a condition under which
calibration reverses the order of two model pairs, and this condition
requires the largest baseline to grow faster than the maximum. The trend is lost or reversed along BLOOMZ ($0.89$ to $-0.02$ under $s_{\rm cal}$ of Section~\ref{sec:protocol}), LLaMA-1 ($0.68$ to $-0.85$), and OpenLLaMA ($0.88$ to $0.18$), where this baseline grows at least as fast as the raw score (Figure~\ref{fig:exp-aggregation}). It is kept along Pythia ($0.93$ to $0.89$),
where the raw score grows faster, by $0.024$ per doubling of parameters
against $0.0063$ for the threshold (Appendix~\ref{app:exp-pythia}).
Along LLaMA-1, calibration produced the drop of at least $0.10$
predicted from single-model features, whereas along Pythia the preregistered
prediction of such a drop was not confirmed. Given the raw scores, the thresholds forecast from
features nevertheless yield a predicted order after calibration in 30 of
the 34 comparisons involving OpenLLaMA 13B and in 352 of the 357 Pythia
comparisons. Every predicted order is correct. In every family, the trends of mKNN and debiased CKA move by
$0.06$ or less under calibration. With the preprocessing of PRH, the
calibrated CKA trend keeps its sign at mid depth and at the last layer in all four families. On LLaMA-1, calibration of each layer pair before the maximum also reversed the CKA trend of the maximum, to $-0.88$, although we had predicted in advance that the trend would keep its sign under this protocol. With three sizes each, every exact $p$-value of a size trend along LLaMA-1 or OpenLLaMA is at least $1/6$, and size, depth, and width grow together along these families (Appendix~\ref{app:exp-layermax}).

\begin{figure}[t]
\centering
\includegraphics[width=5.5in]{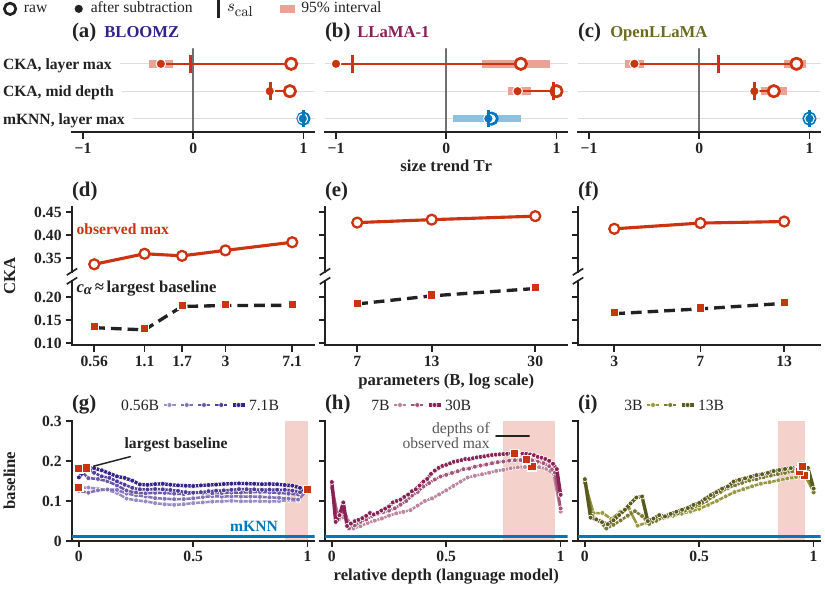}
\caption{\textbf{The threshold of a layer maximum follows the largest baseline, and calibration removes or reverses the CKA trend in the families where this baseline grows at least as fast as raw CKA (Proposition~\ref{prop:baselines}, Corollary~\ref{cor:aggregation}).} (a--c) Size trends of three scores, raw, after subtraction of the permutation mean, and calibrated ($s_{\rm cal}$). (d--f) Squares give the threshold $c_\alpha$ and dashed lines the largest baseline predicted from features. (g--i) Random-pairing CKA baseline of each language-model layer, predicted (lines) and measured (dots). Figure~\ref{fig:app-aggregation} gives the full version.}
\label{fig:exp-aggregation}
\end{figure}

\begin{insight}[label=ins:aggregation]{The calibration threshold of a layer maximum follows the largest baseline}
Our results suggest that the calibration threshold of a layer maximum follows the largest random-pairing baseline of Proposition~\ref{prop:baselines}, which supports the assumption of Corollary~\ref{cor:aggregation}. Along the four families, calibration
kept the CKA trend where the raw score outgrew this baseline and removed
or reversed the trend elsewhere, in line with this corollary. With the
preprocessing of PRH, calibrated CKA at mid depth and at the last layer
keeps the sign of its size trend in every family, so global convergence
remains open.
\end{insight}

\subsection{The Training Procedure Decides Whether Two Runs of a Linear Autoencoder Agree on Distances}
\label{sec:exp-gauge}

The models compared so far are fixed, whereas part~2 of
Hypothesis~\ref{hyp:lrh} concerns the criteria met reliably across
independent training runs. For the training procedure,
Theorem~\ref{thm:gauge} predicts that two gradient-flow runs of a linear
autoencoder keep the gauges set by their initializations and disagree in
geometry at every sample size. Part~2 certifies the resulting obstruction
of CKA criteria only below a tolerance that it leaves unspecified, so our
tolerance of $0.01$ is a choice of the preregistration rather than a
prediction of the theorem. Weight decay and tied weights instead fix
one gauge, and by part~1 a small initialization keeps $\Omega$ close to
the identity in both runs. The global minimizers of parts~1 and~3 on one
sample all share the projectors of part~4. On independent samples, the
projectors of two global minimizers converge in probability to one limit
as the samples grow (part~5 of the formal version of Theorem~\ref{thm:gauge}). We test these predictions in a
preregistered experiment on the linear autoencoder at sample sizes from
$N=2^8$ to the population limit. Each run draws its own training sample
at a finite $N$. Under gradient descent without a regularizer, the
criteria on neighbor sets, mean-normalized squared distances ($G_s$),
and CKA each accepted 0 of 48 pairs of runs at every finite $N$ and 0 of
500 at $N=\infty$. With weight decay, tied weights, or small initialization, agreement under these criteria was reliable for $N\ge2^{12}$, and agreement under the projector criterion of part~4 was reliable at these sizes for every training procedure, as preregistered for $N=2^{16}$ (Figure~\ref{fig:exp-gauge-case}a,b). Outside the preregistered tests, the condition number of the learned $EE^\top$ matches the values predicted by parts~1 and~3 of the formal version of Theorem~\ref{thm:gauge} (Figure~\ref{fig:exp-gauge-case}c). Its medians are $4.66$
against $4.67$ predicted under a small step close to gradient flow and
$1.0330$ against $1.0329$ under weight decay. The preregistered prediction for gradient descent also required the mean discrepancy over pairs of runs at $N=2^{16}$ to lie inside the $95\%$ interval of this mean at $N=\infty$, and this part of the prediction failed on $G_s$ and CKA.

To test how the training procedure selects the gauge outside the
premises of the theorem, we train CIFAR-5m networks with a linear head. The head maps its input features $h$ to coordinates $u=Mh\in\R^9$ and outputs $y=Wu$, and the task fixes $u$ up to a gauge.
In two preregistered network experiments, we train this head by momentum without weight decay, with weight decay, or from a small initialization, on $N=25{,}000$ and $250{,}000$ images. The runs form 96 pairs per training procedure and size, except 32 pairs under momentum from a small initialization in the second experiment. To
separate the gauge kept by gradient flow from the change of $C$ under
momentum (Remark~\ref{rem:optimizers}), the second experiment adds
gradient flow of the head (head flow). For each run, a balanced head
($C=MM^\top-W^\top W=0$) computes the same outputs, so we split the excess
of the discrepancy of $u$ over that of $y$ into two parts. Measured in standard deviations (SD) of the discrepancy of $y$ over pairs, these parts are the same excess computed for the coordinates of the balanced head (the balanced part) and the rest (the imbalance part).
Under head flow, the imbalance part added $0.54$ and $0.59$ SD to the
$G_s$ discrepancy at the two sizes, about one fifth of the imbalance part
under momentum (Figure~\ref{fig:exp-gauge-case}d,e). Of this part,
$0.29$ and $0.46$ SD varied from run to run, so the network realizes the
mechanism of part~2 of Theorem~\ref{thm:gauge} under head flow. From a balanced initialization with $C_0=0$ or a small initialization, momentum created a larger imbalance part than head flow on $G_s$ and $1-\CKA$ in 8 of 8 Holm-corrected tests. With weight decay, $u$ agreed in 96 of 96 pairs on $G_s$ and on neighbor sets, at $250{,}000$ images in the first experiment and at both sizes in the second, without an excess on $G_s$ ($-0.04$ SD at $250{,}000$). The preregistered predictions of Corollary~\ref{cor:gauge-neighbors} held, and at $k=10$ the bound of the corollary certified neighbor agreement for at most $4.7\%$ of the anchors.

Without weight decay, $u$ agreed on neighbor sets in 96 of 96 pairs at $250{,}000$ images, contrary to the preregistered prediction that this agreement would be unreliable at both sizes.
In the second experiment at this size, the mean neighbor-set discrepancy of $u$ lay about $0.002$ below that of $y$ under four training procedures, against the preregistered prediction that the $95\%$ interval of this difference would lie within $\pm0.002$. Under momentum, small initialization fell short of weight decay, against the preregistered predictions that it would make agreement of $u$ reliable at $250{,}000$ images and bring the fraction of agreeing pairs within $0.1$ of that under weight decay. Under head flow, a balanced initialization with $C_0=0$ or a small initialization left a balanced part of about $3$ SD, so the preregistered prediction that agreement of $u$ on $G_s$ would be reliable at both sizes failed under both initializations. This agreement reached the preregistered level of reliability in one of the four combinations of initialization and size, small initialization at $250{,}000$ images (Appendices~\ref{app:exp-s22} and~\ref{app:exp-s22-v11}).

\begin{figure}[t]
\centering
\includegraphics[width=5.5in]{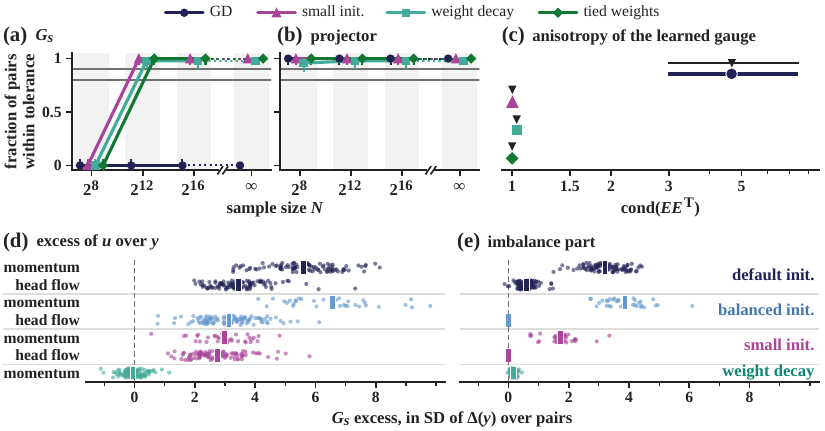}
\caption{\textbf{Gradient flow keeps the gauge set by the initial imbalance of a linear head at every sample size, whereas weight decay fixes one gauge (Theorem~\ref{thm:gauge}, Remark~\ref{rem:optimizers}).} (a--c) Linear autoencoder, with gradient descent (GD) close to gradient flow. Bars in (a, b) are Clopper--Pearson bounds and grey lines the levels for reliable and unreliable agreement. (c) Medians and 10th to 90th percentiles at $N=\infty$, with the predictions in black. (d, e) CIFAR-5m networks, with one dot per pair of runs and bars at the means. }
\label{fig:exp-gauge-case}
\end{figure}

\begin{insight}[label=ins:gauge]{The training procedure selects the gauge of a linear head}
Our results confirm the preregistered predictions of Theorem~\ref{thm:gauge} that weight decay and tied weights fix one gauge of the linear autoencoder and that a small initialization keeps $\Omega$ close to the identity in both runs. They also suggest that gradient descent at a small step keeps the gauge set by the initialization, which supports part~1 of the theorem. In trained networks, the imbalance kept by head flow and the imbalance created by momentum both add an excess to the geometric discrepancy of $u$ between two runs, and weight decay removes this excess on $G_s$, which supports Remark~\ref{rem:optimizers}.
\end{insight}

\subsection{Agreement of InfoNCE and Spectral Learners on the Order of Distances Becomes Reliable in the Finite Gibbs Model}
\label{sec:exp-link}

In the linear autoencoder, the training procedure decides whether agreement of two runs on distances becomes reliable for large samples. For the objective, Theorem~\ref{thm:link} states that a learner minimizing InfoNCE and a learner minimizing the spectral contrastive loss learn one relation through different increasing links in the finite Gibbs model. As both samples grow, exact agreement of the two learners on
the order of distances becomes reliable, whereas a criterion on
scale-normalized geometry with a small tolerance is asymptotically
obstructed. We test these predictions in two
preregistered experiments, one inside the premises of the theorem and
one in CIFAR-5m networks outside these premises. In the finite Gibbs
model, both learners return certified minimizers of their exact
empirical risks, in 192 runs per learner at each of eight sample sizes
from $2^{20}$ to $2^{40}$ examples. Exact agreement
across objectives on the order of all distances among eight probe inputs
held in 0 of 96 pairs up to $2^{28}$ examples and in 94 of 96 at
$2^{34}$. The size $2^{34}$ lies near the value $N^*=2^{33.75}$ predicted
before the runs for agreement in $90\%$ of pairs
(Figure~\ref{fig:exp-link}a). At $2^{40}$ examples, the
largest difference of mean-normalized distances across objectives on all
256 inputs settled at a median $0.348$, the limit given by the theorem
with these inputs as the probes
(Figure~\ref{fig:exp-link}b,c). Under the scale-normalized
discrepancy, all 96 pairs stayed outside the tolerance, whereas under the
isotonic discrepancy every pair fell within this tolerance. After Holm correction, all nine preregistered confirmatory predictions held, but two secondary Monte Carlo predictions failed, those of the ratio of the discrepancy across objectives to the mean discrepancy within an objective on neighbor sets and on spanning trees (Appendix~\ref{app:exp-s23}).

This experiment tests
Theorem~\ref{thm:link} and our analysis pipeline inside the premises of
the theorem, not trained networks, and the sample size it needs for exact
agreement on the global order (about $2^{34}$ examples here) reflects how
fine that criterion is rather than a practical training budget. The
preregistered prediction failed in CIFAR-5m networks outside these
premises, where at $500{,}000$ images all 64 pairs of InfoNCE and
spectral runs disagree even on neighbor sets and a fitted monotone link
removes a median $15\%$ of the residual left by an affine map, against
$100\%$ for the exact link of the theorem applied to an InfoNCE Gram
matrix as a positive control (Appendix~\ref{app:exp-s21} and
Figure~\ref{fig:app-s21-net}).

\begin{figure}[t]
\centering
\includegraphics[width=5.5in]{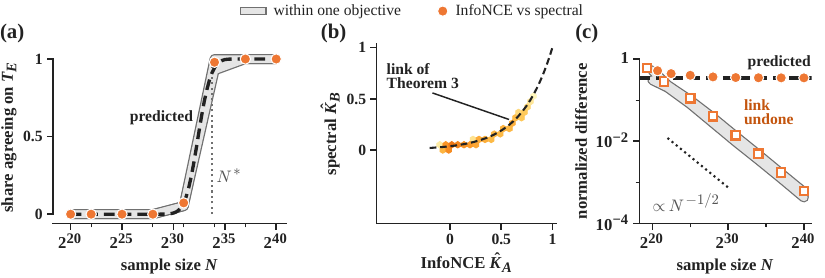}
\caption{\textbf{In the finite Gibbs model of Theorem~\ref{thm:link}, InfoNCE and spectral learners come to agree on the order of distances and differ in geometry by the link of the theorem.} (a)~Share of pairs with the same global order $T_E$, against the share predicted before the runs. (b)~Density of the Gram-matrix entries of one pair. (c)~Median over pairs of the largest difference of mean-normalized distances, before and after the link is undone on the spectral run. Appendix~\ref{app:exp-s23} gives the details.}
\label{fig:exp-link}
\end{figure}

\begin{insight}[label=ins:link]{In the finite Gibbs model, learners trained with different objectives come to agree on order and differ by the link}
Our results suggest that certified minimizers of InfoNCE and of the
spectral contrastive loss in the finite Gibbs model come to agree exactly
on the order of distances and differ in geometry by the link, which
confirms the preregistered predictions of Theorem~\ref{thm:link} inside
its premises.
\end{insight}

\section{Potential Applications}
\label{sec:discussion}
\label{sec:open-problems}

We prove our learning results in a linear autoencoder and the finite
Gibbs model and test them there and in CIFAR-5m networks, whereas their
tests on pretrained models remain future work.
Appendix~\ref{app:limitations} states the limitations of our results and
the open problems they leave. Our results also bear on current areas of
machine learning, and we state their implications for seven areas below,
with further material in Appendix~\ref{app:open-problems}.

\paragraph{Pretraining, Scaling, and Multimodal Encoders.}
PRH and later studies relate model size and performance to mKNN between
vision and language models with $k=10$ on $1{,}024$ image--caption pairs
\citep{huh2024platonic,tjandrasuwita2025understanding}. Size trends at
several values of $k/m$ would show whether larger models agree more on
near neighbors or on coarse structure alone (Theorem~\ref{thm:crossing}).
In multimodal models such as LLaVA \citep{liu2023visual}, a trained linear
connector can undo any invertible linear map of the vision features, a
gauge as in Section~\ref{sec:gauge}. Recent work selects encoders before
connector training by CKA \citep{maniparambil2025harnessing} or by a
Gromov--Wasserstein distance \citep{li2026rethinking}, and both scores
change under such maps. An experiment with maps of growing condition
number applied before this training would show the invariance needed by
such a score. Appendix~\ref{app:op-pretraining} also treats a
preregistered test of Corollary~\ref{cor:aggregation} on CLIP models
\citep{cherti2023reproducible} and a conjecture on the losses of CLIP and
SigLIP.

\paragraph{Post-training, Fine-tuning, and Reinforcement Learning.}
Low-rank adaptation (LoRA) adds a product $BA$ of trainable factors to a
frozen weight \citep{hu2022lora}, with the same update from $VA$ and
$BV^{-1}$ for every invertible $V$, a gauge as in Section~\ref{sec:gauge}.
An $\ell_2$ penalty on both factors restricts this gauge to orthogonal
maps, which a Procrustes rotation could remove before LoraHub composes
adapters \citep{huang2024lorahub}. Under Bradley--Terry preferences, a
classifier of pairwise wins \citep{sun2025rethinking} and a Bradley--Terry
reward model have population limits whose scores differ by an increasing
link within each prompt, as in Theorem~\ref{thm:link}. Their best-of-$N$
responses coincide, whereas the advantages of group relative policy
optimization \citep{shao2024deepseekmath} can change with the link. Under
the dynamics of the three-feature task of \citet{shi2026self}, two runs of
reinforcement learning (RL) share the squared overlaps of their feature
directions in the limit with probability
$p_Q=q^2+(1-q)^2$ (Section~\ref{sec:learning-law}). Here $q$ is the
probability that a run starts below an unstable policy. Replay under
the sufficient condition of that work raises $p_Q$ to one, which in this
task needs a base-reward gap above $7/16$. In
Appendix~\ref{app:op-posttraining}, we propose to compare RL with
supervised fine-tuning.

\paragraph{Agents and Harness Engineering.}
Studies of multi-agent systems measure redundancy or diversity by the
cosine similarity between output embeddings of agents
\citep{kim2025towards,subramaniam2025multiagent}. The random-pairing
baseline of the mean cosine between the output embeddings of two agents is
the inner product of their mean normalized embeddings, large when all
outputs share a format. As in Corollary~\ref{cor:aggregation}, subtraction
of these baselines can reverse the order of two agent systems, and we
propose this subtraction for the redundancy predictor in the scaling model
of \citet{kim2025towards}. The fractions of instances resolved by six runs of one
variant of SWE-agent lie within $1.4$ percentage points of each other
\citep{yang2024swe}. Their reported pass@1 and pass@2 imply that two runs
share on average about half of the instances resolved by either run. The
probability $p_Q$ of Section~\ref{sec:learning-law} extends to criteria
$Q$ on the outputs of two sampled runs of an agent. The harness of an agent
comprises the tools and control code around its model, and a change of
harness matters under $Q$ when runs of two variants meet $Q$ less often
than runs of one variant.
Appendix~\ref{app:op-agents} also treats agents that communicate through
key-value caches or activations.

\paragraph{Generative Models.}
Representation alignment (REPA) speeds up the training of diffusion
transformers by aligning a projection of their hidden states with DINOv2
features \citep{yu2025repa}. \citet{yu2025repa} find that CKNNA
\citep{huh2024platonic} between DINOv2 and diffusion transformers trained
without REPA rises with model size and training, at $k=10$ on $10{,}000$
images. CKNNA is CKA restricted to pairs of images that are $k$ nearest
neighbors in both representations, and these pairs change with the scale
$k/m$ (Section~\ref{sec:neighbors}). REPA-E uses CKNNA as a proxy for
generation quality \citep{leng2025repae}, and it remains open whether
trends and rankings of CKNNA at one scale hold at other scales. Losses
matching the similarities among patch features of a tokenizer
\citep{yao2025reconstruction} or of a diffusion transformer
\citep{chen2025sara} to those of an encoder improve generation. Strictly
increasing maps of the target similarities change the values of these
losses and leave neighbor overlap unchanged (Proposition~\ref{prop:lattice}).
An experiment with such maps would test whether generation benefits from
agreement beyond the order of the similarities.
Appendix~\ref{app:op-generative} also treats the gauge of the autoencoder
in latent diffusion.

\paragraph{Mechanistic Interpretability.}
Sparse autoencoders (SAEs) trained from different seeds share part of
their latents \citep{paulo2026sparse}, and the survey of
\citet{shi2026superposition} concludes that latent reproducibility depends
on the objective, the initialization, and the level of comparison.
\citet{gerasimov2026unstable} find that the latents rarely matched across
seeds concentrate in recurring lower-rank subspaces, and suggest that this
seed dependence reflects ambiguity among bases. The exact symmetries of an
SAE are permutations and positive scalings of its latents, and an
approximate gauge can remain within such a subspace. By analogy with
Theorem~\ref{thm:gauge}, we expect the initialization and the
regularization of SAEs to decide whether runs from different seeds agree
reliably on single directions in these subspaces. Since all global
minimizers of the linear autoencoder on one sample share one projector, we expect these
choices to matter less for agreement on projectors onto the subspaces. In
Appendix~\ref{app:op-interp}, we also argue that mKNN can stay high under
superposition where CKA is low \citep{liu2026similarity}, and treat
universal neurons and top-activating tokens.

\paragraph{Proteins, Molecules, and Materials.}
For interatomic potentials, \citet{edamadaka2025universally} find that
conservative Orb-v3 models align strongly with MACE under CKNNA with
$k=25$ on features averaged over each structure.
\citet{li2026platonicmlip} instead find near-zero overlap between the
neighbor sets of single atoms in MP-20 crystals under Orb-v3 and under
equivariant models including MACE. Under part~1 of
Hypothesis~\ref{hyp:lrh}, this conflict can come from criteria applied at
different points of the lattice, or from the data or the level of the
features. A comparison of both criteria on shared samples and features
would separate these causes. The representations of interatomic potentials
also grow more similar to those of the best-performing model with
decreasing energy error, in scores computed without calibration
\citep{edamadaka2025universally}. Since the baselines of CKA and mKNN can
be forecast from single-model features (Proposition~\ref{prop:baselines}),
a recomputation of these trends with the two scores would show whether
this convergence survives calibration (Corollary~\ref{cor:aggregation}).
Appendix~\ref{app:op-science} also treats homology among proteins and
linear heads in molecule--text and protein models.

\paragraph{Dynamical Systems and Operator Learning.}
DeepONet predicts through the inner product of its branch and trunk
outputs \citep{lu2021learning}, and an invertible map applied to the
branch outputs and undone on the trunk outputs leaves every prediction
unchanged. We conjecture that joint training selects this map as in
Theorem~\ref{thm:gauge}, in which case CKA between the branch outputs of
two runs could stay below one for large samples. Fixed orthonormal Wick
features of the driving noise remove this map in several chaos-expansion
operators of \citet{shi2026expanding}, whose learned propagators can then
be compared across runs coordinate by coordinate.
Across grid resolutions, \citet{bartolucci2023representation} find
aliasing errors in Fourier neural operators and note that discretization
invariance \citep{kovachki2023neural} concerns the limit of infinite
resolution. With the two discretizations of each test function as one
evaluation sample, a neighbor criterion tests the transfer of an operator
between two finite resolutions at a chosen scale $k/m$
(Theorem~\ref{thm:crossing}). Appendix~\ref{app:op-dynamics} also treats
DeepONets with orthonormal trunk outputs and criteria on dynamics and on
long-run statistics.

In these areas, a conclusion drawn from one score computed on the Gram
matrices of two representations rests on one criterion of the lattice, at
one scale and under one protocol. We hope that this lattice view offers a
framework for comparing such conclusions.

\bibliographystyle{plainnat}
\bibliography{references_v3}

\appendix
\numberwithin{definition}{section}
\numberwithin{remark}{section}
\numberwithin{theorem}{section}
\numberwithin{proposition}{section}
\numberwithin{lemma}{section}
\numberwithin{corollary}{section}
\numberwithin{conjecture}{section}
\numberwithin{equation}{section}
\numberwithin{table}{section}
\numberwithin{figure}{section}

\section{Related Work}
\label{app:related}

\paragraph{Representation Hypotheses and Current Evidence.}
PRH formulates convergence in terms of the similarity structures induced
by representations \citep{huh2024platonic}. Its main empirical measure is
the mean size of the intersection of the two models'
$k$-nearest-neighbor sets over matched inputs, normalized by $k$, the
mutual $k$-nearest-neighbor overlap (mKNN). This measure scores
agreement on neighbor identities and leaves the corresponding distances
unconstrained, whereas convergence to a common kernel in PRH's idealized
analysis implies agreement of the induced pairwise distances.
ARH \citep{groger2026revisiting} finds that mKNN keeps its size trend
after permutation calibration whereas several geometric measures largely
lose this trend, and accordingly posits convergence toward shared
neighborhood relations rather than shared distances. BPC finds that mKNN
between vision and language models at a fixed $k$ falls sharply on
larger evaluation sets and stays stable when $k$ grows in proportion
\citep{koepke2026back}. SOG adds the overlap of minimum spanning tree
(MST) edge sets to score global structure and finds a size trend in this
overlap \citep{you2026structure}. In SOG, the size trend of the
length-weighted overlaps is weaker at a smaller length tolerance $\tau$,
which the authors interpret as convergence in structure rather than
geometry. These four studies apply different criteria and reach
conflicting conclusions about the same models. In this work, we place
the criteria of all four studies in one lattice and identify the
differences between two representations that determine the outcome of
each criterion.

\paragraph{Alignment without Paired Inputs.}
The method of \citet{schnaus2026rosetta} uses the geometry shared by two
models to align vision and language embeddings without any paired
inputs, fitting one orthogonal map after centering and normalization. On
COCO the true partner of a query ranks on average behind $15\%$ of the
candidates, against $0.06\%$ for CLIP, so the recovered correspondence
is coarse. These authors compute CKA with
the unbiased estimator of the Hilbert--Schmidt independence criterion,
which gives the debiased CKA of Proposition~\ref{prop:baselines} with its
zero random-pairing baseline. Pooled over four caption corpora, this score
predicts the success of their alignment better than mKNN at $k=10$ and
than the triplet and quadruplet scores of \citet{soares2026scalable}.
With one speech encoder applied to two speakers reading the same
sentences, the identity map recovers the correspondence exactly, while
their aligner leaves a mean share of $0.265$ of candidates closer than
the true partner. This example suggests that the success of alignment
without pairs also depends on whether the correspondence can be
identified from each space alone.

\paragraph{Similarity Measures and Their Invariances.}
Representational similarity measures are commonly classified by the
transformations under which they are invariant
\citep{kornblith2019similarity,williams2021generalized,klabunde2025similarity},
and \citet{sucholutsky2023getting} organize alignment studies around the
measurement taken from each system and the function comparing the two
measurements. Closer to our ordering of criteria,
\citet{cloos2025differentiable} estimate which values of one similarity
score remain reachable at a given value of a second score, and
\citet{harvey2024what} relate several measures to the agreement of linear
decoders. \citet{lin2024topology} interpolate between geometry and
topology through monotone transformations of dissimilarities.
\citet{soares2026scalable} show that agreement on all anchored triplet
comparisons is equivalent to mutual nearest-neighbor agreement at every
$k$ when the distances from each anchor are distinct. These works
compare measures on fixed representations, whereas we order the criteria
themselves, including their tolerance, scale, and evaluation protocol,
and ask which learning conditions make agreement under each criterion
reliable across independent runs.

\paragraph{Identifiability and Reproducibility of Learned Representations.}
Contrastive learning recovers a latent relation under suitable
assumptions on the data-generating process
\citep{zimmermann2021contrastive,lyu2022finite}, and the idealized
contrastive learner of \citet{huh2024platonic} recovers a pointwise
mutual information kernel. Models trained on prediction tasks with a
linear head learn representations that are identifiable up to an
invertible linear map \citep{roeder2021linear},
and the symmetries of network parameterizations act on hidden
representations \citep{godfrey2022symmetries}.
\citet{nelson2026statistical} quantify identifiability up to a
transformation group within a tolerance. \citet{ziyin2025proof} prove
that, in embedded deep linear networks, the minimizers of the loss
modeling finite-step stochastic gradient descent are aligned up to
rotation, although most global minima of the original loss lack this
alignment. \citet{doshi2026theory} model languages that share the upper
levels of a probabilistic grammar and use the Bayes-optimal predictor to
derive when representations share neighborhood geometry and when they
share coordinates. In their experiments, the representations of trained
transformers match these predictions. Networks trained from different
random seeds differ measurably in their representations
\citep{mehrer2020individual,ding2021grounding}. The probability that two
runs with shared internal randomness return the same output on
independent samples has been formalized as reproducibility
\citep{impagliazzo2022reproducibility}. In contrast, $p_Q$ is the
probability that two runs with independent randomness meet a specified
criterion $Q$, and we call $Q$ obstructed if this probability stays low
across a family of interventions on the learning conditions
(Definition~\ref{def:reliable}).

\paragraph{Ordinal Geometry and Stability.}
For dense samples in a fixed Euclidean dimension, ordinal information
can determine geometry, since maps preserving distance comparisons are
close to similarity transformations \citep{kleindessner2014uniqueness}.
In the same setting, $k$-nearest-neighbor graphs suffice to recover an
embedding up to similarity when $k$ grows suitably with the sample
\citep{terada2014local}. These results assume a common low dimension,
whereas learned representations typically have high rank. In
Proposition~\ref{prop:bernstein}, a representation and a full-rank
transform of this representation agree on the order of all distances,
while their CKA can come close to zero when the inputs vary along few
effective directions relative to $m$. Worst-case stability of discrete
relations under perturbation is classical, and MST sensitivity analysis
computes the largest change of each edge length that leaves the tree
unchanged \citep{tarjan1982sensitivity}. By the cycle condition behind
this analysis, the MST is unchanged when every distance moves by less
than half of the smallest gap between a non-tree edge and the heaviest
tree edge on its cycle. Persistence diagrams are stable in the
bottleneck distance \citep{cohensteiner2007stability}, and
nearest-neighbor sets are preserved when points move by less than a
quarter of the gap between the $k$th and $(k+1)$th neighbor distances
\citep{sumiya2026finite}. In this work, we complement these worst-case
bounds with a typical-case law for neighbor sets
(Theorem~\ref{thm:crossing}). Under the assumptions of this theorem, the
conditional expected number of replaced neighbors at an anchor lies
almost surely within $\sqrt{k/2}$ of the prediction of this law at every
evaluation size.

\section{Scores, Observables, and the Four Studies}
\label{app:measurement-definitions}

This appendix fixes the conventions behind the scores of
Section~\ref{sec:criteria}, defines the observables of
Proposition~\ref{prop:lattice}, and lists the criteria of the four
studies.

\paragraph{Score Conventions.}
As in PRH \citep{huh2024platonic}, the neighbor sets $\Nset_k^a(i)$ of
mKNN exclude the input itself. Neighbor and MST ties are broken by the
input index. The tree $\Tset_a$ is computed on the complete graph
weighted by $d_a$, which gives the same tree as weighting by $D_a$. For
linear CKA,
$\langle\bar K_A,\bar K_B\rangle_F=\sum_{i,j}(\bar K_A)_{ij}(\bar K_B)_{ij}$
and $\norm{\bar K}_F^2=\langle\bar K,\bar K\rangle_F$
\citep{kornblith2019similarity}. The score requires nonzero centered Gram
matrices and equals one exactly when the two centered matrices are
positive scalar multiples.

In the calibration formula of Section~\ref{sec:protocol}, the threshold
$c_\alpha<s_{\max}$ is the rank-based $(1-\alpha)$ quantile of the
aggregated scores on the observed pairing and on the permuted pairings
\citep{groger2026revisiting}. Permutations relabel the matched
inputs of one model while preserving each representation. When the
protocol aggregates by a maximum over layer pairs, this maximum is
recomputed within every permutation, with the same permutation of inputs
at every layer. A maximum over layer pairs is high when the score of a
single layer pair is high, and it carries an upward bias growing with
the number of layers \citep{groger2026revisiting}.
Section~\ref{sec:learning-law} therefore fixes one representation map
per model and treats aggregation over layers as part of the protocol.

In the preprocessing map of their protocols, PRH, ARH, and SOG clip
every coordinate at one threshold per model and then normalize each
feature vector to unit length. This threshold is the mean over inputs of
the $95$th percentile of the absolute coordinates of each input, pooled
over all layers of the model.

For the length-weighted overlaps of SOG, $\mathcal E_a$ contains either
the $mk$ directed neighbor edges or the $m-1$ MST edges of model $a$
\citep{you2026structure}, and the score is
\[
 s_{\mathcal E}^{(\tau)}
   =\frac{\sum_{(i,j)\in\mathcal E_A\cap\mathcal E_B}
                    w_{ij}^{(\tau)}}{|\mathcal E_A|}.
\]
Distances are normalized by $\bar d_a$, the $90$th percentile of the
positive off-diagonal entries of $d_a$, and the edge weight is
\begin{equation}
 w_{ij}^{(\tau)}=
 \exp\!\left(-\frac{|\log\widetilde d^A_{ij}
                       -\log\widetilde d^B_{ij}|}{\tau}\right),
 \qquad
 \widetilde d^a_{ij}=\frac{d_a(i,j)}{\bar d_a},
 \quad\tau>0,
 \label{eq:sog-weight}
\end{equation}
which uses distances rather than squared distances. Smaller $\tau$
penalizes length mismatch more strongly, and $\tau\to\infty$ recovers
unweighted edge overlap.

\paragraph{Observables.}
Observables are the measurable maps $T:\Kset\to\Yset_T$ of
Definition~\ref{def:refinement}. If an observable needs information
outside the Gram matrix, its domain is enlarged to include that
information. Under $D(K)$, let $\Nset_k^K(i)$ be the neighbor set of
input $i$ and $o_i^K$ the ranking of the other inputs by their distance
to input $i$, and let the $k$-nearest-neighbor graph $G_k(K)$ join $i$
and $j$ if $j\in\Nset_k^K(i)$ or $i\in\Nset_k^K(j)$. The
observables of Proposition~\ref{prop:lattice} are as follows:
\begin{itemize}
\item $T_G(K)=D(K)$, all squared distances;
\item $T_{G/\R_+}$, the class of $D(K)$ under multiplication by a
positive constant, whose values on two Gram matrices determine their
linear CKA since $\bar K=-\tfrac12HD(K)H$;
\item $T_{G/\mathrm{Aff}}$, the class of $D(K)$ under the off-diagonal
maps that multiply every entry by one positive constant and add a second
constant, whose values on two Gram matrices determine the Pearson
correlation of their distance matrices;
\item $T_E$, the order of the entries $D(K)_{ij}$, $i<j$, whose values
on two Gram matrices determine the Spearman correlation of these
entries;
\item $T_O(K)=(o_i^K)_{i=1}^m$, the rankings of the other inputs by
their distance to each input;
\item $T_{N,k}(K)=(\Nset_k^K(i))_{i=1}^m$, the neighbor sets, and
$T_{C,k}(K)=\Comp(G_k(K))$, the connected components of $G_k(K)$;
\item $T_{SL}$, the single-linkage dendrogram, i.e., the MST edges
together with their order of addition;
\item $T_{\mathrm{MST}}$, the MST edge set, and $T_{N,1}$, the map
sending each input to its nearest neighbor.
\end{itemize}

\paragraph{Criteria of the Four Studies.}
Table~\ref{tab:claims-practice} lists the claim of each of the four
studies together with the criterion behind its scores.

\begin{table}[ht]
\centering\small
\setlength{\tabcolsep}{4pt}
\renewcommand{\arraystretch}{1.15}
\begin{tabular}{@{}p{.08\linewidth}p{.16\linewidth}p{.22\linewidth}p{.17\linewidth}p{.13\linewidth}p{.13\linewidth}@{}}
\toprule
Study & Claim & Observable and score & Scale and length tolerance & Layers & Calibration \\
\midrule
PRH & shared statistical model of reality; idealized analysis: common kernel up to offset & mKNN & $k=10$, $m=1024$ & max over all layer pairs and concatenated features & none \\
ARH & shared local neighborhoods rather than global geometry & mKNN, CKNNA, cycle-kNN; linear, RBF, and unbiased CKA; RV; SVCCA, PWCCA; Procrustes & $k\in\{10,20,50,100\}$, $m=1024$ & max over all layer pairs & $s_{\rm cal}$ of Section~\ref{sec:protocol}, max recomputed inside each permutation \\
BPC & multimodal representations share coarse structure without evidence of convergence & mKNN for the $m$ sweep; RSA and linear CKA on ImageNet class galleries & $k\in\{1,10\}$ or $k=m/100$; $m$ from $1{,}024$ (WIT) to $1.5\times10^7$ (LAION) & best pair selected on WIT-1024, then fixed & none for the $m$ sweep; permutation-calibrated variants in an appendix \\
SOG & relational structure converges at local and global scale, metric geometry weakly & mKNN; MST overlap; length-weighted overlaps & $k=10$, $m=1024$, $\tau$ from $1$ to $10^{-4}$ & max over all layer pairs & as ARH, $500$ permutations \\
\bottomrule
\end{tabular}
\caption{Claims and criteria of four studies of representational
convergence. All four evaluate pretrained models.}
\label{tab:claims-practice}
\end{table}

\section{Supplements to Section~\ref{sec:criteria}}
\label{app:criteria-supp}

This appendix first shows that the strength order of criteria with
positive tolerances depends on their tolerances as well as on their
observables, and then completes the lattice of criteria. It also bounds
the probability that models from two learning conditions meet an exact
criterion by the probabilities of agreement within each condition,
defines transitions, and proves Proposition~\ref{prop:lattice}.

\subsection{Refinement and Criteria with Positive Tolerances}

Information refinement of the observables alone does not determine the
strength order of criteria with positive tolerances. A pair of models
may fail a stringent neighbor criterion because of a near tie between
the $k$th and $(k+1)$th nearest inputs of one anchor, and still satisfy
a loose distance criterion.
In the opposite case, a CKA criterion with a small tolerance can accept a
pair of independent representations of large width, while the
expectation of their mKNN stays at its chance level.
\citet{groger2026revisiting} show that linear CKA at a fixed number of
inputs tends to one with growing width even when the two representations
share a part of their signal and differ in the rest (their
Proposition~C.5). The argument of this proposition applies unchanged to
independent representations, whereas mKNN between independent
representations has expectation $k/(m-1)$ at every width (their
Theorem~C.10). Strength is therefore an order on complete criteria,
including their tolerances.

\subsection{The Lattice of Criteria}

Up to equivalence, an observable is the partition of $\Kset$ into its
fibers, and partitions ordered by refinement form a lattice. Rankings,
neighbor sets, and components are invariant under strictly increasing
maps of each row of $D$. The observables $T_E$, $T_{SL}$, and
$T_{\mathrm{MST}}$ are invariant under one strictly increasing map of
all distances, and $T_{G/\mathrm{Aff}}$ and $T_{G/\R_+}$ under positive
affine maps and positive rescaling, respectively. For $T_O$, $T_E$,
$T_{G/\mathrm{Aff}}$, and $T_{G/\R_+}$, the invariances listed above
exhaust the invariances on matrices with distinct entries, and the
coarser observables are invariant under more maps. These invariances fix
the differences between two representations ignored by each observable.
The pair $(T(K_A),T(K_B))$ determines the outcome of an exact criterion
$(T,=)$ in the sense of Section~\ref{sec:observables}. A distance
criterion accepts a pair of models if the largest absolute entry or the
root mean square of the entries of $D_A-D_B$ is at most its tolerance.

For fixed-layer criteria $Q_1$ and $Q_2$ with positive tolerances
$\varepsilon_1$ and $\varepsilon_2$, the join and the meet of
Section~\ref{sec:observables} are again criteria in the sense of
Definition~\ref{def:criterion}. Both have the observable $(T_1,T_2)$ and
tolerance one, with discrepancy
$\max\{\Delta_1/\varepsilon_1,\Delta_2/\varepsilon_2\}$ for the join and
$\min\{\Delta_1/\varepsilon_1,\Delta_2/\varepsilon_2\}$ for the meet. A
smaller tolerance for a fixed observable and discrepancy gives a
stronger criterion, and so does a larger set of compared input pairs for
a criterion on the largest distance error over these pairs, at a fixed
tolerance. A distance-error bound
implies neighbor agreement at anchors whose neighbor gap exceeds twice
the error (Equation~\eqref{eq:neighbor-stability}). Criteria that differ
in the neighbor count or in the protocol are in general unordered by
strength.

\subsection{Shared Corpus and the Within-condition Ceiling}

If the models share a random training corpus $\mathcal C$, the product law
in the definition of $p_Q$ (Section~\ref{sec:learning-law}) is instead
formed from
$\mu_{r_A}(\cdot\mid\mathcal C)$ and $\mu_{r_B}(\cdot\mid\mathcal C)$,
provided the remaining randomness is independent. The average over
$\mathcal C$ then gives the probability of agreement for models trained
on one shared corpus, which in general differs from $p_Q(r_A,r_B)$. For
exact criteria, the probabilities of agreement within each of two
learning conditions bound the probability of agreement across the two
conditions, and the same holds for expected scores that are inner
products.

\begin{remark}[Within-condition Ceiling]
\label{rem:ceiling}
For an exact criterion $(T,=)$ whose observable $T$ takes countably many
values, let $\mu^T_r$ be the law of $T(K)$ under $\mu_r$. Then
$p_{(T,=)}(r_A,r_B)$
is the inner product of the probability mass functions of
$\mu^T_{r_A}$ and $\mu^T_{r_B}$, so
$p_{(T,=)}(r_A,r_B)\le\bigl(p_{(T,=)}(r_A,r_A)\,p_{(T,=)}(r_B,r_B)\bigr)^{1/2}$
by the Cauchy--Schwarz inequality. The same argument bounds every
expected score of the form $\E\langle\Theta(K_A),\Theta(K_B)\rangle$ with
independent $K_A\sim\mu_{r_A}$ and $K_B\sim\mu_{r_B}$, including
$\E s_N$, $\E|\Tset_A\cap\Tset_B|$, and $\E\CKA$ with
$\Theta(K)=\bar K/\norm{\bar K}_F$. The bound does not extend to
criteria with a positive tolerance. With a real-valued $T$, the
discrepancy $|T(K)-T(K')|$, a tolerance $\varepsilon>0$, $\mu^T_{r_A}$
uniform on $\{0,2\varepsilon\}$, and $\mu^T_{r_B}$ a point mass at
$\varepsilon$, we have $p_Q(r_A,r_A)=1/2$ and $p_Q(r_B,r_B)=1$, whereas
$p_Q(r_A,r_B)=1>2^{-1/2}$. Within one learning condition, Jensen's
inequality gives
$\E_{\mathcal C}\norm{\E[\Theta(K)\mid\mathcal C]}^2\ge\norm{\E\Theta(K)}^2$,
so a shared corpus does not lower such an expected score.
\end{remark}

The seed-pair probability of a learning condition is the probability
that two runs with independent training samples and seeds under this
condition meet a criterion. By Remark~\ref{rem:ceiling}, the
probability that models from two conditions meet an exact criterion is
at most the geometric mean of the two seed-pair probabilities, and the
same bound holds for the expected scores above. For criteria with a positive
tolerance, the seed-pair probability is a reference rather than a bound.

\subsection{Transitions}

When models share one relation but fail a stronger criterion, we ask
whether a change of the learning conditions can remove the remaining
discrepancy. In the learning condition $r_a$ of
Section~\ref{sec:learning-law}, the training procedure
$\mathsf{Train}_a$ fits a function from the model class $\mathcal H_a$
to a training sample $\mathcal D_a\sim P_a^{N_a}$ under the objective
$\mathcal L_a$, with a training budget and randomness $\xi_a$. To test
whether more training samples remove the discrepancy, we compare $p_Q$
before and after an increase of $N_a$ at a fixed data distribution,
objective, model class, and training procedure. A comparison across
architectures, training budgets, or objectives instead tests whether the
changed entry of the learning condition limits $p_Q$. The criterion and
the evaluation inputs stay fixed throughout such a comparison, so a
change of $p_Q$ comes from a change in the learned representations. A
change of criterion alters which learned differences count as
disagreement under fixed representations, whereas an intervention alters
the representations under a fixed criterion. Division of all distances by
their mean and restriction of a comparison to pairs within groups are
two such changes of criterion, which make the criterion ignore a uniform
rescaling and the discrepancies between groups, respectively.

We expect the outcome of a criterion obstructed over one intervention
family to depend on a difference between the two learned representations
ignored by some weaker criterion and reduced by an intervention of
another kind. Since $Q_1\preceq Q_2$ implies $p_{Q_1}\ge p_{Q_2}$, the
set $\mathcal R_\delta(r_A,r_B)$ of Hypothesis~\ref{hyp:lrh} contains
every criterion weaker than one of its members, so we describe the
effect of an intervention on this set by the change of its strongest
members. In Theorem~\ref{thm:gauge}, more data leaves the obstruction of
CKA criteria of small tolerance in place, while weight decay removes
this obstruction. In Theorem~\ref{thm:link}, criteria on
scale-normalized geometry with a small tolerance are asymptotically
obstructed for learners trained with different objectives, while one
objective for both learners removes this obstruction.

\begin{definition}[Transition]
\label{def:transition}
Fix the level $1-\delta$ and the separation $\eta$ of
Definition~\ref{def:reliable}. For a weaker criterion $Q_c$ and a
stronger criterion $Q_f$ with $Q_c\preceq Q_f$, an intervention from
$(r_A,r_B)$ to $(r'_A,r'_B)$ produces a \emph{transition} if
\begin{equation}
 p_{Q_c}(r_A,r_B)\ge1-\delta,\qquad
 p_{Q_f}(r_A,r_B)\le1-\delta-\eta,\qquad
 p_{Q_f}(r'_A,r'_B)\ge1-\delta.
 \label{eq:attainable-transition}
\end{equation}
\end{definition}

The separation $\eta$ keeps a transition from being declared on the basis
of a probability that merely crosses the level $1-\delta$, and the
supremum in Definition~\ref{def:reliable} keeps an obstruction from
vanishing in the limit of the intervention family. A proof of the
transition in Equation~\eqref{eq:attainable-transition} therefore needs
a lower bound on $p_{Q_f}$ after the intervention and an upper bound on
$p_{Q_f}$ before the intervention. A sufficient condition for agreement
under $Q_f$ that fails at the original conditions leaves the second
inequality unproved.
Theorems~\ref{thm:gauge}, \ref{thm:link}, and~\ref{thm:peer} prove
obstructions in the asymptotic form.

\subsection{Proof of Proposition~\ref{prop:lattice}}
\label{app:proofs-prelim}

\begin{proof}[Proof of Proposition~\ref{prop:lattice}]
Part~1. The connected components of $G_k$ are computed from the neighbor
sets, and the neighbor sets are the first $k$ entries of the rankings.
The ranking around $i$ is the order of the entries $D(i,l)$, $l\ne i$,
i.e., the restriction of $T_E$ to the pairs containing $i$. Every
off-diagonal affine map with positive slope preserves the order of the
off-diagonal entries, so $T_E$ is a function of $T_{G/\mathrm{Aff}}$. The
class of $D$ under positive rescaling is contained in its affine class,
so $T_{G/\mathrm{Aff}}$ is a function of $T_{G/\R_+}$, which is a
function of $T_G$.

Part~2. Kruskal's algorithm examines the edges in increasing order of
distance and accepts an edge when this edge joins two trees of the
current forest. Since the MST and the order of acceptance of its edges
depend on $T_E$ alone, $T_{SL}\preceq T_E$, and
$T_{\mathrm{MST}}\preceq T_{SL}$ follows by forgetting the order. For an
input $i$, the edge $\{i,\NN(i)\}$ is the lightest edge crossing the cut
between $\{i\}$ and the remaining inputs, so this edge belongs to the
MST by the cut property. Every other MST edge at $i$ crosses the same
cut and is heavier than this edge. Hence $\NN(i)$ is the MST neighbor of
$i$ whose edge comes first
in the order recorded by $T_{SL}$, and $T_{N,1}\preceq T_{SL}$.

Part~3. All configurations below are point sets in the plane with
pairwise distinct distances. The Gaussian Gram matrix
$K_{ij}=e^{-\norm{\mathbf p_i-\mathbf p_j}^2/2}$ has unit diagonal and is
positive definite, and
$D(K)_{ij}=2-2e^{-\norm{\mathbf p_i-\mathbf p_j}^2/2}$ is a strictly
increasing function of the Euclidean squared distance, so $K$ has the
same $T_E$, $T_O$, $T_{N,k}$, and MST as the point set.

For $T_{\mathrm{MST}}\not\preceq T_O$ at $m=4$, take
$\mathbf p_0=(0,0)$, $\mathbf p_1=(2\pm0.1,0)$, $\mathbf p_2=(2,1)$, and
$\mathbf p_3=(0,1)$, with the plus sign in one configuration and the
minus sign in the other.
The distances are $d_{01}=2\pm0.1$, $d_{02}=\sqrt5$, $d_{03}=1$,
$d_{12}=\sqrt{1.01}$, $d_{13}=\sqrt{(2\pm0.1)^2+1}$, and $d_{23}=2$. In
both configurations the rankings are $(3,1,2)$ around $\mathbf p_0$,
$(2,0,3)$ around $\mathbf p_1$, $(1,3,0)$ around $\mathbf p_2$, and
$(0,2,1)$ around $\mathbf p_3$, while the MST is $\{03,12,23\}$ for the
plus sign and $\{03,12,01\}$ for the minus sign. For $m>4$, add the
points $\mathbf p_{3+j}=(1/2,\,10\cdot3^j)$ for $j=1,\dots,m-4$;
consecutive heights $10\cdot3^j$ and $10\cdot3^{j+1}$ differ by twice the
lower height. From $\mathbf p_{3+j}$, the added points below
$\mathbf p_{3+j}$ are closer than $\mathbf p_0,\dots,\mathbf p_3$, which
follow in the order $\mathbf p_3,\mathbf p_2,\mathbf p_0,\mathbf p_1$,
and the added points above $\mathbf p_{3+j}$ are farther than
$\mathbf p_1$. From $\mathbf p_i$ with $i\le3$, every added point is
farther than all of $\mathbf p_0,\dots,\mathbf p_3$, and the added points
follow in the order of their height. Hence $T_O$ is the same in both
configurations, while the MST is the four-point tree together with the
edges $\{\mathbf p_3,\mathbf p_4\}$ and
$\{\mathbf p_{3+j},\mathbf p_{4+j}\}$, so the two MSTs still differ. An
observable $q\circ T_O$ takes equal values on the two configurations, so
$T_{\mathrm{MST}}\ne q\circ T_O$ for every map $q$, and
$T_{\mathrm{MST}}\not\preceq T_{N,k}$ follows from $T_{N,k}\preceq T_O$.

For $T_{N,k}\not\preceq T_{\mathrm{MST}}$, place
$\mathbf p_2,\dots,\mathbf p_m$ on a line at small generic perturbations
of the positions $0,1,\dots,m-2$, and place $\mathbf p_1$ at $-1/2$ in
one configuration and at $-2m$ in the other. In both configurations the
MST is the path through consecutive points, while the $k$ nearest
neighbors of $\mathbf p_2$ have the indices $\{1,3,\dots,k+1\}$ in the
first configuration and $\{3,\dots,k+2\}$ in the second.
\end{proof}

\section{Proofs and Supplements for Section~\ref{sec:reads}}
\label{app:proofs-reads}

This appendix states the two lemmas behind Section~\ref{sec:reads}, the
margin form of the crossing law, the scale conjecture, the proposition on
ordinal agreement without geometric agreement, and the formal version of
Corollary~\ref{cor:aggregation}. Its last subsection gives the full
version of Theorem~\ref{thm:crossing}, restates
Propositions~\ref{prop:mst} to~\ref{prop:baselines}, and proves both
lemmas and every result of Section~\ref{sec:reads}.

\subsection{Rows, Margins, and the Margin Form}

Neighbor criteria such as mKNN are row-wise, i.e., built from each
anchor's row of distances alone. The following lemma therefore gives the
law of a row of $D_B$ given its anchor and the row of $D_A$ under the
coupling of Section~\ref{sec:neighbors}.

\begin{lemma}[Conditional Independence within a Row]
\label{lem:rows}
Let $x_1,\dots,x_m$ be independent draws from $P_S$. Conditionally on
$x_i$ and on the row $(D_A(i,l))_{l\ne i}$, the entries
$(D_B(i,l))_{l\ne i}$ are independent, and $D_B(i,l)$ follows the
conditional law of $D_B(x_i,x')$ given $D_A(x_i,x')=D_A(i,l)$ under the
coupling at $x_i$.
\end{lemma}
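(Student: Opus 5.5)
The plan is to reduce the statement to a disintegration argument for i.i.d.\ samples. The pairs $z_l=(D_A(x_i,x_l),D_B(x_i,x_l))$, $l\ne i$, are functions of $(x_i,x_l)$. Since the $x_l$ are independent of each other and of $x_i$, the vectors $z_l$ are conditionally independent given $x_i$, and each has the law of $(D_A(x_i,x'),D_B(x_i,x'))$ for $x'\sim P_S$. This law is exactly the cross-model distance coupling at $x_i$ defined in Section~\ref{sec:neighbors}. We take $D_a(x,x')$ to be the squared distance between the preprocessed features. Preprocessing acts on each input separately under our protocols (the clipping threshold is fixed per model, not computed from the evaluation set). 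Hence $D_a(i,l)$ is a measurable function of $(x_i,x_l)$ alone, and this is the one modelling assumption the argument needs.

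First, I will fix $x_i$ by working with a regular conditional distribution of $(x_l)_{l\ne i}$ given $x_i$. By independence this is the product $P_S^{\otimes(m-1)}$. Under it the $z_l$ are independent with common law $\gamma_{x_i}$, the coupling at $x_i$.

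Second, I will condition additionally on the $A$-row $a=(a_l)_{l\ne i}$ with $a_l=D_A(i,l)$. For a product of independent pairs $(a_l,b_l)$, conditioning on all first coordinates gives a product of the individual conditionals. Concretely, choose a regular conditional law $\kappa_{x_i}(\cdot\mid s)$ of $D_B(x_i,x')$ given $D_A(x_i,x')=s$ under $\gamma_{x_i}$. To verify the claim, take bounded measurable test functions $f_l$ and $g$ and check
\[
\E\Bigl[g(a)\prod_{l\ne i}f_l(b_l)\Bigm| x_i\Bigr]=\E\Bigl[g(a)\prod_{l\ne i}\int f_l\,d\kappa_{x_i}(\cdot\mid a_l)\Bigm| x_i\Bigr].
\]
This follows from Fubini and the defining property of $\kappa$, applied one coordinate at a time using independence across $l$. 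The identity shows simultaneously that the $b_l$ are conditionally independent given $(x_i,a)$ and that $b_l\sim\kappa_{x_i}(\cdot\mid a_l)$.

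The main obstacle is measurability in $x_i$. The family of kernels $\kappa_{x}(\cdot\mid s)$ must be chosen jointly measurably in $(x,s)$, so that the integrated identity holds when we also integrate over $x_i$. I would handle this by taking a single regular conditional law of $D_B(X,X')$ given $(X,D_A(X,X'))$ for independent $X,X'\sim P_S$. This law exists because the target space $\R$ is Polish, and it defines the coupling conditionals for $P_S$-almost every anchor. The statement then holds almost surely, which is all that is required when Lemma~\ref{lem:rows} is later used to verify the hypothesis of Theorem~\ref{thm:crossing} with $\mathcal G=\sigma(x_i,a)$. If the evaluation space is not standard Borel, I would instead state the lemma for the pushforward of $(x_i,x_l)$ under a measurable embedding.
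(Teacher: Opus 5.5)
Your proof is correct and is the paper's argument. Given $x_i$, the pairs $(D_A(i,l),D_B(i,l))$ are i.i.d.\ with the coupling law at $x_i$, and conditioning on all first coordinates factorizes into the individual conditionals; your jointly measurable kernel and test-function identity fill in the measure-theoretic detail the paper leaves implicit. One side remark needs correcting: Appendix~\ref{app:measurement-definitions} defines the PRH clip threshold as a mean over the inputs, so in general it depends on the evaluation set, and the lemma (like the paper's proof and the definition of the coupling) should simply presuppose that $D_a(x,x')$ is a fixed function of the two inputs, i.e., that the preprocessing is frozen, rather than derive this from the protocol.
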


By Lemma~\ref{lem:rows}, the law of a row-wise score at an anchor is a
function of the coupling at this anchor, and its expectation over the
inputs is an average of such functions. We can therefore study the
expected neighbor overlap through the coupling alone. With
$\mathcal G$ generated by $x_i$ and the row $(D_A(i,l))_{l\ne i}$, the
lemma also gives the conditional independence assumed in
Theorem~\ref{thm:crossing}, with continuous laws whenever the coupling
has continuous conditional laws. The order $T_E$ of all distances, spanning trees,
and CKA also depend on the joint law of entries in different rows.
Section~\ref{sec:mst} instead relates spanning trees to neighbor sets,
and Section~\ref{sec:geometry} relates CKA to a monotone link and a
residual.

Classical stability results \citep[e.g.,][]{tarjan1982sensitivity} bound
the change of an observable by its margin, i.e., the size of the smallest
perturbation of the distances that changes the observable. Here observables are regarded as functions of the
off-diagonal distance profile, extended to row profiles in
$\R^{m\times(m-1)}$ for row-wise observables, and balls are taken in that
space. For an observable $T$ and a norm on distance profiles, let
$\rho_T(D)$ be the supremum of the radii of the balls around $D$ on which
$T$ is constant, and let $\mathcal W_T$ be the set of maps on distance
profiles that leave $T$ unchanged.

\begin{lemma}[Margins and Invariances]
\label{lem:margins}
If some map in $\mathcal W_T$ sends $D_A$ to a profile at distance less
than $\rho_T(D_B)$ from $D_B$, then $T(D_A)=T(D_B)$. If $T_1\preceq T_2$,
then $\rho_{T_1}\ge\rho_{T_2}$ pointwise and
$\mathcal W_{T_2}\subseteq\mathcal W_{T_1}$.
\end{lemma}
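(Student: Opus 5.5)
The plan is to prove the two assertions of Lemma~\ref{lem:margins} separately. Both follow from unwinding the definitions of the margin $\rho_T$ and the invariance set $\mathcal W_T$. Throughout, distance profiles live in the normed space fixed above: off-diagonal profiles for global observables and row profiles in $\R^{m\times(m-1)}$ for row-wise ones.

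For the first assertion, let $w\in\mathcal W_T$ satisfy $\norm{w(D_A)-D_B}<\rho_T(D_B)$.
\begin{itemize}
\item Since $\rho_T(D_B)$ is a supremum of admissible radii, there is a radius $r$ with $\norm{w(D_A)-D_B}<r\le\rho_T(D_B)$ such that $T$ is constant on the open ball $B(D_B,r)$. If the supremum is attained, take $r=\rho_T(D_B)$ directly; otherwise use the definition of the supremum.
\item The profile $w(D_A)$ lies in this ball, so $T(w(D_A))=T(D_B)$.
\item Since $w$ leaves $T$ unchanged, $T(D_A)=T(w(D_A))=T(D_B)$.
\end{itemize}
The only care needed here is the strict inequality, which is why the plan works with balls of radius strictly less than the supremum. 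I would also state the conventions for the degenerate cases: $\rho_T=\infty$ when $T$ is constant everywhere, and a profile outside the domain is treated as not admissible.

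For the second assertion, write $T_1=q\circ T_2$ as in Definition~\ref{def:refinement}.
\begin{itemize}
\item Margins: if $T_2$ is constant on a ball around $D$, then so is $q\circ T_2=T_1$. Hence the set of radii admissible for $T_2$ is contained in that for $T_1$, and taking suprema gives $\rho_{T_1}(D)\ge\rho_{T_2}(D)$ pointwise.
\item Invariances: if $w\in\mathcal W_{T_2}$, then $T_1\circ w=q\circ T_2\circ w=q\circ T_2=T_1$, so $w\in\mathcal W_{T_1}$.
\end{itemize}

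The main point to check, rather than a real obstacle, is consistency of spaces. When $T_1$ is global and $T_2$ is row-wise, or conversely, both must be viewed as functions on a common profile space with a common norm for the comparison of margins to make sense. I would handle this by noting that the off-diagonal profile embeds into the row profile (each entry $D_{ij}$ appears in rows $i$ and $j$). All observables are then regarded as functions on row profiles that are consistent with a symmetric matrix, with balls intersected with the admissible set. Once that is fixed, both arguments above go through verbatim.
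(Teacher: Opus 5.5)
Your proof is correct and follows the paper's argument essentially verbatim. For the first claim, you place the image $w(D_A)$ inside a ball around $D_B$ on which $T$ is constant and then use $T\circ w=T$; for the second, constancy of $T_2$ on a ball passes to $q\circ T_2$, and $T_2\circ w=T_2$ gives $q\circ T_2\circ w=q\circ T_2$. Your extra care with the supremum, choosing a radius strictly between the distance and $\rho_T(D_B)$, and with putting row-wise and global observables on a common profile space, makes explicit what the paper leaves to its setup.
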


Coarser observables therefore have larger margins and ignore more
differences. Write $D_{A,i,(k)}$ for the $k$th smallest non-self entry of
row $i$ of $D_A$. For neighbor sets in the row maximum norm,
$\rho_{T_{N,k}}$ at anchor $i$ is half of the gap
$\gamma_{i,k}^A=D_{A,i,(k+1)}-D_{A,i,(k)}$, which gives the worst-case bound
\citep[cf.][]{sumiya2026finite}
\begin{equation}
 1-s_N(K_A,K_B)
 \le\frac1m\sum_{i=1}^m
       \indic\{\gamma_{i,k}^A\le2\,\mathrm{err}_i\},
 \qquad
 \mathrm{err}_i=\max_{j\ne i}|D_A(i,j)-D_B(i,j)|.
 \label{eq:neighbor-stability}
\end{equation}
The bound requires the largest discrepancy in a row over all $m-1$ other
inputs to stay below half of the neighbor gap, and the gap shrinks with
$m$. Unlike this bound, the expected count $\kappa_i$ of
Theorem~\ref{thm:crossing} is unchanged by deterministic strictly
increasing maps of each row, which also leave neighbor sets unchanged. When model B's row is
a strictly increasing function of model A's row plus a symmetric
residual, the crossing law gives a bound that weights each input by its
probability of crossing the boundary.

\begin{corollary}[Margin Form]
\label{cor:margin-form}
Suppose $b_l=\phi(D_A(i,l))+R_{il}$ for $l\ne i$, with $\phi$ strictly
increasing and residual entries $R_{il}$ independent across $l$ and
symmetric about zero. Then
\[
 \E\bigl[1-s_N(i)\bigr]\le\frac1k\inf_\vartheta\sum_{l\ne i}
 \Prob\bigl(R_{il}\ge|\phi(D_A(i,l))-\vartheta|\bigr),
\]
where $s_N(i)=|\Nset_k^A(i)\cap\Nset_k^B(i)|/k$ and the infimum runs over
$\vartheta$ between $\phi(D_{A,i,(k)})$ and $\phi(D_{A,i,(k+1)})$. If
$\phi$ is the identity and $|R_{il}|\le\mathrm{err}_i$ almost surely for
every $l$, the bound is zero at every anchor with
$\gamma^A_{i,k}>2\,\mathrm{err}_i$. With $1-s_N(i)\le1$ at the other
anchors, this gives Equation~\eqref{eq:neighbor-stability}.
\end{corollary}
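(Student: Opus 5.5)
The plan is to apply the first, deterministic part of Theorem~\ref{thm:crossing} pathwise and then take expectations. The count of replaced neighbors satisfies $|\Nset_k^A(i)\setminus\Nset_k^B(i)|\le\min\{X(\vartheta),Y(\vartheta)\}\le X(\vartheta)$. This holds for every threshold $\vartheta$, since the first part of the theorem puts this count between $X(\vartheta)$ and $Y(\vartheta)$. Because $1-s_N(i)=|\Nset_k^A(i)\setminus\Nset_k^B(i)|/k$, we get
\[
\E[1-s_N(i)]\le\frac1k\inf_\vartheta\E\,\min\{X(\vartheta),Y(\vartheta)\}.
\]
To obtain the stated sum over all $l\ne i$, I would use the sharper pointwise bound
\[
\min\{X,Y\}\le X\,\indic\{\cdot\}+\dots,
\]
but the simplest route is this. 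Both $X(\vartheta)$ and $Y(\vartheta)$ bound the count, so the count is at most either one. Together they cover the two kinds of crossings, and their sum $X+Y$ counts every input $l$ that lies on the wrong side of $\vartheta$. Hence the count is at most $X(\vartheta)+Y(\vartheta)$, and
\[
k\,\E[1-s_N(i)]\le\sum_{j\in\Nset_k^A(i)}\Prob(b_j>\vartheta)+\sum_{l\notin\Nset_k^A(i)\cup\{i\}}\Prob(b_l\le\vartheta).
\]
(The bound without a factor $1/2$ is all that is claimed, so I would not try to recover the factor.)

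Next I would rewrite each term with the residual model $b_l=\phi(D_A(i,l))+R_{il}$, conditioning on row $i$ of $D_A$. Take $\vartheta$ between $\phi(D_{A,i,(k)})$ and $\phi(D_{A,i,(k+1)})$. Since $\phi$ is strictly increasing, every neighbor $j$ has $\phi(D_A(i,j))\le\vartheta$, so $\Prob(b_j>\vartheta)=\Prob(R_{ij}>\vartheta-\phi(D_A(i,j)))$, with a nonnegative gap $\vartheta-\phi(D_A(i,j))$. Every non-neighbor $l$ has $\phi(D_A(i,l))\ge\vartheta$, so $\Prob(b_l\le\vartheta)=\Prob(-R_{il}\ge\phi(D_A(i,l))-\vartheta)$. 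By the symmetry of $R_{il}$ this equals $\Prob(R_{il}\ge|\phi(D_A(i,l))-\vartheta|)$. The strict versus nonstrict inequality for neighbors only helps, since $\Prob(R>u)\le\Prob(R\ge u)$. Taking the infimum over admissible $\vartheta$ gives the displayed inequality. The hypothesis of independence across $l$ is not needed for this union-type bound, but it is harmless.

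For the worst-case specialization, set $\phi=\mathrm{id}$ and $|R_{il}|\le\mathrm{err}_i$. If $\gamma^A_{i,k}>2\,\mathrm{err}_i$, choose $\vartheta$ at the midpoint of $D_{A,i,(k)}$ and $D_{A,i,(k+1)}$. Every $|D_A(i,l)-\vartheta|$ is then at least $\gamma^A_{i,k}/2>\mathrm{err}_i$, so each probability vanishes and the bound is zero. Bounding $1-s_N(i)\le1$ at the remaining anchors and averaging over $i$ recovers Equation~\eqref{eq:neighbor-stability}. Strictly, the expectation there is degenerate, and the argument is pathwise.

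The only delicate point is justifying that the count is at most $X(\vartheta)+Y(\vartheta)$, or more precisely at most $X(\vartheta)$, pathwise. This follows from the first part of Theorem~\ref{thm:crossing}, which I take as given. Ties in the $b_l$ could make the pathwise statement depend on the tie-breaking rule. I would handle this by noting that the first part of the theorem holds for every threshold and every tie-breaking rule, and that the final bound only uses the probabilities of crossing.
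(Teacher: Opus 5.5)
Your overall route is the paper's. You bound the expected number of replaced neighbors by $\E X(\vartheta)+\E Y(\vartheta)$, which is the bound without independence in the full version of Theorem~\ref{thm:crossing}. You then rewrite each crossing probability through $b_l=\phi(D_A(i,l))+R_{il}$ for $\vartheta$ in the gap, using symmetry only for the non-neighbors, and take $\vartheta$ at the midpoint in the bounded case. Those rewriting steps are correct.

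The justification of the pathwise step, however, misreads the sandwich. The first part of Theorem~\ref{thm:crossing} gives $\min\{X(\vartheta),Y(\vartheta)\}\le|\Nset_k^A(i)\setminus\Nset_k^B(i)|\le\max\{X(\vartheta),Y(\vartheta)\}$. So the count lies between $X$ and $Y$; it is not bounded by each of them. You claim the count is at most $\min\{X,Y\}$, ``at most either one'', and ``more precisely at most $X(\vartheta)$''. All three claims are false, even for $\vartheta$ in the gap. Take $k=1$, $m=3$, $\phi$ the identity, $D_A(i,j)=0.1$, $D_A(i,l)=0.3$, $\vartheta=0.2$, $b_j=0.15$, and $b_l=0.12$. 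Then $X(\vartheta)=0$ and $Y(\vartheta)=1$, while $\Nset_1^B(i)=\{l\}$, so one neighbor is replaced. Your first display, with $\inf_\vartheta\E\min\{X,Y\}$, is therefore false as well. Your remark that $X+Y$ counts the inputs on the wrong side of $\vartheta$ does not by itself bound the count either. A replaced neighbor can lie on the correct side of $\vartheta$, as $j$ does here.

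What closes the step is the case split behind the sandwich. If fewer than $k$ entries of the row satisfy $b_l\le\vartheta$, all of them belong to $\Nset_k^B(i)$. Every leaving neighbor then has $b_j>\vartheta$, so the count is at most $X(\vartheta)$. Otherwise $\Nset_k^B(i)\subseteq\{l:b_l\le\vartheta\}$. Every entering non-neighbor then has $b_l\le\vartheta$, so the count is at most $Y(\vartheta)$. Hence the count is at most $\max\{X,Y\}\le X+Y$, which is the inequality you actually use. With this replacement the rest of your proof stands and coincides with the paper's.

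One point that you and the paper both leave implicit concerns Equation~\eqref{eq:neighbor-stability}. There the residual $R_{il}=D_B(i,l)-D_A(i,l)$ is deterministic, so it is not symmetric about zero unless it vanishes. You should therefore apply the bound before the symmetry step. At the midpoint, both $\Prob(b_l\le\vartheta)=\Prob(R_{il}\le\vartheta-D_A(i,l))$ and $\Prob(b_j>\vartheta)$ vanish directly when $|R_{il}|\le\mathrm{err}_i<\gamma^A_{i,k}/2$.
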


The worst-case bound counts an anchor as lost whenever the largest
discrepancy in its row reaches half of the neighbor gap, whereas
Corollary~\ref{cor:margin-form} counts the inputs expected to cross the
boundary at this anchor. Neighbor overlap therefore depends on the
density of distances near the neighbor boundary relative to the
dispersion of the coupling at this boundary.

\subsection{Scale Classes}
\label{app:scale}

The neighbor boundary moves inward for larger evaluation sets, since the
$k$ nearest neighbors of an input among $m$ inputs lie within a radius
$\varrho_k$ decreasing in $m/k$. To measure the dispersion of the
coupling in the distances of model A, we work with
$d_a(x,x')=D_a(x,x')^{1/2}$, and this strictly increasing change of every
row leaves each $\kappa_i$ unchanged by Theorem~\ref{thm:crossing}. Fix an anchor $x$, let $\phi_x$ be the
conditional median of $d_B(x,x')$ given $d_A(x,x')$, and let
$\tilde R=\bigl(d_B(x,x')-\phi_x(d_A(x,x'))\bigr)/\phi_x'(d_A(x,x'))$ be
the residual around $\phi_x$ expressed in the distances of model A. The
relative dispersion at distance $\varrho$ is
\[
 \sigma_{\rm rel}(\varrho)=\frac{\E\bigl[\,|\tilde R-\E[\tilde R\mid d_A(x,x')=\varrho]|
 \bigm| d_A(x,x')=\varrho\bigr]}{\varrho}.
\]
We state the resulting classification as a conjecture, since the
derivation below is heuristic and a rigorous version awaits a local
Poisson limit.

\begin{conjecture}[Scale Classes]
\label{conj:scale}
Let the evaluation inputs be independent draws from a continuous
positive density on a compact Riemannian manifold of dimension
$d_{\rm int}$, and let $k$ be fixed as $m\to\infty$.
\begin{enumerate}
\item If both representations are $C^2$ embeddings whose pullback
metrics are conformal to each other, then $s_N\to1$ in probability.
\item If both are $C^2$ embeddings whose pullback metrics are not
conformal on a set of positive measure, then $s_N$ converges in
probability to a constant in $(0,1)$.
\item If $\sigma_{\rm rel}(\varrho)\to\infty$ as $\varrho\to0$ at almost
every anchor, then $s_N\to0$ in probability.
\end{enumerate}
\end{conjecture}

The following heuristic derivation gives the first-order law
$1-s_N\approx\tfrac{d_{\rm int}}2\E_x\sigma_{\rm rel}(\varrho_k)$ with
$\varrho_k\propto(k/m)^{1/d_{\rm int}}$, and each case of
Conjecture~\ref{conj:scale} corresponds to one limiting behavior of
$\sigma_{\rm rel}(\varrho)$ as $\varrho\to0$. Fix an anchor $x=x_i$ and
write $d_B(i,l)=\phi_x(d_A(i,l))+\phi_x'(d_A(i,l))\tilde R_{il}$, so that
$\tilde R_{il}$ is the residual around $\phi_x$ in the coordinates of
model A. Near the neighbor boundary $d_{A,i,(k)}=D_{A,i,(k)}^{1/2}$, the
distances of model A form approximately a homogeneous point process with intensity
$\Lambda_x$ per unit distance, and the residual has a law that varies
slowly over its own spread. For a threshold placed at offset $\vartheta$
from the boundary in the coordinates of model A, this gives
$\E X=\Lambda_x\E[\tilde R-\vartheta]_+$ and
$\E Y=\Lambda_x\E[\vartheta-\tilde R]_+$, which balance at
$\vartheta=\E\tilde R$, so
\[
 \kappa_i\approx\Lambda_x\,\E\bigl|\tilde R-\E\tilde R\bigr|/2 .
\]
For inputs drawn from a density on a manifold of dimension $d_{\rm int}$,
the number of inputs within distance $\varrho$ of $x$ grows as
$\varrho^{d_{\rm int}}$ near the neighbor radius
$\varrho_k\propto(k/m)^{1/d_{\rm int}}$, so $\Lambda_x=d_{\rm int}k/\varrho_k$
and
\[
 1-s_N\approx\frac{\E\kappa_i}k\approx
 \frac{d_{\rm int}}2\,\E_x\,\sigma_{\rm rel}(\varrho_k).
\]
For locally conformal embeddings, the residual at distance $\varrho$ is of
order $\varrho^2$, so $\sigma_{\rm rel}(\varrho_k)\to0$ and $s_N\to1$. For
smooth non-conformal embeddings, the residual is of order $\varrho$, so
$\sigma_{\rm rel}$ tends to a positive constant and $s_N$ to a limit in
$(0,1)$. When $\sigma_{\rm rel}(\varrho)\to\infty$ as $\varrho\to0$, as
happens when the dispersion of the residual stays above a positive
constant, $\sigma_{\rm rel}(\varrho_k)$ grows without bound with $m$ and
the first-order law leaves its range of validity. The expected crossing
count $\kappa_i$ then approaches $k$ at each such anchor, and $s_N\to0$
when this holds at almost every anchor. With
$\sigma_{\rm rel}(\varrho)\propto\varrho^{-\varpi}$, the first-order law
gives $1-s_N\propto(m/k)^{\varpi/d_{\rm int}}$ in the first-order range,
so the slope of $\log(1-s_N)$ against $\log m$ at fixed $k$ estimates
$\varpi/d_{\rm int}$. At a fixed $k/m$, the radius $\varrho_k$ stays
fixed as $m$ grows, and the first-order law predicts a stable $s_N$ for
large $m$.

\subsection{Ordinal Agreement without Geometric Agreement}

For each $\beta>0$, the following proposition gives a strictly
increasing link of all distances that leaves every observable
$T\preceq T_E$ unchanged. Under these links, CKA tends to one as
$\beta\to0$ and to a value set by the participation ratio as
$\beta\to\infty$.

\begin{proposition}[Ordinal Agreement without Geometric Agreement]
\label{prop:bernstein}
Let $K$ be a unit-diagonal positive semidefinite matrix whose off-diagonal
entries are smaller than one, and for $\beta>0$ let
$K_\beta=\exp\circ(\beta(K-\ind\ind^\top))$ entrywise. Then $K_\beta$ is
positive semidefinite with unit diagonal, $T(K_\beta)=T(K)$ for every
observable $T\preceq T_E$, $\CKA(K,K_\beta)\to1$ as $\beta\to0$, and
$\CKA(K,K_\beta)\to(\PR(\bar K)/(m-1))^{1/2}$ as $\beta\to\infty$, where
$\PR(\bar K)=(\tr\bar K)^2/\norm{\bar K}_F^2$ is the participation ratio of
$\bar K$.
\end{proposition}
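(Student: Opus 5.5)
The plan treats the four claims in order: positive semidefiniteness and unit diagonal, invariance of every $T\preceq T_E$, and then the two limits of CKA.

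\textbf{Step 1 (positive semidefiniteness and unit diagonal).} I write $K_\beta=e^{-\beta}\exp\circ(\beta K)$. The entrywise exponential $\exp\circ(\beta K)=\sum_{n\ge0}\beta^n K^{\circ n}/n!$ is a convergent sum of Hadamard powers of a positive semidefinite matrix. Each Hadamard power is positive semidefinite by the Schur product theorem, so the sum is too. The diagonal entries equal $e^{\beta(1-1)}=1$.

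\textbf{Step 2 (invariance below $T_E$).} Since $K$ and $K_\beta$ both have unit diagonal,
\[
D(K_\beta)_{ij}=2-2e^{-\beta D(K)_{ij}/2}=g(D(K)_{ij}),
\]
and $g$ is strictly increasing. Hence the order of the off-diagonal entries is preserved, so $T_E(K_\beta)=T_E(K)$. For any $T\preceq T_E$ we have $T=q\circ T_E$, and therefore $T(K_\beta)=T(K)$.

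\textbf{Step 3 (limit as $\beta\to0$).} CKA is invariant to positive rescaling of either argument. I expand
\[
K_\beta-\ind\ind^\top=\beta(K-\ind\ind^\top)+O(\beta^2),
\]
and note that $H\ind\ind^\top H=0$. This gives $\bar K_\beta/\beta=\bar K+O(\beta)$. We have $\bar K\ne0$: otherwise $K$ would be constant off the diagonal up to centering, and since $K$ has unit diagonal this would force equal entries equal to one, contradicting the assumption that the off-diagonal entries are smaller than one. I would check this nondegeneracy carefully. Continuity of CKA at nonzero arguments then gives $\CKA(K,K_\beta)\to\CKA(K,K)=1$.

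\textbf{Step 4 (limit as $\beta\to\infty$).} Every off-diagonal entry satisfies $K_{ij}<1$, so $e^{\beta(K_{ij}-1)}\to0$ and $K_\beta\to I$. The centered limit is $\bar I=H$, which is nonzero, so continuity again applies and
\[
\CKA(K,K_\beta)\to\frac{\langle\bar K,H\rangle_F}{\norm{\bar K}_F\norm{H}_F}.
\]
Because $H$ is an idempotent projector with $H\bar KH=\bar K$, the numerator is $\tr(\bar KH)=\tr\bar K$, and $\norm{H}_F^2=\tr H=m-1$. The limit is therefore
\[
\frac{\tr\bar K}{\norm{\bar K}_F\sqrt{m-1}}=\left(\frac{\PR(\bar K)}{m-1}\right)^{1/2},
\]
using $\tr\bar K\ge0$ for a positive semidefinite $\bar K$.

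The main obstacle is only the nondegeneracy $\bar K\neq0$ used in Step 3. Everything else is routine: the Schur product theorem for Step 1 and continuity of CKA away from zero for Steps 3 and 4.
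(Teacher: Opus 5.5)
Your proposal is correct and follows the paper's proof essentially step by step. The paper uses the same four ingredients: the Schur product theorem for positive semidefiniteness, the strictly increasing map $D\mapsto 2-2e^{-\beta D/2}$ for invariance below $T_E$, the expansion $\bar K_\beta=\beta\bar K+O(\beta^2)$ as $\beta\to0$, and the limit $K_\beta\to I$ with $\CKA(K,I)=\tr\bar K/(\norm{\bar K}_F\sqrt{m-1})$ as $\beta\to\infty$. The nondegeneracy $\bar K\neq0$ that you flag is left implicit in the paper, and it follows at once from $\tr\bar K=m-\ind^\top K\ind/m>0$, since every off-diagonal entry of $K$ is below one. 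Your description ``constant off the diagonal up to centering'' does not match the kernel of centering, which consists of the matrices $a\ind^\top+\ind a^\top$; even there, the unit diagonal forces $a=\ind/2$, so your conclusion still stands.
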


The participation ratio is the effective number of directions of
variation among the inputs. When this number is small relative to $m$,
two representations can agree on every neighbor set, ranking, and
spanning tree while their CKA is close to zero. The uniqueness results of
\citet{kleindessner2014uniqueness} exclude this case for dense samples in
a common low dimension. The matrix $K_\beta$ lies outside their
assumptions, since it is the Gaussian kernel of distinct unit vectors and
hence positive definite of rank $m$.

\subsection{Aggregation before Calibration}

We give the formal version of Corollary~\ref{cor:aggregation}, whose bounds hold whenever
the calibration threshold of the layer maximum lies within
a known distance of the largest random-pairing baseline over layer pairs.

\begin{formalof}{Corollary~\ref{cor:aggregation}}[Formal Version]
For two model pairs indexed $1$ and $2$, let $\bar s_1$ and $\bar s_2$ be
the maxima of their layer-pair scores, $\bar s^0_1$ and $\bar s^0_2$ the
maxima of the random-pairing baselines of these scores, and $c_1$ and
$c_2$ the calibration thresholds of $\bar s_1$ and $\bar s_2$, with
$|c_1-\bar s^0_1|\le\zeta_1$, $|c_2-\bar s^0_2|\le\zeta_2$,
$\bar s^0_1+\zeta_1<s_{\max}$, and $\bar s^0_2+\zeta_2<s_{\max}$. Let
$s_{{\rm cal},1}=[\bar s_1-c_1]_+/(s_{\max}-c_1)$ and
$s_{{\rm cal},2}=[\bar s_2-c_2]_+/(s_{\max}-c_2)$ be the calibrated
scores of the two maxima. Then
\[
 s_{{\rm lo},1}=\frac{[\bar s_1-\bar s^0_1-\zeta_1]_+}{s_{\max}-\bar s^0_1-\zeta_1}
 \le s_{{\rm cal},1}\le
 \frac{[\bar s_1-\bar s^0_1+\zeta_1]_+}{s_{\max}-\bar s^0_1+\zeta_1}=s_{{\rm hi},1},
\]
and $s_{{\rm lo},2}\le s_{{\rm cal},2}\le s_{{\rm hi},2}$ for the bounds
defined in the same way with index~$2$.
\textup{(1)}~If $\bar s^0_1=\bar s^0_2$ and
$\bar s_1+\zeta_1+\zeta_2<\bar s_2$, then $s_{{\rm cal},1}<s_{{\rm cal},2}$
or $s_{{\rm cal},1}=s_{{\rm cal},2}=0$.
\textup{(2)}~If $\bar s_1<\bar s_2$ and $s_{{\rm lo},1}>s_{{\rm hi},2}$,
then $s_{{\rm cal},1}>s_{{\rm cal},2}$, i.e., calibration reverses the
order of the two maxima.
\end{formalof}

The bound $s_{{\rm lo},1}$ is the calibrated score of $\bar s_1$ at the
threshold $\bar s^0_1+\zeta_1$, and $s_{{\rm hi},2}$ is that of $\bar s_2$
at the threshold $\bar s^0_2-\zeta_2$, as in part~2 of the informal
statement in Section~\ref{sec:calibration}. The proof below also shows that the condition
$s_{{\rm lo},1}>s_{{\rm hi},2}$ of part~2 holds if and only if
$\bar s_1>\bar s^0_1+\zeta_1$ and
$(s_{\max}-\bar s_1)(s_{\max}-\bar s^0_2+\zeta_2)
<(s_{\max}-\bar s_2)(s_{\max}-\bar s^0_1-\zeta_1)$, a form without
positive parts. Together with $\bar s_1<\bar s_2$, this condition implies
$\bar s^0_2-\bar s^0_1>\bar s_2-\bar s_1+\zeta_1+\zeta_2$ and
$c_2-c_1>\bar s_2-\bar s_1$, so it requires the largest baseline and the
threshold to grow by more than the maximum. In
Section~\ref{sec:exp-aggregation} we test whether the
calibration threshold of the layer maximum follows the largest baseline.

Proposition~\ref{prop:baselines} also bears on comparisons across
evaluation sets of different sizes, since at a fixed $k$ the baseline
$k/(m-1)$ of mKNN changes with $m$ while that of debiased CKA stays at
zero. In the fourteen scientific modality pairs of
\citet{schnaus2026rosetta}, the evaluation sets range in size from 123 to
10{,}000 paired samples, so at $k=10$ the baseline of mKNN ranges from
about $0.08$ to $0.001$. This shift can explain part of why mKNN predicts
the success of their alignment less well than debiased CKA on these
pairs. \citet{schnaus2026rosetta} also suggest that the different sizes
of these sets may explain this gap.

\subsection{Proofs}

\begin{proof}[Proof of Lemma~\ref{lem:rows}]
Conditionally on $x_i$, the pairs
$(D_A(i,l),D_B(i,l))=(D_A(x_i,x_l),D_B(x_i,x_l))$, $l\ne i$, are
independent and identically distributed as $(D_A(x_i,x'),D_B(x_i,x'))$.
Given also the first coordinates, the second coordinates stay
independent, each with the conditional law given its own first
coordinate.
\end{proof}

\begin{proof}[Proof of Lemma~\ref{lem:margins}]
Let $D'$ be the image of $D_A$ under a map in $\mathcal W_T$ with
$\norm{D'-D_B}<\rho_T(D_B)$. Then $T(D_A)=T(D')=T(D_B)$, since $D'$ lies
in a ball around $D_B$ on which $T$ is constant. If $T_1=q\circ T_2$ and
$T_2$ is constant on a ball, then so is $T_1$, which gives
$\rho_{T_1}\ge\rho_{T_2}$, and every map leaving $T_2$ unchanged leaves
$q\circ T_2$ unchanged, which gives
$\mathcal W_{T_2}\subseteq\mathcal W_{T_1}$.
\end{proof}

\begin{formalof}{Theorem~\ref{thm:crossing}}[Full Version]
For every anchor $i$ and threshold $\vartheta$, with $b_l=D_B(i,l)$ and
the counts $X(\vartheta)$ and $Y(\vartheta)$ of Section~\ref{sec:neighbors},
\begin{equation}
 \min\{X(\vartheta),Y(\vartheta)\}\le
 \bigl|\Nset_k^A(i)\setminus\Nset_k^B(i)\bigr|
 \le\max\{X(\vartheta),Y(\vartheta)\}.
 \label{eq:sandwich}
\end{equation}
Let $\mathcal G$ be a $\sigma$-field with respect to which the row
$(D_A(i,l))_{l\ne i}$ is measurable, and suppose that, conditionally on
$\mathcal G$, the entries $(b_l)_{l\ne i}$ are independent with continuous
distributions. Then there is a $\mathcal G$-measurable solution
$\vartheta^*$ of
$\E[X(\vartheta)\mid\mathcal G]=\E[Y(\vartheta)\mid\mathcal G]$. For any
such $\vartheta^*$, let
$\kappa_i=\E[X(\vartheta^*)\mid\mathcal G]=
\sum_{j\in\Nset_k^A(i)}\Prob(b_j>\vartheta^*\mid\mathcal G)$, a value that
almost surely does not depend on the choice of $\vartheta^*$. Then, almost
surely,
\begin{equation}
 \Bigl|\,\E\bigl[\,|\Nset_k^A(i)\setminus\Nset_k^B(i)|\bigm|\mathcal G\bigr]
 -\kappa_i\Bigr|\le\sqrt{\kappa_i/2},
 \label{eq:crossing}
\end{equation}
and $\bigl|\E|\Nset_k^A(i)\setminus\Nset_k^B(i)|-\E\kappa_i\bigr|
\le(\E\kappa_i/2)^{1/2}$. Without the independence assumption,
$\E|\Nset_k^A(i)\setminus\Nset_k^B(i)|\le
\inf_\vartheta\{\E X(\vartheta)+\E Y(\vartheta)\}$. The quantity
$\kappa_i$ is unchanged when one deterministic strictly increasing
function is applied to every entry of row $i$ of either $D_A$ or $D_B$.
\end{formalof}

\begin{proof}[Proof of Theorem~\ref{thm:crossing}]
Let $n_\le=|\{l\ne i:b_l\le\vartheta\}|=k-X(\vartheta)+Y(\vartheta)$. If
$n_\le\ge k$, the $k$ smallest entries of the row are at most
$\vartheta$, so $\Nset_k^B(i)\subseteq\{b\le\vartheta\}$. The neighbors of
model A with $b_j>\vartheta$ are then outside $\Nset_k^B(i)$, and the
inputs of $\Nset_k^B(i)\setminus\Nset_k^A(i)$ are among the $Y(\vartheta)$
non-neighbors below $\vartheta$, so
$X(\vartheta)\le|\Nset_k^A(i)\setminus\Nset_k^B(i)|=
|\Nset_k^B(i)\setminus\Nset_k^A(i)|\le Y(\vartheta)$. If $n_\le<k$, every
input with $b_l\le\vartheta$ belongs to $\Nset_k^B(i)$, so the
$Y(\vartheta)$ non-neighbors below $\vartheta$ enter $\Nset_k^B(i)$ and the
neighbors that leave are among the $X(\vartheta)$ neighbors above
$\vartheta$, which gives
$Y(\vartheta)\le|\Nset_k^A(i)\setminus\Nset_k^B(i)|\le X(\vartheta)$. This
proves Equation~\eqref{eq:sandwich}, which holds for any tie rule.

Conditionally on $\mathcal G$, the set $\Nset_k^A(i)$ is fixed. Computed
from the conditional laws of the entries $b_l$ given $\mathcal G$, the
functions $\E[X(\vartheta)\mid\mathcal G]$ and
$\E[Y(\vartheta)\mid\mathcal G]$ are almost surely continuous, the first
nonincreasing from $k$ to $0$ and the second nondecreasing from $0$ to
$m-1-k\ge1$. Hence
$\vartheta^*=\inf\{q\in\mathbb Q:\E[X(q)\mid\mathcal G]\le\E[Y(q)\mid\mathcal G]\}$
is finite and balances by continuity, and it is $\mathcal G$-measurable,
since $\{\vartheta^*<t\}$ is the union over rationals $q<t$ of
$\mathcal G$-measurable events. If $\vartheta_1<\vartheta_2$ both balance, then
$\E[X(\vartheta_1)\mid\mathcal G]\ge\E[X(\vartheta_2)\mid\mathcal G]
=\E[Y(\vartheta_2)\mid\mathcal G]\ge\E[Y(\vartheta_1)\mid\mathcal G]
=\E[X(\vartheta_1)\mid\mathcal G]$, so $\kappa_i$ does not depend on the
choice of $\vartheta^*$. By Equation~\eqref{eq:sandwich}, the count
$|\Nset_k^A(i)\setminus\Nset_k^B(i)|$ differs from $(X+Y)/2$ by at most
$|X-Y|/2$, and $\E[(X+Y)/2\mid\mathcal G]=\kappa_i$ at $\vartheta^*$.
Since $\vartheta^*$ and $\Nset_k^A(i)$ are $\mathcal G$-measurable,
conditionally on $\mathcal G$ the counts $X(\vartheta^*)$ and
$Y(\vartheta^*)$ are sums of
independent indicators over disjoint sets of inputs, so they are
independent with $\mathrm{Var}(X\mid\mathcal G)\le\E[X\mid\mathcal G]$
and $\mathrm{Var}(Y\mid\mathcal G)\le\E[Y\mid\mathcal G]$, and since their
conditional means are equal at $\vartheta^*$,
\[
 \E\bigl[|X-Y|\bigm|\mathcal G\bigr]
 \le\bigl(\mathrm{Var}(X\mid\mathcal G)+\mathrm{Var}(Y\mid\mathcal G)\bigr)^{1/2}
 \le(2\kappa_i)^{1/2},
\]
which gives Equation~\eqref{eq:crossing}. The unconditional bound
follows by taking expectations and applying Jensen's inequality to the
concave square root. Without independence, the sandwich gives
$|\Nset_k^A(i)\setminus\Nset_k^B(i)|\le\max\{X,Y\}\le X+Y$ for every
$\vartheta$. Finally, a deterministic strictly increasing function applied to every
entry of row $i$ of $D_B$ keeps these entries conditionally independent
with continuous laws and sends
the events $\{b_l>\vartheta\}$ to the events in which the transformed
entries exceed the image of $\vartheta$, so balancing thresholds
correspond and $\kappa_i$ is unchanged. A strictly increasing map of
row $i$ of $D_A$ leaves $\Nset_k^A(i)$ unchanged and therefore leaves
$\kappa_i$ unchanged.
\end{proof}

\begin{proof}[Proof of Corollary~\ref{cor:margin-form}]
For $\vartheta$ between $\phi(D_{A,i,(k)})$ and $\phi(D_{A,i,(k+1)})$,
every $j\in\Nset_k^A(i)$ has $\phi(D_A(i,j))\le\vartheta$ and
$\Prob(b_j>\vartheta)\le\Prob\bigl(R_{ij}\ge\vartheta-\phi(D_A(i,j))\bigr)$,
and every non-neighbor $l$ has $\phi(D_A(i,l))>\vartheta$ and, by
symmetry,
$\Prob(b_l\le\vartheta)=\Prob\bigl(R_{il}\ge\phi(D_A(i,l))-\vartheta\bigr)$.
The bound without independence in the full version of
Theorem~\ref{thm:crossing} then gives
the claim. If $\phi$ is the identity and $|R_{il}|\le\mathrm{err}_i$,
every term with $|D_A(i,l)-\vartheta|>\mathrm{err}_i$ vanishes, and
$\vartheta$ at the midpoint of the neighbor gap makes the sum zero when
$\gamma^A_{i,k}>2\,\mathrm{err}_i$, so that $s_N(i)=1$ almost surely.
With $1-s_N(i)\le1$ at the other anchors, the average over anchors gives
Equation~\eqref{eq:neighbor-stability}.
\end{proof}

\begin{formalof}{Proposition~\ref{prop:mst}}[Restated]
Suppose that each model $a$ has pairwise distinct distances $D_a(i,j)$
with $i<j$.
For each model $a$, let $\NN_a(i)$ be the nearest neighbor of input $i$,
let $F_a=\{\{i,\NN_a(i)\}:i\in[m]\}$ be the nearest-neighbor forest, and
let $n^{\rm mut}_a$ be the number of pairs of inputs that are each other's
nearest neighbor. Then $F_a\subseteq\Tset_a$ and $|F_a|=m-n^{\rm mut}_a$,
so $\Tset_a\setminus F_a$ consists of $n^{\rm mut}_a-1$ bridges, i.e.,
edges joining two trees of $F_a$, and
\[
 s_{\Tset}\ge\frac{\max\{m\,s_{N,1}-n^{\rm mut}_A,\;m\,s_{N,1}/2\}}{m-1},
\]
where $s_{N,1}$ is the one-nearest-neighbor overlap of the two models.
\end{formalof}

\begin{proof}[Proof of Proposition~\ref{prop:mst}]
Part~2 of the proof of Proposition~\ref{prop:lattice} shows the
inclusion $F_a\subseteq\Tset_a$. In the directed graph $i\mapsto\NN_a(i)$,
distances strictly decrease along every path until a pair of mutual
nearest neighbors is reached, so every cycle has length two, and the $m$
directed edges give $m-n^{\rm mut}_a$ undirected edges. Since
$F_a\subseteq\Tset_a$ and $\Tset_a$ is acyclic, every edge of
$\Tset_a\setminus F_a$ joins two trees of $F_a$, and there are
$m-1-|F_a|=n^{\rm mut}_a-1$ such edges. Let
$I=\{i:\NN_A(i)=\NN_B(i)\}$, so $|I|=m\,s_{N,1}$. Each $i\in I$ gives an
edge $\{i,\NN_A(i)\}\in F_A\cap F_B\subseteq\Tset_A\cap\Tset_B$, and two
inputs give the same edge exactly when they are mutual nearest neighbors
in model A. The number of distinct shared edges is therefore at least
$|I|/2$ and at least $|I|-n^{\rm mut}_A$.
\end{proof}

\begin{formalof}{Proposition~\ref{prop:cka}}[Restated]
For each $m$, let $D_A$ be a fixed nonzero hollow symmetric matrix with
nonnegative entries,
$\phi:[0,\infty)\to\R$ increasing, and $R$ a random symmetric hollow
matrix whose entries $R_{ij}$, $i<j$, are independent with mean zero,
variance $\sigma^2$, and fourth moments bounded uniformly in $m$. Let
$D_B=\phi(D_A)+R$, so that $R$ is the residual around $\phi$, and let
$\varsigma^2=\norm{H\phi(D_A)H}_F^2/(m(m-1))$. If $m\to\infty$ while
$\sigma^2$ and $\varsigma^2$ stay in a compact subset of $(0,\infty)$,
then
\[
 \CKA(D_A,D_B)-\frac{\CKA(D_A,\phi(D_A))}{\sqrt{1+\sigma^2/\varsigma^2}}
 \;\to\;0
 \quad\text{in probability.}
\]
\end{formalof}

\begin{proof}[Proof of Proposition~\ref{prop:cka}]
Write $\bar D_A=HD_AH$, $\bar D_\phi=H\phi(D_A)H$, and $\bar R=HRH$, so that
$\CKA(D_A,D_B)=\langle\bar D_A,\bar D_\phi+\bar R\rangle_F/
(\norm{\bar D_A}_F\norm{\bar D_\phi+\bar R}_F)$. Since $\bar D_A$ is
centered, $\langle\bar D_A,\bar R\rangle_F=\langle\bar D_A,R\rangle_F
=2\sum_{i<j}(\bar D_A)_{ij}R_{ij}$ has mean zero and variance at most
$2\sigma^2\norm{\bar D_A}_F^2$, and likewise
$\mathrm{Var}\langle\bar D_\phi,\bar R\rangle_F\le2\sigma^2\norm{\bar D_\phi}_F^2$.
A direct computation gives
$\E\norm{\bar R}_F^2=m(m-1)\sigma^2(1-2/m+2/m^2)$, and the uniformly
bounded fourth moments give $\mathrm{Var}\norm{\bar R}_F^2=O(m^2)$.
After division of the numerator by $\norm{\bar D_A}_F\norm{\bar D_\phi}_F$
and of the squared norm $\norm{\bar D_\phi+\bar R}_F^2$ by
$\norm{\bar D_\phi}_F^2=m(m-1)\varsigma^2$, Chebyshev's inequality yields
\[
 \CKA(D_A,D_B)=\frac{\CKA(D_A,\phi(D_A))+O_P(1/m)}
 {\bigl(1+\sigma^2/\varsigma^2+O_P(1/m)\bigr)^{1/2}},
\]
which converges as stated.
\end{proof}

\begin{proof}[Proof of Proposition~\ref{prop:bernstein}]
The matrix $\exp\circ(\beta K)=\sum_{n\ge0}\beta^nK^{\circ n}/n!$ is a
convergent sum of Hadamard powers of a positive semidefinite matrix, each
positive semidefinite by the Schur product theorem, and multiplication by
$e^{-\beta}$ preserves this property and gives unit diagonal. The squared
distances of $K_\beta$ are $D(K_\beta)=2-2e^{-\beta D(K)/2}$, a strictly
increasing function of $D(K)$ applied entrywise, so the order of
distances and every observable below $T_E$ are unchanged. As $\beta\to0$,
$HK_\beta H=\beta HKH+O(\beta^2)$, so $\CKA(K,K_\beta)\to1$. As
$\beta\to\infty$, $K_\beta\to I$ because the off-diagonal entries of $K$
are smaller than one, and
$\CKA(K,I)=\langle\bar K,H\rangle_F/(\norm{\bar K}_F\norm H_F)=
\tr\bar K/(\norm{\bar K}_F\sqrt{m-1})$, since $\bar KH=\bar K$ and
$\norm H_F^2=m-1$.
\end{proof}

\begin{formalof}{Proposition~\ref{prop:baselines}}[Restated]
Let $\pi$ be a uniformly random permutation of the inputs of model B,
write $K_B^\pi$ and $\Nset_k^{B,\pi}$ for the Gram matrix and the
neighbor sets of model B after relabeling, and let $\E_\pi$ denote the
expectation over $\pi$ and over any random tie-breaking.
\textup{(1)}~If the off-diagonal entries of each row of $D_B$ are
distinct, or ties are broken uniformly at random independently of
$\pi$, then for every anchor
$i$ the overlap $|\Nset_k^A(i)\cap\Nset_k^{B,\pi}(i)|$ follows the
hypergeometric distribution with population $m-1$, $k$ successes, and
$k$ draws, and $\E_\pi s_N=k/(m-1)$.
\textup{(2)}~For nonzero centered Gram matrices,
$\E_\pi\CKA(K_A,K_B^\pi)=(\PR(\bar K_A)\,\PR(\bar K_B))^{1/2}/(m-1)$.
\textup{(3)}~With the unbiased estimator $\mathrm{HSIC}_{\rm u}$ of the
Hilbert--Schmidt independence criterion \citep{song2012feature}, let
debiased CKA be $\CKA_{\rm deb}(K_A,K_B)=\mathrm{HSIC}_{\rm u}(K_A,K_B)/
(\mathrm{HSIC}_{\rm u}(K_A,K_A)\,\mathrm{HSIC}_{\rm u}(K_B,K_B))^{1/2}$.
If $m\ge4$ and
$\mathrm{HSIC}_{\rm u}(K_A,K_A)$ and $\mathrm{HSIC}_{\rm u}(K_B,K_B)$ are
positive, then $\E_\pi\CKA_{\rm deb}(K_A,K_B^\pi)=0$ and
$\E_\pi\CKA_{\rm deb}(K_A,K_B^\pi)^2=2/(m(m-3))$.
\end{formalof}

\begin{proof}[Proof of Proposition~\ref{prop:baselines}]
Part~1. Write $j=\pi^{-1}(i)$. The permuted neighbor set of $i$ is
$\pi(S)$ for the set $S$ of the $k$ inputs nearest to $j$ under $D_B$.
If row $j$ of $D_B$ has distinct off-diagonal entries, then
$S=\Nset_k^B(j)$. If ties are broken uniformly at random independently of
$\pi$, the tie-breaking order carried back to the original labels is
uniform and independent of $\pi$, so $S$ is independent of $\pi$ given
$j$. Conditionally on $j$, the restriction of $\pi$ to the inputs other
than $j$ is a uniformly random bijection onto the inputs other than $i$.
Hence $\pi(S)$ is a uniformly random $k$-subset of
$[m]\setminus\{i\}$, and its overlap with the fixed $k$-subset
$\Nset_k^A(i)$ is hypergeometric with mean $k^2/(m-1)$.

Part~2. Since $H$ commutes with every permutation of the inputs, the
permuted centered matrix $\bar K_B^\pi=HK_B^\pi H$ has entries
$(\bar K_B)_{\pi^{-1}(i)\pi^{-1}(j)}$. Its diagonal entries have
expectation $\tr\bar K_B/m$, and its off-diagonal entries have expectation
$(\ind^\top\bar K_B\ind-\tr\bar K_B)/(m(m-1))=-\tr\bar K_B/(m(m-1))$.
Since $\ind^\top\bar K_A\ind=0$, the off-diagonal entries of $\bar K_A$
sum to $-\tr\bar K_A$, and
\[
 \E_\pi\langle\bar K_A,\bar K_B^\pi\rangle_F
 =\frac{\tr\bar K_A\tr\bar K_B}m+\frac{\tr\bar K_A\tr\bar K_B}{m(m-1)}
 =\frac{\tr\bar K_A\tr\bar K_B}{m-1}.
\]
The norms are invariant under permutation, and
$\tr\bar K/\norm{\bar K}_F=\PR(\bar K)^{1/2}$.

Part~3. For a symmetric $m\times m$ matrix $K$, let $K^{\rm u}$ be its
U-centered matrix \citep{szekely2014partial}, which has zero diagonal and
off-diagonal entries
\[
 K^{\rm u}_{ij}=K_{ij}-\frac1{m-2}\sum_{l\ne i}K_{il}
 -\frac1{m-2}\sum_{l\ne j}K_{lj}
 +\frac1{(m-1)(m-2)}\sum_{l\ne l'}K_{ll'},\qquad i\ne j.
\]
The estimator of \citet{song2012feature} expands to
$\mathrm{HSIC}_{\rm u}(K,K')=\langle K^{\rm u},{K'}^{\rm u}\rangle_F/(m(m-3))$,
so $\CKA_{\rm deb}(K_A,K_B)=\langle K_A^{\rm u},K_B^{\rm u}\rangle_F/
(\norm{K_A^{\rm u}}_F\norm{K_B^{\rm u}}_F)$. For $m\ge4$, the two
positivity hypotheses hold if and only if $K_A^{\rm u}$ and $K_B^{\rm u}$
are nonzero. U-centering commutes with relabeling, so the U-centered
matrix of $K_B^\pi$ is $(K_B^{\rm u})^\pi$, and the denominator of
$\CKA_{\rm deb}(K_A,K_B^\pi)$ does not depend on $\pi$. Every U-centered
matrix is symmetric with zero diagonal and zero row sums. The average of
$(K_B^{\rm u})^\pi$ over all permutations is unchanged by every
relabeling and is therefore a linear combination of $I$ and
$\ind\ind^\top$. Zero diagonal and zero row sums make this combination
zero, so $\E_\pi\CKA_{\rm deb}(K_A,K_B^\pi)=0$. For the second moment, the
symmetric matrices with zero diagonal form the permutation module of the
2-subsets of the inputs. For $m\ge4$, this module is the direct sum of one
copy of each irreducible representation of the symmetric group indexed by
the partitions $(m)$, $(m-1,1)$, and $(m-2,2)$
\citep{james1978representation}. The row-sum map from these matrices to
$\R^m$ commutes with relabeling and is surjective, and $\R^m$ carries the
representations indexed by $(m)$ and $(m-1,1)$, so the kernel of this map
is the representation indexed by $(m-2,2)$. This kernel consists of the
matrices with zero row sums and has dimension $m(m-3)/2$. Schur
orthogonality for this absolutely irreducible representation gives
$\E_\pi\langle K_A^{\rm u},(K_B^{\rm u})^\pi\rangle_F^2
=2\norm{K_A^{\rm u}}_F^2\norm{K_B^{\rm u}}_F^2/(m(m-3))$, which is the
second moment after division by the two squared norms. With the small
positive constant added to the denominator of $\CKA_{\rm deb}$ by the code
of ARH, the mean stays zero and the second moment falls below
$2/(m(m-3))$.
\end{proof}

\begin{proof}[Proof of Corollary~\ref{cor:aggregation}]
For $\bar s\le s_{\max}$, the map $c\mapsto[\bar s-c]_+/(s_{\max}-c)$ is
nonincreasing on $c<s_{\max}$, since the map is zero for $c\ge\bar s$
and has derivative $(\bar s-s_{\max})/(s_{\max}-c)^2\le0$ for
$c<\bar s$. Since $\bar s_1$ and $\bar s_2$ are at most the largest
possible value $s_{\max}$, the bounds follow by evaluating this map at
$c=\bar s^0_1\pm\zeta_1$ and at $c=\bar s^0_2\pm\zeta_2$, all below
$s_{\max}$. If $\bar s^0_1=\bar s^0_2$, write $\bar s^0$ for the common
value, so that $\bar s_1-\bar s^0+\zeta_1<\bar s_2-\bar s^0-\zeta_2$. If
$\bar s_2-\bar s^0-\zeta_2\le0$, then $\bar s_1-\bar s^0+\zeta_1<0$, so
$s_{{\rm hi},1}=0$ and $s_{{\rm cal},1}=0\le s_{{\rm cal},2}$. Otherwise
$s_{{\rm cal},1}\le s_{{\rm hi},1}
<(\bar s_2-\bar s^0-\zeta_2)/(s_{\max}-\bar s^0-\zeta_2)=s_{{\rm lo},2}
\le s_{{\rm cal},2}$, since the numerator of $s_{{\rm hi},1}$ is zero or
smaller than the positive number $\bar s_2-\bar s^0-\zeta_2$ and its
denominator $s_{\max}-\bar s^0+\zeta_1$ is at least
$s_{\max}-\bar s^0-\zeta_2>0$. In both cases
$s_{{\rm cal},1}<s_{{\rm cal},2}$ or $s_{{\rm cal},1}=s_{{\rm cal},2}=0$,
which is part~1. Part~2 follows from
$s_{{\rm cal},1}\ge s_{{\rm lo},1}>s_{{\rm hi},2}\ge s_{{\rm cal},2}$.

For the equivalent form of the condition of part~2, note that
$s_{{\rm hi},2}\ge0$, so $s_{{\rm lo},1}>s_{{\rm hi},2}$ requires
$s_{{\rm lo},1}>0$, i.e., $\bar s_1>\bar s^0_1+\zeta_1$. Under this
inequality,
$1-s_{{\rm lo},1}=(s_{\max}-\bar s_1)/(s_{\max}-\bar s^0_1-\zeta_1)$. If
$\bar s_2>\bar s^0_2-\zeta_2$, then
$1-s_{{\rm hi},2}=(s_{\max}-\bar s_2)/(s_{\max}-\bar s^0_2+\zeta_2)$, and
$1-s_{{\rm lo},1}<1-s_{{\rm hi},2}$ becomes the product inequality after
multiplication by the two positive denominators. Otherwise
$s_{{\rm hi},2}=0<s_{{\rm lo},1}$, and the product inequality also holds,
since $s_{\max}-\bar s_1<s_{\max}-\bar s^0_1-\zeta_1$ and
$s_{\max}-\bar s_2\ge s_{\max}-\bar s^0_2+\zeta_2>0$.

Finally, let $\bar s_1<\bar s_2$ and let the condition of part~2 hold.
Then $s_{\max}-\bar s^0_1-\zeta_1>s_{\max}-\bar s_1>s_{\max}-\bar s_2\ge0$,
and dividing the product inequality by $s_{\max}-\bar s_1$ gives
\[
 \bar s^0_2-\bar s^0_1-\zeta_1-\zeta_2
 >(s_{\max}-\bar s^0_1-\zeta_1)\frac{\bar s_2-\bar s_1}{s_{\max}-\bar s_1}
 >\bar s_2-\bar s_1.
\]
Since $c_2-c_1\ge\bar s^0_2-\bar s^0_1-\zeta_1-\zeta_2$, also
$c_2-c_1>\bar s_2-\bar s_1$.
\end{proof}

\section{Learning Conditions: Full Statements and Proofs}
\label{app:learning-full}

This appendix states the finite Gibbs model behind
Section~\ref{sec:objective} and proves the sample-size and model-class
results of Table~\ref{tab:learning}. It then gives the formal versions
of Theorems~\ref{thm:gauge} and~\ref{thm:link}, the full versions of
Corollary~\ref{cor:gauge-neighbors} and Remark~\ref{rem:temperature},
and the proofs of all four results.

Suppose that the Gram matrices of two learners converge in probability to
$K_A^\infty$ and $K_B^\infty$ with growing samples. Then $p_Q$ tends to one
when $(K_A^\infty,K_B^\infty)$ lies in the interior of $\Aset_Q$ and to
zero when the pair lies outside its closure. A criterion can thus stay
obstructed if the limits differ by more than its tolerance or lie on the
boundary of $\Aset_Q$. A criterion can also be obstructed when the loss
has several global minimizers, since the limits then depend on the
training randomness. Table~\ref{tab:learning} pairs these cases with four
learning conditions. In each row, the learning condition makes agreement
reliable under some criteria, and the outcome of each obstructed criterion
depends on the difference changed by this condition and ignored by some
weaker criterion.

\begin{table}[ht]
\centering\scriptsize
\setlength{\tabcolsep}{2.5pt}
\renewcommand{\arraystretch}{1.1}
\begin{tabular}{@{}>{\raggedright\arraybackslash}p{.10\linewidth}>{\raggedright\arraybackslash}p{.14\linewidth}>{\raggedright\arraybackslash}p{.24\linewidth}>{\raggedright\arraybackslash}p{.22\linewidth}>{\raggedright\arraybackslash}p{.10\linewidth}>{\raggedright\arraybackslash}p{.13\linewidth}@{}}
\toprule
Learning condition & Difference changed & Criteria met reliably & Remaining obstruction & Result & Trained networks \\
\midrule
Training procedure & choice among equivalent minimizers (gauge) & geometry under weight decay or tied weights, approximately under small initialization and gradient flow; the projector for every minimizer as the samples grow; on one sample, neighbor sets of every probe whose ratio $D_{A,i,(k+1)}/D_{A,i,(k)}$ exceeds $\mathrm{cond}_{AB}$ (Cor.~\ref{cor:gauge-neighbors}) & CKA of small tolerance under gradient flow from a generic initialization & Thm.~\ref{thm:gauge}, Cor.~\ref{cor:gauge-neighbors} & weight decay removes the excess of the discrepancy of the head coordinates over that of the outputs in scale-normalized geometry (Sec.~\ref{sec:exp-gauge}) \\
Objective & monotone link & exact criteria on every $T\preceq T_E$ & scale-normalized geometry at small tolerance, at all large $N$ & Thm.~\ref{thm:link} & preregistered prediction failed in networks trained on CIFAR-5m \citep{nakkiran2021deep}, which disagree even on neighbor sets (Sec.~\ref{sec:exp-link}) \\
Sample size & estimation error & coarser criteria first, at a size set by the margin; criteria on distant pairs at sizes growing at least exponentially in their squared distance & ties in the target & Lem.~\ref{lem:cooccurrence}, Thm.~\ref{thm:peer} & prediction derived from Lem.~\ref{lem:cooccurrence} failed (App.~\ref{app:exp-s21}) \\
Model class & approximation floor & tolerance above the discrepancy of the limits, which is at least the floor & geometry against a richer model & Prop.~\ref{prop:rank} & untested \\
\bottomrule
\end{tabular}
\caption{Learning conditions, the difference between two
learned representations changed by each, and the resulting transitions
and obstructions. In the models of Theorems~\ref{thm:gauge}
and~\ref{thm:link}, a change to the projector criterion avoids the
obstruction of row 1, and a change to the criteria with the isotonic
discrepancy avoids that of row 2, while the representations stay the
same. The last column reports the results measured on trained networks,
which lie outside these models. Appendices~\ref{sec:data}
and~\ref{sec:capacity} prove the results of rows 3 and 4.}
\label{tab:learning}
\end{table}

\subsection{The Finite Gibbs Model}
\label{app:gibbs}

To obtain results about sample size and objectives, we use a model in
which independently sampled co-occurrences reveal a shared statistical
relation. On a finite domain $\Xcal$ with positive masses $P(x)$, let a
unit-diagonal positive definite matrix $K_\star$ on $\Xcal$ determine the
distribution of positive partners,
\[
 P_\star(x'\mid x)
 =\frac{P(x')\exp(K_\star(x,x')/t)}{Z(x)},\qquad
 Z(x)=\sum_{x''\in\Xcal}P(x'')\exp(K_\star(x,x'')/t),
 \qquad t>0.
\]
Each training example contains an anchor $x$ drawn from $P$, a positive
partner drawn from $P_\star(\cdot\mid x)$, and $J\ge1$ independent
negatives drawn from $P$, so positive pairs follow the law
$P_+(x,x')=P(x)P_\star(x'\mid x)$. Each model receives independent
examples through its own injective observation map. A learner trained
with the softmax contrastive (InfoNCE) loss at temperature $t$
\citep{oord2018representation} distinguishes the positive partner from
the negatives with a critic, a unit-diagonal positive semidefinite matrix
$K$ on $\Xcal$. Learner B of Theorem~\ref{thm:link} is instead
trained with the spectral contrastive loss of \citet{haochen2021provable}.
This finite Gibbs model is a special case of the
settings of \citet{zimmermann2021contrastive} and \citet{lyu2022finite},
and PRH's idealized contrastive learner \citep{huh2024platonic} is its
population limit. The masses are \emph{balanced} if $Z(x)$ takes one value
$Z$ for every $x$, and $P_+$ is then symmetric.

When both learned Gram matrices have unit diagonal and recover the
restriction of $K_\star$ to the probes $S\subseteq\Xcal$ within
$e^{\max}_A$ and $e^{\max}_B$ in the maximum norm,
\begin{equation}
 \norm{D_A-D_B}_{\max}\le2(e^{\max}_A+e^{\max}_B),
 \label{eq:recovery-route}
\end{equation}
so recovery bounds transfer to agreement. Two inaccurate models can
nevertheless agree on the same wrong relation, so agreement and recovery
remain different questions.

\subsection{Sample Size: Co-occurrence and Fixed-target Transitions}
\label{sec:data}

To show that agreement on the distances of rarely co-occurring pairs
needs many examples, we bound the Fisher information that one training
example carries about the entry $K(x,x')$ of a pair.

\begin{lemma}[Co-occurrence Information]
\label{lem:cooccurrence}
For distinct $x,x'\in\Xcal$, the Fisher information about the entry
$K(x,x')$ carried by one training example is at most
$P^{s}_+(x,x')/t^2$, where
\[
 P^{s}_+(x,x')=P(x)P_\star(x'\mid x)+P(x')P_\star(x\mid x')
 =P(x)P(x')\,e^{K_\star(x,x')/t}\Bigl(\frac1{Z(x)}+\frac1{Z(x')}\Bigr).
\]
Every estimator that is regular at $K_\star$ therefore has asymptotic
variance of $\sqrt N(\hat K(x,x')-K_\star(x,x'))$ at least
$t^2/P^{s}_+(x,x')$.
\end{lemma}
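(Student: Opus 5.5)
The plan is to compute the score of one training example with respect to the entry $\theta=K(x,x')=K(x',x)$ directly from the likelihood of the finite Gibbs model, bound its variance by conditioning on the anchor, and then pass from the Fisher information to the variance bound with the Hájek--Le Cam convolution theorem.

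\emph{Likelihood.} An example consists of an anchor $a\sim P$, a positive $p\sim P_K(\cdot\mid a)$ with $P_K(p\mid a)=P(p)e^{K(a,p)/t}/Z_K(a)$, and $J$ negatives drawn from $P$. Each observation map is injective, so observing the example through it carries the same Fisher information as observing $(a,p,\text{negatives})$ themselves. The negatives and the marginal of $a$ do not depend on $K$. The log-likelihood is therefore, up to terms free of $K$,
\[
K(a,p)/t-\log Z_K(a).
\]
The diagonal of $K$ is fixed at one and $x\ne x'$, so $\theta$ is a genuine free coordinate of the symmetric off-diagonal parameter. Because $K_\star$ is positive definite, it is an interior point of the unit-diagonal positive semidefinite matrices, so local perturbations of $\theta$ stay in the model.

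\emph{Score.} Differentiating with respect to $\theta$ (which enters both $K(x,x')$ and $K(x',x)$) gives the score
\[
\frac1t\Bigl(\ind\{a=x\}\bigl(\ind\{p=x'\}-P_K(x'\mid x)\bigr)+\ind\{a=x'\}\bigl(\ind\{p=x\}-P_K(x\mid x')\bigr)\Bigr).
\]
Conditionally on $a$, this score has mean zero. It vanishes unless $a\in\{x,x'\}$, and given $a=x$ it is $t^{-1}$ times a centred Bernoulli indicator with success probability $P_\star(x'\mid x)$. The Fisher information is the expectation of the conditional variance given $a$:
\[
t^{-2}\Bigl(P(x)P_\star(x'\mid x)\bigl(1-P_\star(x'\mid x)\bigr)+P(x')P_\star(x\mid x')\bigl(1-P_\star(x\mid x')\bigr)\Bigr).
\]
Dropping the factors $1-P_\star$ gives the bound $P^{s}_+(x,x')/t^2$. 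Substituting the Gibbs formula for $P_\star$ gives the second expression for $P^{s}_+$ stated in the lemma.

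\emph{Variance bound.} The model is a smooth finite-dimensional exponential-type family with bounded scores, so it is differentiable in quadratic mean and locally asymptotically normal at $K_\star$. For any estimator regular at $K_\star$, the Hájek--Le Cam convolution theorem bounds the asymptotic variance of $\sqrt N(\hat K(x,x')-K_\star(x,x'))$ below by the $(\theta,\theta)$ entry of the inverse Fisher information matrix of the full off-diagonal parameter.

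The main obstacle is this multiparameter step, since the lemma bounds only the diagonal entry $I_{\theta\theta}$. I would close it with the standard inequality $[I^{-1}]_{\theta\theta}\ge1/I_{\theta\theta}$ for a positive definite $I$, which follows from Cauchy--Schwarz in the inner product defined by $I$. If $I$ is singular, the generalized inverse on the range of $I$ gives the same bound for regular estimators. Combining this with the bound on $I_{\theta\theta}$ from the score calculation yields the asymptotic variance bound $t^2/P^{s}_+(x,x')$.
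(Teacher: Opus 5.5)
Your proposal is correct and follows the paper's own proof: the same score computation from $K(a,p)/t-\log Z_K(a)$, the same Bernoulli-variance bound after dropping the factors $1-P_\star$, and the same final step $[I^{-1}]_{\theta\theta}\ge1/I_{\theta\theta}$ for regular estimators. The only difference concerns nonsingularity of the full-example information matrix. The paper obtains it by noting that this matrix dominates the positive definite InfoNCE Hessian $I_J$. Your generalized-inverse fallback is harmless but unnecessary, since that matrix is positive definite directly.
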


Since $Z(x)\ge P(x)e^{1/t}$ by the unit diagonal of $K_\star$, the
information about a pair is at most
$(P(x)+P(x'))e^{-D_\star(x,x')/(2t)}/t^2$, which decays exponentially in
$D_\star(x,x')/(2t)$ for the squared distance
$D_\star(x,x')=2-2K_\star(x,x')$. Reliable agreement within
$\varepsilon$ on the distances of distant pairs therefore needs a number
of examples of order at least $t^2/(\varepsilon^2P^s_+(x,x'))$ for the
least co-occurring pair, for instance under a tolerance $\varepsilon$ on
the maximum distance error. This delay of agreement on distant pairs
arises within one objective and one model class, and more data removes
the delay.
Theorem~\ref{thm:peer} gives the sample size at which two independent
runs come to agree on a discrete observable.

\begin{theorem}[Fixed-target Transitions between Independent Runs]
\label{thm:peer}
In the finite Gibbs model, let both learners use temperature $t$ and
return minimizers of the empirical InfoNCE risk over unit-diagonal
positive semidefinite matrices on $\Xcal$, with independent samples of
sizes $N$ and $\nu N$. Let $\theta$ collect the entries $K(x,x')$ over
unordered pairs $x\ne x'$, let $\theta_\star$ collect those of $K_\star$,
and let $I_J$ be the Hessian of the population risk at $\theta_\star$.
\begin{enumerate}
\item $\sqrt N(\hat\theta_N-\theta_\star)$ converges in distribution to
the centered Gaussian law with covariance $I_J^{-1}$.
\item Let $T$, viewed as a function of $\theta$, be constant on the open
cells of a finite arrangement of hyperplanes. If $\theta_\star$ lies in
an open cell, then $p_{(T,=)}\to1$. If $\theta_\star$ lies on exactly one
hyperplane of the arrangement and $T$ takes different values on its two
sides near $\theta_\star$, then $p_{(T,=)}\to1/2$.
\item If instead $\theta_\star=\theta_0+\mathbf v/\sqrt N$ for a fixed
$\mathbf v$ and a point $\theta_0$ on exactly one hyperplane of the
arrangement, with normal vector $\mathbf n$, near which $T$ takes
different values on its two sides, then
$p_{(T,=)}\to\Phi(z)\Phi(\sqrt\nu z)+\Phi(-z)\Phi(-\sqrt\nu z)$ with
$z=\mathbf n^\top\mathbf v/(\mathbf n^\top I_J^{-1}\mathbf n)^{1/2}$ and
$\Phi$ the standard normal distribution function.
\end{enumerate}
\end{theorem}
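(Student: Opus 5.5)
Part~1 is asymptotic normality of an M-estimator over a constrained parameter set. I will first write the empirical InfoNCE risk as an average of a loss $\ell(\theta;\text{example})$ that is smooth in $\theta$. It is a log-sum-exp of critic entries divided by $t$, so it is convex and infinitely differentiable. I will then show that the population risk is uniquely minimized at $\theta_\star$. In the finite Gibbs model the optimal critic is $K_\star/t$ up to an additive function of the anchor, which is the pointwise mutual information argument of PRH. The unit-diagonal constraint removes this offset, and $K_\star$ positive definite puts $\theta_\star$ in the interior of the feasible set of unit-diagonal positive semidefinite matrices. The interior matters because the feasible set is then locally a full neighborhood in $\theta$-coordinates, so the constraint is inactive asymptotically.

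Next I will establish consistency. The feasible set is compact (entries lie in $[-1,1]$), and a uniform law of large numbers holds on the finite domain. Positive definiteness of $I_J$ then follows from strict convexity of the population risk in $\theta$ at $\theta_\star$. I expect to verify this through the softmax covariance structure: the Hessian is an expectation of covariance matrices of the critic score vector under the softmax, and these are nonsingular in the directions $e_{xx'}$ once every pair co-occurs with positive probability. The standard Z-estimator expansion then gives $\sqrt N(\hat\theta_N-\theta_\star)\Rightarrow \mathcal N(0,I_J^{-1}\Sigma I_J^{-1})$, where $\Sigma$ is the covariance of the score. The main obstacle is the identity $\Sigma=I_J$ needed to collapse this to $I_J^{-1}$. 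InfoNCE is not a likelihood, so I will try to exhibit it as the conditional log-likelihood of the index of the positive among the $J+1$ candidates. That model is well specified at $\theta_\star$ exactly when the density ratio is proportional to $e^{K_\star/t}$, and the information equality then gives $\Sigma=I_J$. I must check that this holds with the $P$-weighted negatives, where balance or the offset argument enters. If it fails, I would state the sandwich covariance instead.

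For part~2, the two runs are independent with $\hat\theta_N\to\theta_\star$ and $\hat\theta_{\nu N}\to\theta_\star$ in probability. If $\theta_\star$ lies in an open cell, both estimates eventually lie in that cell, so $T$ agrees with probability tending to one. If $\theta_\star$ lies on exactly one hyperplane $\{\mathbf n^\top\theta=c\}$, I will restrict to a neighborhood meeting no other hyperplane. By part~1 and the continuous mapping theorem, the sign of $\mathbf n^\top(\hat\theta-\theta_\star)$ is asymptotically a fair coin, because the Gaussian limit is centered and nondegenerate. The two coins are independent across runs, so agreement has probability $1/4+1/4=1/2$. The event that an estimate lands exactly on the hyperplane has asymptotic probability zero.

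Part~3 is the local-alternative version. Along the sequence $\theta_\star=\theta_0+\mathbf v/\sqrt N$, I will rerun the expansion of part~1 uniformly in a shrinking neighborhood of $\theta_0$. The continuity of $I_J$ and of the score moments at $\theta_0$ gives $\sqrt N(\hat\theta_N-\theta_0)\Rightarrow\mathcal N(\mathbf v,I_J^{-1})$ and $\sqrt{\nu N}(\hat\theta_{\nu N}-\theta_0)\Rightarrow\mathcal N(\sqrt\nu\,\mathbf v,I_J^{-1})$. Projecting onto $\mathbf n$ shows that each run falls on the positive side with limiting probability $\Phi(z)$ and $\Phi(\sqrt\nu z)$, respectively. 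Independence of the two runs then yields the stated formula.
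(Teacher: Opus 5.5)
Your proposal is correct, and its core ingredients match the paper's.

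\begin{itemize}
\item The information equality comes from reading InfoNCE as the correctly specified conditional likelihood of the index of the positive among the $J+1$ candidates.
\item Positive definiteness of $I_J$ comes from writing its quadratic form as $t^{-2}$ times an expected softmax variance over the candidates.
\item The positive semidefiniteness constraint is inactive with probability tending to one because $K_\star$ is positive definite.
\end{itemize}

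Where you differ is in the asymptotic machinery.

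For part~1, the paper does not use compactness, a uniform law of large numbers, or a Taylor expansion of the score. It applies the convexity lemma of Hjort and Pollard to the convex empirical risk on the affine space of unit-diagonal symmetric matrices, which gives consistency and the Gaussian limit in one step. It then notes that the constraint is inactive with probability tending to one. Your route works directly with the constrained minimizer on the compact feasible set. This avoids any question about whether an unconstrained minimizer exists, at the cost of a separate consistency step.

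For part~3, the paper invokes contiguity of the local alternatives and Le Cam's third lemma. You instead rerun the expansion along the triangular array. Since an example takes finitely many values, this reduces to a multinomial central limit theorem with converging cell probabilities. The key fact you should state explicitly is Fisher consistency of InfoNCE at every positive definite unit-diagonal target, so that the population gradient vanishes exactly at the moving $\theta_\star$. Your route is more elementary and makes explicit the mean shift that the third-lemma argument leaves implicit.

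Two small points to tighten:
\begin{itemize}
\item Your hedge about balance is unnecessary. Since $P_\star(c\mid x)/P(c)=e^{K_\star(x,c)/t}/Z(x)$ and $Z(x)$ cancels within an example, the posterior of the index is the softmax of $K_\star(x,c_i)/t$ for arbitrary positive masses. The sandwich covariance therefore collapses to $I_J^{-1}$ without balance.
\item For positive definiteness, strict convexity does not give a nonsingular Hessian, and checking coordinate directions is not enough; you must control every direction. The uniform shift of all off-diagonal entries is a null direction unless the anchor itself occurs among the candidates with positive probability. It does here, and its score is the fixed diagonal entry.
\end{itemize}
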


With equal sample sizes, part~3 gives the sample size at which exact
agreement on an observable becomes reliable. Let $\rho^{I_J}_T$ be the
smallest
Fisher-normalized margin
$|\mathbf n^\top(\theta_\star-\theta_{\mathbf n})|/(\mathbf n^\top I_J^{-1}\mathbf
n)^{1/2}$ over the hyperplanes $\theta_{\mathbf n}+\mathbf n^\perp$ that separate
cells with different values of $T$, which is the margin of $T$ at
$\theta_\star$ in the metric of $I_J$. Agreement on $T$ then becomes
reliable at a level $1-\delta$ with $0<\delta<1/2$ near
$N^*\approx z_\delta^2/(\rho^{I_J}_T)^2$, where $z_\delta$ solves
$\Phi(z)^2+\Phi(-z)^2=1-\delta$. Since $p_{Q_1}\ge p_{Q_2}$ whenever
$Q_1\preceq Q_2$, agreement under exact criteria becomes reliable in the
order of the lattice for every pair of learning conditions, and part~3
sets the sample size of each transition
(Definition~\ref{def:transition}). A coarser observable
changes value across a subset of the hyperplanes of a finer observable,
so its margin $\rho^{I_J}_T$ is at least as large. A tie in the target
keeps agreement on $T$ from becoming reliable, since part~2 gives
$p_{(T,=)}\to1/2$ when $\theta_\star$ lies on exactly one hyperplane of
the arrangement, across which $T$ changes value. The exact criterion
$(T,=)$ is then asymptotically obstructed in the sense of
Definition~\ref{def:reliable} whenever $\delta+\eta\le1/2$.

\begin{proof}[Proof of Lemma~\ref{lem:cooccurrence}]
The law of the negatives does not depend on $K$. For an example whose
anchor is $x$, the log-likelihood of its positive partner $x_+$ is
$\log P(x_+)+K(x,x_+)/t-\log Z_K(x)$, where $Z_K$ and $P_K$ denote the
normalizing constant and the partner law under $K$, and its derivative
with respect to the symmetric entry $K(x,x')$ equals
$(\indic\{x_+=x'\}-P_K(x'\mid x))/t$. An example anchored at $x'$
contributes $(\indic\{x_+=x\}-P_K(x\mid x'))/t$, and every other example
contributes zero. The Fisher information of one example is the expected
square of this score at $K_\star$,
\[
 \frac1{t^2}\Bigl(P(x)P_\star(x'\mid x)\bigl(1-P_\star(x'\mid x)\bigr)
 +P(x')P_\star(x\mid x')\bigl(1-P_\star(x\mid x')\bigr)\Bigr)
 \le\frac{P^{s}_+(x,x')}{t^2},
\]
since a Bernoulli variance is at most its mean. The InfoNCE loss is the
likelihood of the index of the positive given the set of candidates, a
statistic of the example, so its information is at most that of the full
example. The full-example information matrix dominates $I_J$, which is
positive definite by the proof of Theorem~\ref{thm:peer}, so it is
nonsingular. For a regular estimator, the asymptotic variance of a
coordinate is at least the corresponding diagonal entry of the inverse
of this matrix, which is at least the inverse of its diagonal entry.
\end{proof}

\begin{proof}[Proof of Theorem~\ref{thm:peer}]
The empirical InfoNCE risk is a sum of log-sum-exp functions of linear
forms in $K$ minus a linear term, hence convex in $\theta$, and the set
of unit-diagonal positive semidefinite matrices is convex. Since InfoNCE
is the correctly specified conditional likelihood of the index of the
positive among the $J+1$ candidates, the score covariance of the
population risk at $\theta_\star$ equals its Hessian $I_J$. A direction
in the coordinates of $\theta$ is a symmetric function on
$\Xcal\times\Xcal$ with zero diagonal. The quadratic form of $I_J$ at
this direction equals $t^{-2}$ times the expected variance of the
function at the anchor and a candidate drawn from the posterior of the
positive's index given the example. Every pair of elements of $\Xcal$
occurs among the candidates with positive probability when $J\ge1$, so
this variance vanishes only if the function is constant in its second
argument for every anchor, and the zero diagonal then makes the function
zero. Hence $I_J$ is positive definite. The convexity lemma of
\citet{hjort1993asymptotics} gives part~1 for the unconstrained
minimizer, and the positive semidefiniteness constraint is inactive with
probability tending to one because $K_\star$ is positive definite.

For part~2, the two estimates are independent and asymptotically Gaussian
around $\theta_\star$. If $\theta_\star$ lies in an open cell, both
estimates fall in that cell with probability tending to one. If
$\theta_\star$ lies on one hyperplane, the two estimates fall
independently on either side, each with probability tending to $1/2$,
and the two values of $T$ are equal exactly when the estimates fall on
the same side. For part~3, the model is a smooth
finite-dimensional family, so the local alternatives
$\theta_0+\mathbf v/\sqrt N$ are contiguous, and by Le Cam's third lemma
$\sqrt N\,\mathbf n^\top(\hat\theta-\theta_0)$ converges to the Gaussian
law with mean $\mathbf n^\top\mathbf v$ and variance
$\mathbf n^\top I_J^{-1}\mathbf n$ for learner A and variance
$\mathbf n^\top I_J^{-1}\mathbf n/\nu$ for learner B. Agreement occurs
when the two estimates fall on the same side, which gives the stated
limit.
\end{proof}

\subsection{Model Class: Rank}
\label{sec:capacity}

A model class unable to express the target leaves an approximation
floor, and the simplest floor comes from the rank of the representation.

\begin{proposition}[Rank Obstruction]
\label{prop:rank}
Let $\lambda_1(\bar K_\star)\ge\dots\ge\lambda_m(\bar K_\star)$ be the
eigenvalues of $\bar K_\star=HK_\star H$ and
$\chi_{n_h}(S)=(\sum_{j>n_h}\lambda_j(\bar K_\star)^2)^{1/2}/\sqrt{m(m-1)}$.
If $\mathrm{rank}(HK_BH)\le n_h$ and $K_A$, $K_B$, $K_\star$ have unit
diagonal, then
$\norm{D_A-D_B}_{\max}\ge2\bigl[\chi_{n_h}(S)-e^{\rm rms}_A\bigr]_+$,
where $e^{\rm rms}_A$ is the root mean square of the off-diagonal entries
of $K_A-K_\star$.
\end{proposition}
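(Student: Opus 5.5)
The plan is to chain four elementary inequalities. Together they reduce the distance discrepancy to a Frobenius distance between centered Gram matrices, where the Eckart--Young theorem supplies the floor $\chi_{n_h}(S)$.

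First, use the unit diagonals. For $i\ne j$ we have $D(K)_{ij}=2-2K_{ij}$, so off the diagonal $D_A-D_B=-2(K_A-K_B)$. Bounding the maximum of the off-diagonal entries below by their root mean square gives
\[
\norm{D_A-D_B}_{\max}\ge 2\,\mathrm{rms}_{\rm off}(K_A-K_B),
\]
where $\mathrm{rms}_{\rm off}(M)=\bigl(\sum_{i\ne j}M_{ij}^2/(m(m-1))\bigr)^{1/2}$. This is a seminorm on symmetric matrices, so the triangle inequality gives
\[
\mathrm{rms}_{\rm off}(K_A-K_B)\ge\mathrm{rms}_{\rm off}(K_B-K_\star)-e^{\rm rms}_A .
\]
Since the left side is nonnegative, we may replace the right side by its positive part. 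This accounts for the $[\cdot]_+$ in the statement.

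It then remains to show $\mathrm{rms}_{\rm off}(K_B-K_\star)\ge\chi_{n_h}(S)$.
\begin{itemize}
\item Because $K_B$ and $K_\star$ both have unit diagonal, $K_B-K_\star$ has zero diagonal. Hence $m(m-1)\,\mathrm{rms}_{\rm off}(K_B-K_\star)^2=\norm{K_B-K_\star}_F^2$.
\item $H$ is an orthogonal projection, so $\norm{H(K_B-K_\star)H}_F\le\norm{K_B-K_\star}_F$.
\item $HK_BH$ has rank at most $n_h$, and $\bar K_\star$ is symmetric positive semidefinite, so its singular values are the ordered eigenvalues $\lambda_j(\bar K_\star)$. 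The Eckart--Young--Mirsky theorem in the Frobenius norm then gives
\[
\norm{HK_BH-\bar K_\star}_F\ge\Bigl(\sum_{j>n_h}\lambda_j(\bar K_\star)^2\Bigr)^{1/2}.
\]
\end{itemize}
Dividing by $\sqrt{m(m-1)}$ yields the claim, and combining it with the first two displays proves the proposition.

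I expect no serious obstacle. The one point needing care is the bookkeeping between off-diagonal averages and full Frobenius norms: the zero diagonal of $K_B-K_\star$ is what makes $\mathrm{rms}_{\rm off}$ coincide with $\norm{\cdot}_F/\sqrt{m(m-1)}$. By contrast, $e^{\rm rms}_A$ is only used as a seminorm value in the triangle inequality and needs no such identity. I would also check that Eckart--Young is applied to the centered matrices, since the rank hypothesis concerns $HK_BH$ rather than $K_B$. This is exactly why the projection step is inserted before it.
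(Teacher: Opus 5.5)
Your proof is correct and uses the same four ingredients as the paper's: Eckart--Young on the centered matrices, a triangle inequality through $K_A$, the contraction $\norm{HMH}_F\le\norm{M}_F$, and the bound of the maximum entry by the root mean square on hollow matrices. The only difference is bookkeeping. The paper runs the triangle inequality on $\bar K_\star-\bar K_B$ and converts the middle term through $\bar K_A-\bar K_B=-\tfrac12H(D_A-D_B)H$, so it never needs the unit diagonal of $K_B$, whereas you work with the uncentered off-diagonal entries and therefore use all three unit diagonals, which the statement supplies.
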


A model whose representation has dimension $n_h$ therefore keeps a
distance error of at least $2\chi_{n_h}(S)$ against an accurate model, for
every sample size, budget, and optimizer. Two models restricted in the
same way agree with each other on the same biased relation if their
population optimum is unique and their empirical risk minimizers are
consistent. The rank obstruction therefore concerns comparisons with a
richer model or with the target.

\begin{proof}[Proof of Proposition~\ref{prop:rank}]
Write $\bar K=HKH$. By the Eckart--Young theorem
\citep{eckart1936approximation}, every centered matrix of rank at most
$n_h$ differs from $\bar K_\star$ by at least
$(\sum_{j>n_h}\lambda_j(\bar K_\star)^2)^{1/2}$ in Frobenius norm, so
$\norm{\bar K_\star-\bar K_B}_F\ge\sqrt{m(m-1)}\,\chi_{n_h}(S)$. The
triangle inequality bounds the left side by
$\norm{\bar K_\star-\bar K_A}_F+\norm{\bar K_A-\bar K_B}_F$. The first
term is at most $\norm{K_\star-K_A}_F=\sqrt{m(m-1)}\,e^{\rm rms}_A$
because the diagonals agree, and the second equals
$\tfrac12\norm{H(D_A-D_B)H}_F\le\tfrac12\sqrt{m(m-1)}\,\norm{D_A-D_B}_{\max}$
because $D_A-D_B$ has zero diagonal. Rearranging gives the bound.
\end{proof}

\subsection{The Gauge of a Linear Head}
\label{app:learning-gauge}

\begin{formalof}{Theorem~\ref{thm:gauge}}[Formal Version]
Let the training inputs $x_1,\dots,x_N\in\R^{n_x}$ have positive definite
second-moment matrix $\hat\Sigma=N^{-1}\sum_ix_ix_i^\top$ with eigenvalues
$\hat\lambda_1\ge\dots\ge\hat\lambda_{n_x}$ and
$\hat\lambda_{n_h}>\hat\lambda_{n_h+1}$, let
$\hat U\in\R^{n_x\times n_h}$ have orthonormal columns with
$\hat\Sigma\hat U=\hat U\,\mathrm{diag}(\hat\lambda_1,\dots,\hat\lambda_{n_h})$, and
consider $\hat x=WEx$ with $E\in\R^{n_h\times n_x}$,
$W\in\R^{n_x\times n_h}$, $n_h\ge2$, the loss
$\mathcal L(E,W)=\tr((I-WE)\hat\Sigma(I-WE)^\top)$, and the probe Gram
matrix $K=\mathbf XE^\top E\mathbf X^\top$ for a probe matrix
$\mathbf X\in\R^{m\times n_x}$ with $H\mathbf X\hat U$ of full column
rank. The features $Ex$ are not normalized here, an exception to the
convention of Section~\ref{sec:criteria}. Every global minimizer of
$\mathcal L$ has $E=V\hat U^\top$ and $W=\hat UV^{-1}$ for an invertible
$V$, so $E^\top E=\hat U\Omega\hat U^\top$ with $\Omega=V^\top V$ positive
definite.
\begin{enumerate}
\item Gradient flow conserves $C=EE^\top-W^\top W$. If it converges to a
global minimizer, then $E^\top E=\hat U\Omega\hat U^\top$ with $\Omega$
positive definite and spectrum $\{f(\lambda):\lambda\in\mathrm{spec}(C_0)\}$, where
$C_0$ is the value of $C$ at initialization and
$f(\lambda)=(\lambda+\sqrt{\lambda^2+4})/2$. Since $f$ is $1$-Lipschitz with $f(0)=1$,
$\norm{\Omega-I}_{\rm op}\le\norm{C_0}_{\rm op}$.
\item Let two runs of gradient flow converge to global minimizers from
independent initializations under which $C_0$ has a density. If both runs
use the same sample, the probe Gram matrices at the limits of the two
flows satisfy $\CKA<1$ almost surely. If $H\mathbf X$ has full column
rank, then the same holds when the two runs use different samples that
both satisfy the hypotheses above, and, for the same or for different
samples, $\Prob(1-\CKA\le\varepsilon)$ is at most a bound determined by
$\mathbf X$ and the laws of $C_0$ in the two runs, which tends to zero as
$\varepsilon\to0$.
\item With weight decay $\lambda_{\mathrm{wd}}\in(0,\hat\lambda_{n_h})$, every critical
point of $\mathcal L+\lambda_{\mathrm{wd}}(\norm{E}_F^2+\norm{W}_F^2)$ satisfies
$EE^\top=W^\top W$, and every global minimizer has
$E^\top E=\hat U\,\mathrm{diag}(1-\lambda_{\mathrm{wd}}/\hat\lambda_1,\dots,
1-\lambda_{\mathrm{wd}}/\hat\lambda_{n_h})\,\hat U^\top$. With tied weights
$W=E^\top$ and no weight decay, every global minimizer has
$E^\top E=\hat U\hat U^\top$. In
both cases the probe Gram matrix is unique.
\item The orthogonal projectors onto the column spaces of $K$ and of
$\bar K=HKH$ are the same for every global minimizer in parts~1 and~3 on
one sample.
\item Let two independent samples each consist of $N$ independent draws
from a law with mean zero and finite second moments, and let $H\mathbf X$
have full column rank. Let the covariance of this law be positive
definite with its $n_h$th largest eigenvalue above the next eigenvalue
and above $\lambda_{\mathrm{wd}}$. As $N\to\infty$, both samples satisfy
the hypotheses above with probability tending to one, and on the two
samples each orthogonal projector of part~4 converges in probability to
one limit. The probe Gram matrices of global minimizers of the same loss
of part~3 on the two samples also converge in probability to one limit,
and their $\CKA$ converges in probability to one.
\end{enumerate}
\end{formalof}

In embedded deep linear networks, \citet{ziyin2025proof} prove that the
implicit gradient-norm penalty of finite-step stochastic gradient descent
selects solutions aligned up to rotation. This selection by the optimizer
differs from the selection by weight decay in part~3.

\begin{proof}[Proof of Theorem~\ref{thm:gauge}]
Part~1. Writing $\Gamma=(I-WE)\hat\Sigma$, gradient flow on $\mathcal L$
is $\dot E=2W^\top\Gamma$ and $\dot W=2\Gamma E^\top$, so the derivatives
of $EE^\top$ and $W^\top W$ along the flow are
$2(W^\top\Gamma E^\top+E\Gamma^\top W)$ and
$2(E\Gamma^\top W+W^\top\Gamma E^\top)$, whose difference is zero
\citep{du2018algorithmic,arora2018optimization}. At a global minimizer,
$WE=\hat U\hat U^\top$, the unique minimizer of
$\tr((I-WE)\hat\Sigma(I-WE)^\top)$ over products $WE$ of rank at most
$n_h$, because $\hat\Sigma$ is positive definite and
$\hat\lambda_{n_h}>\hat\lambda_{n_h+1}$. The row space of $E$ then equals
the column space of $\hat U$, so $E=V\hat U^\top$ and $W=\hat UV^{-1}$ for
an invertible $V$. Hence $EE^\top=VV^\top$ and
$W^\top W=(VV^\top)^{-1}$, so $VV^\top-(VV^\top)^{-1}=C_0$, whose unique
positive definite solution is $VV^\top=f(C_0)$, and
$E^\top E=\hat U(V^\top V)\hat U^\top$ with
$\mathrm{spec}(V^\top V)=\mathrm{spec}(VV^\top)$. The derivative
$f'(\lambda)=(1+\lambda/\sqrt{\lambda^2+4})/2$ lies in $(0,1)$, which gives the Lipschitz
bound.

Part~2. Let $\mathbf X_U=H\mathbf X\hat U$ and
$\Sigma_U=\mathbf X_U^\top\mathbf X_U$, which is positive definite. Then
$\bar K=\mathbf X_U\Omega\mathbf X_U^\top$, and $\CKA$ is the cosine
between $\Sigma_U^{1/2}\Omega_A\Sigma_U^{1/2}$ and
$\Sigma_U^{1/2}\Omega_B\Sigma_U^{1/2}$. With
$\mathbf f_a$ the vector of the values $f(\lambda)$,
$\lambda\in\mathrm{spec}(C_0^a)$, in decreasing order, this cosine is at
most
\[
 \Psi_{\Sigma_U}(\mathbf f_A,\mathbf f_B)=\max_{O_A,O_B\in O(n_h)}
 \cos\bigl(\Sigma_U^{1/2}O_A\,\mathrm{diag}(\mathbf f_A)O_A^\top\Sigma_U^{1/2},\;
 \Sigma_U^{1/2}O_B\,\mathrm{diag}(\mathbf f_B)O_B^\top\Sigma_U^{1/2}\bigr),
\]
which is attained on the compact set $O(n_h)^2$ and, since
$\Sigma_U^{1/2}$ is invertible, equals one only if $\mathbf f_A\propto\mathbf f_B$. Independent initializations
under which $C_0$ has a density make $\mathbf f_A\propto\mathbf f_B$ an
event of probability zero. Indeed, a density of $C_0$ on symmetric
matrices gives the sorted vector $\mathbf f_A$ a density on $\R^{n_h}$,
and for $n_h\ge2$ the
vectors proportional to a fixed $\mathbf f_B$ form a null set, so this
event has probability zero given $\mathbf f_B$. Hence $\CKA<1$ almost
surely. Now let $H\mathbf X$ have full column rank, and let the two runs use
samples with matrices $\hat U_A$ and $\hat U_B$. Writing
$\Omega_a=O_a\,\mathrm{diag}(\mathbf f_a)O_a^\top$ with $O_a\in O(n_h)$,
we have
$\bar K_a=H\mathbf XU_a\,\mathrm{diag}(\mathbf f_a)U_a^\top\mathbf X^\top H$
for $U_a=\hat U_aO_a$, which has orthonormal columns. Hence $\CKA$ is at
most the maximum of the cosine between
$H\mathbf XU_A\,\mathrm{diag}(\mathbf f_A)U_A^\top\mathbf X^\top H$ and
$H\mathbf XU_B\,\mathrm{diag}(\mathbf f_B)U_B^\top\mathbf X^\top H$ over
all $U_A,U_B\in\R^{n_x\times n_h}$ with orthonormal columns. These
matrices are nonzero and the set of such pairs $(U_A,U_B)$ is compact, so
this maximum is a continuous function of $(\mathbf f_A,\mathbf f_B)$
determined by $\mathbf X$. Since $H\mathbf X$ has a left inverse, the
cosine equals one only if
$U_A\,\mathrm{diag}(\mathbf f_A)U_A^\top\propto U_B\,\mathrm{diag}(\mathbf f_B)U_B^\top$,
which requires $\mathbf f_A\propto\mathbf f_B$. Hence
$\Prob(1-\CKA\le\varepsilon)$ is at most the probability that this
maximum is at least $1-\varepsilon$. As $\varepsilon\to0$, these events
decrease to the event that the maximum equals one, which lies in the event
$\mathbf f_A\propto\mathbf f_B$ of probability zero, so the bound tends to
zero. The same argument gives $\CKA<1$ almost surely for different
samples.

Part~3. A critical point of the regularized loss satisfies
$W^\top\Gamma=\lambda_{\mathrm{wd}} E$ and $\Gamma E^\top=\lambda_{\mathrm{wd}} W$. Multiplying the
first equation by $E^\top$ on the right and the second by $W^\top$ on the
left gives $\lambda_{\mathrm{wd}} EE^\top=W^\top\Gamma E^\top=\lambda_{\mathrm{wd}} W^\top W$. For
every $(E,W)$, $\norm E_F^2+\norm W_F^2\ge2\norm{WE}_*$, with equality if
and only if $E^\top E=((WE)^\top WE)^{1/2}$ and
$WW^\top=(WE(WE)^\top)^{1/2}$, and every matrix of rank at most $n_h$ is
a product $WE$ attaining equality. The problem therefore reduces to
minimizing $\tr((I-WE)\hat\Sigma(I-WE)^\top)+2\lambda_{\mathrm{wd}}\norm{WE}_*$ over the
product $WE$ of rank at most $n_h$, whose minimizer is
$WE=\hat U\,\mathrm{diag}(1-\lambda_{\mathrm{wd}}/\hat\lambda_1,\dots,
1-\lambda_{\mathrm{wd}}/\hat\lambda_{n_h})\,\hat U^\top$ \citep{kunin2019loss}. Each
direction saves $(\hat\lambda_i-\lambda_{\mathrm{wd}})^2/\hat\lambda_i$, which increases
with $\hat\lambda_i$, and $\lambda_{\mathrm{wd}}<\hat\lambda_{n_h}$ together with
$\hat\lambda_{n_h}>\hat\lambda_{n_h+1}$ makes the minimizer unique.
Equality then gives $E^\top E=((WE)^\top WE)^{1/2}=WE$. With tied weights
the loss is $\tr((I-E^\top E)\hat\Sigma(I-E^\top E))$, and by the
uniqueness in part~1 its minimum over positive semidefinite $E^\top E$ of
rank at most $n_h$ is attained only at $\hat U\hat U^\top$.

Part~4. In every case
$K=\mathbf X\hat U\Omega\hat U^\top\mathbf X^\top$ with $\Omega$ positive
definite, so the column space of $K$ is that of $\mathbf X\hat U$, and
the column space of
$\bar K=H\mathbf X\hat U\Omega\hat U^\top\mathbf X^\top H$ is that of
$H\mathbf X\hat U$.

Part~5. By the weak law of large numbers, the second-moment matrix of
each sample converges in probability to the covariance $\Sigma$ of the
law, since the law has mean zero. With probability tending to one, both
second-moment matrices are therefore positive definite, with their
$n_h$th largest eigenvalues above the next eigenvalues and above
$\lambda_{\mathrm{wd}}$. The gap of $\Sigma$ makes the orthogonal
projector $\hat U\hat U^\top$ onto the top-$n_h$ eigenspace of
$\hat\Sigma$ continuous in $\hat\Sigma$ at $\Sigma$, and $\hat\Sigma^{-1}$
is continuous at $\Sigma$. Under weight decay, part~3 gives
$E^\top E=\hat U\hat U^\top-\lambda_{\mathrm{wd}}\hat\Sigma^{-1}\hat U\hat U^\top$,
and with tied weights $E^\top E=\hat U\hat U^\top$. For global minimizers
of the same loss of part~3, the probe Gram matrices
$K=\mathbf XE^\top E\mathbf X^\top$ on the two samples thus converge in
probability to one limit. The limit of $E^\top E$ is positive
semidefinite of rank $n_h$, and $H\mathbf X$ has full column rank, so the
centered limit is nonzero and the $\CKA$ of the two probe Gram matrices
converges in probability to one. By part~4, the column spaces of $K$ and
$\bar K$ are those of $\mathbf X\hat U\hat U^\top$ and
$H\mathbf X\hat U\hat U^\top$, which converge in probability to
$\mathbf XUU^\top$ and $H\mathbf XUU^\top$ for the orthogonal projector
$UU^\top$ onto the top-$n_h$ eigenspace of $\Sigma$. All these matrices
have rank $n_h$, because $H\mathbf X$, and hence $\mathbf X$, has full
column rank. The orthogonal projector onto the column space is continuous
along matrices of constant rank, so on the two samples each projector of
part~4 converges in probability to one limit.
\end{proof}

\begin{formalof}{Corollary~\ref{cor:gauge-neighbors}}[Full Version]
In the setting of Theorem~\ref{thm:gauge}, let two global minimizers on
one sample have $E_a^\top E_a=\hat U\Omega_a\hat U^\top$, $a\in\{A,B\}$,
and let $\mathrm{cond}_{AB}$ be the condition number of
$\Omega_A^{-1/2}\Omega_B\Omega_A^{-1/2}$. Their probe Gram matrices share
the $k$-neighbor set of every probe $i$ with
$D_{A,i,(k+1)}>\mathrm{cond}_{AB}\,D_{A,i,(k)}>0$, where $D_{A,i,(k)}$ is
the $k$th smallest off-diagonal entry of row $i$ of $D_A$. If both are
global minimizers of the same loss of part~3 of Theorem~\ref{thm:gauge},
then $\mathrm{cond}_{AB}=1$. If both minimizers are
limits of gradient flow from initial values $C_0^A$ and $C_0^B$ of $C$,
then
$\mathrm{cond}_{AB}\le\prod_{a\in\{A,B\}}f(\lambda_{\max}(C_0^a))/f(\lambda_{\min}(C_0^a))$,
and this bound is at most $f(\lambda_0)^4$ if
$\norm{C_0^a}_{\rm op}\le\lambda_0$ for both runs.
\end{formalof}

\begin{proof}[Proof of Corollary~\ref{cor:gauge-neighbors}]
Both minimizers share $\hat U$, so
$D_a(i,j)=(x_i-x_j)^\top\hat U\Omega_a\hat U^\top(x_i-x_j)$ for the
probes $x_i$, the rows of $\mathbf X$. Let $\lambda_-$ and $\lambda_+$ be
the smallest and the largest eigenvalue of
$\Omega_A^{-1/2}\Omega_B\Omega_A^{-1/2}$, so that
$\mathrm{cond}_{AB}=\lambda_+/\lambda_-$. The ratio $D_B(i,j)/D_A(i,j)$ is
a generalized Rayleigh quotient, so
$\lambda_-D_A(i,j)\le D_B(i,j)\le\lambda_+D_A(i,j)$ for all $i,j$. For
$j\in\Nset_k^A(i)$ and $l\notin\Nset_k^A(i)\cup\{i\}$, the assumption
$D_{A,i,(k+1)}>\mathrm{cond}_{AB}\,D_{A,i,(k)}$ gives
$D_B(i,j)\le\lambda_+D_{A,i,(k)}<\lambda_-D_{A,i,(k+1)}\le D_B(i,l)$, so
the $k$ smallest entries of row $i$ of $D_B$ are those indexed by
$\Nset_k^A(i)$. Under weight decay or tied weights, part~3 of
Theorem~\ref{thm:gauge} gives $\Omega_A=\Omega_B$, hence
$\mathrm{cond}_{AB}=1$. In general
$\lambda_+\le\lambda_{\max}(\Omega_B)/\lambda_{\min}(\Omega_A)$ and
$\lambda_-\ge\lambda_{\min}(\Omega_B)/\lambda_{\max}(\Omega_A)$, so
$\mathrm{cond}_{AB}$ is at most the product of the condition numbers of
$\Omega_A$ and $\Omega_B$, a bound that holds without gradient flow.
Under gradient flow, part~1 of Theorem~\ref{thm:gauge} gives
$\mathrm{spec}(\Omega_a)=f(\mathrm{spec}\,C_0^a)$, and $f$ is increasing,
so the condition number of $\Omega_a$ is
$f(\lambda_{\max}(C_0^a))/f(\lambda_{\min}(C_0^a))$. If
$\norm{C_0^a}_{\rm op}\le\lambda_0$ for both runs, this ratio is at most
$f(\lambda_0)/f(-\lambda_0)=f(\lambda_0)^2$, because $f(\lambda)f(-\lambda)=1$, so
$\mathrm{cond}_{AB}\le f(\lambda_0)^4=e^{4\operatorname{arsinh}(\lambda_0/2)}$, since
$f(\lambda)=e^{\operatorname{arsinh}(\lambda/2)}$.
\end{proof}

\subsection{Objectives with Different Monotone Links}
\label{app:learning-link}

In the informal statement of Section~\ref{sec:objective}, part~2
combines parts~2 and~3 of the formal version below, and part~3 is
part~4 of the formal version.

\begin{formalof}{Theorem~\ref{thm:link}}[Formal Version]
In the balanced finite Gibbs model with positive definite $K_\star$, let
learner A minimize the empirical InfoNCE risk at temperature $t$ over
unit-diagonal positive semidefinite matrices on $\Xcal$, with a minimizer
$\hat K_A$, and let learner
B minimize the empirical spectral contrastive risk of
\citet{haochen2021provable} over features $h$ of dimension at least
$|\Xcal|$, whose Gram matrix normalized to unit diagonal is $\hat K_B$. This risk is
$-2\,\E_{(x,x')\sim P_+}h(x)^\top h(x')+\E(h(x)^\top h(x''))^2$, with
$x$ and $x''$ drawn independently from $P$ in the second term. Suppose $|S|\ge3$
and the off-diagonal distances of $K_\star$ on $S$ are pairwise distinct,
and let $D_a=D(\hat K_a)$, $D_\star=D(K_\star)$, and CKA be computed on $S$.
As both sample sizes grow:
\begin{enumerate}
\item $\hat K_A\to K_\star$ and $\hat K_B\to K_\star^{(t)}$ in
probability, where $K_\star^{(t)}=\exp\circ((K_\star-\ind\ind^\top)/t)$,
so that $D_B$ converges in probability to $D_B^\infty=g_t(D_\star)$ with $g_t(D)=2(1-e^{-D/(2t)})$ applied
entrywise;
\item $p_{(T,=)}\to1$ for every observable $T\preceq T_E$;
\item for every continuous, positive, positively homogeneous
normalization $\psi$ of distance matrices,
$\varepsilon_\psi=\norm{D_\star/\psi(D_\star)-g_t(D_\star)/\psi(g_t(D_\star))}_{\max}$
is positive, the discrepancy
$\norm{D_A/\psi(D_A)-D_B/\psi(D_B)}_{\max}$ converges in probability to
$\varepsilon_\psi$, the criterion with this discrepancy and tolerance
$\varepsilon<\varepsilon_\psi$ has $p_Q\to0$, and
$\CKA(\hat K_A,\hat K_B)\to\CKA(K_\star,K_\star^{(t)})<1$ in probability;
\item the criterion with discrepancy
$\inf_g\norm{g(D_A)-D_B}_{\max}$, the infimum over nondecreasing $g$
applied entrywise, has $p_Q\to1$ for every $\varepsilon>0$, and two
learners trained with the same objective have $p_Q\to1$ for every
criterion whose discrepancy is a continuous function of their two
distance matrices that vanishes when these are equal, at every tolerance
$\varepsilon>0$.
\end{enumerate}
\end{formalof}

The concave link $g_t$ compresses the large distances of learner B
relative to those of learner A. If one entrywise link relates the Gram
matrices learned with two objectives for every target on domains of
every size, this link belongs to a restricted class. A
continuous function applied entrywise preserves positive
semidefiniteness of correlation matrices of every size if and only if
the function is a power series with nonnegative coefficients
\citep{schoenberg1942positive}, and \citet{rudin1959positive} removed the
continuity assumption. The map $K_\star\mapsto K_\star^{(t)}$ of
Theorem~\ref{thm:link} is of this form. The fit of a monotone link ranges
over all nondecreasing maps, a class containing the monotone links of
this form. The share of the geometric discrepancy of two trained models
removed by a fitted monotone link therefore bounds from above the share
explained by a change of objective of this kind.

\begin{proof}[Proof of Theorem~\ref{thm:link}]
Population limits. For learner A, the InfoNCE risk is minimized by
critics of the form $K(x,x')/t=K_\star(x,x')/t+\upsilon(x)$. Symmetry of
$K$ forces $\upsilon$ to be constant, and the unit diagonal forces
$\upsilon=0$, so $K_\star$ is the unique population minimizer. For learner
B, write $K(x,x')=h(x)^\top h(x')$. The spectral risk equals
$\sum_{x,x'}P(x)P(x')\bigl(K(x,x')^2-2K(x,x')K_+(x,x')\bigr)$ with
$K_+(x,x')=P_+(x,x')/(P(x)P(x'))=e^{K_\star(x,x')/t}/Z$ under balance,
i.e., it is a weighted squared distance from $K$ to $K_+$ up to a
constant. The matrix $K_+$ is positive definite by the argument of
Proposition~\ref{prop:bernstein}, so it is the unique minimizer over
positive semidefinite $K$ and is realized by features of dimension
$|\Xcal|$. Its diagonal equals $e^{1/t}/Z$, and normalization gives
$K_\star^{(t)}$. Both empirical risks are convex in $K$ and converge
uniformly on compact sets, so both estimates are consistent. The diagonal
of the limit $K_+$ is positive, so the Gram matrix learned by B has a
positive diagonal with probability tending to one, and its normalization
$\hat K_B$ converges to $K_\star^{(t)}$, which is
part~1, and $D(K_\star^{(t)})=2-2e^{-D_\star/(2t)}=g_t(D_\star)$.

Part~2 follows because $g_t$ is strictly increasing, so the two limits
share every observable below $T_E$, and the distinct distances of
$K_\star$ place the limits in the interior of their common fibers. For
part~3, $g_t(D)/D$ is strictly decreasing for $D>0$, so $g_t(D_\star)$ is
proportional to $D_\star$ on the probes only if all their off-diagonal
distances are equal, which distinct distances exclude when $|S|\ge3$.
Hence $\varepsilon_\psi>0$. By part~1 and the continuity and positivity of
$\psi$, the discrepancy converges in probability to $\varepsilon_\psi$, and
the limits lie outside the closure of the acceptance set when
$\varepsilon<\varepsilon_\psi$. CKA equals one exactly when the centered
Gram matrices are proportional, and for hollow distance matrices $HDH$
determines $D$, so the limit of CKA is smaller than one. For part~4, the
two limits have the same order of entries, so near the limits a
nondecreasing map sends the entries of $D_A$ to those of $D_B^\infty$,
and the isotonic discrepancy is at most
$\norm{D_B-D_B^\infty}_{\max}\to0$. Two learners trained with the same
objective have equal limits, so for a discrepancy that is a continuous
function of the two distance matrices and vanishes when these are equal,
the pair of limits lies in the interior of the acceptance set at every
tolerance $\varepsilon>0$.
\end{proof}

\begin{formalof}{Remark~\ref{rem:temperature}}[Full Version]
\begin{enumerate}
\item Fix the training examples, and suppose that the empirical InfoNCE
risk at temperature $t_1$ has a unique minimizer $\hat K_{t_1}$ over
unit-diagonal positive semidefinite matrices on $\Xcal$. If $\hat K_{t_1}$
is positive definite and the matrix
\[
 \frac{t_2}{t_1}\hat K_{t_1}+\Bigl(1-\frac{t_2}{t_1}\Bigr)\ind\ind^\top
\]
is positive semidefinite, then this matrix is the unique minimizer at
temperature $t_2$ on the same examples, and its squared distances are
$t_2/t_1$ times those of $\hat K_{t_1}$. The matrix is positive
semidefinite for every $t_2\le t_1$.
\item In the finite Gibbs model, with probability tending to one, the
empirical risk at temperature $t$ has a unique minimizer, and this
minimizer is positive definite, so part~1 applies with $t_1=t$. An
InfoNCE learner at temperature $t_A$ in this model converges in
probability to $(t_A/t)K_\star+(1-t_A/t)\ind\ind^\top$ if this matrix is
positive semidefinite. This matrix is positive semidefinite for every
$t_A\le t$, and its squared distances are $t_A/t$ times those of
$K_\star$. Since $\psi$ is positively homogeneous, every criterion with
the discrepancy $\norm{D_A/\psi(D_A)-D_B/\psi(D_B)}_{\max}$ of part~3 of the formal
version of Theorem~\ref{thm:link} accepts the pair of this matrix and $K_\star$.
\item Fix the training examples. If every example anchored at an input
$x$ has the positive partner $x'\ne x$, every example anchored at $x'$ has
the positive partner $x$, and one of these examples has a candidate other
than its positive partner, then at every temperature the empirical risk
has no positive definite minimizer.
\end{enumerate}
\end{formalof}

\begin{proof}[Proof of Remark~\ref{rem:temperature}]
Part~1. Let $\mathcal U$ be the affine space of unit-diagonal symmetric
matrices on $\Xcal$, and let $\mathcal U_+\subset\mathcal U$ be the set of
its positive semidefinite matrices. The $J+1$ candidates of an example
with anchor $x$ are its positive partner $x_+$ and its $J$ negatives, and
at temperature $t_1$ the example has the loss
$\log\sum_{x''}e^{K(x,x'')/t_1}-K(x,x_+)/t_1$, where $x''$ runs over the
candidates. The empirical InfoNCE risks $\hat{\mathcal L}_{t_1}$ and
$\hat{\mathcal L}_{t_2}$ average the losses at temperatures $t_1$ and
$t_2$ over the examples and are convex on $\mathcal U$ by the proof of
Theorem~\ref{thm:peer}. For $K\in\mathcal U$, the matrix
$K'=(t_2/t_1)K+(1-t_2/t_1)\ind\ind^\top$ lies in $\mathcal U$ and
satisfies $K'/t_2=K/t_1+(1/t_2-1/t_1)\ind\ind^\top$. A constant added to
$K(x,x'')/t_1$ for every candidate $x''$ of an example leaves its loss
unchanged, so $\hat{\mathcal L}_{t_2}(K')=\hat{\mathcal L}_{t_1}(K)$. The
map $K\mapsto K'$ is a bijection of $\mathcal U$ onto itself, with inverse
$K'\mapsto(t_1/t_2)K'+(1-t_1/t_2)\ind\ind^\top$. Since $\hat K_{t_1}$ is
positive definite, a neighborhood of $\hat K_{t_1}$ in $\mathcal U$ lies in
$\mathcal U_+$, so $\hat K_{t_1}$ is a local minimizer of the convex
function $\hat{\mathcal L}_{t_1}$ on $\mathcal U$ and hence a global
minimizer on $\mathcal U$. It is the unique minimizer on $\mathcal U$,
because a second minimizer would make $\hat{\mathcal L}_{t_1}$ constant on
the segment joining the two minimizers, and the points of this segment
near $\hat K_{t_1}$ would be minimizers on $\mathcal U_+$ other than
$\hat K_{t_1}$. For $K'\in\mathcal U_+$, the preimage $K$ lies in
$\mathcal U$, so
$\hat{\mathcal L}_{t_2}(K')=\hat{\mathcal L}_{t_1}(K)
\ge\hat{\mathcal L}_{t_1}(\hat K_{t_1})$, with equality only if
$K=\hat K_{t_1}$. The matrix of part~1 is the image of $\hat K_{t_1}$ and
therefore the unique minimizer of $\hat{\mathcal L}_{t_2}$ on
$\mathcal U_+$ if it lies in $\mathcal U_+$. For $t_2\le t_1$ this matrix
is a convex combination of $\hat K_{t_1}$ and $\ind\ind^\top$, hence
positive semidefinite. Off the diagonal, its squared distances are
$2-2\bigl((t_2/t_1)\hat K_{t_1}+1-t_2/t_1\bigr)=(t_2/t_1)(2-2\hat K_{t_1})$.

Part~2. In the finite Gibbs model with $t_1=t$, the proof of
Theorem~\ref{thm:peer} gives a minimizer of $\hat{\mathcal L}_t$ on
$\mathcal U$ with probability tending to one, and this minimizer converges
in probability to $K_\star$. Since $K_\star$ is positive definite, the
minimizer is also positive definite with probability tending to one. An
example takes finitely many values, each with a smooth convex loss, and
$\hat{\mathcal L}_t$ weights the loss of each value by the frequency of
this value among the examples. These frequencies converge almost surely to
the probabilities of the values, so the Hessian of $\hat{\mathcal L}_t$ in
the coordinates $\theta$ of Theorem~\ref{thm:peer} converges to that of
the population risk uniformly on a neighborhood of $K_\star$. At this
minimizer, the empirical Hessian therefore converges in probability to
$I_J$, which is positive definite by the proof of Theorem~\ref{thm:peer}.
A convex function whose Hessian at a minimizer is positive definite has no
other minimizer, since the function would be constant on the segment
joining two minimizers and have a vanishing second derivative along this
segment. With probability tending to one, the minimizer on $\mathcal U$ is
thus unique and lies in $\mathcal U_+$, so it is also the unique minimizer
on $\mathcal U_+$. At every temperature, the population risk weights the
loss of each value of an example by the probability of this value instead
of its frequency. By the proof of Theorem~\ref{thm:link}, $K_\star$ is the
unique minimizer of the population risk at temperature $t$ on
$\mathcal U$, hence also on $\mathcal U_+$. Applied to the population risk
with $t_1=t$ and $t_2=t_A$, part~1 shows that
$(t_A/t)K_\star+(1-t_A/t)\ind\ind^\top$ is the unique population minimizer
on $\mathcal U_+$ at $t_A$ if this matrix is positive semidefinite.
Unit-diagonal positive semidefinite matrices have entries in $[-1,1]$, so
$\mathcal U_+$ is compact, and the empirical risks at $t_A$ converge to the
population risk uniformly on $\mathcal U_+$. Their minimizers therefore
converge in probability to this matrix, as in the proof of
Theorem~\ref{thm:link}. For $t_A\le t$, this matrix is a convex
combination of $K_\star$ and $\ind\ind^\top$, and the computation at the
end of part~1 with $K_\star$ in place of $\hat K_{t_1}$ gives its squared
distances. Multiplying a distance matrix $D$ by a positive number leaves
$D/\psi(D)$ unchanged, since $\psi$ is positively homogeneous, so the
discrepancy of every such criterion vanishes at this pair.

Part~3. The loss of an example depends on $K(x,x')$ only if the example
is anchored at $x$ or at $x'$. In an example anchored at $x$, the
derivative of the loss at temperature $t_1$ with respect to $K(x,x')$
equals $-t_1^{-1}$ times the total softmax weight of $K(x,\cdot)/t_1$ on
the candidates other than the positive partner $x'$. This derivative is
nonpositive, and negative if some candidate differs from $x'$. The same
holds with $x$ and $x'$ exchanged, so the derivative of
$\hat{\mathcal L}_{t_1}$ with respect to $K(x,x')$ is negative on all of
$\mathcal U$ at every temperature $t_1>0$. A positive definite minimizer
on $\mathcal U_+$ would be a stationary point of $\hat{\mathcal L}_{t_1}$
on $\mathcal U$, as in part~1, and $\hat{\mathcal L}_{t_1}$ has no
stationary point on $\mathcal U$.
\end{proof}

\section{Experimental Details and Full Results}
\label{app:exp}

This appendix gives the protocol, compute, and preregistration history
behind Section~\ref{sec:experiments}, and then the design, the results,
and every failed preregistered prediction of each experiment in that
section. Table~\ref{tab:exp-verdicts} lists every preregistered
prediction with its outcome. The complete experimental record at
\url{https://eethanshi.github.io/LRH/experiment_record.pdf} holds every
table and detail summarized here. The public repository
\url{https://github.com/EEthanShi/LRH} holds the code and the frozen
plans, predictions, and tolerances, except ten frozen files that are
withheld for now. A script
generated every table from the frozen result files, except entries
transcribed from frozen summary files and logs and checked against their
sources.

\begin{table}[tp]
\centering\scriptsize
\setlength{\tabcolsep}{3pt}
\renewcommand{\arraystretch}{1.0}
\begin{tabular}{@{}>{\raggedright\arraybackslash}p{.12\linewidth}p{.42\linewidth}p{.42\linewidth}@{}}
\toprule
Experiment, item & Preregistered rule or prediction & Outcome \\
\midrule
\multicolumn{3}{@{}l}{\emph{Pretrained models (fixed; randomness from permutations or input draws)}}\\
COCO, G1 & Crossing-law predictor beats two baseline predictors out of sample, once an estimator passes a preliminary check against these predictors on synthetic couplings & \textbf{Not run}, since every candidate estimator failed this check; predictor reported as unvalidated \\
COCO, secondary & Slopes ordered seed $\ge$ vision $\ge$ vision--language; elasticities in $k$ and $m$ cancel, without a tolerance & Ordering holds at $k=1,10,100$, mirroring the levels; summed elasticities have class medians $0.03$ to $0.06$, and the correlation $0.99$ of the elasticity in $k$ with minus the elasticity in $m$ is nearly guaranteed by the range of pairs \\
WIT, PE2 (Prop.~\ref{prop:mst}) & ``SOG-global $\approx$ 1-NN'' if forest share of shared MST edges $\ge0.70$ in $\ge90\%$ of pairs, $|s_{\Tset}-s_{N,1}|\le0.03$ in $\ge80\%$, and bridge trend absent & ``SOG-global $\approx$ 1-NN'' at both depths; the first two conditions hold nearly automatically, and the last-layer bridge trend $0.57$ (Holm $p=0.23$) is a power-limited non-rejection \\
WIT, diagnostics & A difference by a global monotone link refuted if a monotone fit explains $<0.10$ beyond an affine fit; anchors certified by Equation~\eqref{eq:neighbor-stability} expected near 0 & Refuted ($0.0004$, $0.0012$); 0 anchors at $k=10$ in 544 configurations \\
WIT, PE3 & ``Width artifact'' if equalizing width to 128 components raises the trend of CKA minus its permutation mean by $\ge0.3$; ``genuine global loss'' if by $\le0.1$ & Rule returns ``genuine global loss'' (change $0.00$, $-0.07$), a name \textbf{false as worded}, since CKA minus its permutation mean keeps its fixed-depth trend ($0.70$, $0.92$); the test shows that width does not drive this trend, not that the trend is lost \\
WIT, layer max & Layer-maximum protocol of ARH with decision rules fixed in a frozen plan & Pooled ``ARH pattern absent'' and BLOOMZ ``layer max loses the CKA trend''; with LLaMA-1 13B and 30B, pooled ``partial'' and LLaMA-1 ``intermediate'' (mKNN trend $0.38<0.60$); with OpenLLaMA 13B, pooled not decidable, since the interval $[-0.02,0.24]$ of the difference $0.10$ covers the rule boundary and 0 (``direction not established''), and OpenLLaMA ``layer max loses the CKA trend'' \\
WIT, LLaMA-1, \mbox{P-L1} to \mbox{P-L4} (Prop.~\ref{prop:baselines}, Cor.~\ref{cor:aggregation}) & Calibration lowers the layer-max CKA trend by $\ge0.10$ (P-L2); median threshold within $0.01$ of the predicted baseline, predicted order in $\ge34$ of 51 comparisons (P-L2a); mKNN trend moves by $<0.10$ (P-L1); debiased CKA trend drops by $<0.10$ and its pooled correlation by $\le0.05$ (P-L4); fixed-depth CKA keeps its sign (P-L3, marked at risk) & All hold; CKA $0.68\to-0.85$; median gap $0.0018$, order 51 of 51; mKNN $-0.03$; debiased CKA $0.00$, pooled $0.37\to0.36$; fixed depth $0.97$ and $1.00$; every $p\ge1/6$ \\
WIT, LLaMA-1, \mbox{P-L5} & Calibration of each layer pair before the maximum keeps the sign of the CKA trend & \textbf{Reversed} ($0.68\to-0.88$) \\
WIT, OpenLLaMA, \mbox{P-O1} to \mbox{P-O5} (Prop.~\ref{prop:baselines}, Cor.~\ref{cor:aggregation}) & All descriptive; bounds on the raw 13B score fixed from features forecast whether 13B ranks below a smaller model after calibration, 0 wrong among $\ge10$ decided (P-O2b); median threshold offset in $[0.0009,0.0029]$, order in $\ge22$ of 33 (P-O2a); P-O1, P-O4 as P-L1, P-L4; calibration before the maximum does not raise the CKA trend (P-O5); fixed-depth CKA without a prediction (P-O3) & All hold; 30 of 34 decided, all correct; offset $0.0019$, order 33 of 33; CKA $0.88\to0.18$; mKNN $0.00$; debiased CKA $+0.06$, pooled $0.37\to0.36$; calibration before the maximum $0.88\to0.21$; fixed depth keeps its sign; every $p\ge1/6$ \\
WIT, Pythia, H1 to H4, \mbox{P-P2} (Prop.~\ref{prop:baselines}, Cor.~\ref{cor:aggregation}) & Calibration lowers the layer-max CKA trend by $\ge0.10$ consistently across the seven sizes (H1, level $0.05$), and if H1 is confirmed, three secondary tests H2 to H4 under Holm; median threshold offset in the predicted band for $\ge6$ of 7 models, predicted order in $\ge277$ of 307 (P-P2a); 0 wrong among $\ge30$ decided forecasts of the order after calibration (P-P2b) & H1 \textbf{not confirmed} (CKA $0.93\to0.89$, drop $0.034$, $p_{\rm perm}=1612/5040$), so H2 to H4 were not tested; mKNN and debiased CKA move by $0.00$; P-P2a holds (7 of 7 in band, order 307 of 307); P-P2b holds (352 of 357 decided, all correct) \\
\midrule
\multicolumn{3}{@{}l}{\emph{Sentence-encoder follow-up on the COCO evaluation sets (pretrained models; randomness from input draws)}}\\
COCO, sentence encoders, \mbox{P-S1} (Conj.~\ref{conj:scale}) & Elasticities in $k$ and $m$ on $k\in\{10,100\}$ sum to a median within $\pm0.03$ and to within $\pm0.05$ in $\ge80\%$ of the pairs ($\ge8$) whose elasticity in $m$ has size $\ge0.1$; $k\ge10$ and tolerances set after the primary pairs & Holds; 15 of 15 pairs qualify and lie within $\pm0.05$, median $0.006$; holds in every ordering, in all 11 further arms, and for the primary pairs under centering \\
COCO, sentence encoders, \mbox{P-S2b}, \mbox{P-S2c} & (\mbox{P-S2b}) The slope at $k=10$ follows the line fitted on the primary pairs against the log level, not the language-model slope of its vision model; (\mbox{P-S2c}) within a pair, averaging one to five captions moves the slope with the log level at the rate of this line & Undecided, both labeled uninformative; predicted slopes differ by a median $0.092$, below the required $0.113$ (\mbox{P-S2b}), and five captions raise the level by $19\%$ over the first, below the required $35\%$ (\mbox{P-S2c}) \\
\midrule
\multicolumn{3}{@{}l}{\emph{Controlled training (randomness from initialization, data, order, and minibatches)}}\\
Linear AE, P1 & Gradient descent: neighbor sets, $G_s$, CKA unreliable at every $N$, mean at $N=2^{16}$ inside the $95\%$ interval at $N=\infty$ & \textbf{Falsified}; unreliable in 12 of 12 cells, plateau test fails for $G_s$ and CKA (paired differences about $10^{-4}$) \\
Linear AE, P2, P3 & Agreement on neighbor sets, $G_s$, and CKA reliable at $N=2^{16}$ under weight decay, tied weights, and small initialization; agreement under the projector criterion reliable under every training procedure & Confirmed \\
\bottomrule
\end{tabular}
\caption{Preregistered predictions and their outcomes for the pretrained
models and the linear autoencoder, with failed items in bold. The LLaMA-1, OpenLLaMA, and Pythia rows
concern language models added after the first preregistration, with
predictions frozen before scoring (Appendix~\ref{app:exp-protocol}), and
H1 to H4 were fixed before the download of any Pythia weight.}
\label{tab:exp-verdicts}
\end{table}

\begin{table}[tp]
\renewcommand{\theHtable}{\thetable.cont}
\centering\scriptsize
\setlength{\tabcolsep}{3pt}
\renewcommand{\arraystretch}{1.0}
\begin{tabular}{@{}>{\raggedright\arraybackslash}p{.12\linewidth}p{.42\linewidth}p{.42\linewidth}@{}}
\toprule
Experiment, item & Preregistered rule or prediction & Outcome \\
\midrule
\multicolumn{3}{@{}l}{\emph{First network experiment (CIFAR-5m networks with a linear head)}}\\
Network, P1 & Without weight decay, $u$ unreliable on $T_{N,10}$ and $G_s$ at both $N$ & \textbf{Failed}; $T_{N,10}$ 96 of 96 at $250$k, meeting the preregistered falsification condition; $G_s$ unreliable (65 of 96) under a tolerance set at $100$k (same-size test in row A3) \\
Network, P2, P4 & Without weight decay, $y$ and $\Delta_\Pi$ reliable at $250$k; trunk $s_N(10)$ equal across arms within $0.05$ & Passed (largest trunk difference $0.011$) \\
Network, P3 & Weight decay and small initialization make $u$ reliable at $250$k & Held under weight decay; \textbf{failed} under small initialization ($G_s$ 82 of 96) \\
Network, P5 & Fraction of agreeing pairs under small initialization within $0.1$ of that under weight decay & \textbf{Failed} ($0.146$ on $G_s$ at $250$k) \\
\midrule
\multicolumn{3}{@{}l}{\emph{Second network experiment (fresh runs; tolerances from pilot pairs of the same arm and size)}}\\
Second network, C3, C4a (Rem.~\ref{rem:optimizers}) & Momentum from $C_0=0$ gives a median relative imbalance above $0.05$ (lower $97.5\%$ bound), at least $0.02$ in $\ge90\%$ of runs; from the default initialization, a median change of $C$ above its initial norm and a median cosine of the final and the initial $C$ below $0.5$ (upper bound) & All hold at both $N$; $0.28$ and $0.33$, 64 of 64 runs; $9.3$ and $8.8$ times; $-0.017$ and $-0.005$ (upper bounds $0.011$, $0.012$) \\
Second network, I3 & From a balanced or small initialization, momentum adds a larger imbalance part than head flow on $G_s$ and $1-\CKA$ at both $N$; 8 tests, Holm at $0.05$ & 8 of 8 rejected; raw differences $0.0012$ to $0.0111$ \\
Second network, I4 (part~2 of Thm.~\ref{thm:gauge}) & Head flow from the default initialization, without a predicted direction: (i) mechanism of part~2 realized if the imbalance part on $G_s$ and its run-to-run part exceed 0 at both $N$; (ii) not detectable if the imbalance part is within $\pm0.5$ SD on $G_s$ and $1-\CKA$ at both $N$; simultaneous $98.75\%$ intervals & Outcome (i), $0.54$ and $0.59$ SD with run-to-run parts $0.29$ and $0.46$ SD; (ii) does not hold, upper bounds $0.59$ to $0.74$ SD \\
Second network, A1 (part~3 of Thm.~\ref{thm:gauge}) & Weight decay: $u$ reliable on $T_{N,10}$ and $G_s$ at both $N$ & Holds; 96 of 96 in all four cells, also under the pooled tolerance \\
Second network, A3 & Momentum without weight decay: $u$ and its common-gauge counterfactual unreliable on $G_s$ at both $N$ & Tolerance-sensitive, without a categorical claim; $u$ 0 and 5 of 96, counterfactual 1 and 21 of 96, and 95 of 96 at $250$k under the $7.8\%$ larger pooled tolerance \\
Second network, A7 & Balanced or small initialization under head flow: (a) $u$ reliable on $G_s$ at both $N$ and not tolerance-sensitive; (b) balanced part above $0.5$ SD at both $N$ (lower $95\%$ bound) & (a) \textbf{fails} in both arms ($u$ 20 and 35 of 96 balanced, 8 and 93 of 96 small); (b) holds, $2.7$ to $3.9$ SD; preregistered sentence on weight decay not added, since the balanced part under weight decay lies outside $\pm0.5$ SD ($-1.23$ SD at $25$k) \\
Second network, A2 & Where $y$ is reliable, $u$ not unreliable on $T_{N,10}$ and the $95\%$ interval of its mean excess within $\pm0.002$ & \textbf{Fails} in 4 of 10 cells, all at $250$k, with $u$ agreeing better than $y$ by $0.0017$ to $0.0023$; $u$ reliable on $T_{N,10}$ in every cell \\
Second network, A4, A8, A9 & Momentum without weight decay: common-gauge share of the $G_s$ excess at $250$k in $[0.5,0.9]$, intervals of the mismatch, balanced, and imbalance parts excluding 0 (A4); $G_s$ excess at $250$k over that at $25$k in $[0.5,1.5]$ (A8); $\Delta_E(u)<\Delta_E(y)$ in $\ge90\%$ of pairs (A9) & All hold; $0.71$; $0.83$; 96 of 96 at both $N$ \\
Second network, K4, K5, K6 (Cor.~\ref{cor:gauge-neighbors}) & At $k=10$, outputs of one run under the gauges of both runs: median bound over observed slot change in $[5,25]$ per cell, certified fraction $\le0.10$ (K4); balanced-head coordinates move $\ge0.01$ of the slots under momentum without weight decay (K5); cross-run excess over the function alone $\le0.0005$ (K6) & All hold; $10.9$ to $12.7$; at most $0.047$; $0.052$ and $0.070$; at most $0.0002$ \\
\midrule
\multicolumn{3}{@{}l}{\emph{Finite Gibbs model (certified minimizers inside the premises of Theorem~\ref{thm:link})}}\\
Gibbs, F0 (Thm.~\ref{thm:link}, parts~1 to~4 of the formal version, one model prediction) & Nine confirmatory tests under Holm: both learners converge from $2^{34}$ to $2^{40}$; exact agreement across objectives on the global order of eight probes and agreement under the isotonic discrepancy reliable at $2^{40}$; agreement on mean-normalized distances unreliable at $2^{34}$ and $2^{40}$; $1-\CKA$ at its limit; difference after undoing the link converging; cross-objective mean discrepancy within the sum of the within-objective means on order criteria, above it on geometric ones & All 9 confirmed, $p$ from $7.9\times10^{-68}$ to $5\times10^{-5}$; order 96 of 96, geometry 0 of 96, isotonic 96 of 96 (Appendix~\ref{app:exp-s23}) \\
Gibbs, secondary & Monte Carlo prediction of the ratio of the mean discrepancy across objectives to the mean of the two within-objective means at $2^{40}$, within a factor of $1.5$ (P5b); Gaussian limit and transition of Thm.~\ref{thm:peer} (P1b, P2b) & P5b \textbf{failed} for neighbor sets and spanning trees; P1b and P2b pass \\
\midrule
\multicolumn{3}{@{}l}{\emph{CIFAR-5m contrastive networks (outside the premises of Theorem~\ref{thm:link})}}\\
Contrastive, link & Link confirmed if items (i) to (v) hold; falsified if the warp share is $<0.2$ or an order gap $>0.1$ & \textbf{Falsified}; warp share $0.147$, order gaps $1.0$; (i), (iii), (iv), (v) fail, (ii) passes by its letter \\
Contrastive, secondary & Fitted link exponent in $[2.5,10]$; co-occurrence prediction derived from Lemma~\ref{lem:cooccurrence} & \textbf{Both failed} \\
\bottomrule
\end{tabular}
\addtocounter{table}{-1}
\caption{Preregistered predictions and their outcomes (continued) for the
network experiments and the finite Gibbs model, with failed items in bold. SD is the standard deviation of the
output discrepancy over the main pairs of a cell in the second network
experiment.}
\end{table}

\subsection{Protocol}
\label{app:exp-protocol}

\paragraph{Reporting Rules.}
A prediction is frozen before scoring if we computed it from single-model
features before any cross-model score of the new models. An analysis is
exploratory if it lacks a decision rule fixed in advance or if its plan
labels it exploratory, and post hoc if we formulated its tested statement
after seeing the result. We present both kinds as descriptions of the
measured results, not as tests.
We report effect sizes, directions, and intervals with the exact $p$ of
each family, and we label a joint $p$ over families as exploratory. A
clear trend that misses a preregistered threshold is reported with both
the outcome of its test and its size.

\paragraph{Models, Data, and Criteria.}
The language models compared with the 17 vision transformers of ARH are
BLOOMZ at five sizes from 0.56B to 7.1B parameters, OpenLLaMA 3B and 7B,
and LLaMA-1 7B. In Section~\ref{sec:exp-aggregation}, we add LLaMA-1 13B
and 30B, OpenLLaMA 13B, and seven Pythia models from 70M to 6.9B, and
the repository lists the Hugging Face identifier and pinned revision of
every pretrained model.
Features are preprocessed by the clip and $\ell_2$ normalization of PRH
(Appendix~\ref{app:measurement-definitions}), with plain $\ell_2$
normalization as secondary preprocessing.
Appendix~\ref{app:exp-prereg} reports the check of our code against the
published code of PRH, ARH, and SOG, and Appendix~\ref{app:exp-published}
the comparison of our measurements with the published results. The size trend
$\mathrm{Tr}$ of Section~\ref{sec:exp-protocol} has an exact one-sided
$p$ of at least $1/120$ along BLOOMZ, $1/6$ along LLaMA-1 and OpenLLaMA,
and $1/5040$ along Pythia. The fixed depths are relative depth $0.5$ (mid
depth) and $1$ (the last layer), and we compute the layer maximum of PRH,
ARH, and SOG under a preregistered plan.

\paragraph{Decisions in Controlled Training.}
In a learning experiment we train disjoint pairs of runs, count the pairs
whose discrepancy is at most a frozen tolerance $\varepsilon$, and decide
with a one-sided $95\%$ Clopper--Pearson bound. Agreement is reliable if
the lower bound exceeds $0.9$, unreliable if the upper bound is below
$0.8$, and inconclusive otherwise, which instantiates
Definitions~\ref{def:reliable} and~\ref{def:transition} with
$\delta=\eta=0.1$. With 96, 64, and 32 pairs, these rules return
``reliable'' with at least 92, 62, and 32 pairs within tolerance and
``unreliable'' with at most 69, 45, and 21. In
Table~\ref{tab:exp-verdicts}, an entry such as ``$u$ reliable on
$T_{N,10}$'' states that agreement of disjoint pairs of runs under this
criterion is reliable.

\paragraph{What Is Random.}
The pretrained models are fixed, and a score on them is random through the
draw of evaluation inputs (Lemma~\ref{lem:rows}) and the $n_\pi=500$ permutations of its calibration. The comparisons on WIT use
one draw of 1{,}024 inputs, and our intervals resample the 17 vision
models, so they cover the choice of vision models and the permutations
but not the draw of inputs. The COCO experiment varies this draw through five random nested orderings of its pool (Appendix~\ref{app:exp-s11}). In the linear autoencoder
the randomness is the initialization, the training sample at finite $N$,
and the minibatches, with common random numbers across training
procedures. In the finite Gibbs model the randomness is the training
sample of each run, drawn as exact multinomial counts from its own
seed. In the networks the randomness is the initialization, the training
subset, its order, and the augmentation, each drawn from its own seed. We evaluate the networks on a fixed set of 2048 held-out images. The two Gram matrices of a disjoint pair of runs therefore form a draw
from $\mu_{r_A}\otimes\mu_{r_B}$, with $r_A=r_B$ for two runs of one
training procedure and objective, and draws of different pairs are independent. The exception is one arm of the first network experiment, whose runs share training corpora (Appendix~\ref{app:exp-s22}). An arm is the set of
runs or measurements under one setting of an experiment, such as one
training procedure and step size or one objective and temperature.

\paragraph{Criteria and Discrepancies.}
\phantomsection\label{app:exp-criteria}
All criteria are computed in float64 on the squared distances $D$ of the
preprocessed features, with ties broken by input index, and in the
learning experiments we evaluate the criteria ordered by
Proposition~\ref{prop:lattice}.
The neighbor-set discrepancy $T_{N,k}$ is $1-s_N$ at $k$ neighbors, with
chance level $k/(m-1)$ for $s_N$. The order discrepancy $T_O$ is the
fraction of discordant pairs between the two rankings of the other inputs
by distance to an anchor, averaged over anchors, with ties counted as one
half. The discrepancy $T_E$ is $(1-\tau_b)/2$ for Kendall's $\tau_b$
between the off-diagonal entries of $D_A$ and $D_B$. The spanning-tree discrepancy
$T_{\mathrm{MST}}$ is $1-s_{\Tset}$, with the forest and bridge split of
Proposition~\ref{prop:mst} reported for the forest $F_A$, and SOG$_{0.1}$
is one minus the length-weighted 10-nearest-neighbor overlap of
Equation~\eqref{eq:sog-weight} at $\tau=0.1$. On the off-diagonal entries
of $D_A$ and $D_B$, $T_{G/\mathrm{Aff}}$ is one minus their Pearson
correlation, and $G_s$ on $T_{G/\R_+}$ is their root mean square
difference after division of each matrix by its mean off-diagonal entry.
We write $1-\CKA$ for one minus linear (biased) CKA. On the pretrained
models, debiased CKA is linear CKA with the unbiased estimator of the
Hilbert--Schmidt independence criterion in place of the biased
estimator. ARH calls this score unbiased CKA, and our implementation
matches the code of ARH. For the
contrastive networks, $\Delta_{\rm iso}$ is the cross-fitted root mean
square residual of the best global nondecreasing map from one distance
matrix to the other, in units of the target's mean distance and averaged over both directions. The warp share is the fraction of the affine
residual removed by this map. The projector distance
$\Delta_\Pi=\norm{\Pi_A-\Pi_B}_F^2/(\operatorname{rank}\Pi_A+\operatorname{rank}\Pi_B)$
compares the orthogonal projectors $\Pi_a$ onto the column spaces of the
two centered feature matrices. It equals one minus the mean squared canonical correlation when the ranks are equal (part~4 of
Theorem~\ref{thm:gauge}).

\paragraph{Hardware, Software, and Compute.}
\phantomsection\label{app:exp-compute}
Every experiment ran on one machine with an NVIDIA RTX PRO 6000
Blackwell Workstation Edition GPU (97{,}887\,MiB), an Intel Core Ultra 9
285K CPU with 24 cores, and 125\,GiB of memory, under Ubuntu 24.04.3.
The training code used PyTorch 2.13.0 with CUDA 13.2 and bf16 autocast,
and the metric code used float64 NumPy. GPU kernels are seed-deterministic
but not bitwise reproducible. The computations on the pretrained models took
under 4 GPU-hours for feature extraction, layer maxima, and their checks, and
about 210 CPU-hours, 159 of them for the neighbor computations over
evaluation sizes. With up to four runs sharing the GPU, the 1{,}392 runs of the first
network experiment took 4.91 hours of summed run time and the 496 runs of
the contrastive networks took 35.43 hours. Each count includes 48 excluded pilot runs. The second network experiment trained 4{,}352 runs,
1{,}984 of them pilots, in 20.92 hours of summed run time. The finite
Gibbs experiment took 12.71 GPU-hours and the linear autoencoder experiment about
1.5 CPU-hours. These times include contention between jobs sharing the
machine.

\subsection{Preregistration, Gates, and Deviations}
\label{app:exp-prereg}

\paragraph{Documents.}
Each preregistered experiment ran under a plan frozen before its main
computation, and an event log records the time of every
freeze. The Stage~1 preregistration (pretrained models and the linear
autoencoder) preceded the first experiment, and its amendment v1.1
preceded the opening of the COCO features. The plan of the
evaluation-size test followed the selection of its estimator and preceded
the opening of the COCO features. The Stage~2
preregistration (first network experiment and contrastive networks)
preceded any main run. Its one amendment changed the time criterion
of a pilot gate from per-run wall time to throughput before the
evaluation of that gate. The plans for LLaMA-1, OpenLLaMA, and Pythia
preceded the download of any new weights, and their predictions from
features preceded any cross-model score of the new models, an order
confirmed by file creation times.

\paragraph{Gates.}
Before any analysis of real data, the metric code passed four of its six
known-answer checks on synthetic couplings. A cross-fitted isotonic fit
explained $0.9978$ of the variance of an exponential-link coupling against a required $0.999$. The first coupling estimator missed the observed
$s_N$ at $k=1$ on Gaussian couplings by up to $0.057$ against a tolerance
of $0.02$. Amendment v1.1 based the isotonic check on the in-sample fit
and replaced the first-order approximation behind the estimator by the
exact expected discrepancy for $k\le3$.
We did not execute the rebuild of the exponential coupling specified by
this amendment, so the pass of the isotonic check rests on the existing
run, with an in-sample fraction of $1.000000$. The first coupling estimator also failed a real-data sanity check on held-out WIT items. Under the amendment, we re-selected among at most four pre-listed estimators, and every candidate failed the preliminary check of G1, so we did not run G1. Our
code agreed with the reference implementations of PRH and ARH to
$1.67\times10^{-8}$, within a tolerance of $10^{-6}$. On three pairs of
models at both fixed depths, it also agreed to $10^{-6}$ with the
published code of PRH and ARH, and with that of SOG under the
preprocessing of PRH. The exceptions come from two WIT inputs with
identical captions. The code of SOG gives these inputs a distance of
about $10^{-8}$, which changes its length-weighted overlaps by up to
$9.8\times10^{-4}$. The mKNN code shared by ARH and SOG breaks their tie
differently from ours in one of 60 values, by $3.3\times10^{-5}$. The first and second network experiments and the contrastive networks passed all of their pilot gates.

\paragraph{Deviations.}
\begin{enumerate}[nosep,leftmargin=*]
\item The power gate of the spanning-tree test asked for $m=3072$
inputs, more than were extracted, so we used the largest disjoint set
($m=2047$), with the same outcome as on $m=1024$.
\item In the linear autoencoder, we added secondary arms with step
$0.002$ before the confirmatory runs, since the preregistered step $0.05$ changes $C$ by a median $53\%$. The preregistration left the learning rates of SGD and Adam open, and we chose these rates on excluded pilots.
\item The unconstrained spectral loss diverged at every tested learning
rate, so the spectral arm of the contrastive networks uses the ball
projection of \citet{haochen2021provable}. This projection was active on
every evaluated input, and the arm therefore trains on the unit sphere.
\item The main run of the layer maximum started before the end of the
check against the code of ARH, and its outputs were opened after a pass of
this check.
\item In the first network experiment, an interim analysis on two runs per
cell preceded the tolerance freeze, and the analysis code was frozen only after the tolerances were fixed. The final analysis reused cached per-pair results after an independent recomputation matched all 252 decision cells. The analysis code computes $\Delta_\Pi$ on $u$ instead of $y$, and the two distances agree to $10^{-7}$.
\item The fixed-depth comparison preceded the recording of the amendment
for the two failed known-answer checks, although the preregistration
required these checks to pass before any analysis of real data. This
comparison uses neither failed check, and each cross-fitted isotonic
value carries a flag.
\item A script of the evaluation-size test was edited after its outputs
existed, without an event-log entry, and an independent script reproduces
every number of the plan. A verification run printed frozen predictions of
the first coupling estimator for three WIT pairs, outside every
preregistered endpoint.
\item In the LLaMA-1 experiment, the order of the prediction freeze and
the scores rests on file creation times, since the automated order check
inspected an unused directory. Deterministic validation jobs also exceeded
the plan's cap of six CPU threads for about 77 minutes. The LLaMA-1 7B
results of one implementation reproduce the frozen run bitwise.
\item In the OpenLLaMA experiment, we saw the exploratory joint statistic
in a log of one implementation before the agreement check of the two
implementations. The inputs, code, and rule of this statistic were
frozen beforehand, and read-only check scripts were added after the code
freeze. The plan's bound on the fixed-depth baseline fails
at the last layer under plain $\ell_2$, so under both preprocessings we
report P-O3 without a prediction.
\item In the Pythia experiment, the agreement check failed by its letter
on floating-point differences of at most $9.1\times10^{-14}$. We accepted this check post hoc, after seeing the category returned by the decision rule
(Appendix~\ref{app:exp-pythia}). A GPU smoke test was rewritten after the
code freeze, and the new version was frozen before its first run.
\item The finite Gibbs experiment ran under four amendments, two
of which fixed programming errors in the frozen analysis code
(Appendix~\ref{app:exp-s23}).
\item The sentence-encoder follow-up ran under one amendment,
which fixed a programming error in its file-writing step, and its review
preceded the independent reimplementation, against the planned order. Its
plan calls the pool caption random, whereas the pool holds the last
caption of every image (Appendix~\ref{app:exp-s11}).
\end{enumerate}

\paragraph{Design Errors in the Stage~2 Preregistration.}
We record seven errors in our own Stage~2 design, each of which leaves
every number and decision above unchanged. Prediction P1 of the first
network experiment extended the obstruction to the neighbor-set
criterion, outside part~2 of Theorem~\ref{thm:gauge}. Prediction P3 stated that agreement of $u$ under small initialization would be reliable, although the same
document expected the balance premise of this initialization to fail
under stochastic gradient descent. Both Stage~2 experiments set the
tolerance at a pilot size between the two tested sizes, and with a fixed budget the smaller size also meant more epochs. At $25$k, the first network
experiment therefore finds 0 of 96 pairs within tolerance for $u$ and $y$
on every criterion under each training procedure with independent
training subsets. A decision on $u$ at
the tolerance of $y$ depends on both the gauge shared by all runs and the
run-to-run mismatch, without separating these two parts. In the
contrastive experiment, item~(v) assumed that a change of temperature is a
pure rescaling, which Remark~\ref{rem:temperature} gives for a unique
positive definite minimizer, a minimizer absent from the empirical risk of
the networks. The falsification threshold of $0.2$ on the warp share
rules out the sharp link of the theorem but not a general monotone link. Finally, the fitted exponent $\hat\beta$ equals minus
the logarithm of the mean cosine of the spectral arm to within $0.008$,
which ties $\hat\beta$ to the spectral embedding independently of the
InfoNCE temperature (Appendix~\ref{app:exp-s21}).

\subsection{Evaluation Size}
\label{app:exp-s11}

The two preregistered experiments of Section~\ref{sec:exp-scale} test Conjecture~\ref{conj:scale} by measuring $s_N$ at a fixed $k$ and at a fixed $k/m$ on growing evaluation sets. Two exploratory follow-ups measure how the depth of the vision features changes the level of $s_N$.
All four use one pool of $16{,}384$ COCO validation images, each paired
with the caption stored by \texttt{coco-30-val-2014}, i.e., its last
caption in the COCO annotation file. Five fixed random orderings of the pool
define nested prefixes of sizes $m\in\{256,512,\dots,16{,}384\}$, and in
each ordering the anchors are those of the first 1024 images that lie in a fixed random half of the pool,
with neighbors searched in the whole prefix. The slope of a pair at
fixed $k$ is the least-squares slope of $\ln s_N$ on $\ln m$ over the
seven sizes, averaged over the five orderings. A slope is flat if its
absolute value is below $0.1$ and falling or rising otherwise. At $k=10$
the standard deviation of the $s_N$ of a pair over the five orderings is
at most $0.031$. The vision models are DINOv2-B, OpenCLIP ViT-B/16
(LAION-2B), MAE-B, AugReg ViT-B/16 (ImageNet-21k), and SigLIP-B/16, and
the language models are PolyPythia-410M seeds 1 to 9, BLOOMZ-1.7B, and
OpenLLaMA-3B, all at mid depth. The 29 primary pairs are 4 disjoint
pairs of PolyPythia-410M seeds, the 10 pairs of vision models, and the
15 pairs of a vision model with BLOOMZ-1.7B, OpenLLaMA-3B, or seed 1 of PolyPythia-410M. A secondary set holds all 36 pairs of the nine
seeds. The experimental record gives the values of each pair, the class medians of $s_N$ at every $k$ and $m$, the per-pair crossing-law predictions, and three sensitivity analyses.

\paragraph{Slope Classes and Stability at Fixed $k/m$.}
All 4 seed pairs have flat slopes at $k=1$ and $k=10$, as do all 36 seed
pairs of the secondary set. All 10 vision pairs have falling slopes at
$k=10$ and $k=100$, although after correction for chance the slope stays
falling for 7 of 10 at $k=100$ (exploratory). All 15 vision--language
pairs have falling slopes at every $k$, and at $k=10$ their class median
of $s_N$ falls from $0.091$ at $m=256$ to $0.0085$ at $m=16{,}384$. There, the $s_N$ of every such pair exceeds its chance level by a
factor of at least $8.75$. At $k=10$ the class median slopes are
$-0.01$, $-0.21$, and $-0.57$ for seed, vision, and vision--language
pairs, against $-1.00$ for the chance level $k/(m-1)$. At all three values of $k$, the slopes follow the preregistered secondary order, seed above vision above vision--language, but this order repeats that of the levels of $s_N$ rather than adding a second observation. The rising
class of Conjecture~\ref{conj:scale} appears in 0 of the 87 combinations
of a primary pair and a value of $k$. The preregistered slope rule also
classifies as flat all 9 units of a synthetic conformal coupling used to select the coupling estimator. At these levels of $s_N$ the rule fails to detect the rising class, and the conformal case of the conjecture stays untested. In a post hoc analysis
of the slope of $\ln(1-s_N)$ on $\ln m$, the steepest seed pair reaches
$-0.026$ at $k=1$, against about $-0.05$ for a conformal coupling at the
intrinsic dimension of the seed models. The separation between the seed pairs and the conformal case is moderate. At a shared $k/m$, the
class medians for $k=100$ stay within $4\%$ of those for $k=10$, whereas
those for $k=1$ lie $13$ to $19\%$ below for seed and vision pairs and
within $11\%$ for vision--language pairs
(Figure~\ref{fig:exp-neighbors}b). The elasticities in $k$ and $m$ of a
pair are the two coefficients of a least-squares fit of $\ln s_N$ on
$\ln k$ and $\ln m$. Their per-pair sums have class medians of
$0.03$, $0.06$, and $0.03$ for seed, vision, and vision--language pairs, against $-0.02$,
$-0.19$, and $-0.56$ for the elasticity in $m$ alone. The preregistered secondary prediction of a zero sum of these elasticities lacks a tolerance, so we report
these sums as its outcome. The elasticity in $k$ and minus the
elasticity in $m$ have correlation $0.99$ across the 29 pairs. The spread of $s_N$ nearly guarantees this value, since identical representations
give zero for both elasticities and pairs at chance level give one and
about minus one. For vision--language pairs the stability at large $k/m$ is partly trivial, since at $k=100$ and $m=256$ their class median
is $1.10$ times chance.

\paragraph{The Crossing-law Estimate.}
The preregistered out-of-sample test G1 required a coupling estimator
that passed a real-data sanity check and a preliminary check on
synthetic couplings against two baseline predictors. These are the
no-change null N0, which predicts the value of the pair at $m=1024$ for
every $m$, and the affine null N1, an affine fit of the coupling with Gaussian noise.
Every candidate estimator failed this preliminary check. In three of its
four comparisons, the null generates the synthetic coupling exactly or
approximately. The failure means that each estimator lost to a null on
the coupling generated by this null, not that the crossing law is wrong.
G1 was therefore not run, and the predictor is reported as unvalidated,
following the preregistered outcome rule. In a descriptive comparison,
the predictor recovers the slope class of all 29 pairs at $k=10$,
against 4 for N0 and 25 for N1. Like N1, it predicts the sign of the slope correctly in all 69 non-flat combinations of a pair and a value of $k$. On vision--language pairs at $k=10$, the predictor gives a
median slope of $-0.24$ against $-0.57$ observed (Figure~\ref{fig:exp-scale}), too shallow for all 15
pairs by $0.29$ to $0.40$. N1 instead predicts too steep a slope at
$k=10$ for all 29 pairs. We take the median of $|\ln(\text{predicted}/\text{observed})|$ over the cells of all three values of $k$ with $m\ne1024$ and an observed $s_N$ above twice chance. On vision--language pairs, this median corresponds to a factor of $2.86$ for the crossing law, larger than the $2.26$ of N0. The slopes are measured over a 64-fold range of $m$, and at $k=1$ the class median of the vision
pairs levels off at $0.082$ to $0.088$ for $m\ge2048$. The falling class of these pairs at larger $k$ may therefore be pre-asymptotic.

\begin{figure}[htbp]
\centering
\includegraphics[width=5.5in]{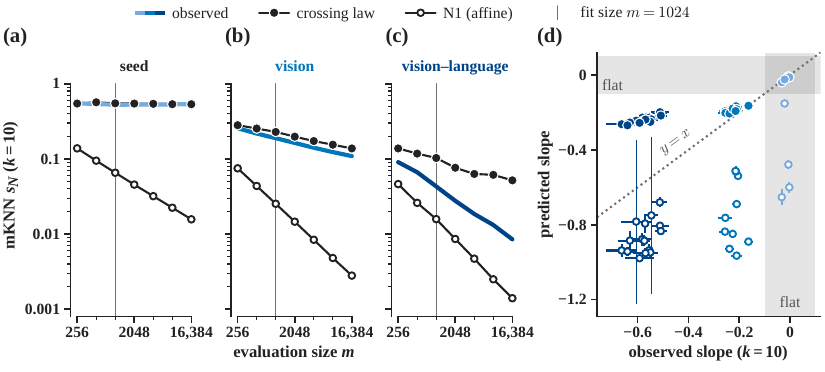}
\caption{\textbf{Descriptively, at $k=10$ the crossing law recovers the slope class of every pair but predicts too shallow vision--language slopes, whereas the affine null N1 predicts every slope too steep (Theorem~\ref{thm:crossing}).} (a--c) Class medians of mKNN at $k=10$ on the growing COCO evaluation sets, observed and as predicted from the measurements at the fit size $m=1024$. (d) Predicted against observed slope of $\ln s_N$ on $\ln m$ per pair, $\pm1$ s.d.\ over the five orderings of the images. Predicted and observed slope classes coincide where the grey bands cross or in the white lower left. The two long thin bars come from orderings where N1 is undefined.}
\label{fig:exp-scale}
\end{figure}

\paragraph{Sentence Encoders.}
We call the second preregistered experiment of
Section~\ref{sec:exp-scale} the sentence-encoder follow-up. This
follow-up pairs the five vision models with \mbox{all-mpnet-base-v2}
(MPNet), Contriever, and Qwen3-Embedding-0.6B at pinned revisions, and
each encoder embeds a caption with the pooling of its model card. Its
main arm keeps the evaluation sets, orderings, anchors, caption, and
clip of the 29 primary pairs, with a sentence encoder in place of the
language model of each vision--language pair. Eleven further arms
average up to five or all captions of each image, combine this averaging
with centering, center the features in place of the clip, use plain
$\ell_2$ normalization, or drop duplicate captions. We froze the plan
and then the code of the follow-up before the download of any
sentence-encoder weight. Before the first cross-model score, one
amendment let the frozen file-writing step reuse four tables already
written with identical content, and every rule and statistic of the plan
stayed unchanged. A checking script and the creation times of all 87
output files of the new pairs confirm this order. An independent reimplementation reproduced the
overlap counts in all $9{,}450$ cells of the main arm and of the five
arms with the first one to five captions, and all 41 labels. In the main
arm, $s_N$ falls with $m$ for all 15 pairs at $k=1$, $10$, and $100$,
and at $k=10$ their class median falls from $0.105$ at $m=256$ to
$0.013$ at $m=16{,}384$ (Figure~\ref{fig:app-e0}a). At $k=10$ and $m=1024$, the median $s_N$ of
these pairs is $0.059$, between the $0.043$ of vision--language pairs
with language models and the $0.189$ of vision pairs. The $s_N$ of each vision model with every sentence encoder exceeds its $s_N$ with any
language model.

\begin{figure}[htbp]
\centering
\includegraphics[width=5.5in]{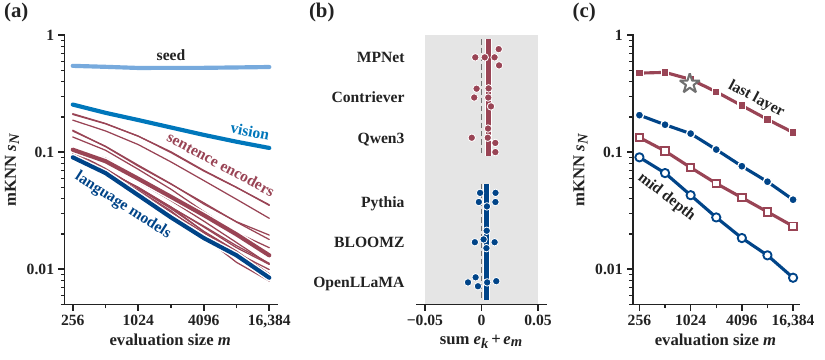}
\caption{\textbf{Neighbor overlap between vision models and sentence encoders falls with the evaluation size but stays close to stable at a fixed $k/m$ (Conjecture~\ref{conj:scale}), and in exploratory follow-ups vision features from the last layer in place of mid depth raise this overlap.} (a)~Class medians at $k=10$ as in Figure~\ref{fig:exp-neighbors}a, with sentence-encoder pairs as thin lines. (b)~Sums of the elasticities in $k$ and $m$ on $k\in\{10,100\}$, with group medians as bars and the preregistered tolerance in grey. (c)~Medians at $k=10$ with vision features from mid depth (open) or from the last layer (filled, exploratory), for sentence encoders with centering and five averaged captions (squares) and for language models (circles). The star marks the value of \citet{schnaus2026rosetta} for DINOv2-B and MPNet at equal $k/m$.}
\label{fig:app-e0}
\end{figure}

\paragraph{Stability with Sentence Encoders.}
The preregistered prediction of stability at a shared $k/m$ (\mbox{P-S1}
in Table~\ref{tab:exp-verdicts}) required at least 8 pairs whose
elasticity in $m$ has absolute value at least $0.1$. Among these pairs,
the median sum of the elasticities in $k$ and $m$ on $k\in\{10,100\}$
had to lie within $\pm0.03$, with at least $80\%$ of the sums within
$\pm0.05$. We fixed the range $k\ge10$ and these tolerances after
scoring the primary pairs and before any sentence-encoder score. The range leaves out $k=1$, where the $s_N$ of the primary pairs departs from
a function of $k/m$. Since the tolerances come from the sums of the
primary pairs, the primary vision--language pairs lie inside the band of
Figure~\ref{fig:app-e0}b by construction. All 15 new pairs qualify, with
elasticities in $m$ from $-0.60$ to $-0.39$, and their sums lie between
$-0.009$ and $0.016$ around a median of $0.006$. \mbox{P-S1} holds in
the main arm, in each of its five orderings, in all 11 further arms, and
for the primary pairs under centering. With $k=1$ included, the median
sum of the new pairs rises to $0.058$, against $0.030$ for the primary
vision--language pairs, and 5 of the 15 new sums stay within $\pm0.05$,
against 12 of 15 (exploratory). At $k=100$ and $m=256$ the $s_N$ of the
new pairs sits near chance, and on $m\ge2048$, where these values lie at
least $2.25$ times above chance, the median sum is $0.005$
(exploratory). The training data of MPNet include COCO caption pairs,
probably from the 2014 training images outside our validation pool
(unverified), and the pairs of MPNet have the largest median sum,
$0.012$. Without them, the sums keep a median of $0.006$ and all lie
within $\pm0.05$, so \mbox{P-S1} also holds (post hoc).

\paragraph{Undecided Slope Predictions.}
Two further preregistered predictions concern the slope at $k=10$
(\mbox{P-S2b} and \mbox{P-S2c} in Table~\ref{tab:exp-verdicts}).
\mbox{P-S2b} states that the slope of each new pair follows the line fitted on the primary pairs against the logarithm of the level, rather than the slope of the same vision model with the language
models. \mbox{P-S2c} states that averaging one to five captions moves the
slope of a pair with the logarithm of its level at the rate $0.228$ of this line. The other alternative is the rate $0.080$ seen within the classes of the primary pairs. The frozen plan counted each prediction as
decidable if its two alternatives lay far enough apart, a condition
called the guard. The guards required a median gap of at least $0.113$
between the two predicted slopes of \mbox{P-S2b} and a median rise of
the level by at least $35\%$ from the first caption to the average of
five for \mbox{P-S2c}. The gap reached $0.092$ and the rise $19\%$, so
both predictions carry the frozen label ``uninformative''. In the frozen operating characteristics of \mbox{P-S2b}, this label occurred in $84$ to $88\%$
of simulated runs at a median level near $0.06$. Whatever the label, the plan fixed two values for the report: the weight $0.44$ of the level line
in a mixture of the two predictions of \mbox{P-S2b} and the median rate
$0.435$ of \mbox{P-S2c}. The unmet guards leave both values without an
interpretation. Of the 12 secondary cells of \mbox{P-S2b}, 9 are
uninformative and one is inconclusive. The other two pass the guard by
less than $0.01$ and give ``holds'' with a weight of $0.96$. This weight comes from averaging slopes flatter than the line for three vision models with
steeper slopes for SigLIP-B/16.

\paragraph{Absolute Level and Depth of the Vision Features.}
\citet{schnaus2026rosetta} center each side and average the captions of
an image before they compare two models. At $k=10$ and $m=1024$,
centering raises the level of the new pairs by a median $9\%$ and the
average of five captions by $17\%$. With both changes, the $s_N$ of
DINOv2-B and MPNet reaches $0.077$ at $k=100$ and $m=10{,}000$, against
about $0.38$ read from the figure of \citet{schnaus2026rosetta} for the
same two models at $k=100$ on $10{,}000$ COCO images. We interpolate
every value at $m=10{,}000$ between our two largest evaluation sets, of
$8{,}192$ and $16{,}384$ images. In two exploratory follow-ups, each
with decision rules fixed before its first score, vision
features from the last layer in place of mid depth close the gap between
$0.077$ and about $0.38$. With the CLS token of DINOv2-B from its last
layer and every other setting kept, its $s_N$ with MPNet rises to
$0.429$. At $k=100$ and $m=10{,}000$, vision features from the last
layer raise the $s_N$ of all 15 pairs of a vision model and a sentence
encoder by a median factor of $4.3$ (range $2.3$ to $7.4$). We wrote the
rules of the second follow-up after the first follow-up had produced the
scores of the three last-layer pairs of DINOv2-B with the sentence
encoders. These rules include the three pairs, and the $s_N$ of the
other 12 sentence-encoder pairs also rises at the last layer. With
language models as the text side, last-layer vision features raise the
$s_N$ of all 15 pairs at $k=10$ and $m=1024$ by a median factor of
$2.8$, from $0.043$ to $0.144$. Across layers of the language model,
this level changes by at most a factor of $1.4$. At the last layer,
$s_N$ still falls
with $m$ at $k=10$ for all 30 pairs, with median slopes of $-0.42$ with language models and $-0.30$ with sentence encoders, against $-0.57$ and $-0.43$ at mid depth (Figure~\ref{fig:app-e0}c). Here the sentence-encoder pairs use centering and five averaged captions. The elasticities of these 30 pairs on
$k\in\{10,100\}$ sum to at most $0.02$ in absolute value. In a post hoc comparison at a shared $k/m$, the values for $k=100$ lie within $9\%$ of those for $k=10$. Neighbor overlap therefore stays close to stable at a fixed $k/m$ also at these higher levels. In a post hoc check, the line fitted on the primary
pairs and its counterpart for centered features predict too shallow slopes at the last layer. The predicted slopes are $-0.31$ against $-0.42$ observed with language
models and $-0.05$ against $-0.30$ with sentence encoders. Independent
reimplementations reproduced every overlap count and label of both
follow-ups. These follow-ups rest on COCO, five ViT-B models, and
language models of at most 3B parameters, and we did not score the
vision pairs at the last layer.

\subsection{Pretrained Models at Fixed Depth}
\label{app:exp-s12}

On WIT at fixed depth, we classify MST overlap with a preregistered spanning-tree test and split the squared-distance discrepancy by increasingly flexible fits as a preregistered diagnostic. A preregistered width test then checks whether model width drives the CKA trend, and all three analyses follow the rules of Table~\ref{tab:exp-verdicts}. Unless
stated otherwise, the trend $\mathrm{Tr}$ in this subsection is computed
on $s_{\rm obs}$ minus its permutation mean, not on $s_{\rm cal}$ of
Section~\ref{sec:protocol}. Its $p$-values are exact over the 120
relabelings of the BLOOMZ sizes, and where stated they are Holm-corrected
across four primary tests, i.e., the spanning-tree and width tests at the
two depths.

\paragraph{Spanning Trees.}
The primary evaluation set has $m=2047$ inputs, since the power gate of
the test required $m=3072$, larger than the set of extracted inputs
(Appendix~\ref{app:exp-prereg}). On this set, the forest share of the
shared MST edges exceeds the threshold of $0.70$ in every pair. With the other two conditions also met, the preregistered rule returns
``SOG-global $\approx$ 1-NN'' at both depths, i.e., it classifies MST
overlap as mostly one-nearest-neighbor overlap
(Figure~\ref{fig:exp-mst}). The forest share has medians $0.875$ and
$0.889$, against $0.850$ and $0.852$ expected if the shared edges were a
random subset of the language-model tree, and its minimum over pairs is
$0.76$ and $0.78$. MST overlap exceeds one-nearest-neighbor overlap by a
median factor of $1.20$ and $1.32$ at the two depths, in 129 and 136 of
the 136 pairs (Figure~\ref{fig:exp-neighbors}c). The bound of
Proposition~\ref{prop:mst} holds in all 816 configurations. Its half term $m\,s_{N,1}/2$ attains the maximum of the bound in every
configuration, so these pretrained pairs leave the branch
$m\,s_{N,1}-n^{\rm mut}_A$ of the bound untested. Bridge overlap exceeds
its permutation null in $93\%$ and $100\%$ of pairs, with medians
$0.016$ and $0.035$ against $0.001$. The bridge trend within BLOOMZ is $\mathrm{Tr}=-0.16$ at mid depth and $0.57$ at the last layer
($p=0.058$, Holm-adjusted $0.23$). At the last layer, the rule counts
the bridge trend as absent through a failure to reject. The point estimate $0.57$ exceeds the preregistered bar of $0.5$, but the Holm
correction across the four primary tests requires $p\le0.0125$. On the $m=1024$ inputs of the other
fixed-depth tests, the Holm-corrected outcome is also ``SOG-global $\approx$ 1-NN'' at both depths.
Without the correction, the last-layer bridge trend of $0.72$ ($p=0.017$) on these inputs would change
this outcome to ``genuine global convergence''. Two of
the three conditions of the rule hold nearly automatically. The forest
makes up a median $84$ to $85\%$ of each language-model tree and at least $82\%$ in every pair. This fraction is the forest share expected among the
shared edges under random relabeling, so a random subset of tree edges
would pass the threshold of $0.70$. With $s_{\Tset}$ and $s_{N,1}$
between $0.02$ and $0.05$, their difference stays within $0.03$ almost by
construction. The data add a modest enrichment over the forest fraction,
with forest edges shared at $1.25$ and $1.40$ times the bridge rate
(exploratory). The mean number of shared language-model bridges per
pair is $5.32$ at mid depth, below the power target of 10, and $10.72$ at the last layer. At this layer, the rule returns ``mixed'' when the forest of
the vision model replaces the primary forest of the language model.

\paragraph{Warps and Margins.}
In the preregistered diagnostics, we split the squared-distance
discrepancy into the shares explained by four increasingly flexible
fits, a scale fit, an affine fit, a global monotone fit, and one
monotone fit per row. Each share counts what its fit adds beyond the
previous fits. The shares are in-sample, since each fit is estimated and
scored on the same $m=1024$ inputs. Their medians are $0.681$, $0.190$,
$0.0004$, and $0.029$ at mid depth and $0.683$, $0.148$, $0.0012$, and
$0.034$ at the last layer, with residual shares $0.102$ and $0.123$. The share of the global monotone fit stays below the preregistered level
of $0.10$ in all 136 pairs at both depths. The bound of Equation~\eqref{eq:neighbor-stability} certifies 0
anchors at $k=10$ in all 544 configurations under every variant of the
bound. This count follows from the neighbor gap, which is a median
$0.0017$ and $0.0026$ times twice the largest discrepancy in its row. On these
pretrained pairs, neighbor overlap therefore depends on the typical
crossings of Theorem~\ref{thm:crossing} rather than on the worst-case
margin.

\paragraph{Preprocessing and Families.}
With the clip of PRH and after subtraction of the permutation mean, mKNN
and CKA rise with size at the last layer along BLOOMZ and LLaMA-1,
whereas the scores with OpenLLaMA 7B exceed those with OpenLLaMA 3B for
$12\%$ of the vision models on mKNN and $29\%$ on CKA (exploratory).
With 13B added, the OpenLLaMA trends of $s_{\rm cal}$ at the last layer
are $0.59$ for mKNN and $0.56$ for CKA
(Table~\ref{tab:app-layermax-openllama}), so this difference between
families rests on the step from 3B to 7B. At mid depth, the BLOOMZ
trends with the clip are $0.70$ for CKA (exact one-sided $p=0.117$) and
$0.84$ for mKNN ($p=0.025$). Under plain $\ell_2$ they are $-1.00$
(two-sided $p=0.017$) and $-0.92$ ($p=0.033$), and CKA falls strictly
with size for 17 of the 17 vision models
(Figure~\ref{fig:exp-criteria}b,c). These $p$-values are uncorrected for multiplicity. Each line of Figure~\ref{fig:exp-criteria}b,c is divided by its value at 0.56B. Before this division, the means over vision models range from $0.113$ to $0.130$ for CKA and from $0.034$ to $0.041$ for mKNN with the clip. Under plain $\ell_2$ normalization the ranges are $0.007$ to $0.014$ and $0.018$ to $0.024$. On these models, a size trend thus depends
on the model family, the layer, and the preprocessing together.

\paragraph{Width.}
In the preregistered width test, we project every model to 128 principal
components. With the clip of PRH, this projection changes the
fixed-depth trend of CKA minus its permutation mean along BLOOMZ by
$0.00$ at mid depth and by $-0.07$ at the last layer, from $0.92$
($p=0.017$) to $0.85$ ($p=0.025$). Under plain $\ell_2$ normalization,
the projection leaves both trends unchanged. Width therefore does not drive this trend, in line with
Proposition~\ref{prop:baselines}, which ties the CKA baseline to
effective dimension rather than width. The preregistered rule
returns ``genuine global loss'' at both depths and under both
preprocessings. With the clip of PRH, the claim of this name is false
as worded, since CKA minus its permutation
mean keeps its fixed-depth trend at $0.70$ ($p=0.117$) and $0.92$
($p=0.017$).

\begin{figure}[htbp]
\centering
\includegraphics[width=5.5in]{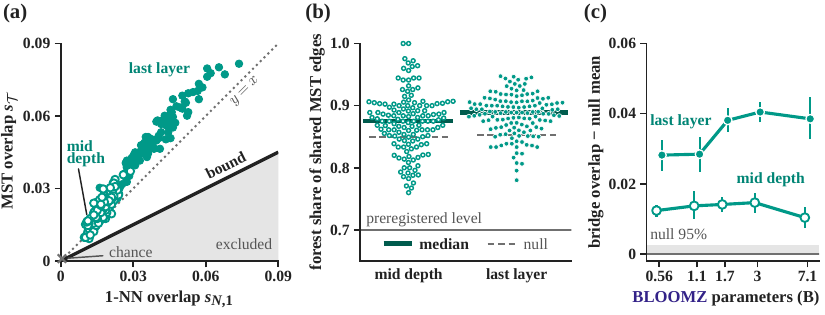}
\caption{\textbf{Spanning-tree overlap mostly depends on the one-nearest-neighbor scale, and bridge overlap exceeds its permutation null without an established size trend (Proposition~\ref{prop:mst}).} Vision--language pairs of WIT at mid depth (open) and the last layer (filled). (a) MST overlap against one-nearest-neighbor overlap for each pair, with chance as a cross. (b) Per pair, the share of the shared MST edges that lie in the nearest-neighbor forest of the language model, with its value for a random subset of tree edges (null). (c) Bridge overlap minus its permutation null mean along BLOOMZ, as means over vision models with $95\%$ intervals. The grey band ends at the median $95\%$ permutation quantile. The experimental record gives the values.}
\label{fig:exp-mst}
\end{figure}

\subsection{Layer Aggregation}
\label{app:exp-layermax}

This subsection gives the details of the layer-maximum experiments of
Section~\ref{sec:exp-aggregation} along BLOOMZ, LLaMA-1, OpenLLaMA, and
Pythia, each family compared with the 17 vision models of ARH on the
1{,}024 WIT pairs. For each vision--language pair we evaluate every layer
pair with $n_\pi=500$ permutations and recompute the maximum inside each permutation, which applies the same relabeling to every layer. The
preregistered experiment applied this protocol of ARH to the 136 pairs
of the vision models with the eight language models of
Appendix~\ref{app:exp-protocol}, in $80{,}960$ layer pairs and under decision rules fixed in a
frozen plan. We derived Corollary~\ref{cor:aggregation} after seeing the
BLOOMZ result, so the threshold analyses on BLOOMZ are post hoc. For
LLaMA-1 13B and 30B, OpenLLaMA 13B, and seven Pythia models, a plan for
each family fixed predictions from single-model features before any
score involving the new models. We compare the LLaMA-1 outcomes with
their predictions without a confirmatory test, and the OpenLLaMA plan
declared every outcome descriptive. Before the download of any Pythia
weight, the Pythia plan fixed a confirmatory test (H1) of a drop of at
least $0.10$ in the layer-max CKA trend under calibration. In each family two independent implementations agree on
every decision, with floating-point differences of at most $10^{-14}$ along BLOOMZ,
LLaMA-1, and OpenLLaMA. When the second implementation
reproduces the frozen LLaMA-1 7B results, the relative deviation
exceeds the strict tolerance of $10^{-12}$ in one value, a debiased-CKA permutation mean with expectation zero. Its absolute deviation of
$6.7\times10^{-18}$ lies within the mixed tolerance of the plan. The
permutation means of biased CKA match the baseline of part~2 of
Proposition~\ref{prop:baselines} with median relative errors between
$0.07\%$ and $0.09\%$ in every family. On the new LLaMA-1 and OpenLLaMA layer pairs, the permutation
means of debiased CKA are consistent with the zero baseline of part~3.
In Figures~\ref{fig:exp-aggregation}d--i
and~\ref{fig:app-aggregation}d--i, every line and marker gives a mean
over the 17 vision models, and the baseline of a language-model layer is its largest baseline over the layers of a vision model. The measured baselines in Figures~\ref{fig:exp-aggregation}g--i and~\ref{fig:app-aggregation}g--i agree with the predicted baselines to a relative error of at most $4.4\times10^{-4}$. Table~\ref{tab:app-layermax} gives the size trends of the four families.

\begin{figure}[htbp]
\centering
\includegraphics[width=5.5in]{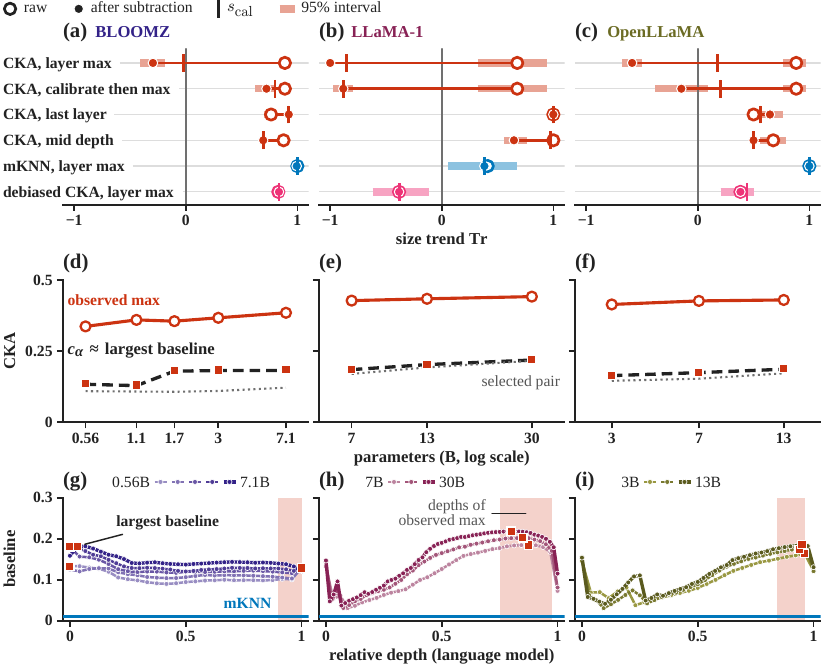}
\caption{\textbf{After a maximum over layers, the calibration threshold follows the largest random-pairing baseline along BLOOMZ, LLaMA-1, and OpenLLaMA (Proposition~\ref{prop:baselines}, Corollary~\ref{cor:aggregation}).} The trends of debiased CKA and the BLOOMZ trends of calibration before the maximum are exploratory. (a--c)~Size trend of each score in its raw form, after subtraction of the permutation mean, and under $s_{\rm cal}$. (d--f) Squares give the threshold $c_\alpha$, dashed lines the largest baseline predicted from features, and dotted lines the threshold of the layer pair attaining the maximum (``selected pair''). (g--i) CKA baseline of each language-model layer, predicted (lines) and measured (dots). Table~\ref{tab:app-layermax} gives the trends.}
\label{fig:app-aggregation}
\end{figure}

\paragraph{BLOOMZ.}
Along BLOOMZ, the layer-max CKA trend of $0.89$ ($p=0.042$) falls to
$-0.29$ after subtraction of the permutation mean ($p=0.68$; $95\%$
interval over vision models $[-0.41,-0.18]$) and to $-0.02$ under
$s_{\rm cal}$ ($[-0.19,0.12]$). After subtraction, the layer-max trends
of mKNN and debiased CKA are $0.99$ ($p=0.008$) and $0.84$ ($p=0.033$,
exploratory), the second with a zero baseline at every layer pair by
part~3 of Proposition~\ref{prop:baselines}. At fixed depth the threshold
rises with size at mid depth and falls at the last layer
($\mathrm{Tr}=0.99$ and $-0.91$). Calibration moves the CKA trend
opposite to the trend of the threshold without a reversal. In post hoc
analyses, the threshold $c_\alpha$ of the maximum lies $0.0017$ to
$0.0028$ above the largest predicted baseline
$\max_{\ell,\ell'}\sqrt{\PR_A\PR_B}/(m-1)$ over the 85 BLOOMZ pairs, as assumed in Corollary~\ref{cor:aggregation}. For every size except 1.1B this
baseline sits in the first two language-model layers, where the
participation ratio jumps from 134 at 0.56B to between 242 and 249 at
1.7B and above. In contrast, 112 of the 136 observed maxima of the
preregistered pairs fall at relative depth $0.9$ or above, so on BLOOMZ the
maximum and its threshold mostly come from different layer pairs. From 0.56B to
7.1B the mean threshold rises from $0.135$ to $0.184$, a gain of $0.048$
equal to that of the observed maximum ($0.336$ to $0.384$) and close to
the $0.049$ of the largest baseline. The threshold of the layer pair attaining the maximum stays between $0.107$ and $0.121$. The
calibrated score therefore loses its upward trend, the outcome allowed
by the corollary when the baselines grow as fast as the score. Two
further analyses support the corollary on BLOOMZ, the first fixed in the
frozen plan and the second post hoc. A subtraction of the threshold of the
layer pair attaining the maximum gives $\mathrm{Tr}=0.76$. A restriction of the maximum to language-model layers at relative depth
$0.5$ or more restores the trend after subtraction to $0.89$
($p=0.017$).

\begin{table}[htbp]
\centering\small
\setlength{\tabcolsep}{3.5pt}
\begin{tabular}{@{}llrrrrrrrr@{}}
\toprule
 & & \multicolumn{3}{c}{layer maximum} & \multicolumn{2}{c}{mid depth} & \multicolumn{2}{c}{last layer} & \multicolumn{1}{c}{calibrate} \\
\cmidrule(lr){3-5}\cmidrule(lr){6-7}\cmidrule(lr){8-9}
Family & Metric & raw & $s_{\rm cal}$ & $s_{\rm obs}-\overline{s}_{\pi}$ & raw & $s_{\rm cal}$ & raw & $s_{\rm cal}$ & then max \\
\midrule
BLOOMZ & CKA & 0.89 & $-$0.02 & $-$0.29 & 0.88 & 0.70 & 0.76 & 0.92 & 0.80 \\
 & mKNN & 1.00 & 1.00 & 0.99 & 0.85 & 0.84 & 0.99 & 0.98 & -- \\
 & debiased CKA & 0.83 & 0.84 & 0.84 & 0.70 & 0.70 & 0.90 & 0.91 & -- \\
\addlinespace[2pt]
LLaMA-1 & CKA & 0.68 & $-$0.85 & $-$1.00 & 1.00 & 0.97 & 1.00 & 1.00 & $-$0.88 \\
 & mKNN & 0.41 & 0.38 & 0.38 & 0.82 & 0.82 & 0.91 & 0.91 & 0.38 \\
 & debiased CKA & $-$0.38 & $-$0.38 & $-$0.38 & 1.00 & 1.00 & 1.00 & 1.00 & $-$0.38 \\
\addlinespace[2pt]
OpenLLaMA & CKA & 0.88 & 0.18 & $-$0.59 & 0.68 & 0.50 & 0.50 & 0.56 & 0.21 \\
 & mKNN & 1.00 & 1.00 & 1.00 & 0.68 & 0.68 & 0.56 & 0.59 & 1.00 \\
 & debiased CKA & 0.38 & 0.44 & 0.38 & 0.50 & 0.50 & 0.59 & 0.53 & 0.38 \\
\addlinespace[2pt]
Pythia & CKA & 0.93 & 0.89 & 0.89 & 0.98 & 0.96 & 0.67 & 0.80 & 0.89 \\
 & mKNN & 1.00 & 1.00 & 1.00 & 0.99 & 0.99 & 0.94 & 0.95 & 1.00 \\
 & debiased CKA & 0.90 & 0.90 & 0.90 & 0.97 & 0.96 & 0.84 & 0.84 & 0.90 \\
\bottomrule
\end{tabular}
\caption{Size trends $\mathrm{Tr}$ along the four families under the clip
and $\ell_2$ normalization of PRH, for the layer maximum, the two fixed depths, and $s_{\rm cal}$ under calibration of each layer pair before the maximum. The
column $s_{\rm obs}-\overline{s}_{\pi}$ gives the trend of $s_{\rm obs}$
minus the mean of its 500 permutation values, and the fixed-depth trends
use the permutations drawn for the layer maximum. The experimental record
gives every exact one-sided $p$. Along Pythia, the predicted drop of at least
$0.10$ in the layer-max CKA trend under calibration (H1) was not
confirmed, and every other Pythia entry and every OpenLLaMA entry are
descriptive. The debiased CKA rows and the BLOOMZ
value of calibration before the maximum are exploratory. A dash marks a trend not reported.}
\label{tab:app-layermax}
\label{tab:app-layermax-llama}
\label{tab:app-layermax-openllama}
\label{tab:app-layermax-pythia}
\end{table}

\paragraph{LLaMA-1.}
Along LLaMA-1, the 7B, 13B, and 30B models give 51 pairs with
$47{,}520$ layer pairs. Computed from single-model participation ratios
before any cross-model score, the largest predicted baselines have means
$0.1847$, $0.2023$, and $0.2182$ and sit at relative depth $0.80$ to
$0.88$, where the observed maximum also lies. The thresholds lie a median $0.0018$ above these baselines
(range $0.0015$ to $0.0021$) and follow their order in 51 of 51
comparisons. From 7B to 30B the threshold of the maximum gains $0.033$
against $0.014$ for the observed maximum. Calibration turns the layer-max CKA trend of $0.68$ into $-0.85$ under $s_{\rm cal}$ ($95\%$
interval $[-1.00,-0.65]$) and into $-1.00$ after subtraction, while
mKNN moves from $0.41$ to $0.38$. The threshold of the layer pair
attaining the maximum lies $0.004$ below the threshold of the maximum at
30B, so the two analyses that restored the trend on BLOOMZ leave the
LLaMA-1 trend reversed. A subtraction of this threshold gives
$\mathrm{Tr}=-0.91$, and a restriction to layers at relative depth $0.5$
or more leaves $s_{\rm cal}$ at $-0.85$. At
fixed depth the calibrated CKA trend keeps its sign, as predicted, at
$0.97$ at mid depth and $1.00$ at the last layer. We had marked this
prediction as at risk, since the mid-depth baselines also rise with size
(means $0.050$, $0.065$, and $0.075$). The raw scores rise faster, from $0.184$ to $0.227$ between 7B and 13B, against a threshold rise from
$0.052$ to $0.067$. Even so, subtraction of the permutation mean lowers
the mid-depth trend from $1.00$ to $0.65$. Calibration of each layer pair before the maximum
reverses the trend to $-0.88$, although we had predicted in advance that
the trend would keep its sign under this protocol. The layer pairs
selected under this protocol stay on the late block, where their own
thresholds rise from $0.146$ to $0.167$ with size. On BLOOMZ, where the
largest baseline sits away from the layers of the maximum, this protocol
keeps the trend, at $0.80$ for $s_{\rm cal}$ and $0.72$ after subtraction. This result is exploratory, since we defined the protocol in the
LLaMA-1 plan after seeing the BLOOMZ result.

\paragraph{OpenLLaMA.}
\phantomsection\label{app:exp-openllama}%
Along OpenLLaMA, the 3B, 7B, and 13B models give 51 pairs, with
$14{,}432$ layer pairs for the new 13B model. The largest participation
ratio over layers grows from 202 at 3B through 229 at 7B to 261 at 13B,
and the largest baseline grows with it for each of the 17 vision models. This baseline has means $0.1636$, $0.1741$, and $0.1860$ and sits on late layers at relative depth $0.94$ to $0.96$. The offsets of the thresholds above this
baseline have a median of $0.0019$ (range $0.0016$ to $0.0023$) and lie
inside the frozen band of $0.0009$ to $0.0029$ for 17 of 17 vision
models. From 3B to 13B the threshold and the largest baseline gain
$0.022$, against $0.016$ for the observed maximum. The CKA trend of
$0.88$ falls to $0.18$ under $s_{\rm cal}$ and to $-0.59$ after
subtraction ($95\%$ interval $[-0.68,-0.50]$). Although the lines of
Figure~\ref{fig:exp-aggregation}f are nearly parallel, the trend after
subtraction is strongly negative, because $\mathrm{Tr}$ averages
per-vision-model rank correlations over three sizes and the small excess
of baseline growth recurs in all 17 vision models. From 7B to 13B the
raw gain falls below the threshold gain for 17 of 17 vision models, with
medians of $0.0018$ against $0.0130$, an outcome outside the predictions
fixed in advance. Given the raw scores, the thresholds forecast from features
yield a predicted order after calibration in 30 of the 34 comparisons involving
OpenLLaMA 13B, all correct. These 30 cover 16 of the 17 orders
reversed by calibration, 10 of the 13 kept orders, and 4 comparisons in
which 13B already scores lower before calibration. As predicted, the mKNN trend stays at $1.00$ under
calibration and the debiased CKA trend moves by $0.06$ or less.
Calibration before the maximum lowers the CKA trend from $0.88$ to
$0.21$, in line with the prediction that this protocol does not raise the trend. At both fixed depths the CKA trend keeps its sign under
calibration (descriptive). With an uncensored mKNN trend of $1.00$, the
decision rule of the plan assigns OpenLLaMA the category ``layer max
loses the CKA trend'', whereas the mKNN trend of $0.38$ left LLaMA-1 in
the category ``intermediate''.

\paragraph{Pythia.}
\phantomsection\label{app:exp-pythia}%
Along the seven Pythia models from 70M to 6.9B, calibration kept the
layer-max CKA trend, from $0.93$ ($p=0.0020$) to $0.89$ ($p=0.0042$)
under $s_{\rm cal}$. The primary preregistered prediction of a drop of at least $0.10$ consistent
across the sizes was not confirmed. The trend changes by $-0.034$ under
calibration ($95\%$ interval over vision models $[-0.046,-0.023]$). The permutation $p$ is $1612/5040$ for the consistency of the drop across sizes, and the bootstrap $p$ is $1.0$ for a drop of at least $0.10$. Since
the plan made the secondary tests H2 to H4 conditional on a confirmed
primary prediction, we did not carry out these tests. To forecast the offset of each threshold above its
predicted baseline, the plan fixes for model $i$ the band
$f(\PR^*_i)\pm0.0008$, where $\PR^*_i$ is the largest participation ratio
over the layers of the language model. The function $f(\PR^*)=0.001069+0.01292/\sqrt{\PR^*}$ is fitted on the 187 known pairs
of the 11 language models of the other families. The values of $\PR^*$
lie between $91.5$ at 70M and $166.6$ at 6.9B, so the bands of 70M and
160M extrapolate below the fitted range, which starts at $124.5$. The
median offsets lie between $0.0020$ and $0.0021$, inside the band of
every model, and the threshold order matches the predicted order in 307
of the 307 comparisons with a material predicted gap. Given the raw
scores, the bands yield a predicted order after calibration in 352 of
the 357 comparisons, all correct, including 11 of the 14 orders reversed
by calibration. The raw score of the layer maximum gains a median
$0.024$ per doubling of parameters against $0.0063$ for the threshold.
With one size or one of the four vision-model families left out, or
without the censoring of $s_{\rm cal}$, the change of the trend stays
between $-0.054$ and $-0.017$. The five sizes from 410M to 6.9B give the
largest drop, $-0.065$ ($p=0.35$), so every subset stays short of the
required $0.10$. The
category returned by the decision rule stays the same for every required
drop from $0.06$ to $0.20$ and changes at $0.03$. The two
implementations agree on every integer, $p$-value, decision, and
category and differ in 163 pair-level and 3 descriptive floating-point
values by at most $9.1\times10^{-14}$, so the preregistered agreement
check failed by its letter. After seeing the category, we accepted this
check with an added absolute tolerance of $10^{-12}$, a post hoc ruling,
and a third implementation reproduces the results of both with 0
mismatches. The plan fixed the wording of
Section~\ref{sec:exp-aggregation} for each outcome before any Pythia
weight was downloaded, and that section reports a subset of these
numbers in a different order, a post hoc choice of wording.

\paragraph{Scope.}
With three sizes, every exact one-sided $p$ of a size trend along
LLaMA-1 or OpenLLaMA is at least $1/6$, so each outcome along these
families is a directional result on one family. Their intervals cover
the choice of vision models but not of language models. On both families
the thresholds follow the predicted order in every comparison, but this
order is the same for each of the 17 vision models, since the vision
model enters the predicted baseline as a common factor. Once the thresholds track the largest baseline, two predictions hold almost by construction, because $s_{\rm cal}$ does not increase with the threshold. These are the OpenLLaMA forecast of the order after calibration and part of the LLaMA-1 prediction that calibration lowers the CKA trend. A failure of these predictions would come
from the raw gain, which fell below the threshold gain for 17 of 17
vision models at each pair of LLaMA-1 sizes (medians $0.006$ against
$0.019$ from 7B to 13B). Joint permutation tests over families are
exploratory. Under calibration, the mean of the BLOOMZ and LLaMA-1
trends falls by $1.22$ for $s_{\rm cal}$ and by $1.43$ after
subtraction ($p=6/720$ and $2/720$). The test on the weakest drop along BLOOMZ, LLaMA-1, and OpenLLaMA gives $p=48/4320$ and $2/4320$. The BLOOMZ result generated the tested hypothesis, and the trend of the largest BLOOMZ baseline fails the materiality rule of the plan. The BLOOMZ and LLaMA-1 results were known before the OpenLLaMA plan. With the BLOOMZ and LLaMA-1 drops held at
their observed values, a rule rejecting at $p\le0.05$ rejects under 2 of the 6
relabelings of the OpenLLaMA sizes, so the three-family value carries
little evidence about OpenLLaMA itself. Depth and width grow with
size along LLaMA-1, whose 30B model was also trained on more tokens
\citep{touvron2023llama}, and along OpenLLaMA, whose three models share
one corpus of 1T tokens \citep{openlm2023openllama}. The seven
Pythia models share their training data and its order
\citep{biderman2023pythia}, and their width grows from 512 to 4096,
whereas their depth does not grow monotonically (16 blocks at 1B against
24 at 410M).

\paragraph{Secondary Preprocessing.}
Under plain $\ell_2$ normalization, our secondary preprocessing, the
largest baseline sits at the embedding layer for all three sizes of
LLaMA-1 and of OpenLLaMA. Before any score, both plans declared this preprocessing not material, since its predicted baselines differ too little between sizes to support a directional prediction. They differ by $0.0023$ between LLaMA-1 13B and 30B, and the participation ratio of $178.45$ at OpenLLaMA 13B lies below the $180.02$ of 7B. The analysis of this
preprocessing is therefore descriptive, as in the Pythia plan. With plain $\ell_2$ normalization,
calibration moves the layer-max CKA trend from $-0.50$ to $-0.56$ on
LLaMA-1, from $-0.26$ to $-0.62$ on OpenLLaMA, and from $0.68$ to
$-0.06$ on Pythia ($p_{\rm perm}=47/5040$). On BLOOMZ the uncensored trend is $-0.09$ against $0.43$ before calibration (post hoc). Along Pythia the raw score gains $0.0015$ per doubling against $0.0049$ for the threshold, in line with the condition of part~2 of Corollary~\ref{cor:aggregation}. The drop rests on 6.9B, without which $p_{\rm perm}$ is $0.12$. On LLaMA-1,
the prediction that the calibrated CKA trend keeps its sign at fixed
depth holds at the last layer and does not apply at mid depth,
where the raw trend is $0.03$. Over all 187 pairs of the 11 language models, the
difference between the drops of CKA and mKNN in the pooled statistic of
ARH is $0.081$ under this preprocessing, with interval $[0.006,0.165]$.
The OpenLLaMA plan excluded this secondary result from the comparison
with ARH (Appendix~\ref{app:exp-published}).

\subsection{Comparison with the Published Results}
\label{app:exp-published}
\label{sec:exp-related}

On the models shared with PRH, ARH, and BPC, our measurements agree with
a central observation of each study. With the preprocessing of PRH,
layer-max mKNN rises with BLOOMZ size ($\mathrm{Tr}=1.00$ before
calibration), in line with the cross-modal trend of
\citet{huh2024platonic}. The loss and reversal of the calibrated
layer-max CKA trend along BLOOMZ, LLaMA-1, and OpenLLaMA are in line with the loss reported by ARH. ARH plots this loss for RBF kernel alignment
in its main figure and reports the same loss for linear CKA, the version
of CKA in our measurements. Along Pythia, whose raw scores outgrow their
baselines, the trend stays at $0.89$ (Appendix~\ref{app:exp-pythia}).
BPC reports that mKNN at $k=10$ for its
primary vision--language pair falls from $0.135$ on $1{,}024$ WIT pairs
to $0.008$ on fifteen million LAION pairs, with the layer pair fixed.
On our COCO evaluation sets, the median vision--language mKNN at $k=10$
falls in the same direction, from $0.091$ to $0.0085$ over a 64-fold
range of $m$. SOG
reports that spanning-tree overlap tracks mKNN with a Spearman
correlation of $0.968$ over 204 pairs \citep{you2026structure}. By
Proposition~\ref{prop:mst}, the MST contains the nearest-neighbor forest
for pairwise distinct distances, and on our models most shared MST edges
are shared nearest-neighbor edges, so this correlation comes at least
partly from the nearest-neighbor forest. On our models, MST overlap and
one-nearest-neighbor overlap have a correlation of $0.96$ over both
depths (exploratory), computed from uncalibrated scores at fixed depth
rather than from the calibrated layer-max scores of SOG.

Our results differ from those of these studies in three respects. First, on our models the pooled statistic of ARH changes less under calibration than on the models of ARH. There, this Pearson correlation between score and language-model rank moves from $0.86$ to $0.45$ for CKA and from
$0.85$ to $0.84$ for mKNN. Over our 136 preregistered pairs it moves
from $0.49$ to $0.43$ for CKA and stays at $0.52$ for mKNN, and the
preregistered decision rule returns ``ARH pattern absent''. With LLaMA-1
13B and 30B added, the difference between the two drops is $0.12$
($95\%$ interval over vision models $[0.003,0.26]$), in the category
``partial''. With OpenLLaMA 13B also added, the 187 pairs give $0.50$ to
$0.40$ and $0.51$ to $0.51$, and the difference of $0.10$ has the interval $[-0.02,0.24]$. This interval contains zero and covers the
boundary $0.10$ of the decision rule, so the direction of the loss reported by ARH is not established on our models. With 187 pairs the category of the rule is not decidable. The corresponding difference
of $0.40$ in ARH lies above our interval, and our models do not include
LLaMA-65B. ARH also reports a loss for unbiased CKA, from $0.65$ to
$-0.01$, whereas our debiased CKA uses the same estimator
(Appendix~\ref{app:exp-criteria}) and moves from $0.37$ to $0.36$ over
the 187 pairs. Its zero baseline by part~3 of
Proposition~\ref{prop:baselines} places debiased CKA under part~1 of Corollary~\ref{cor:aggregation}, by which calibration does not reverse the order of model pairs with well-separated maxima (Section~\ref{sec:calibration}). The source of the difference from ARH remains open. In a secondary analysis along vision-model sizes at the last layer, CKA minus its permutation
mean has $\mathrm{Tr}=-0.04$ ($p=0.55$) against $0.71$ ($p=0.005$) for
mKNN, so the contrast reported by ARH appears there at fixed depth.
Such a loss can come
from baselines of the fixed vision layers that grow with size faster than
the score, as in part~2 of Corollary~\ref{cor:aggregation}, and we leave this explanation untested. Second, ARH attributes the
inflation of layer-max scores to the number of layer pairs
\citep{groger2026revisiting}. In a post hoc analysis of biased CKA, the permutation mean of the maximum exceeds the largest single-pair mean by $6\times10^{-5}$ on average over our 136 pairs and by at most $10^{-5}$ on LLaMA-1. There, the trend of a maximum over 17 layers of each model
stays reversed ($-0.91$ under $s_{\rm cal}$). Over the 136 pairs, the threshold of the maximum exceeds that of the layer pair attaining the maximum by $0.040$, so the
level of the highest-baseline layer pair sets the threshold, rather than
the number of layer pairs (exploratory). Third, the BLOOMZ trends at
mid depth reverse under plain $\ell_2$ normalization
(Appendix~\ref{app:exp-s12}), so their sign depends on the clip of PRH,
and preprocessing belongs to the criterion.

Our controlled runs add two results to this comparison. In the CIFAR-5m
networks with a linear head, the linear map of the head adds an excess
to the geometric discrepancy without lowering agreement on neighbor sets. Weight decay on the head removes this excess
(Appendix~\ref{app:exp-s22}). These results suggest one candidate
mechanism for why representations agree on neighbor sets but not on
global geometry, a question left open by ARH. Under momentum without weight decay, most
of this excess on $G_s$ at $250{,}000$ images comes from a gauge shared
by all runs trained this way (preregistered,
Appendix~\ref{app:exp-s22-v11}). Our contrastive networks
trained with the two objectives also disagree on neighbor sets
(Appendix~\ref{app:exp-s21}), so these networks are not described by the
idealized contrastive learner of PRH, which recovers one kernel up to an
offset.

\subsection{Linear Autoencoder} \label{app:exp-s13a}

In the linear-autoencoder experiment of Section~\ref{sec:exp-gauge}, the inputs $x\in\R^{16}$ are Gaussian with mean zero and covariance $U\Lambda U^\top$, $\Lambda=\mathrm{diag}(4,3,2,1.5,0.2,\dots,0.2)$ with twelve eigenvalues $0.2$, and $U$ a fixed random orthogonal matrix. The latent dimension is $n_h=4$, and $m=256$ probes drawn independently of every training stream form the rows
of $\mathbf X$ and define the probe Gram matrix
$K=\mathbf XE^\top E\mathbf X^\top$ of Theorem~\ref{thm:gauge}. Every
criterion uses the unnormalized features $Ex$ of the probes, without the
unit normalization of Section~\ref{sec:criteria}, and the training loss
uses the second-moment matrix $N^{-1}\sum_ix_ix_i^\top$ of the run's
sample without centering, or $U\Lambda U^\top$ at $N=\infty$. Runs start from $\mathcal N(0,1/\text{fan-in})$, or from entries of scale $0.01$ under small initialization, and train until the relative gap to the closed-form optimum or the relative gradient falls below $10^{-8}$. The weight-decay arms use $\lambda_{\mathrm{wd}}=0.075$. Runs under stochastic gradient descent and Adam use minibatches of 16 for $4\times10^5$ steps at a constant rate and $10^5$ steps of decay and then end with full-batch gradient steps under the same stopping rule. For each training procedure we trained 48 disjoint pairs of runs at each $N\in\{2^8,2^{12},2^{16}\}$ and 500 pairs at $N=\infty$, the population limit. Run $r$ keeps its initialization at every $N$ and draws its own training sample and minibatches at each $N$. The tolerances are $0.05$ for $1-s_N$ at $k=10$, $0.02$ for $G_s$, and $0.01$ for $1-\CKA$ and for the projector distance, and a pair with a diverged run fails every criterion. The experimental record gives the decision table behind Figure~\ref{fig:exp-linae}.

\paragraph{Gradient Flow and Other Optimizers.} The preregistered gradient-descent arm uses a step of $0.05$, which changes the imbalance $C$ of part~1 of Theorem~\ref{thm:gauge} by a median $53\%$. A secondary arm declared before the confirmatory runs uses a step of $0.002$ and conserves $C$ to $0.18\%$, so this arm stands in for gradient flow. In that arm the spectrum of the learned $EE^\top$ equals $f(\mathrm{spec}\,C_0)$ of part~1 to a median relative error of $0.1\%$. The median of $\mathrm{cond}(EE^\top)$ is $4.66$ against $4.67$ predicted by part~1, with 10th and 90th percentiles of $2.98$ and $7.43$ across initializations. At both steps, 0 of 48 pairs of gradient-descent runs at every finite $N$ and 0 of 500 at $N=\infty$ lie within tolerance on 10-nearest-neighbor sets, $G_s$, and CKA. Under weight decay, tied weights, small initialization, and weight decay at the secondary step $0.002$, at least 47 of 48 pairs at $N=2^{12}$ and $2^{16}$ and 488 of 500 at $N=\infty$ lie within tolerance on all four criteria. At $N=2^8$ the count on neighbor sets and $G_s$ is 0 of 48 under every training procedure. Under weight decay, the median condition number of the learned $EE^\top$ is $1.0330$, against $1.0329$ predicted by part~3 of the formal version, a comparison made for Figure~\ref{fig:exp-linae}e outside the preregistered tests. Tied weights, small initialization, and weight decay use the preregistered step of $0.05$, since part~3 holds at any step and small initialization keeps the imbalance $C$ of part~1 near zero. Minibatch stochastic gradient descent and Adam change $C$ during training, outside the premises of the theorem, and the learned $EE^\top$ has intermediate median condition numbers of $1.19$ and $1.54$. Agreement between their runs on neighbor sets, $G_s$, and CKA stays unreliable at every $N$. At $N=\infty$, 164, 12, and 322 of 500 pairs lie within tolerance under stochastic gradient descent and 13, 0, and 172 under Adam. Agreement under the projector criterion is reliable for $N\ge2^{12}$ under every training procedure. Each pair failing at these sizes under weight decay or on the projector criterion contains a run that diverged at step $0.05$. The exceptions are two pairs of runs trained with stochastic gradient descent at $N=\infty$, whose runs diverged in the minibatch phase.

\begin{figure}[htbp]
\centering
\includegraphics[width=5.5in]{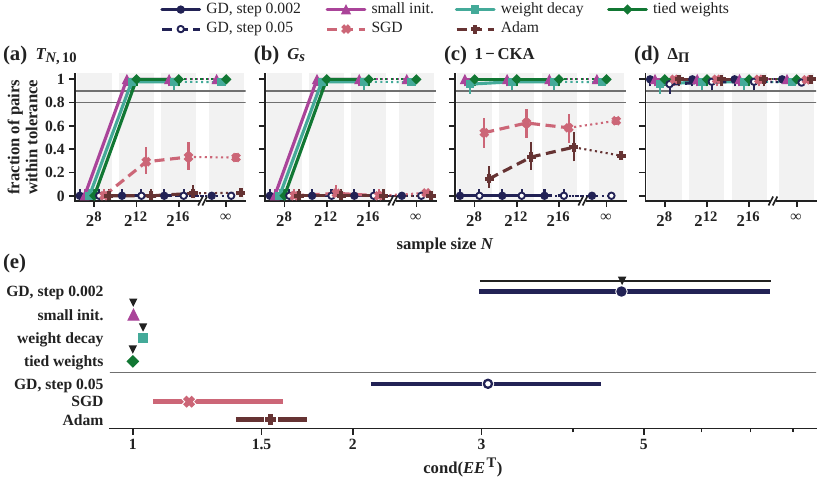}
\caption{\textbf{On the criteria whose outcomes depend on the gauge, agreement between runs trained by gradient descent without a regularizer stays unreliable at every sample size, whereas agreement between runs trained with weight decay, tied weights, or small initialization becomes reliable as $N$ grows (Theorem~\ref{thm:gauge}).} (a--d)~Fraction of disjoint pairs of linear-autoencoder runs within tolerance against $N$, with one-sided $95\%$ Clopper--Pearson bounds and grey decision levels. Dashed lines mark training procedures outside the premises of the theorem, and dotted segments join $N=2^{16}$ to $N=\infty$. The projector criterion~(d) is a control whose outcome is invariant to the gauge. (e)~Condition number of the learned $EE^\top$ at $N=\infty$ as the median and the 10th to 90th percentile of runs, with the predictions of parts~1 and~3 for the training procedures covered by the theorem in black.}
\label{fig:exp-linae}
\end{figure}

\paragraph{Failed Prediction P1.} For gradient descent at step $0.05$, P1 required unreliable agreement on neighbor sets, $G_s$, and CKA at every $N$, and a mean discrepancy at $N=2^{16}$ inside the two-sided $95\%$ interval of the mean at $N=\infty$. The first part holds in 12 of 12 cells. The second part holds for neighbor sets ($0.2787$ against $[0.2699,0.2806]$) and fails for $G_s$ ($0.4287$ against $[0.4072,0.4286]$) and CKA ($0.1496$ against $[0.1319,0.1447]$). The paired differences between the same initializations at $N=2^{16}$ and $N=\infty$ are $8.6\times10^{-4}$, $1.3\times10^{-4}$, and $-1.0\times10^{-4}$, each with a $95\%$ interval containing zero. The plan stated before the runs that this literal test passes with probability about $0.46$ when the two means are equal, because the 48-pair mean is far noisier than the 500-pair interval. For the secondary arm at step $0.002$, the second part fails for all three criteria. P1 remains falsified, and our conclusion that more data leaves agreement between gradient-descent runs unreliable rests on the first part.

\subsection{CIFAR-5m Networks with a Linear Head} \label{app:exp-s22}

In the first network experiment of Section~\ref{sec:exp-gauge}, the trunk is the airbench94 network \citep{jordan2024airbench} on $32\times32$ CIFAR-5m images \citep{nakkiran2021deep} with trunk features $h\in\R^{256}$, followed by the bias-free head $u=Mh$, $y=Wu$ with $u,y\in\R^9$, and logits equal to $y/9$ mapped by a fixed $10\times9$ matrix with orthonormal columns. Each run follows the airbench94 recipe (Nesterov momentum $0.85$, label smoothing $0.2$, flips and translations) for 488 steps of 1024 images. Three arms differ in the head alone, without weight decay on $M$ and $W$ (noWD), with weight decay on both (WD), or with a tenfold smaller initialization of both (small init). Each arm was trained on $N=25{,}000$ and $250{,}000$ images at a fixed budget of $500{,}000$ images seen, in 192 runs per arm and size, which gives 96 disjoint pairs. Each run draws its own training subset from five million images, so two subsets overlap by $0.5\%$ at $25{,}000$ and $5\%$ at $250{,}000$ in expectation. The two runs of a pair differ in initialization, training subset, data order, and augmentation. The tolerance of each criterion is the largest discrepancy between the outputs $y$ over 24 pairs of pilot runs at $N=100{,}000$, eight per arm. We judge $u$ against the tolerance of $y$ and compare features without normalization. The experimental record gives the tolerances and the full tables of this experiment.

\paragraph{Preregistered Decisions and Failed Predictions.} At $N=250{,}000$, weight decay makes $u$ agree in 96 of 96 pairs under all nine criteria, as predicted by part~3 of Theorem~\ref{thm:gauge}. At this size, $y$ agrees in 96 of 96 pairs under every criterion in every arm. Without weight decay, $u$ agrees on 10-nearest-neighbor sets in 96 of 96 pairs, and this result meets the preregistered falsification condition of prediction P1, which stated that this agreement would be unreliable at both sizes. The neighbor criterion is loose here, since the largest discrepancy of the pilot outputs sets its tolerance at $0.84$ on $1-s_N$. Agreement of $u$ fails instead on the scale-normalized criteria, where it is unreliable under $G_s$ at 65 of 96 pairs (Clopper--Pearson bounds $0.590$ and $0.756$) and inconclusive under $1-\CKA$ at 73 of 96. The mean $G_s$ discrepancy of $u$ is $0.2050$, just below the tolerance $\varepsilon=0.2073$ set at $N=100{,}000$, and a tolerance $4.7\%$ larger would make the cell reliable (Figure~\ref{fig:exp-gauge}b). The preregistered secondary tolerance is the $0.9$ quantile of the pilot discrepancies ($0.2030$ on $G_s$ and $0.0810$ on $1-\CKA$). At $250{,}000$, 39 and 45 of 96 pairs of $u$ lie within this tolerance on $G_s$ and $1-\CKA$ without weight decay, 96 and 96 with weight decay, and 58 and 79 under small initialization. In the second network experiment, with tolerances from the same arm and size, 0 and 5 of 96 pairs of $u$ lie within tolerance on $G_s$ under momentum without weight decay. We report prediction A3 on this agreement as tolerance-sensitive, without a categorical claim (Appendix~\ref{app:exp-s22-v11}).

Under a tenfold smaller initialization, 82 of 96 pairs of $u$ lie within tolerance on $G_s$ (inconclusive, lower bound $0.781$) and 86 of 96 on $1-\CKA$ (inconclusive), so prediction P3 failed in this arm. Prediction P5 stated that the fractions of agreeing pairs under small initialization and under weight decay differ by at most $0.1$, and it failed on $G_s$ at $250{,}000$, where these fractions differ by $0.146$. The other three comparisons of P5 are 0 against 0 or 96 against 96 pairs. At $N=25{,}000$, 0 of 96 pairs lie within tolerance for $u$ and $y$ on every criterion under each of the three training procedures with independent training subsets, since the tolerance came from pilot runs at $N=100{,}000$. The $25{,}000$ parts of P1, P3, and P5 hold or fail vacuously, whereas with same-size tolerances the second network experiment finds agreement of $u$ under weight decay reliable at both sizes.

\paragraph{Optimizer, Projector, and Trunk.} These runs fall in the case of Remark~\ref{rem:optimizers}, in which finite steps and momentum change $C$ during training. Stochastic gradient descent with momentum raised the median imbalance $\norm{MM^\top-W^\top W}_F$ of the head about ninefold at $N=250{,}000$, from $0.98$ to $8.59$, and about seven-hundredfold from the small initialization, from $0.0098$ to $6.92$. Under weight decay this imbalance fell to $0.43$. We divide the final imbalance by $\norm{MM^\top}_F+\norm{W^\top W}_F$ to obtain the relative imbalance, whose median at $250{,}000$ is $0.31$ without weight decay, $0.040$ with weight decay, and $0.28$ under small initialization. Under momentum the imbalance arises during training rather than from the initialization, since in the second network experiment the median cosine between the final and the initial $C$ is $-0.017$ and $-0.005$ (Appendix~\ref{app:exp-s22-v11}). Agreement of $u$ under the projector criterion is reliable in every arm at $N=250{,}000$, because $\Pi_u=\Pi_y$ (largest deviation $4.9\times10^{-8}$ over 1{,}344 runs) makes this criterion compare the column spaces of $y$. The mean $s_N(10)$ of the trunk features differs across arms by at most $0.011$, as predicted by P4. At $250{,}000$, agreement of $h$ and of its $\ell_2$-normalized version is reliable under every criterion in every arm except for $h$ on $1-\CKA$ under small initialization (91 of 96, inconclusive). At $25{,}000$ with independent training subsets, agreement of $h$ is unreliable on every criterion except $G_s$ under weight decay, whose reliable decision comes from a tolerance set by one pilot pair trained with small initialization. In an exploratory cell whose runs share four training corpora, with 24 dependent pairs per corpus, every mean discrepancy at $25{,}000$ fell relative to independent subsets, consistent with the distinction between independent and shared corpora in Appendix~\ref{app:criteria-supp}.

\begin{figure}[htbp] \centering \includegraphics[width=5.5in]{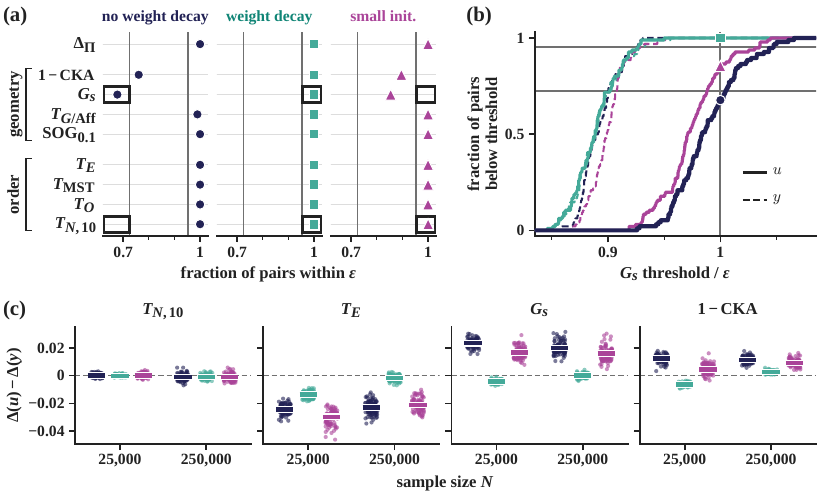} \caption{\textbf{Without weight decay, the gauge of a linear head raises the discrepancy of the head coordinates $u$ above that of the outputs $y$ on scale-normalized geometry but not on neighbor sets, and weight decay makes agreement of $u$ reliable under every criterion at $N=250{,}000$ (Theorem~\ref{thm:gauge}, Remark~\ref{rem:optimizers}).} (a)~Fraction of disjoint pairs of CIFAR-5m runs at $N=250{,}000$ whose discrepancy in the head coordinates $u$ lies within the tolerance $\varepsilon$ fixed in advance. Black boxes mark the ranges predicted in the preregistration, so a marker outside its box shows a failed prediction. (b)~Fraction of pairs whose $G_s$ discrepancy lies below a threshold, against this threshold over $\varepsilon$, for $u$ (solid) and the outputs $y$ (dashed). Grey lines at the fractions $69/96$ and $92/96$ in (a) and (b) mark the decision levels, and the vertical line in (b) marks $\varepsilon$. (c)~Excess $\Delta(u)-\Delta(y)$ of the discrepancy of $u$ over that of $y$ per pair of runs at both sample sizes, with median bars and the training procedures ordered and colored as in (a). The fractions in (b) and the excesses in (c) are exploratory.} \label{fig:exp-gauge} \end{figure}

\paragraph{Exploratory Analyses.} We chose the comparison of $u$ with $y$ within each pair and its split into a common gauge and a mismatch after seeing the preregistered decisions. Without weight decay, the $G_s$ discrepancy of $u$ exceeds that of $y$ in 96 of 96 pairs at both sizes, by a median $0.023$ at $25{,}000$ and $0.020$ at $250{,}000$ (Figure~\ref{fig:exp-gauge}c). Weight decay removes the $G_s$ excess ($-0.0001$ at $250{,}000$), whereas small initialization keeps most of this excess ($0.016$). On $T_{N,10}$ the median excess lies between $-0.0011$ and $0.0003$ in every cell, pilot pairs included. When the gauge of one run is applied to the outputs of both runs of a pair, this common gauge produces about three quarters of the $G_s$ excess without weight decay at $250{,}000$ ($0.0148$ of $0.0200$). The output discrepancy plus the remaining mismatch part lies within the tolerance of this experiment in 96 of 96 pairs in every cell at $250{,}000$. The preregistered decision on $u$ therefore depends both on a gauge shared by all runs and on the run-to-run mismatch addressed by Theorem~\ref{thm:gauge}, and the second network experiment separates these two parts (Appendix~\ref{app:exp-s22-v11}). Order criteria stronger than the neighbor sets can detect a gauge. On $T_E$ the pairs of $u$ agree more closely than the pairs of $y$ (median excess $-0.023$ at $250{,}000$, 96 of 96 pairs), a difference predicted by neither Theorem~\ref{thm:gauge} nor Corollary~\ref{cor:gauge-neighbors}.

\paragraph{Neighbor Sets under the Gauge.} We formulated Corollary~\ref{cor:gauge-neighbors} after the results of the first network experiment, and the second network experiment tests this corollary prospectively (Appendix~\ref{app:exp-s22-v11}). Since the corollary is proved, we count the anchors certified by its ratio condition in the stored runs to check our code and to measure the coverage of the certificate, not to confirm a prediction (exploratory). In the linear autoencoder at $N=\infty$, where both runs of a pair minimize the same population loss, every certified anchor kept its neighbor set at $k=1$ and $k=10$ in 3{,}972 pairs from eight training procedures. At $k=30$, two violations come from one anchor of one tied-weights pair, whose saved encoders leave the span of $\hat U$ by a relative $6\times10^{-5}$. Both disappear when the distances are computed from the projected minimizer. At finite $N$, where the two runs of a pair see different samples and lie outside the premises of the corollary, certified anchors change, for instance 6{,}877 of 12{,}288 for tied weights at $N=2^8$. The certificate covers at least $94\%$ of the anchors at $k=10$ under weight decay, tied weights, and small initialization, where $\mathrm{cond}_{AB}-1<0.008$. Under gradient descent at step $0.002$, where the median $\mathrm{cond}_{AB}$ is $8.9$, the certificate covers 0 anchors at $k=10$, and $99\%$ of the 10-nearest-neighbor sets differ between the two runs of a pair. In the linear autoencoder, the gauge prevents agreement on neighbor sets together with agreement under CKA, and 0 of the 3{,}972 pairs met the neighbor tolerance while missing the CKA tolerance. The two bounds on $\mathrm{cond}_{AB}$ through the initial imbalance in the proof of the corollary hold in 500 of 500 pairs of the step-$0.002$ arm. Outside gradient flow, both bounds can fail, as permitted by Remark~\ref{rem:optimizers}. Under gradient descent from a small initialization they fail in 281 and 90 of 500 pairs, by at most $0.007$. From the default initialization at step $0.05$, the first bound fails in 1 of 486 pairs.

In the CIFAR-5m networks, which lie outside the premises of Theorem~\ref{thm:gauge}, the median $\mathrm{cond}_{AB}$ lies between $1.14$ and $1.45$ across cells. The certificate covers at most $5\%$ of the anchors at $k=10$, with 0 violations, the count implied by construction. An exchange of the head gauge alone moves 2 to $6\%$ of the 10-nearest-neighbor slots of one run, but without weight decay at $250{,}000$ the outputs of two runs share a mean of $0.19$ of their 10 nearest neighbors. Added to this difference between the learned functions, the gauge changes the mean cross-run discrepancy on neighbor sets by at most $0.0002$ in every cell. The decisions of this network on neighbor sets therefore reflect the learned functions and are not evidence for the corollary, and the same holds for the failed prediction P1.

\subsubsection{Second Experiment: Head Flow and Same-size Tolerances} \label{app:exp-s22-v11}

\paragraph{Design.} The second network experiment trains fresh runs of the network above, with new seeds for every source of randomness and the head computed in float64. It separates a gauge shared by the runs of an arm from the run-to-run gauge. Five core arms have 96 main pairs per size and 96 pilot pairs (64 for FLOW). MOM trains the head with the Nesterov momentum of the trunk, as in the arm without weight decay above, and WD adds weight decay on the head. FLOW, BAL-FLOW, and SMALL-FLOW remove the head from the optimizer, while the trunk keeps its momentum. After each trunk step, these arms hold the trunk features fixed and integrate gradient flow of the head on the minibatch loss over the time of one momentum-equivalent step (the learning rate of the trunk divided by $1-0.85$). The integration uses an adaptive Heun--Euler integrator of relative tolerance $10^{-4}$. These three arms start from the default, a balanced ($C_0=0$), and a tenfold smaller initialization. Three secondary arms with 32 main and 16 pilot pairs per size train the head with momentum from the balanced and the small initialization (BAL-MOM, SMALL-MOM) or with one plain Euler step per minibatch (SGD-K). At $N=250{,}000$, MOM-BF16 repeats MOM in 32 runs with the bfloat16 head of the first experiment, and KS-4 and KS-16 split the Euler step into 4 and 16 substeps in 16 runs each. We estimate the drift of $C$ from these three cells. The balanced initialization gives $W$ a condition number near $1.34$ against about $26$ for the default, and SMALL-FLOW controls for this confound by keeping the default condition number at one tenth of the scale.

\paragraph{Tolerances and Decompositions.} Each tolerance is the largest discrepancy of $y$ over the pilot pairs of the same arm and size, and $u$ is judged against this tolerance. As a sensitivity check, a pooled tolerance takes the largest discrepancy over the pilot pairs of all arms that pass the accuracy gate at that size. A decision entry that is reliable under one of the two tolerances and unreliable under the other is reported as tolerance-sensitive, without a categorical claim. A decision on $u$ requires reliable agreement of $y$ in the same cell and is otherwise inconclusive. Per pair and criterion, the excess $\Delta(u)-\Delta(y)$ splits in two ways. The common-gauge part is the excess of a counterfactual that gives every run of the arm one common gauge, and the mismatch part is the rest. This common gauge is the log-Euclidean mean of the determinant-normalized metrics $(WW^\top)^{-1}$ of the pilot runs. The balanced part is the excess of the coordinates $(WMM^\top W^\top)^{-1/4}y$ of the balanced head computing the same outputs, and the imbalance part is the rest. Its run-to-run component equals the mismatch part minus the mismatch part of the balanced coordinates. The distortion caused by the imbalance is the value of $\mathrm{cond}_{AB}$ between the coordinates $u$ of a run and those of its balanced head. Every part is given in standard deviations (SD) of $\Delta(y)$ over the main pairs of the cell.

\paragraph{Order, Gates, and Checks.} The preregistration and all training, gate, and analysis code were frozen before any engineering, pilot, or main run. The tolerances were frozen from pilot pairs alone during the training of the main runs, before the first main measurement. All 4{,}352 runs completed, with 0 runs flagged as failed or diverged and 0 retries. We divide $\norm{C_{\mathrm{end}}-C_0}_F$ by $\norm{MM^\top}_F+\norm{W^\top W}_F$ at the end of training to obtain the relative drift of $C$. The pilot runs passed every gate, including the gate on the median relative imbalance ($0.273$ and $0.314$ under MOM, $0.030$ and $0.040$ under WD). The main runs passed every check, with a largest relative drift of $C$ under head flow of $2.75\times10^{-5}$ against a limit of $2\times10^{-4}$. Two preregistered bounds on the distortion held in the 2{,}337 of 2{,}368 runs where the first bound is non-vacuous and in all 1{,}152 runs of the flow arms. A third bound held in every pair, and the certificate of Corollary~\ref{cor:gauge-neighbors} had 0 violations, the count implied by construction. An independent recomputation reproduced every tolerance, all 704 decision entries, and all 880 means of the decompositions. We wrote a second recomputation without reading the analysis code or the first, and it reproduced every preregistered decision and every estimate up to Monte Carlo error.

\paragraph{Estimates.} Under FLOW, whose $C$ is conserved, the final gauge lies closer to the run's own initial gauge than to that of another run in 95 and 102 of 192 runs, and both $95\%$ intervals contain $0.5$. The first experiment's conclusion about this memory rested on the same statistic ($46\%$ to $52\%$). As this statistic stays at chance even when $C$ is conserved, this conclusion now rests on the cosine between the final and the initial $C$ under momentum (C4a in Table~\ref{tab:exp-verdicts}). Under SGD-K, a plain Euler step of the same size changes $C$ slightly more than momentum, with a median relative drift of $0.343$ and $0.376$ against $0.276$ and $0.314$ under MOM and $2.4\times10^{-5}$ and $2.5\times10^{-5}$ under FLOW. At $250{,}000$ the drift under momentum is $0.76$ times the drift of a plain step extrapolated from KS-4 and KS-16 ($0.137$ and $0.028$). This ratio lies below the preregistered factor of 3 for naming momentum the main cause, so Remark~\ref{rem:optimizers} names both finite steps and momentum as causes of the change of $C$. The bfloat16 head gives a median relative imbalance of $0.3121$ against $0.3119$ in float64, which rules out bfloat16 precision as the cause of the ninefold growth of the imbalance in the first experiment.
\begin{figure}[htbp]
\centering
\includegraphics[width=5.5in]{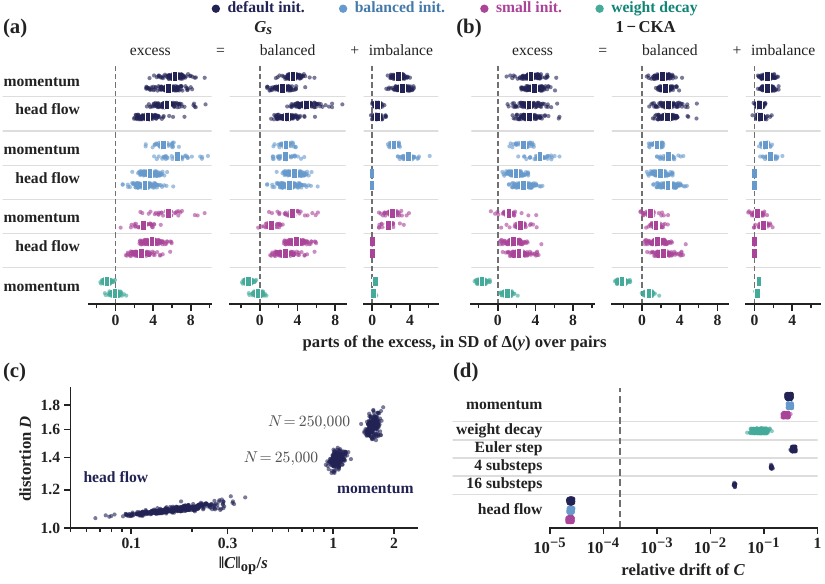}
\caption{\textbf{The initial imbalance kept by head flow contributes a smaller part of the excess discrepancy of the head coordinates than the imbalance created by momentum, and weight decay removes the excess on $G_s$ (Theorem~\ref{thm:gauge}, Remark~\ref{rem:optimizers}).} (a, b)~Excess $\Delta(u)-\Delta(y)$ of each pair of runs in the second network experiment and its balanced and imbalance parts. Dots are pairs, bars means, and each row shows $N=25{,}000$ above $N=250{,}000$. (c)~Distortion $D$ caused by the imbalance in each run against the imbalance relative to the smallest singular value $s$ of the head, initial under head flow and final under momentum (exploratory). (d)~Relative drift of $C$ per run against the limit set in advance for head flow (dashed). Appendix~\ref{app:exp-s22-v11} gives the tests.}
\label{fig:app-s22v11}
\end{figure}

Relative to the balanced start, the default start raises the imbalance part more under head flow than under momentum, with every difference-in-differences interval above zero. The distortion caused by the imbalance rises with the imbalance relative to the smallest singular value of the head (exploratory, Figure~\ref{fig:app-s22v11}c). Its Spearman correlation with this ratio is $0.89$ and $0.73$ under FLOW for the initial imbalance and $0.33$ and $0.32$ under MOM for the final imbalance (unadjusted $p<10^{-5}$). Under head flow the imbalance part is $0.19$ and $0.18$ times that under momentum on $G_s$ at the two sizes and $0.39$ and $0.48$ times on $1-\CKA$. Weight decay removes the excess on $G_s$, at $-0.91$ and $-0.04$ SD, whereas on $1-\CKA$ the excess under weight decay is $-1.65$ SD at $25{,}000$ and $+1.02$ SD at $250{,}000$, with all 96 pairs above zero (Figure~\ref{fig:app-s22v11}). Under MOM at $250{,}000$, the common-gauge part makes up $0.71$ of the excess on $G_s$. In this cell, agreement under the output discrepancy plus the mismatch part is inconclusive at 78 of 96 pairs, with a mismatch part of $1.63$ SD ($95\%$ interval $[1.46,1.80]$). Agreement of $u$ under the tolerance of the common-gauge counterfactual is inconclusive at 90 of 96. On $T_{N,10}$, agreement of $u$ is reliable in every cell, but prediction A2 fails in 4 of 10 cells, all at $250{,}000$. In these cells the mean discrepancy of $u$ lies $0.0017$ to $0.0023$ below that of $y$, and the $95\%$ interval of this difference leaves the band of $\pm0.002$ required by A2.

\paragraph{Sensitivity to the Pooled Tolerance and a Collapsed Pilot Run.} Under MOM with same-size tolerances, 0 and 5 of 96 pairs of $u$ lie within tolerance on $G_s$, and 1 and 21 of 96 pairs of its common-gauge counterfactual lie within this tolerance. Under the pooled tolerance, which is $7.8\%$ larger, 95 of 96 pairs of this counterfactual lie within tolerance at $250{,}000$, so we report the outcome of prediction A3 as tolerance-sensitive. At $250{,}000$, the tolerance-sensitive outcomes of A3 and A7 rest on one pilot pair of SMALL-FLOW, which sets the pooled tolerance on $G_s$ ($0.2098$), $1-\CKA$, and $T_{N,10}$. Its $G_s$ discrepancy lies $5.7$ SD above the main pairs of the arm, although both runs are healthy (accuracy about $0.897$). At the next pooled value on $G_s$ ($0.2017$), 79 of 96 pairs of the common-gauge counterfactual of MOM and 72 of 96 pairs of $u$ under BAL-FLOW lie within tolerance, and both decisions are inconclusive. The same pair sets the tolerance of SMALL-FLOW at $250{,}000$, where 93 of 96 pairs of $u$ lie within tolerance on $G_s$. This count falls to 52 of 96 at the second-largest pilot value of the arm and to 22 of 96 under the $0.9$-quantile tolerance. We therefore do not read this cell as a selection of the gauge by small initialization, and prediction A7(a) fails in both arms (Table~\ref{tab:exp-verdicts}). One SGD-K pilot run at $250{,}000$ collapsed without triggering the divergence flag (accuracy $0.089$, condition number of $W$ near $2.1\times10^5$). This run alone makes the tolerances of its cell vacuous ($1.17$ on $G_s$ against $0.20$ at the $0.9$ quantile) and removes the cell from the pooled tolerance through the accuracy flag. The primary endpoints exclude this cell, and the estimates of the drift of $C$ use main runs, whose final condition number of $W$ is at most $2.5$.

\subsection{Contrastive Objectives in the Finite Gibbs Model}
\label{app:exp-s23}

Inside the premises of Theorem~\ref{thm:link}, all nine confirmatory
predictions held after Holm correction
(Table~\ref{tab:exp-verdicts}), and by the preregistered rule the
pattern of the theorem is present on the order criteria.

\paragraph{Design.}
The model has $|\Xcal|=256$ inputs with uniform masses, in 16 clusters
of 16 points grouped into 4 superclusters, and the temperature is
$t=0.3$. Its $K_\star$ is built from a Gaussian kernel on fixed
coordinates in $\R^8$ and adjusted so that the masses are balanced, with
smallest eigenvalue $0.129$ and pairwise distinct off-diagonal
distances. Learners A and B are those of the formal version of Theorem~\ref{thm:link} (Appendix~\ref{app:learning-link}). Each run
draws its own $N$ examples as exact multinomial counts, and each example
has one negative ($J=1$). Both learners return minimizers with numerical certificates of optimality, and a
run whose certificate fails after one re-solve is kept and counted
against every test that uses the run. We run each learner 192 times at
each of the sample sizes $2^{20}$, $2^{22}$, $2^{25}$, $2^{28}$,
$2^{31}$, $2^{34}$, $2^{37}$, and $2^{40}$, in 96 disjoint blocks of two
runs of each learner. Exact criteria compare the order of all distances
among eight probe inputs, one in each of eight clusters, and the
criteria of Appendix~\ref{app:exp-criteria} compare all 256 inputs. The tolerance of the largest difference of mean-normalized distances and of the isotonic discrepancy $\inf_g\norm{g(D_A)-D_B/\psi(D_B)}_{\max}$, which is that of part~4 of the formal version of Theorem~\ref{thm:link} in units of $\psi(D_B)$, is half the limit $\varepsilon_\psi=0.3477$ of part~3 of the formal version, with $\psi$ the mean off-diagonal distance. These criteria, CKA, and $\varepsilon_\psi$ take all 256 inputs as the probes of Theorem~\ref{thm:link}, and the exact criteria take the eight probe inputs. Before the first training sample, the
frozen code computed the Gaussian limits of both learners, with that of
InfoNCE given by part~1 of Theorem~\ref{thm:peer}. From these limits, a
Monte Carlo simulation predicted that the share of pairs in exact
agreement across objectives on the order of the probes reaches $90\%$ at
$N^*=2^{33.75}$ examples, so the grid is centered on $2^{34}$. A secondary arm pairs
learner A with InfoNCE at temperature $t/2$ at $2^{20}$ and $2^{22}$
examples, and positive controls and two planted breaks of the order
check the pipeline.

\paragraph{Results.}
Exact agreement across objectives on the order of the probes held in 0
of 96 pairs up to $2^{28}$ examples, 7 of 96 at $2^{31}$, 94 of 96 at
$2^{34}$, and 96 of 96 at $2^{37}$ and $2^{40}$. The predicted shares
were $0.073$ at $2^{31}$ and $0.944$ at $2^{34}$, and at $2^{40}$ the
lower confidence bound of the share of pairs in exact agreement is
$0.969$.
From $2^{34}$ to $2^{40}$, the recovery errors of learners A and B fell
by ratios of $0.124$ $[0.122,0.126]$ and $0.126$ $[0.121,0.130]$, below
the preregistered bound of $0.35$ and close to $0.125$ at the rate
$N^{-1/2}$.
Across objectives the largest difference of mean-normalized distances
fell from a median $0.519$ at $2^{20}$ to $0.348$ at $2^{40}$, and 0 of
96 pairs came within the tolerance of $0.174$ at any size. At $2^{40}$,
all 96 pairs within each objective met this tolerance and agreed exactly
on the order of the probes. With the link
undone on the spectral run, the median difference fell from $0.606$ at
$2^{20}$ to $0.00497$ at $2^{34}$ and $0.000618$ at $2^{40}$, against
$0.00058$ and $0.00043$ within the two objectives. Its ratio from $2^{34}$ to $2^{40}$ is $0.124$ $[0.121,0.127]$. Intervals are $95\%$
block-bootstrap intervals over the 96 blocks. Under the isotonic
discrepancy, every pair across objectives came within the tolerance at
$2^{40}$. At this size, on neighbor sets, per-anchor rankings, and the global
order, the mean discrepancy across objectives stayed below the sum of the
two mean discrepancies within objectives, as predicted from the finite
model. Its ratio to the mean of these two was $0.97$ to $1.00$ on these
criteria, against $1.11$ on spanning trees (secondary). On $G_s$ and
$1-\CKA$, the mean discrepancy across objectives exceeded that sum by
$0.113$ and $0.141$, and $1-\CKA$ across objectives on all 256 inputs reached $0.14135$, within $0.01$ of its limit $0.1414$. On the positive
controls, a run and the link of the theorem applied to its own Gram
matrix agreed on every order criterion, as did a run and its pure rescaling. The pipeline detected both planted breaks of the order. At $2^{20}$, the mean
$s_N(10)$ was $0.806$ and $0.735$ within the two objectives and $0.763$
across them, against $0.715$ within InfoNCE in the networks of
Appendix~\ref{app:exp-s21}. At this size the mean discrepancy across
objectives on every order criterion also stayed below the sum of the two
mean discrepancies within objectives, as predicted from the finite model
(descriptive). Its ratio to the mean of these two was $1.01$ to $1.03$ on
these criteria and $1.14$ on $\Delta_{\rm iso}$, against $3.2$ in the
networks. The
secondary arm compares the temperatures $t$ and $t/2$ at $2^{20}$ and
$2^{22}$, where the InfoNCE minimizers lie on the boundary of the
positive semidefinite cone, outside the rescaling case of Remark~\ref{rem:temperature}.
Its mean discrepancy across temperatures exceeded that within
each temperature by at most $6.1\%$ on every criterion (descriptive). In
Figure~\ref{fig:exp-link}a and~c, the grey band runs between the values
within InfoNCE and within the spectral loss.
Figure~\ref{fig:exp-link}b shares the axes and color scale of Figure~\ref{fig:app-s21-net}b. It shows the pair at $2^{40}$ examples ranking 48th of 96 by increasing difference after undoing the link, a
choice fixed before plotting.

\paragraph{Secondary Predictions and Certificates.}
Two secondary predictions held, the Gaussian limit of part~1 of
Theorem~\ref{thm:peer} in 192 of 192 checks (P1b) and the transition of
the agreement within InfoNCE (P2b). At $2^{40}$, the Monte Carlo simulation
predicted the ratio of mean discrepancies to be $0.557$ on neighbor sets and $0$ on spanning trees, against the observed
$0.973$ and $1.111$ (P5b). Both predictions fall outside the factor of $1.5$ required
by the plan, so P5b failed. We attribute the failure on neighbor sets to
the Monte Carlo error of 48 simulated blocks. Of the 3{,}456 main solves, 44 narrowly missed
the tolerance $10^{-6}$ of their optimality certificate, and we kept and
flagged all 44. Thirty-nine are InfoNCE runs at $2^{25}$, a size without
a test, and five belong to the secondary arm at $2^{22}$, where counting
them against its descriptive conclusion leaves the conclusion unchanged.

\paragraph{Amendments and Checks.}
The preregistration and the 37 code files of this experiment were frozen
before any computation, and the experiment ran under four
amendments.
Under amendment A1, we resumed the experiment on a free GPU after
another job had slowed the pilot solves about tenfold, and we excluded
the slowed timings from the frozen projection of GPU time. Amendment A2 removed the
limits of $11.5$ and 12 GPU-hours that we had set ourselves in the
preregistration, after the main phase stopped at the first limit with
156 runs of the secondary arm left. Amendments A3 and A4 fixed two
programming errors in the frozen analysis code, a cache that lost its
newest matrix on overflow and a division by a Monte Carlo prediction
equal to zero. We wrote and froze both amendments during the analysis
run, and the preregistration allows such code fixes if they precede the
opening of the affected data. The file metadata do not
show whether the 418 pair caches written before the A4 freeze were
opened before this freeze. Every decision is independent of this
question, since A3 leaves every computed value unchanged and A4 defines
one secondary boolean outside every decision. A recomputation written without importing
any code of the experiment reproduced every decision and every frozen
value other than the bootstrap intervals to within $1.2\times10^{-7}$.
The creation times of 53 files confirmed the order of the experiment.

\paragraph{Compute and Scope.}
The InfoNCE minimizers lie on the boundary of the positive semidefinite
cone up to $2^{25}$ examples and in its interior at every larger size.
At $2^{25}$ the frozen
projection therefore underestimated the GPU time of the solves about
fivefold. The experiment used 12.71 GPU-hours, 2.71 of them for the
Gaussian limits, against an estimate of about 8, while learner B and the
analysis ran on the CPU. One preregistered gate used the rescaling
identity of Remark~\ref{rem:temperature} as a check of the solver rather
than as a test of the remark. Its residual was at most $1.1\times10^{-16}$ on the 32 samples at $2^{34}$ and $2^{40}$ that satisfy the premises of the remark. The preregistration also fixes a second stage that removes one
premise of the theorem at a time, and we have not run this stage.

\subsection{Contrastive Networks}
\label{app:exp-s21}

In our preregistration, we extended the prediction of
Theorem~\ref{thm:link} to CIFAR-5m networks, where learners trained with
InfoNCE and with the spectral contrastive loss would learn one relation
through different increasing links. Their representations would then
agree on every order criterion and disagree on scale-normalized
geometry. The trunk is
the airbench94 network \citep{jordan2024airbench} with features
$h\in\R^{256}$, followed by a projection head to a 128-dimensional
embedding. InfoNCE uses a cosine critic at temperature $0.2$ (arm A) and $0.5$ (arm A$'$). The spectral contrastive loss (arm B) uses the
ball projection of \citet{haochen2021provable} with unit parameter, a
deviation recorded in Appendix~\ref{app:exp-prereg}. All arms use
stochastic gradient descent with Nesterov momentum, weight decay
$5\times10^{-4}$, and batches of 512 images with two views each. We
train 64 runs of each arm on $N=50{,}000$ and $500{,}000$ images at a
budget of five million images seen, so $50{,}000$ images mean 100 epochs
and $500{,}000$ images 10. Embeddings are compared with the cosine
kernel on 2048 held-out images. The tolerance of each criterion is the
largest discrepancy over 24 pooled within-arm pilot pairs at
$N=150{,}000$, and pairs within an arm (32 per arm) and across arms (64)
are disjoint. Table~\ref{tab:app-s21-full} gives every decision, and
Figure~\ref{fig:app-s21-net}b shows the pair across objectives ranking
32nd of 64 by increasing warp share, chosen before plotting.
Grey boxes in Figure~\ref{fig:app-s21-net}a span all within-arm pairs.

\paragraph{Outcome.}
These networks lie outside the premises of the theorem, since the
spectral arm trains on the unit sphere and the embedding dimension falls
far below the size of the input domain. Their empirical InfoNCE risk also has no positive definite minimizer at any temperature (Remark~\ref{rem:temperature}), since two views of each image are each other's only positive partners. All 64 pairs of InfoNCE
and spectral runs fall outside the tolerance on neighbor sets,
per-anchor rankings, the global order, and spanning trees at both sizes.
At $500{,}000$ images their median discrepancies on these criteria are
$1.34$ to $2.51$ times the tolerance, whereas pairs with a common
objective meet every criterion (Figure~\ref{fig:app-s21-net}a). The
mean $s_N(10)$ at this size is $0.313$ across objectives against $0.715$
and $0.640$ within the two objectives. On each
of the ten criteria of Table~\ref{tab:app-s21-full}, the discrepancy of
every pair across objectives exceeds that of every pair within an
objective, at both sizes and for both temperatures. The preregistered
requirement of reliable agreement within each objective fails at
$50{,}000$ images, on four criteria for InfoNCE at $0.2$ and on six for
the spectral loss. In a preregistered descriptive cell, we trained 32
runs of each of arms A and B on $50{,}000$ images for the 10 epochs of
the main runs at $500{,}000$ images. All 32 pairs
across objectives lie outside the tolerance on every criterion, and
agreement within each arm is unreliable on six or seven of the ten
criteria and inconclusive on the others.

\paragraph{Items of the Link Prediction.}
The median warp share across objectives lies below $0.2$ and the order gaps exceed $0.1$, so both conditions for falsification in Table~\ref{tab:exp-verdicts} hold and the link prediction is falsified for these networks. Item~(i) required the share of pairs across objectives
within tolerance under each order criterion to come within $0.1$ of the
within-objective ceiling of Remark~\ref{rem:ceiling} at $500{,}000$
images, and their mean discrepancy to fall with $N$. Every gap is $1.0$,
and the fall from $50{,}000$ to $500{,}000$ images is significant after
Holm correction for $T_{N,10}$ ($-0.0125$, $95\%$ interval
$[-0.0145,-0.0105]$) and $T_{\mathrm{MST}}$ ($-0.0119$) but not for $T_O$
($p=0.037$ against $0.025$) or $T_E$ ($p=0.25$). Item~(ii) required
agreement across objectives under the geometric criteria to be
unreliable at both $N$, with reliable agreement within each objective at
$500{,}000$ images. The item passes by its letter, but agreement across
objectives is unreliable under every criterion, so the item does not
single out geometry. Item~(iii) required the isotonic discrepancy across
objectives to stay within $1.5$ times its value within an objective, and
the ratio is $3.198$. Item~(iv) required a median warp share of at least
$0.5$ across objectives, and the median is $0.147$ at $500{,}000$ images
(Figure~\ref{fig:app-s21-net}c) and $0.142$ at $50{,}000$, against
$0.012$ and $0.002$ within the two objectives. Applied without a fit,
the link of the theorem explains a share of $-0.04$ of the discrepancy
of the median pair (Figure~\ref{fig:app-s21-net}b). A warp share below $0.2$ is compatible with a general monotone link (Appendix~\ref{app:exp-prereg}). Our conclusion that these networks trained with the two objectives learn representations differing by more than a monotone link therefore rests on the order criteria and on
$\Delta_{\rm iso}$.
Theorem~\ref{thm:link} remains a statement about the finite Gibbs model,
where its pattern appears at $2^{34}$, $2^{37}$, and $2^{40}$ examples.

\paragraph{Link Checks.}
As a check of the pipeline, A runs are paired with the exponential link
$\exp(5(K-1))$ of their own Gram matrix (self, 64 pairs) and of the Gram
matrix of another A run (peer, 32 pairs). Every self pair meets every order criterion and fails every geometric criterion, with a
median warp share of $1.00$ (diamonds in
Figure~\ref{fig:app-s21-net}a,c), so the pipeline detects an exact link.
In peer pairs, the link leaves the discrepancy under every order criterion at its value for the two InfoNCE runs. It moves the discrepancy under every geometric criterion outside the tolerance, with median warp shares of $0.66$ and $0.75$ at the two sizes.

\paragraph{Temperatures.}
At each size, InfoNCE runs agree on neighbor sets in 32 of 32 pairs
within each temperature. All 64 pairs of InfoNCE runs across the two
temperatures lie within the tolerance on $\Delta_\Pi$ at both sizes and
outside the tolerance on every other criterion. Between these
temperatures, a fitted monotone link removes a median $11\%$ of the
residual left by an affine map. Item~(v) required
InfoNCE runs at the two temperatures to agree on $G_s$, CKA, and
SOG$_{0.1}$, and every one of the 64 pairs fails all three, while its
clause on raw distances passes. The two InfoNCE arms therefore lie
outside the rescaling case of Remark~\ref{rem:temperature}, which
remains a statement about exact minimizers over unit-diagonal positive
semidefinite matrices.

\begin{table}[htbp]
\centering
\resizebox{\textwidth}{!}{%
\begin{tabular}{@{}llrrrrrrrrrr@{}}
\toprule
$N$ & Pair set & $T_{N,10}$ & $T_O$ & $T_E$ & $T_{\mathrm{MST}}$ & SOG$_{0.1}$ & $T_{G/\mathrm{Aff}}$ & $G_s$ & $1-\mathrm{CKA}$ & $\Delta_{\rm iso}$ & $\Delta_\Pi$ \\
\midrule
50k & InfoNCE at 0.2, within & 32/32\,\textsc{r} & 0/32\,\textsc{u} & 0/32\,\textsc{u} & 32/32\,\textsc{r} & 32/32\,\textsc{r} & 0/32\,\textsc{u} & 32/32\,\textsc{r} & 0/32\,\textsc{u} & 32/32\,\textsc{r} & 32/32\,\textsc{r} \\
50k & InfoNCE at 0.5, within & 32/32\,\textsc{r} & 32/32\,\textsc{r} & 32/32\,\textsc{r} & 32/32\,\textsc{r} & 32/32\,\textsc{r} & 32/32\,\textsc{r} & 32/32\,\textsc{r} & 32/32\,\textsc{r} & 32/32\,\textsc{r} & 32/32\,\textsc{r} \\
50k & spectral, within & 1/32\,\textsc{u} & 32/32\,\textsc{r} & 32/32\,\textsc{r} & 7/32\,\textsc{u} & 2/32\,\textsc{u} & 32/32\,\textsc{r} & 19/32\,\textsc{u} & 32/32\,\textsc{r} & 19/32\,\textsc{u} & 0/32\,\textsc{u} \\
500k & each arm, within & 32/32\,\textsc{r} & 32/32\,\textsc{r} & 32/32\,\textsc{r} & 32/32\,\textsc{r} & 32/32\,\textsc{r} & 32/32\,\textsc{r} & 32/32\,\textsc{r} & 32/32\,\textsc{r} & 32/32\,\textsc{r} & 32/32\,\textsc{r} \\
both & InfoNCE at 0.2 or 0.5 vs spectral & 0/64\,\textsc{u} & 0/64\,\textsc{u} & 0/64\,\textsc{u} & 0/64\,\textsc{u} & 0/64\,\textsc{u} & 0/64\,\textsc{u} & 0/64\,\textsc{u} & 0/64\,\textsc{u} & 0/64\,\textsc{u} & 0/64\,\textsc{u} \\
both & InfoNCE at 0.2 vs 0.5 & 0/64\,\textsc{u} & 0/64\,\textsc{u} & 0/64\,\textsc{u} & 0/64\,\textsc{u} & 0/64\,\textsc{u} & 0/64\,\textsc{u} & 0/64\,\textsc{u} & 0/64\,\textsc{u} & 0/64\,\textsc{u} & 64/64\,\textsc{r} \\
both & link check, self & 64/64\,\textsc{r} & 64/64\,\textsc{r} & 64/64\,\textsc{r} & 64/64\,\textsc{r} & 0/64\,\textsc{u} & 0/64\,\textsc{u} & 0/64\,\textsc{u} & 0/64\,\textsc{u} & 64/64\,\textsc{r} & n/a \\
50k & link check, peer & 32/32\,\textsc{r} & 0/32\,\textsc{u} & 0/32\,\textsc{u} & 32/32\,\textsc{r} & 0/32\,\textsc{u} & 0/32\,\textsc{u} & 0/32\,\textsc{u} & 0/32\,\textsc{u} & 32/32\,\textsc{r} & n/a \\
500k & link check, peer & 32/32\,\textsc{r} & 32/32\,\textsc{r} & 32/32\,\textsc{r} & 32/32\,\textsc{r} & 0/32\,\textsc{u} & 0/32\,\textsc{u} & 0/32\,\textsc{u} & 0/32\,\textsc{u} & 32/32\,\textsc{r} & n/a \\
\bottomrule
\end{tabular}}
\caption{Decisions of the contrastive networks on the embedding with the
cosine kernel, as pairs within tolerance and the decision of
Appendix~\ref{app:exp-protocol} (\textsc{r} reliable, \textsc{u}
unreliable). A row marked ``both'' applies to $N=50{,}000$ and to $500{,}000$, and the row ``each arm'' applies to each of the three arms.}
\label{tab:app-s21-full}
\end{table}

\begin{figure}[htbp]
\centering
\includegraphics[width=5.5in]{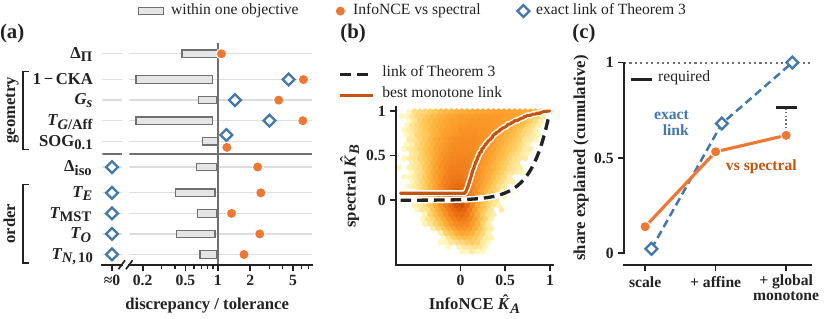}
\caption{\textbf{In CIFAR-5m networks, InfoNCE and spectral learners disagree on the order of distances as well as on geometry, and a monotone link explains little beyond an affine map.} (a)~Discrepancy over the tolerance of each criterion. Theorem~\ref{thm:link} predicts agreement on the criteria below the horizontal line, and diamonds mark pairs of an InfoNCE run and the exact link of the theorem applied to its Gram matrix. (b)~Density of kernel entries of one pair of runs. (c)~Share of the squared-distance discrepancy explained by successive maps, with the share required by the preregistered prediction. All panels are at $N=500{,}000$, and marks in (a) and~(c) are medians over pairs. Each pair across objectives joins a run of InfoNCE at temperature $0.2$ with a spectral run.}
\label{fig:app-s21-net}
\end{figure}

\paragraph{Secondary Predictions.}
The fitted link $K_B\approx\exp(\beta(K_A-1))$ has median exponent
$\hat\beta=2.04$ across objectives, outside the predicted interval
$[2.5,10]$. The ratio of $\hat\beta$ between the two temperatures is
$1.006$, outside the predicted $[1.5,3.5]$, and we trace this failure to
an error of our design (Appendix~\ref{app:exp-prereg}). The
co-occurrence prediction
derived from Lemma~\ref{lem:cooccurrence} stated that more data would
shrink the log-distance error of near pairs faster than that of far
pairs in both InfoNCE arms. In all three arms, more data shrinks the
error of near pairs less than that of far pairs, so this prediction
fails, while the predicted ordering between the two temperatures holds.

\paragraph{Exploratory Analyses.}
At $500{,}000$ images, the participation ratio of the centered Gram
matrix is $67.7$ and $27.9$ for InfoNCE at the two temperatures and
$11.7$ for the spectral arm. On four pairs of runs, an exponential link
of the InfoNCE Gram matrix raises this ratio above $300$ and spreads the
variance over many directions, whereas the spectral embedding
concentrates its variance on few directions. These runs leave two
explanations open, a change of rank or spectrum as in the model-class
row of Table~\ref{tab:learning}, and a common path of temperatures for
the three arms. A heuristic second-order expansion makes the spectral
loss of unit-norm embeddings resemble InfoNCE at temperature $1$ without its penalty on the mean embedding. The three arms may therefore lie at $0.2$,
$0.5$, and about $1$ on one path.

\section{Limitations and Open Problems}
\label{app:limitations}
\label{sec:limitations}

This appendix states the limitations of the results of
Sections~\ref{sec:reads} to~\ref{sec:experiments}, one topic per
paragraph, and then the open problems that these limitations leave.

\subsection{Limitations}
\label{app:limitations-list}

\paragraph{The Crossing Law on Pretrained Models.}
The predictor of mKNN built from the crossing law of
Theorem~\ref{thm:crossing} with an estimated coupling is not validated.
Every candidate estimator of the coupling failed the
preliminary check against two baseline predictors, so the preregistered
out-of-sample test was not run (Section~\ref{sec:exp-scale}). On the 29
pairs without sentence encoders, this predictor recovers the sign of every
non-flat slope, as does an affine fit of the coupling with Gaussian noise.
On the vision--language pairs among them, the predicted fall of mKNN with
$m$ is too slow, and the predictor errs more in the level of mKNN than a
no-change prediction from $m=1{,}024$ (Appendix~\ref{app:exp-s11}).

\paragraph{Levels of mKNN between Vision and Language Models.}
The mKNN scores of our vision--language pairs depend on the depth of the
vision features, the text model, and the protocol. With last-layer instead
of mid-depth vision features, mKNN at $k/m=0.01$ rises by a median factor
of $2.8$ with language models and $4.3$ with sentence encoders
(exploratory, on COCO with ViT-B models). At the last layer, DINOv2-B and
MPNet reach an mKNN of $0.43$ with centered features and five averaged
captions, slightly above the mKNN that \citet{schnaus2026rosetta} report
for these two models. At this depth, mKNN still falls with $m$ and stays
close to stable at a fixed $k/m$ (Appendix~\ref{app:exp-s11}).

\paragraph{Linear Heads in Trained Networks.}
In CIFAR-5m networks with a linear head, outside the premises of
Theorem~\ref{thm:gauge}, our results contradicted several preregistered
predictions (Section~\ref{sec:exp-gauge}). Without weight
decay, the coordinates $u$ agreed on neighbor sets in all 96 pairs
at $250{,}000$ images, whereas our extension of part~2 of this theorem to
neighbor sets had predicted unreliable agreement at both sizes.
From a small initialization, runs under momentum or gradient flow of the
head agreed less reliably on the scale-normalized geometry of $u$
than runs with weight decay under momentum.

The disagreement between runs also has sources outside
Theorem~\ref{thm:gauge}, which concerns the mismatch between the gauges of
two runs. Under gradient flow of the head, the coordinates of the balanced
heads of two networks still differed more in geometry than their outputs
$y$. Under momentum without weight decay, a gauge shared by all networks
trained this way produced most of the excess of $u$ over $y$ in the $G_s$
discrepancy at $250{,}000$ images (preregistered,
Appendix~\ref{app:exp-s22-v11}).

\paragraph{Contrastive Objectives in Trained Networks.}
In CIFAR-5m networks outside the premises of Theorem~\ref{thm:link}, the
representations trained with InfoNCE and with the spectral contrastive
loss differed even in their neighbor sets, against the preregistered
prediction (Section~\ref{sec:exp-link}). InfoNCE networks at two
temperatures lie outside the rescaling case of
Remark~\ref{rem:temperature}. In these networks, more data did not shrink
the log-distance error of near pairs of images faster than that of far
pairs, against the prediction derived from Lemma~\ref{lem:cooccurrence}
(Appendix~\ref{app:exp-s21}).

\paragraph{Statistical Scope of the Size Trends.}
Each size trend in Section~\ref{sec:experiments} rests on one draw of
evaluation inputs. The loss of the CKA trend under calibration rests on
BLOOMZ, LLaMA-1, and OpenLLaMA, with exact one-sided $p$-values of at
least $1/120$ along BLOOMZ and $1/6$ along the other two families. On
Pythia, the one family with a confirmatory test, the raw score outgrew the
threshold and calibration kept the trend. This loss therefore still awaits
a confirmatory test on a family that meets the condition of part~2 of
Corollary~\ref{cor:aggregation}.

\paragraph{Scope of the Lattice.}
A criterion of Definition~\ref{def:criterion} compares two representations
on paired inputs, whereas alignment without pairs
\citep{schnaus2026rosetta} also depends on whether the correspondence can
be identified from each space alone, a property untested by the criteria
of the lattice.

\subsection{Open Problems}
\label{app:open-theory}

\paragraph{Scales and Baselines.}
A prediction of mKNN on a larger evaluation set needs the dispersion of
the coupling at shorter distances, i.e., the spread of the distances of
model B at a given short distance of model A. The main open problem on the
measurement side is therefore a law linking this dispersion across scales,
together with the local Poisson limit needed to prove
Conjecture~\ref{conj:scale}. Before an agreement is read as convergence,
calibration also needs baselines that keep structure shared for reasons
unrelated to convergence, such as a common dependence on caption length.

\paragraph{Gauge Selection by Practical Optimizers.}
On the learning side, the selection of the gauge under momentum and finite
steps remains open, since both change the imbalance $C$ conserved by
gradient flow in Theorem~\ref{thm:gauge} (Remark~\ref{rem:optimizers}). A
law for the change of $C$ under such training and under adaptive
optimizers such as Adam would predict the criteria met reliably by two
runs of the linear autoencoder. In networks, this law would predict the
imbalance part of the disagreement between two runs.

\paragraph{Objectives and the Two Branches.}
In the finite Gibbs model, learners trained with two contrastive
objectives learn one relation through different increasing links as both
samples grow (Theorem~\ref{thm:link}). A model in which a change of
objective alters the learned relation itself is still lacking, and the
results of our contrastive networks (Appendix~\ref{app:limitations-list})
motivate such a model. This model could also give a learning condition
under which agreement becomes reliable on neighbor sets and not on
spanning trees, or the reverse, for a test of part~2 of
Hypothesis~\ref{hyp:lrh} on these two branches.

\section{Extended Potential Applications}
\label{app:open-problems}

For each area of Section~\ref{sec:discussion}, this appendix describes
the methods of the cited works, the implications of our results for these
works, and our proposed tests. These tests are left to future work, and
the statements marked as expectations or conjectures are not proved in
this paper.

\subsection{Pretraining, Scaling, and Multimodal Encoders}
\label{app:op-pretraining}

\paragraph{Size Trends at One Scale.}
PRH and later studies of multimodal alignment relate model size and
performance to mKNN between vision and language models, computed with
$k=10$ on $1{,}024$ image--caption pairs and a maximum over layer pairs
\citep{huh2024platonic,tjandrasuwita2025understanding}. A neighbor
criterion compares two models at the scale $k/m$
(Theorem~\ref{thm:crossing}), and size trends reported at several values
of $k/m$ would show whether larger models agree more on near neighbors or
on coarse structure alone. For a calibrated maximum of CKA over layer
pairs, as in ARH, the size trend also depends on whether the largest
random-pairing baseline grows faster than the raw maximum
(Corollary~\ref{cor:aggregation}). \citet{cherti2023reproducible} trained
CLIP models of several architectures on $80$ million to $2$ billion
image--text pairs. These models would allow a preregistered test of this
corollary in which the training data grow at fixed depth and width, unlike
the language-model families of Section~\ref{sec:exp-aggregation}, along
which size, depth, and width grow together.

\paragraph{Width and Superposition.}
Besides the baseline of Proposition~\ref{prop:baselines}, the raw CKA in
these size trends can depend on width. Under superposition, i.e., with
more features than dimensions, CKA between two networks storing the same
features depends on the overlaps between the feature directions of each
network \citep{liu2026similarity,shi2026superposition}. For uncorrelated
features of equal variance and directions drawn independently at random,
this CKA is approximately the ratio of the width to the number of features
\citep{liu2026similarity}. Raw CKA can then rise with width while the
stored features stay the same.

\paragraph{A Conjecture on the Losses of CLIP and SigLIP.}
Part~2 of Hypothesis~\ref{hyp:lrh} also bears on pretraining runs that
differ in their objective, such as the softmax loss of CLIP
\citep{radford2021learning} and the sigmoid loss of SigLIP
\citep{zhai2023sigmoid}. SigLIP scores each pair by a logit with a learned
scale and bias, and treats each pair as a separate binary classification
\citep{zhai2023sigmoid}. In the finite Gibbs model of
Appendix~\ref{app:gibbs}, we apply this loss with the logit
$t'K(x,x')+b$ of a critic $K$ to the positive pair and to the $J$
negatives of each example. We state the resulting limit without proof as
Conjecture~\ref{conj:sigmoid}.

\begin{conjecture}[Sigmoid Loss in the Finite Gibbs Model]
\label{conj:sigmoid}
In the finite Gibbs model with balanced masses, let a learner minimize the
empirical sigmoid risk with logit $t'K(x,x')+b$ over unit-diagonal
positive semidefinite critics $K$, scales $t'>0$, and biases $b$, and let
it be able to represent its population minimizers. As the sample grows,
the distance of its critic on the probes to the set
$\{aK_\star+(1-a)\ind\ind^\top:a>0\}$ tends to zero in probability.
\end{conjecture}

The conjecture rests on the fact that the logit minimizing a logistic loss
is the logarithm of the ratio between the rates at which a pair occurs as
a positive and as a negative. With $J$ negatives for each positive pair
and balanced masses, this logarithm equals $K_\star/t-\log(JZ)$, and the
unit diagonal of $K$ then gives $K=aK_\star+(1-a)\ind\ind^\top$ with
$a=1/(tt')$. By part~1 of Theorem~\ref{thm:link} and
Remark~\ref{rem:temperature}, such a limit differs from the InfoNCE limit
$K_\star$ by a rescaling of all squared distances, ignored by every
criterion on scale-normalized geometry. If the conjecture holds, learners
trained with the two losses meet every criterion on scale-normalized
geometry in the limit, unlike
the InfoNCE and spectral learners of Theorem~\ref{thm:link}. A comparison
of image encoders trained on the same data with the losses of CLIP and
SigLIP would test this prediction outside the model, where our InfoNCE and
spectral networks differed even in their neighbor sets
(Section~\ref{sec:exp-link}).

\paragraph{Pretraining Runs from Different Seeds.}
PolyPythias \citep{vanderwal2025polypythias} provides ten pretraining runs
per model size with checkpoints along training, and we compare four pairs
of these runs at 410M parameters by mKNN in
Section~\ref{sec:exp-scale}. By the rule of
Section~\ref{sec:exp-protocol}, the five disjoint pairs of runs of one size at a checkpoint can show
unreliable agreement under a criterion, whereas reliable agreement needs
at least $29$ agreeing disjoint pairs. Estimates of the probability $p_Q$
that two runs meet a criterion $Q$ would thus show where in the lattice
agreement across pretraining runs is unreliable during training.

\paragraph{Connectors of Multimodal Models.}
In LLaVA \citep{liu2023visual}, a trained linear layer maps the features
of a frozen CLIP encoder into the word-embedding space of a language
model. This connector can undo any invertible linear map $V$ of the vision
features, i.e., a change of gauge as in Section~\ref{sec:gauge}. Recent
work selects encoders by CKA \citep{maniparambil2025harnessing} or by a
Gromov--Wasserstein distance \citep{li2026rethinking} before the training
of connectors, and both scores change under such maps. With an $\ell_2$
penalty on the connector in its loss, the function learned by the
connector depends on $V$ through $V^\top V$ alone, as the probe Gram
matrix of Theorem~\ref{thm:gauge} depends on its gauge through
$\Omega=V^\top V$. A score for choosing encoders then needs invariance to
orthogonal maps under this penalty. Without regularization, every
invertible map leaves the set of minimizing maps from the original
features unchanged, but gradient training can select among several
minimizers according to the map, as in Theorem~\ref{thm:gauge}. An
experiment applying maps of growing condition number to the vision
features before the training of the connector would test whether the
trained multimodal model depends on these maps through $V^\top V$ alone.

\subsection{Post-training, Fine-tuning, and Reinforcement Learning}
\label{app:op-posttraining}

\paragraph{Low-rank Adaptation.}
Low-rank adaptation (LoRA) adds the product $BA$ of two trainable factors
to a frozen weight matrix \citep{hu2022lora}, and the factors $VA$ and
$BV^{-1}$ give the same update for every invertible $V$, a gauge as in
Section~\ref{sec:gauge}. As in Theorem~\ref{thm:gauge}, gradient flow
keeps the imbalance $AA^\top-B^\top B$ fixed \citep{du2018algorithmic},
whereas an $\ell_2$ penalty on both factors gives $AA^\top=B^\top B$ at
every critical point and restricts the gauge to orthogonal matrices.
\citet{hu2022lora} judge the rank needed by the update through a
comparison of the top right singular directions of $A$ between two random
seeds. These directions coincide in order with the top right singular
directions of $BA$ when the imbalance is a multiple of the identity. LoRA
starts from $B=0$ and a random $A$ with many more columns than rows, which
places the initial imbalance near such a multiple. The AdamW optimizer of
these runs can change this imbalance through its per-coordinate steps and
its momentum (Remark~\ref{rem:optimizers}), whereas its decoupled weight
decay alone multiplies the imbalance by a constant factor. The singular
directions of $BA$ are the same in every gauge, and a comparison of seeds
through these directions would separate a difference between updates from
a difference between gauges.

\paragraph{Model Merging.}
LoraHub composes adapters trained on different tasks by multiplying
weighted sums of their factors \citep{huang2024lorahub}, and the composed
update changes with the gauge of each adapter. We propose to test whether
this composition improves for adapters trained with an $\ell_2$ penalty
added to the loss and aligned by an orthogonal Procrustes rotation of
their factors. For two adapters that learn one update, factors balanced
and aligned in this way compose into this update, whereas for adapters of
different tasks we state this improvement as a hypothesis.

\paragraph{Reinforcement Learning and Supervised Fine-tuning.}
Studies of post-training find that reinforcement learning (RL) changes the
representation of the base model less than supervised fine-tuning (SFT)
\citep{shenfeld2026rl,huan2026math}. In an appendix analysis,
\citet{shenfeld2026rl} measure this change by CKNNA
\citep{huh2024platonic}, a CKA restricted to pairs of samples that are $k$
nearest neighbors in both representations. \citet{huan2026math} measure a
shift of mean hidden states along two principal components. A nonlinear
change of all distances of one model that keeps their order leaves the
pairs selected by CKNNA unchanged and can still change the value of CKNNA,
as for CKA (Section~\ref{sec:geometry}). To locate the difference between
RL and SFT in the lattice, we propose to compare each fine-tuned model
with its base model by a rank correlation of all distances and by the
residual around an isotonic link. A gap between RL and SFT that persists
in the rank correlation would indicate that SFT reorders the distances of
the base model. A gap that vanishes in the residual around the isotonic
link would instead indicate an order-preserving change of geometry.

\paragraph{Reward Models.}
Recent work on reward models argues that a reward model learned up to an
increasing transformation of the true reward suffices for sampling-based
optimization \citep{sun2025rethinking}. As an alternative to the
Bradley--Terry model, this work trains a classifier that scores each
response by the logit of its probability of winning a comparison. Under
the Bradley--Terry model, this logit is an increasing and generally
nonlinear function of the reward within each prompt. In the population
limit, the scores of the classifier and of a Bradley--Terry reward model
differ by an increasing link within each prompt, like the limits of the
two learners of Theorem~\ref{thm:link}. Best-of-$N$ sampling selects
responses by the order of their scores within a prompt, so the two reward
models select the same best-of-$N$ responses. The KL-regularized optimal
policy \citep{rafailov2023direct} and the group-normalized advantages of
group relative policy optimization (GRPO) \citep{shao2024deepseekmath} can
instead change with the link. The GRPO advantages stay unchanged when the
rewards in a group take two values, since every increasing link is affine
on two values.

We propose to train policies with GRPO from both reward models, and again
after the scores of the classifier pass through an isotonic map fitted
within each prompt to the Bradley--Terry rewards. The part of the gap
between the policies trained from the two reward models closed by this
isotonic map would measure the effect of the link.

\paragraph{Agreement across Runs of Reinforcement Learning.}
In RL, each run collects its own training data through its policy, whereas
the data distribution is a fixed part of the learning condition in our
theoretical models. In the three-feature task of \citet{shi2026self}, a
policy is the probability of choosing branch $E$ rather than branch $S$,
and a code assigns each of three features a unit direction in two
dimensions. All codes that best fit the data of one policy share the
squared overlaps between these directions, and the codes for the two
deterministic policies differ in which features share a direction. We
assume exact adaptation, i.e., a code that best fits the data of the
current policy at every step. Under the policy dynamics of that work, each
run then converges to the deterministic policy on the same side of an
unstable policy as its initial policy. In the limit, two independent runs therefore
meet the exact criterion $Q$ on these squared overlaps with probability
$p_Q=q^2+(1-q)^2$, where $q$ is the probability that the initial policy of
a run lies below the unstable policy. This probability is at most $0.8$,
the threshold of unreliable agreement in Section~\ref{sec:exp-protocol},
whenever $q$ lies in $[0.113,0.887]$.

In the PPO experiments of \citet{shi2026self}, agents with a rank-two
bottleneck initialized toward different actions also develop different
interference between features. Agreement between the representations of RL
agents trained from different seeds can therefore depend on the policies
reached by these runs. With exact adaptation in the three-feature task,
replay can meet the sufficient condition of \citet{shi2026self} when the
base reward of branch $E$ exceeds that of branch $S$ by more than $7/16$.
Such replay brings every run with a positive initial probability of the
better branch $E$ to this branch, and in the limit the codes of any two
such runs share their squared overlaps. Replay is a candidate intervention
for making agreement across RL runs reliable, in addition to raising
returns.

\subsection{Agents and Harness Engineering}
\label{app:op-agents}

\paragraph{Redundancy and Diversity of Agents.}
Studies of multi-agent systems measure the redundancy
\citep{kim2025towards} or diversity \citep{subramaniam2025multiagent} of
agents by the cosine similarity between embeddings of their outputs. In
the scaling model of agent systems of \citet{kim2025towards}, redundancy
is one predictor, with a main effect that is not significant and an
interaction with the number of agents that the authors call marginal. The
mean cosine between the output embeddings of two agents on the same tasks
has the inner product of their mean normalized embeddings as its
random-pairing baseline, which is large when all outputs share a format.
As in Corollary~\ref{cor:aggregation}, after subtraction of these
baselines, an agent system with the higher raw score ranks below another
system whenever its baseline exceeds the other baseline by more than the
gap between the raw scores. With tasks as evaluation samples, mKNN instead
compares the neighbor sets of each task under the output embeddings of two
agents, with the baseline $k/(m-1)$ for every pair of agents and every
embedding model (Proposition~\ref{prop:baselines}). We propose to refit
the scaling model of \citet{kim2025towards} with redundancy measured after
subtraction of the random-pairing baseline of the mean cosine, or by mKNN.

\paragraph{Communication through Internal States.}
Agents can also share internal states, such as key-value caches within one
backbone \citep{zou2026latent}, activations passed between two models
\citep{ramesh2025communicating}, or caches fused across two models by a
trained network \citep{fu2026cache}. Over six pairs of models, the gain of
activation communication on one task correlates positively with an
uncentered matrix analogue of the squared cosine between the activations
of the two models \citep{ramesh2025communicating}. The random-pairing
baseline of this score grows with the share of the average squared norm of
the activations carried by their mean. A calibrated version of this score
and mKNN on the same probe sentences would test whether the correlation
comes from relations among these sentences shared by the two models or
from their mean activations. For a channel through a linear map fitted by
least squares, an invertible linear change of the sender's activations
leaves the fitted outputs and the projector of Theorem~\ref{thm:gauge}
unchanged and can still lower CKA. Canonical correlations compare the
column spaces of the activations of the two models, and on more probe
sentences than dimensions we therefore expect them to predict the gain of
such a channel better than CKA.

\paragraph{Harnesses and Agent Evaluation.}
The harness of an agent comprises the tools and control code around its
model, and \citet{yang2024swe} compare variants of the agent--computer
interface of SWE-agent by the fraction of resolved instances. The resolved
fractions of six runs of one variant lie within $1.4$ percentage points of
each other, whereas the runs resolve partly different sets of instances
\citep{yang2024swe}. By inclusion and exclusion, the reported pass@1 and
pass@2 of these runs imply that about half of the instances resolved by
either of two runs are resolved by both. The pass$^k$ of $\tau$-bench
measures reliability over such variation as the chance that $k$
independent trials on a task all succeed \citep{yao2025tau}. With the
sampling randomness of an agent in place of training randomness, the
probability $p_Q$ of Section~\ref{sec:learning-law} extends to the chance
that two runs of an agent meet a criterion $Q$ on their outputs. A change
of harness then matters under $Q$ when runs of two variants meet $Q$ less
often than two runs of one variant, with $Q$ ranging from equal resolved
fractions to equal sets of resolved instances.

\subsection{Generative Models}
\label{app:op-generative}

\paragraph{Alignment with a Pretrained Encoder.}
Representation alignment (REPA) speeds up the training of diffusion
transformers by aligning a projection of their hidden states at an early
layer with DINOv2 features of the clean image, patch by patch
\citep{yu2025repa}. \citet{yu2025repa} measure the alignment between SiT
diffusion transformers and DINOv2 by CKNNA at $k=10$ on $10{,}000$ images,
and find that this CKNNA rises with model size and training in diffusion
transformers trained without REPA. REPA-E tunes the autoencoder together
with the diffusion transformer and uses CKNNA as a proxy for generation
quality
\citep{leng2025repae}. The pairs selected by CKNNA change with the scale
$k/m$ of Section~\ref{sec:neighbors}. Since mKNN at a fixed $k$ stays flat
or falls with $m$ according to the model pair
(Section~\ref{sec:exp-scale}), it remains open whether trends and rankings
of CKNNA at one scale hold at other scales.

\paragraph{Losses on Patch Similarities.}
Across 27 target encoders, the generation quality reached with REPA
correlates far more with the spatial structure of the cosine similarities
among the patch features of an encoder than with its linear-probe accuracy
\citep{singh2026irepa}. Losses matching the similarities among the patch
features of a model to those of the encoder improve generation when
applied to a tokenizer \citep{yao2025reconstruction} or to a diffusion
transformer \citep{chen2025sara}. The values of these losses change under
strictly increasing maps of the target similarities, which leave neighbor
overlap unchanged (Proposition~\ref{prop:lattice}) and can change CKA through
the monotone link between the two sets of similarities
(Proposition~\ref{prop:cka}). An experiment that passes the target
similarities through such maps, or replaces these losses with a loss on
the order of the similarities, would test whether generation benefits from
agreement beyond this order.

\paragraph{The Gauge of a Latent Autoencoder.}
In latent diffusion \citep{rombach2022high}, the first convolution of the
decoder can absorb the inverse of an invertible linear map of the latent
channels, which then leaves the reconstruction unchanged. Since the
diffusion process adds isotropic noise to the mapped latent, this map
changes the ratio of signal to noise along each latent direction. As in
the linear autoencoder of Theorem~\ref{thm:gauge}, the training procedure
of the autoencoder can then select this map among those allowed by the
data. Like the weight decay of part~3 of this theorem, the KL penalty of
this autoencoder changes under maps that are not orthogonal. We propose to
test whether the weight of this penalty, the initialization, the
optimizer, or weight decay changes generation quality at equal
reconstruction.

\paragraph{Reproducibility across Training Runs.}
Independently trained diffusion models can generate nearly the same image
from the same noise
\citep{kadkhodaie2024generalization,zhang2024emergence}. A one-step neural
operator trained on the noise paths of a pretrained diffusion sampler also
reconstructs the images of this sampler with high fidelity
\citep{shi2026expanding}. These comparisons concern generated images, and
it remains open which criteria on internal representations are met
reliably by two independent training runs of such models.

\subsection{Mechanistic Interpretability}
\label{app:op-interp}

\paragraph{Recurring Latents and Subspaces.}
Studies in mechanistic interpretability ask whether neurons and the
latents of sparse autoencoders (SAEs) recur across training runs and
models \citep{gurnee2024universal,paulo2026sparse}. For an unstable
latent, i.e., a latent without a counterpart in other runs, a description
inferred in one run need not transfer to another run
\citep{shi2026superposition}. SAEs trained from different seeds share part
of their latents under one-to-one
matching of directions \citep{paulo2026sparse}, and the latents rarely
matched across seeds concentrate in lower-rank subspaces that recur across
runs \citep{gerasimov2026unstable}. \citet{gerasimov2026unstable} suggest
that this seed dependence reflects ambiguity among bases, and the survey
of \citet{shi2026superposition} concludes that latent reproducibility
depends on the objective, the initialization, and the level of comparison.

In a linear autoencoder, all global minimizers on one sample share one projector while
the training procedure selects the invertible map undone by the linear
head (Theorem~\ref{thm:gauge}). The exact symmetries of an SAE are
permutations and positive scalings of its latents, and an approximate
gauge can remain within a group of co-active latents, such as a recurring
subspace. For SAEs, we expect the initialization and the regularization to
decide whether runs from different seeds agree reliably on single
directions within these subspaces, and to matter less for their agreement
on projectors onto the subspaces. SAEs are trained with adaptive
optimizers, and by analogy with Remark~\ref{rem:optimizers}, runs with and
without weight decay would give a direct test of this expectation.

\paragraph{Universal Neurons.}
The study of universal neurons in GPT2 \citep{gurnee2024universal} scores
a neuron by its maximum correlation with the neurons of another seed minus
its maximum correlation with randomly rotated neurons of that seed,
averaged over four other seeds. By this score, GPT2-medium has a smaller
share of universal neurons than GPT2-small. As in
Corollary~\ref{cor:aggregation}, this smaller share can come from the
subtracted maximum if the maximum over rotated neurons grows with model
size faster than the maximum over the neurons of the other seed. A share
computed with a baseline fixed across sizes would separate the effect of
the subtracted maximum from a difference in universality.

\paragraph{Top-activating Tokens.}
Descriptions of latents are often inferred from their top-$k$ activating
tokens \citep{shi2026superposition}. With the negative activation of a
latent as the distance, these tokens form a neighbor set at the scale
$k/m$ in a corpus of $m$ tokens. As in Theorem~\ref{thm:crossing}, the
overlap between the top-$k$ tokens of two matched latents is unchanged by
strictly increasing maps of either activation, and at a fixed $k$ this
overlap can fall with the corpus size.

\paragraph{CKA and Neighbor Sets under Superposition.}
Under superposition, CKA between two networks storing the same sparse
features can stay low even where these features remain recoverable
\citep{liu2026similarity}, and it remains open whether such a pair meets
neighbor criteria. Suitably scaled random projections with enough
dimensions for this recovery multiply every squared distance between
sparse feature vectors by a factor in $[1-\gamma,1+\gamma]$ with high
probability \citep{baraniuk2008simple}. By the argument of
Corollary~\ref{cor:gauge-neighbors}, the representations given by two such
projections share the $k$-neighbor set of every input $i$ whose ratio
$D_{A,i,(k+1)}/D_{A,i,(k)}$ exceeds $((1+\gamma)/(1-\gamma))^2$. Since
this ratio can be large at the boundary of a group of inputs sharing most
of their features, we expect mKNN at the scale of such groups to stay high
where CKA is low.

\subsection{Proteins, Molecules, and Materials}
\label{app:op-science}

\paragraph{Conflicting Studies of Interatomic Potentials.}
Studies of scientific foundation models compare the representations of
molecules, materials, and proteins across models and take the agreement
found as evidence that these models share a representation of matter
\citep{edamadaka2025universally,li2026platonicmlip}. For interatomic
potentials, \citet{edamadaka2025universally} find that the representations
of conservative Orb-v3 models align strongly with those of MACE, whereas
\citet{li2026platonicmlip} find near-zero neighbor overlap between Orb-v3
models and equivariant models including MACE. The first study computes
CKNNA with $k=25$ on features averaged over the atoms of each structure.
The second compares the neighbor sets of single atoms in MP-20 crystals
and leaves unstated whether these sets are computed on the original
embeddings or on their projection onto $100$ shared anchor atoms. Under
part~1 of Hypothesis~\ref{hyp:lrh}, such a conflict can come from criteria
applied at different points of the lattice, and it can also come from the
data, the level of the features, or the anchor projection. A comparison of
both criteria on shared samples and features, in the original and in the
projected spaces, would separate these causes.

\paragraph{Convergence with Performance.}
The representations of interatomic potentials also grow more similar to
those of the best-performing model with decreasing energy error, by CKNNA
on $1{,}000$ OMat24 structures and by distance correlation on $1{,}000$
QM9 molecules \citep{edamadaka2025universally}. These scores are computed
without calibration against random pairings, although calibration can
reverse a trend when the random-pairing baseline grows faster than the raw
score along the compared models (Corollary~\ref{cor:aggregation}).
For CKA and mKNN, this baseline can be forecast from single-model features
(Proposition~\ref{prop:baselines}), whereas the baselines of distance
correlation and CKNNA need permutations. Recomputed with CKA and mKNN,
these trends and the forecast baselines would show whether the convergence
survives calibration.

\paragraph{Homology among Proteins.}
For proteins, \citet{edamadaka2025universally} use $k=25$ on $50{,}000$
structures drawn at random from the Protein Data Bank, which sets a fine
scale $k/m$ at which the neighbors of a protein can be its close homologs.
A comparison of each model with a reference representation built from
sequence identity would show how much of the agreement between sequence
and structure models comes from homology.

\paragraph{Linear Heads in Molecule--text and Protein Models.}
Molecule--text models such as MoleculeSTM \citep{liu2023multimodal} and
structure-aligned protein language models \citep{chen2025structure} pass
encoder features through linear heads into a shared space. An invertible
map of these features undone by the heads then leaves the training loss
unchanged, a gauge as in Section~\ref{sec:gauge}. In our CIFAR-5m
networks trained without weight decay, the coordinates of two runs before
the last linear map differed more in mean-normalized squared distances
than the outputs of the networks, and weight decay removed this excess
(Section~\ref{sec:exp-gauge}). In the released code of MoleculeSTM, the
default weight decay is zero, and runs from several seeds with and without
weight decay would test whether its encoder features carry the same
excess.

\subsection{Dynamical Systems and Operator Learning}
\label{app:op-dynamics}

\paragraph{The Gauge of DeepONet.}
DeepONet predicts through the inner product of two learned factors, its
branch and trunk outputs \citep{lu2021learning}, and an invertible map
applied to the branch outputs and undone on the trunk outputs leaves every
prediction unchanged. We conjecture that joint training selects this map
as in Theorem~\ref{thm:gauge}. The branch outputs of two runs would then
share their column space, while CKA between these outputs could stay below
one for large samples. Two-step training \citep{lee2024training} and
POD-DeepONet \citep{lu2022comprehensive} instead make the trunk outputs
orthonormal over the output locations. The chaos-expansion operators SDENO
and F-SPDENO of \citet{shi2026expanding} combine learned propagators with
fixed orthonormal Wick features of the driving noise. In two runs whose orthonormal factors span the same space,
least-squares fits of the other factor differ by an orthogonal map, so CKA
between these fits equals one.
Tied weights restrict the gauge in the same way in part~3 of
Theorem~\ref{thm:gauge}. Fixed Wick features remove the gauge in these
operators, and their learned propagators can be compared across runs
coordinate by coordinate. The gauge remains in variants that multiply two
learned factors, such as G-SDENO.

\paragraph{Resolution Transfer.}
Across grid resolutions, \citet{bartolucci2023representation} find
aliasing errors in Fourier neural operators and note that discretization
invariance \citep{kovachki2023neural} concerns the limit of infinite
resolution. With the two discretizations of each test function as one
evaluation sample, a neighbor criterion compares an operator at two finite
resolutions. By Theorem~\ref{thm:crossing}, its outcome depends on the
test functions moved across each neighbor boundary by the change of
resolution. Such criteria give a test of resolution transfer at the
resolutions used in practice and at a chosen scale $k/m$.

\paragraph{Criteria on Dynamics.}
Recurrent networks trained on the same task can share the topology of
their fixed points and their linearized dynamics while their states differ
in geometry under SVCCA, CKA, and Procrustes analysis
\citep{maheswaranathan2019universality,ostrow2023beyond}. A criterion on
dynamics can thus accept a pair of networks rejected by a criterion on
geometry, which suggests extending the lattice of
Section~\ref{sec:criteria} from Gram matrices on fixed inputs to criteria
on dynamics. For chaotic systems, \citet{jiang2023training} add an
optimal-transport loss or a contrastive feature loss to a relative
root-mean-squared error of neural operators and report smaller errors in
long-run statistics. Under an extension of part~2 of
Hypothesis~\ref{hyp:lrh} to criteria on long-run statistics, we expect the
objective to decide whether two independently trained operators share
these statistics with probability at least $1-\delta$ over training runs.

\paragraph{Criteria Indexed by the Chaos Order.}
For dynamics driven by Brownian noise, the output of a surrogate has an
expansion in the orthonormal Wick features used by
\citet{shi2026expanding}, so the mean squared difference between two
surrogates splits into one term per chaos order. This split gives a set of
criteria indexed by the chaos order, from the Gaussian response to the
noise at order one to nonlinear interactions of the noise at higher
orders.

\end{document}